%% file: main.tex
\documentclass{article}

\usepackage{mlmodern}
\usepackage[T1]{fontenc}

\usepackage{hyperref}
\usepackage{url}
\usepackage{mathtools}
\usepackage{amsmath, amssymb, amsfonts, bm, amsthm, bbm,xfrac}
\usepackage{thmtools,thm-restate}
\usepackage{xcolor}
\usepackage{cleveref}
\usepackage{comment}
\usepackage{pgfplots}
\usepackage{enumitem}
\usepackage{authblk}
\usepackage[margin=1in]{geometry}
\usepackage{natbib}
\usepackage{placeins}
\setcitestyle{authoryear,round,citesep={;},aysep={,},yysep={;}}

\hypersetup{
    colorlinks = true,
    linkcolor = {blue},
    citecolor = {blue},
    urlcolor = {blue}
}

\numberwithin{equation}{section}
\theoremstyle{definition}
\newtheorem{theorem}{Theorem}
\newtheorem{proposition}{Proposition}
\newtheorem{lemma}{Lemma}[section]
\newtheorem{corollary}{Corollary}
\newtheorem{definition}{Definition}
\newtheorem{assumption}{Assumption}[section]

\newtheorem{remark}{Remark}
\newtheorem{conjecture}{Conjecture}
\DeclareMathOperator*{\argmin}{arg\,min}

\newcommand{\dd}[0]{\rm{d}}
\newcommand{\Tr}[0]{{\rm Tr}}
\newcommand{\He}[0]{{\rm He}}
\newcommand{\RR}[0]{{\mathbb{R}}}

\newcommand{\ZZ}[0]{{\mathbb{Z}}}
\newcommand{\NN}[0]{{\mathbb{N}}}

\author[1,2,3]{Lorenzo Rizzi}
\author[1]{Arie Wortsman Zurich}
\author[1]{Bruno Loureiro}

\affil[1]{\small Departement d'Informatique, \'Ecole Normale Sup\'erieure, PSL \& CNRS}
\affil[2]{\small Dipartimento di Fisica e Astronomia, Universit{\`a} degli Studi di Padova}
\affil[3]{\small Scuola Galileiana di Studi Superiori, Universit{\`a} degli Studi di Padova}

\title{Learning between the peaks: sharp asymptotics for kernel ridge regression under power-law anisotropy}

\date{}

\begin{document}

\maketitle

\begin{abstract}
We study kernel ridge regression under anisotropic Gaussian data, where the input covariance decays as a power law with exponent $\alpha\geq 0$ for polynomial inner-product kernels. We derive asymptotically sharp expressions for the kernel spectrum and the generalization error in the polynomial high-dimensional regime $n=\Theta(d^\kappa)$, revealing how anisotropy reshapes the learning curves. For weak anisotropy ($0<\alpha<1$), the problem remains effectively high-dimensional and retains some features of the isotropic case, while departing from it in others: the variance still peaks at integer sample complexities $\kappa\in\mathbb{N}$, but these peaks are progressively damped as $\alpha$ grows; meanwhile, for targets strongly aligned with the data's principal directions, the bias drops at fractional sample complexities, decoupling the bias transitions from the interpolation peaks. For strong anisotropy ($\alpha > 1$), the effective dimension of the problem is constant, and the variance stops depending on sample size altogether, plateauing under ridgeless interpolation or vanishing at an explicit rate under fixed ridge penalty. The bias undergoes a sharp transition governed by the target's decay rate: below a threshold, learning is abrupt rather than gradual; above it, the bias decays as a power law that recovers the classical source and capacity rates. We finally specialize these results to single-index targets, showing how the alignment of the index with the data's principal directions determines the effect of anisotropy on learning. Together, our results clarify how the input geometry shapes the kernel features and fundamentally impacts its generalization properties.
\end{abstract}

\input{1_Introduction}

\input{2_Setting}
\input{3_Spectrum}
\input{4_1_Weakly_Anisotropic_Regime}
\input{4_2_Strong_Anisotropic_Regime}
\input{5_Single_Index_Models}
\input{6_Conclusion}

\bibliography{references}
\clearpage
\clearpage
\bibliographystyle{abbrvnat}

\appendix
\input{Appendix/Kernel_operator_properties}
\input{Appendix/Deterministic_Equivalents}

\input{Appendix/Reparametrization_kernel_eigenvalues}
\input{Appendix/Weak_Anisotropy}
\input{Appendix/Strong_anisotropy}
\input{Appendix/Single_Index_Models}
\input{Appendix/HermiteAnsatzNumerical}
\input{Appendix/Technical_Lemmas}

\end{document}

%% file: 1_Introduction.tex
\section{Introduction \& motivation}

Kernel methods are a classical and widely used class of non-parametric statistical models. They allow for complex non-linear modelling by implicitly mapping the input data into a (possibly) infinite-dimensional Hilbert space, where the learning task is reduced to a convex problem.

Over the last decade, kernel methods have attracted renewed attention in the context of deep learning theory, following the discovery that sufficiently wide neural networks trained in the so-called lazy regime behave, to leading order, as kernel methods governed by the neural tangent kernel \citep{jacot_neural_2020, chizat2019lazy}. This correspondence established kernel methods as a tractable proxy for studying the mathematical properties of neural networks, and a substantial part of our current theoretical understanding of neural networks builds on this connection \citep{belkin_understand_2018, bartlett_2020, ghorbani2020neural, mallinar2022benign}.

More recently, this connection has gained further relevance through the study of neural scaling laws, the empirical observation that the performance of large-scale networks improves in a predictable, power-law fashion as a function of dataset size, number of parameters, and compute \citep{kaplan2020scaling, hoffmann2022training}. Scaling laws for kernel methods, by contrast, have a considerably longer history. In this context, the rate in which the excess risk of kernels vanish is known to follow a power-law in sample size, with the exponent controlled by the regularity of the task and underlying RKHS  \citep{caponnetto_optimal_2007}. This classical body of results has motivated a renewed interest in kernel methods as tractable models for studying neural scaling laws \citep{bahri2024explaining, maloney2022solvable, defilippis_dimension-free-2024, atanasov2026scaling}. A major limitation in this emerging literature, however, is that the power-law behavior of kernel features are either assumed (so-called \emph{capacity condition}) or derived in linear models. Understanding how the data geometry shapes the scaling behavior of the features is a question that only very recently started to be addressed \citep{wortsman_kernel_2025, karkada2025predicting, paquette2026power}, with the consequences for learning remaining a largely open question. In this work, we build on this recent progress to provide a sharp description of the generalization properties of kernel ridge regression under anisotropic input data. More precisely, our \textbf{main contributions} are:
\begin{itemize}
    \item \textbf{Sharp characterization.} We first show that for polynomial inner-product kernels under power-law anisotropic data, the high-dimensional kernel spectrum can be made asymptotically exact, and that this yields closed-form deterministic equivalents for the bias and variance of the kernel ridge estimator in the polynomial high-dimensional regime.
    % \item \textbf{Weak anisotropic}. For weakly anisotropic data, the problem remains effectively high-dimensional. We derive a closed-form characterization of the bias and variance through an explicit self-consistent equation in this regime. We show the variance exhibits the multiple-descent pattern known for isotropic data, but its peaks are systematically dampened by anisotropy. The bias displays a staircase pattern corresponding to a sequential learning of different target degrees. When the target decay is aligned with the data's principal directions, its features become learnable at fractional sample complexities that fall between the variance's integer peaks, decoupling bias and variance.    
     \item \textbf{Weak anisotropy.} For weakly anisotropic data ($0 < \alpha < 1$), the problem remains effectively high-dimensional, and we derive an explicit reduction of the deterministic equivalents in the polynomial regime $n=\Theta(d^{\kappa})$. This reveals a striking sample-complexity separation induced by anisotropy: for targets sufficiently aligned with the data's principal directions, a degree-$m$ component becomes learnable already with $n=\Theta(d^{m(1-\alpha)})$ fewer samples than the $n=\Theta(d^m)$ required in the isotropic case. As a consequence, higher-order target structure can be learned at fractional values $\kappa=m(1-\alpha)$, between the usual integer polynomial thresholds. Remarkably, these bias transitions no longer coincide with interpolation peaks: the variance retains the multiple-descent pattern with peaks at integer $\kappa$, although their magnitude is progressively suppressed as anisotropy increases.
    \item \textbf{Strong anisotropy}. For strongly anisotropic data, the effective dimension of the problem is finite. We show that in this case the variance decouples from the sample size entirely: it saturates to a constant without regularization, or vanishes at an explicit rate under constant ridge penalty. The bias undergoes a sharp transition governed by the target's decay: below a threshold, a given target degree is learned only abruptly (or, under regularization, gradually but at a delayed sample complexity); above it, the bias decays smoothly as a power-law, recovering the source \& capacity rates in the literature.
\end{itemize}
Finally, we discuss in the case of single-index models, showing how the alignment of the index direction with the data's principal directions translates into markedly different learning behaviours in feature space. Together, our work provides a comprehensive characterization of how data anisotropy shapes the generalization properties of kernel ridge regression, from the underlying kernel spectrum to the resulting learning curves.

\subsection*{Further related work}
\textbf{Source and capacity conditions.} Classical guarantees for KRR are typically formulated directly in terms of conditions in the kernel spectrum and target decay; under these source and capacity conditions, optimally regularized KRR is known to attain minimax rates \citep{caponnetto_optimal_2007, cui_generalization_2022}. This framework has since been extended in several directions, such as random features \citep{rudi2017generalization} and Nystr\"om approximations \citep{rudi2015less} and structured prediction \citep{cabannes2021fast}, following earlier work on specific kernels and regularization schemes \citep{niyogi1996relationship, evgeniou2000regularization}. The source and capacity exponents are typically treated as abstract quantities, decoupled from the specific mechanism through which the input distribution produces them. Our work departs from this literature by deriving the source and capacity exponents themselves from the explicit input covariance structure, rather than treating them as free parameters.

\noindent\textbf{High-dimensional asymptotics.} A separate line of work derives exact asymptotic characterizations of the KRR and random features risk in the high-dimensional regime, largely via random matrix theory and Gaussian equivalence arguments \citep{cui_generalization_2022, cui2023error, Canatar_2021, bordelon_spectrum_2021, spigler2020asymptotic, simon_eigenlearning_2021, xiao2022precise, mei2022generalization, gerace2020generalisation, goldt2022gaussian, loureiro2021learning, hu2022universality,schroder2023deterministic, schroder24asymptotics, lu2025equivalence}. More recently, dimension-free deterministic equivalents have provided non-asymptotic versions of these formulas for ridge regression, KRR, and random features \citep{cheng_dimension_2025, misiakiewicz2024non, defilippis_dimension-free-2024}. Our work builds on this recent line of results to provide an in-depth study of the learning curves of KRR under anisotropic inputs.

\noindent\textbf{Data geometry and the kernel spectrum.} In some classical cases, the capacity exponent is tied explicitly to a named geometric property of the input: for Sobolev kernels on bounded domains it is governed by the input dimension \citep{bach2017equivalence}, and this connection extends to Riemannian manifolds, where matching intrinsic and extrinsic constructions recover rates governed by the manifold's intrinsic dimension \citep{rosa20252}. The dependence of the kernel spectrum on the input density was discussed more broadly by \citet{williams2000effect}, who observed empirically that a rapidly varying density accelerates spectral decay. For translation-invariant kernels on $\mathbb{R}^{d}$, this dependence was characterized asymptotically by \citet{widom1963asymptotic, widom1964asymptotic}, who showed that the eigenvalue decay is governed jointly by the tails of the input density and of the kernel's Fourier transform (see \citealt{bach2024unraveling, bach2025unraveling} for a modern synthesis and new non-asymptotic bounds). Our setting can be seen as an instance of this general density-kernel interplay for polynomial, rather than translation-invariant, kernels, in the regime where both $n=\Theta(d^{\kappa})$.

Closest to our work are three recent papers that investigated KRR and RFRR under power-law anisotropic Gaussian data. \citet{wortsman_kernel_2025} derived the spectral bounds of Proposition 1 for inner-product kernels under anisotropic Gaussian data and provided upper bounds on risk in the weak anisotropy regime; we tighten these bounds to an asymptotic equality (Proposition 2) and, building on this, give the first complete closed-form description of the bias and variance across both the weak- and strong-anisotropy regimes, together with the resulting phase diagram. \citet{karkada2025predicting} introduced the Hermite ansatz, empirically showing that KRR generalization under orthogonally invariant kernels can be computed by treating the kernel eigenfunctions as Hermite polynomials. \citet{paquette2026power} studied the spectrum of random feature kernel matrices under power-law data with monomial activations via Random Matrix Theory, focusing on the strong anisotropy regime.

\subsection*{Notation}
For $n \in \mathbb{N}$, we write $[n] := \{1, \ldots, n\}$. We use $\langle\cdot,\cdot\rangle$, ${\rm Tr}$, $I$ for the inner-product, trace and identity matrix/operator on both the standard Euclidean space $\mathbb{R}^{d}$ and the Hilbert space $\mathbb{R}^{\infty}\simeq \ell^{2}(\mathbb{N})$. We denote by $\mathbb{Z}_{\geq 0}$ the set of non-negative integers. For a multi-index $\beta = (\beta_1, \ldots, \beta_d) \in \mathbb{Z}_{\geq 0}^d$, we write $|\beta| := \beta_1 + \cdots + \beta_d$ for its degree, $x^\beta := \prod_{j=1}^d x_j^{\beta_j}$ and $\beta! := \beta_1! \cdots \beta_d!$.  We denote by $\mathrm{He}_m$ the $m$-th (probabilists') Hermite polynomial, orthogonal with respect to $\mathcal{N}(0,1)$. For a multi-index $\beta \in \mathbb{Z}_{\geq 0}^d$ and $x \in \mathbb{R}^d$, we write the associated (normalized) Hermite tensor as $H_\beta(x) \coloneqq \prod_{j=1}^d \tfrac{1}{\sqrt{\beta_{j}!}}\mathrm{He}_{\beta_j}(x_j)$, which forms an orthonormal basis of $L^2(\mathcal{N}(0,I_d))$. We use standard Big-O notation $O(\cdot), \Omega(\cdot), \Theta(\cdot), o(\cdot), \omega(\cdot)$, making the asymptotic variable explicit when useful: $\Theta_d(\cdot)$ or $o_d(1)$ as $d \to \infty$. We write $\tilde{O}(\cdot)$, $\tilde{\Theta}(\cdot)$ when suppressing sub-polynomial (e.g. logarithmic) factors, and $x \asymp y$ to mean $x = \Theta(y)$.

%% file: 2_Setting.tex
\section{Setting \& Background}
\label{s:Setting}

Consider a supervised regression task with a training dataset of $n$ samples $\mathcal{D} = \{( x_i, y_i)\in\mathbb{R}^{d}\times \mathbb{R}: i\in [n]\}$, where each input-response pair $(x_i, y_i)$ is sampled independently from the joint probability distribution $p_{X,Y}(x, y) = p_X(x) p_{Y | X}(y | x)$ over $\mathbb{R}^d \times \mathbb{R}$. Given a symmetric and positive definite function $k : \RR^{d}\times \RR^{d} \to \RR$, the kernel ridge regression (KRR) estimator is defined as the unique minimizer of the $\ell_{2}$-regularized empirical risk:
\begin{equation} \label{eq:krrEstimator}
    \hat{f}_\lambda := \argmin_{f \in \mathcal{H}} \left\{\sum_{i=1}^n \left(y_i - f(x_i)\right)^2 + \lambda \|f\|_{\mathcal{H}}^2 \right\} ,
\end{equation}
where $\mathcal{H}$ is the \textit{reproducing kernel Hilbert space} (RKHS) associated with the kernel $k$ and $\|\cdot\|_{\mathcal{H}}$ the induced norm. Denoting by $f_{\star}(x)=\mathbb{E}[y|x]$ the \emph{target function} or \emph{Bayes predictor}, our goal in the following will be to characterize the generalization properties of the KRR predictor $\hat{f}_{\lambda}$, as quantified by the \emph{excess risk}:
\begin{equation}\label{eq:ExcessPopolationRisk}
    \mathcal{R}(f_\star, \mathcal{D}, \lambda) \coloneqq \mathbb{E}_{x}[(\hat f_\lambda(x)-f_\star(x))^2] = ||\hat{f}_{\lambda}-f_{\star}||^{2}_{L^{2}(p_{X})}.
\end{equation}
An important observation is that the data distribution $p_{X}$ defines not only the explicit geometry of the risk, as in \cref{eq:ExcessPopolationRisk}, but also implicitly defines the geometry of the RKHS $\mathcal{H}$. Indeed, we can associate to $k$ a Hilbert–Schmidt integral operator $T_{k}: L^{2}(p_{X}) \to L^{2}(p_{X})$:
\begin{equation}\label{eq:KernelOperator}
     f \in L^{2}(p_{X}) \mapsto T_{k}(f) (x) \coloneqq \int_{\RR^{d}} k(x,x')f(x')p_{X}(\dd{x'}). 
\end{equation}
Since $T_{k}$ is compact, it can be diagonalized in $L^{2}(p_{X})$ \citep{cucker_mathematical_2001}:
\begin{align}
\label{eq:Tk:diag}
    T_{k} = \sum\limits_{m\geq 0} \lambda_{m}e_{m}\otimes e_{m}
\end{align}
where $(e_{m})_{m\geq 0}$ is an orthonormal basis of $L^{2}(p_{X})$ and $\lambda_{m}\geq 0$ are the kernel eigenvalues, which we assume without loss of generality are given in non-increasing order. Defining the features $\phi_{m}\coloneqq \sqrt{\lambda_{m}} e_{m}$, we can explicitly write the RKHS:
\begin{align}
    \mathcal{H}=\Big\{f=\sum_{m\geq 0}a_{m}\phi_{m}:||f||_{\mathcal{H}}^{2}\coloneqq\sum_{m\geq 0}\tfrac{a_{m}^{2}}{\lambda_{m}}<\infty\Big\}\subset L^{2}(p_{X}).
\end{align}
Since both $(\lambda_{m}, e_{m})$ are implicitly defined with respect to $p_{X}$ in \cref{eq:KernelOperator}, the geometry of $\mathcal{H}$ is implicitly determined by $p_{X}$. A natural question steming from this observation is: 
\begin{center}
\emph{How does the data geometry $p_{X}$ impact the generalization properties of the kernel ridge estimator $\hat{f}_{\lambda}$?}      
\end{center}

Explicit results connecting the geometry of the data to the excess risk are scarce. Indeed, this requires explicitly diagonalizing the kernel operator $T_{k}$, which is a challenging problem in general. For this reason, results in this direction have mostly been specific to isotropic input distributions $p_{X}$, such as spherical, hypercube or isotropic Gaussian, for which the eigenfunctions that diagonalize the kernel operator are well-studied harmonic polynomials \citep{ghorbani_linearized_2020, mei2022generalization,misiakiewicz_spectrum_2022}. In these cases, the kernel eigenspectrum is degenerated: for each degree $m$ polynomial, there are $\Theta_{d}(d^{m})$ constant eigenvalues $\lambda_{m}=\Theta_{m}(d^{-m})$. As a consequence, the excess risk is radically different from a power-law decay, following an abrupt staircase pattern where degree $m$ components of the target function $f_{\star}$ require at least $n=O(d^{\kappa})$ samples in the high-dimensional limit $d\to\infty$ \citep{misiakiewicz_spectrum_2022}.

Real data, however, is far from isotropic: different directions exhibit different variance, typically aligned with the target at different scales \citep{mallat_theory_1989}. For instance, images contain regular structures which lead to power-law spectrum on an adapted basis \citep{simoncelli2001natural}, and the pairwise correlation of tokens in text is also well-approximated by a power-law \citep{cagnetta2026deriving}.

Motivated by this observation, in this work we investigate the generalization properties of KRR when the inputs are drawn from an \textit{anisotropic} Gaussian distribution $p_{X}=\mathcal{N}(0,\Sigma)$ where the eigenvalues  $\{\sigma_j\}_{j\in[d]}$ of the data covariance matrix $\Sigma\in\mathbb{R}^{d\times d}$ decay as a power-law:
\begin{align}
\label{eq:data:power}
\sigma_{j} = r_{\alpha}(d)^{-1} \cdot j^{-\alpha}, \quad j \in [d]. 
\end{align}
with the normalization $r_{\alpha}(d)=\sum_{j=1}^d j^{-\alpha}$ chosen such that 
${\rm Var}(p_{X}) = \Tr \Sigma = 1$. Note that the exponent $\alpha \geq 0$ quantifies the degree of anisotropy in the data, interpolating between the isotropic case for $\alpha = 0$ and an effectively low-dimensional distribution for $\alpha \gg 1$. 

Our main goal in the following is to derive a sharp description of the excess risk in \cref{eq:ExcessPopolationRisk} in the polynomial high-dimensional regime where $d\to\infty$ at fixed $\psi\coloneqq \tfrac{n}{d^{\kappa}} = \Theta_{d}(1)$ for $\kappa>0$. While our primary focus is on integer values $\kappa \in \mathbb{N}$, we will occasionally explore cases where $\kappa$ is fractional.

\paragraph{Assumptions --- } Our results will hold under the setting introduced above and the following assumptions on the target and kernel.

% By rotational invariance of the Gaussian distribution, we can assume without loss of generality that $\Sigma$ is a diagonal matrix with $\Sigma_{jj}=\sigma_{j}$. We will further assume that $f_{\star}\in L^{2}(p_{X})$, and that the label noise $\varepsilon = y - f_{\star}(x)$ has bounded variance $\sigma^{2}<\infty$. Moreover, in the following we will focus on a particular class of polynomial inner-product kernels.
\begin{assumption}[Polynomial inner-product kernel]
\label{assumption:h}
The kernel $k(x, x') = h(\langle  x, x' \rangle)$ with $h:\mathbb{R}\to\mathbb{R}$ given by a polynomial of arbitrary but finite degree $D$:
\begin{align}
\label{eq:def:polyk}
    h(t) = \sum_{m = 0}^D h_m t^m.
\end{align}
with coefficients $h_m\geq 0$.
\end{assumption}
\begin{assumption}[Target function]
\label{ass:target}
The target function $f_{\star}\in L^{2}(p_{X})$ is a $D$-degree polynomial, and that the label noise $\varepsilon = y - f_{\star}(x)$ are independent, mean-zero, $\sigma_{\varepsilon}^{2}$-sub-Gassian random variables.
\end{assumption}
\begin{remark}
Note that a kernel of degree $D$ is able to fit at most the degree $D$ components of a general target $f_{\star}\in L^{2}(p_{X})$, with the orthogonal part behaving essentially as an irreducible additive noise. Therefore, under \cref{eq:def:polyk}, the first part of \cref{ass:target} is, up to an adjustment on the noise variance, without loss of generality. For the purpose of the results that follows, we always consider $D$ to be large enough.
\end{remark}

%% file: 3_Spectrum.tex
\section{High-dimensional limit of the kernel spectrum}
\label{s:Spec}
As discussed in \Cref{s:Setting}, the statistical properties of the KRR estimator crucially depend on the kernel spectrum. The spectrum of inner-product kernels under anisotropic Gaussian data was recently studied in \citep{wortsman_kernel_2025}, who proved the following result.
\begin{proposition}[Upper and lower bounds on the kernel spectrum \citep{wortsman_kernel_2025}] \label{th:KernelSpectrum}
    Let $K(x, x') = h(\langle x, x' \rangle)$ be an inner-product kernel with $h(t)$ obeying Assumption~\ref{assumption:h} and assume the data covariance matrix is $\Sigma = \text{diag}(\sigma_1, \dots, \sigma_d)$. Then, the kernel eigenvalues can be indexed by multi-indices $\beta \in \mathbb{Z}^d_{\geq 0}$, and there exist constants, $C_{1}, C_2$, independent of $d$, such that
    \begin{equation}\label{eq:KernelSpectrum}
        C_1 h_{|\beta|} |\beta|! \, \sigma_1^{\beta_1} \dots \sigma_d^{\beta_d}  \leq \lambda_{\beta} \leq  C_2 h_{|\bm\beta|} |\beta|! \, \sigma_1^{\beta_1} \dots \sigma_d^{\beta_d} ,
    \end{equation}
    where $|\beta| = \beta_1 + \dots + \beta_d$.
\end{proposition}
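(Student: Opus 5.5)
The plan is to reduce the operator to a finite Gram-type matrix, compare that matrix with a diagonal one through a unitriangular change of basis, and then invoke Ostrowski's theorem. Because $p_X=\mathcal N(0,\Sigma)$ with $\Sigma$ diagonal, write $x=\Sigma^{1/2}z$ with $z\sim\mathcal N(0,I_d)$. Then $T_k$ on $L^2(p_X)$ is unitarily equivalent to the operator on $L^2(\mathcal N(0,I_d))$ with kernel $h(z^\top\Sigma z')$. Expanding each power by the multinomial theorem,
\begin{equation*}
h(z^\top \Sigma z')=\sum_{m=0}^D h_m \sum_{|\beta|=m}\frac{m!}{\beta!}\,\sigma^\beta\, z^\beta z'^\beta ,\qquad \sigma^\beta:=\sigma_1^{\beta_1}\cdots\sigma_d^{\beta_d}.
\end{equation*}
So $T_k=\Phi\Lambda\Phi^{*}$, where $\Phi$ maps the finite coordinate space indexed by $\{\beta:|\beta|\le D\}$ to functions via $e_\beta\mapsto z^\beta$, and $\Lambda=\mathrm{diag}\big(h_{|\beta|}|\beta|!\,\sigma^\beta/\beta!\big)$. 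Indices with $h_{|\beta|}=0$ can be discarded, since both sides of \cref{eq:KernelSpectrum} vanish for them.

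Next, expand monomials in the normalized Hermite basis: $z^\beta=\sum_\gamma L_{\beta\gamma}H_\gamma(z)$. Here $L$ is a tensor product of one-dimensional matrices, and it is lower triangular in the partial order "$\gamma$ is obtained from $\beta$ by removing an even number $2k_j$ from each coordinate". Its diagonal entries are $L_{\beta\beta}=\sqrt{\beta!}$, because $t^m=\mathrm{He}_m+\dots$ and $H_m=\mathrm{He}_m/\sqrt{m!}$. By orthonormality of $\{H_\gamma\}$, $\Phi$ is (as an operator) $L^\top$ followed by an isometry. Hence the nonzero eigenvalues of $T_k$ coincide with those of the finite matrix $L^\top\Lambda L$.

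Now set $\Lambda'=\mathrm{diag}(h_{|\gamma|}|\gamma|!\,\sigma^\gamma)$, which is exactly the claimed profile, and $M:=\Lambda^{1/2}L\Lambda'^{-1/2}$. Then $L^\top\Lambda L=\Lambda'^{1/2}M^\top M\Lambda'^{1/2}$, whose eigenvalues coincide with the nonzero eigenvalues of $M\Lambda' M^\top$. By Ostrowski's theorem on congruences, the $k$-th largest eigenvalue of $M\Lambda'M^\top$ lies between $s_{\min}(M)^2$ and $s_{\max}(M)^2$ times the $k$-th largest entry of $\Lambda'$. Indexing eigenvalues by the multi-index carrying the corresponding entry of $\Lambda'$ then gives \cref{eq:KernelSpectrum} with $C_1=s_{\min}(M)^2$ and $C_2=s_{\max}(M)^2$.

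It therefore remains to show that these singular values are bounded above and below independently of $d$; this is the main step. We have $M_{\beta\beta}=1$. For $\gamma=\beta-2k$,
\begin{equation*}
M_{\beta\gamma}=c_{\beta,k}\,\sqrt{\sigma^\beta/\sigma^\gamma}=c_{\beta,k}\prod_j\sigma_j^{k_j},
\end{equation*}
where $c_{\beta,k}$ depends only on one-dimensional Hermite coefficients, factorials and the ratio $h_{|\beta|}/h_{|\gamma|}$, and is therefore bounded by a constant $C(D,h)$. We then bound the row and column sums of $N:=M-I$ uniformly and apply the Schur test. Fixing $\beta$ and summing over $k$ involves at most $D$ nonzero $k_j$, all at coordinates where $\beta_j\ge 2$, so each row sum is at most a constant depending on $D$ and $h$ alone. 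Fixing $\gamma$ and summing over $\beta=\gamma+2k$ gives at most $C\sum_{1\le|k|\le D/2}\prod_j\sigma_j^{k_j}\le C'\sum_{\ell\ge1}(\operatorname{Tr}\Sigma)^\ell$. This is bounded because $\operatorname{Tr}\Sigma=1$, and this is where the normalization $r_\alpha(d)$ is essential, for every $\alpha\ge0$. Hence $\|N\|\le C$ uniformly in $d$. Since $N$ strictly lowers degree, it is nilpotent of order at most $D/2+1$, so $M^{-1}=\sum_{\ell\le D/2}(-N)^\ell$ is also uniformly bounded. This yields $s_{\max}(M)\le 1+C$ and $s_{\min}(M)\ge\|M^{-1}\|^{-1}$, both $d$-independent.

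The delicate part is the column-sum bound: it must exploit the trace normalization rather than any smallness of $\sigma_1$, which is only $\Theta(1)$ when $\alpha>1$, while keeping all combinatorial constants dependent on $D$ only.
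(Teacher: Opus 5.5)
Your main argument follows the same framework the paper uses in Appendix~A, where it proves the sharpened asymptotic version following \citet{wortsman_kernel_2025}. Your $M=\Lambda^{1/2}L\Lambda'^{-1/2}$ is the paper's unit-diagonal, degree-lowering matrix $B$, with entries $c_{\beta,k}\sigma^k$. Your Ostrowski step is the paper's sandwich of $D^{1/2}B^\top BD^{1/2}$ between multiples of $D$. The real difference is in the norm bounds. The paper shows $\|B-I\|_{\mathrm{op}}\lesssim\sqrt{\Tr(\Sigma^2)}$, which gives both singular-value bounds at once only when $\Tr(\Sigma^2)$ is small. Your Schur test using $\Tr\Sigma=1$, together with the nilpotent Neumann series for $M^{-1}$, gives $d$-uniform bounds on $\|M\|$ and $\|M^{-1}\|$ also for $\alpha>1$, which is exactly what the non-asymptotic statement needs. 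One slip there: the column-sum bound should read $\sum_{1\le\ell\le D/2}(\Tr\Sigma)^\ell$. As you wrote it, the sum over $\ell\ge1$ of $1^\ell$ diverges.

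The step that fails is the claim that indices with $h_{|\beta|}=0$ can be discarded because both sides vanish. Discarding them removes rows of $L$ but not columns. For $\beta\in S:=\{\beta:h_{|\beta|}>0\}$, the monomial $z^\beta$ still has components along $H_\gamma$ with $h_{|\gamma|}=0$; for example $z_j^2=\sqrt2\,H_{2e_j}(z)+H_0$ when $h_0=0<h_2$. So $L$ becomes a rectangular $S\times\{|\gamma|\le D\}$ matrix, $\Lambda'$ is singular on the extra columns, and $M$ is undefined.

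This is not a technicality: the leaked mass produces eigenvalues with no matching profile value, so the statement as written (with $h_m\ge0$) is false. Take $h(t)=t^2$ and $\Sigma=I_d/d$ (the case $\alpha=0$). Isserlis' formula gives $T_k(x^\top Ax)=x^\top\big(2A+\Tr(A)I_d\big)x/d^2$. Hence the top eigenvalue is $(d+2)/d^2$, while every nonzero value of $h_{|\beta|}|\beta|!\,\sigma^\beta$ equals $2/d^2$.

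You need the hypothesis $h_m>0$ for all $0\le m\le D$. This is the positivity assumption the paper adds in Appendix~A, and it must include $m=0$. More generally it suffices that $h_m>0\Rightarrow h_{m-2}>0$. Then $S$ is invariant under $\beta\mapsto\beta-2k$, the block $L_{S\times S}$ contains all nonzero entries of the rows in $S$, and your argument goes through verbatim on $S$.
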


\begin{remark}
    The intuition behind \Cref{th:KernelSpectrum} is the following: Since we are considering a polynomial kernel, for $x,x' \sim \mathcal{N}(0,\Sigma)$ we can write
    \[
    K(x,x') = h(\langle x,x'\rangle ) = \sum_{m=0}^{D} h_{m}\langle x,x'\rangle^{m} = \sum_{m=0}^{D} h_{m} \sum_{\beta \in \ZZ^{d}_{\geq 0 }: |\beta|=m} \binom{m}{\beta} x^{\beta} x'^{\beta},
    \]
    where in the last equality we expanded the inner product. For each multi-index $\beta$, the polynomial $x^{\beta}$ is contributes with a new independent component to the kernel. Therefore, intuitively, there should be an eigenvalue related to this new component. By further writing $z = \Sigma^{-\frac{1}{2}}x$, the whitened input, we have that $x^{\beta} = \prod_{k=1}^{d} \lambda_{k}^{\beta_{k}} z^{\beta}$, and therefore the eigenvalue should be related to  $\lambda^{\beta} = \prod_{k=1}^{d} \lambda_{k}^{\beta_{k}}$. 
\end{remark}

Our first result is to show that \cref{th:KernelSpectrum} can be made sharper in the high-dimensional limit considered here. More precisely, consider the \emph{participation ratio} of the covariance $\Sigma$:
\begin{equation}
    R_0(\Sigma) = \dfrac{(\sum_{i=1}^{d} \sigma_i)^{2}}{(\sum_{i=1}^{d} \sigma_i^{2})}.  
\end{equation}
Then, if $R_{0}(\Sigma)\to \infty$ in the high-dimensional limit $d\to\infty$, we can show that the upper- and lower-bound of \cite{wortsman_kernel_2025} is tight. Since in the considered setting $\Tr(\Sigma)=1$, this condition is equivalent to $\Tr(\Sigma^2) = o_{d}(1)$. 
\begin{proposition}[Asymptotic kernel spectrum] 
\label{prop:spec:highd}
Assume $\Tr(\Sigma^2) = o_{d}(1)$ and \Cref{assumption:h}. Then, the spectrum of the kernel in \cref{eq:def:polyk} indexed by multi-indices $\beta \in \ZZ^{d}_{\geq 0}$, $|\beta| \leq D $  satisfies: 
\begin{equation}
    \lambda_{\beta} = h_{|\beta|} |\beta|! \, \sigma_1^{\beta_1} \dots \sigma_d^{\beta_d}( 1+ o_{d}(1) ) . 
\end{equation}
In particular, this holds for $\alpha\in[0,1]$ under power-law data in \cref{eq:data:power}.
\label{proposition:asymptotic_matching_spectrum}
\end{proposition}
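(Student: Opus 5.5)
The plan is to write the kernel operator exactly in the Hermite basis of the whitened input $z=\Sigma^{-1/2}x$. We then identify a diagonal leading part whose entries are $h_{|\beta|}|\beta|!\,\sigma^\beta$, with $\sigma^\beta:=\sigma_1^{\beta_1}\cdots\sigma_d^{\beta_d}$, and show that everything else is a perturbation that is small relative to it, with size controlled by $\Tr(\Sigma^2)$. By rotational invariance we take $\Sigma$ diagonal, so $\langle x,x'\rangle=\sum_j \sqrt{\sigma_j}z_j\sqrt{\sigma_j}z'_j$. For each $m\le D$ we expand $\langle x,x'\rangle^m$ by Wick's theorem, separately in $x$ and in $x'$. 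The fully Wick-ordered term is
\[
:\langle x,x'\rangle^m: \;=\; \sum_{|\beta|=m} m!\,\sigma^\beta H_\beta(z)H_\beta(z'),
\]
since $:z^{\beta}:=\sqrt{\beta!}H_\beta(z)$ and the multinomial weight $m!/\beta!$ cancels against $\beta!$. This piece is already diagonal in $(H_\beta)$ with eigenvalue $h_m m!\,\sigma^\beta$. The remaining terms come from self-contractions of pairs of factors.
\begin{itemize}
\item When a pair of factors $x_jx'_j$, $x_kx'_k$ is contracted on both sides, diagonality of $\Sigma$ forces $j=k$. This produces the scalar $\Tr(\Sigma^2)$ times $:\langle x,x'\rangle^{m-2}:$. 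It only renormalizes the diagonal coefficient $h_{m-2}\mapsto h_{m-2}+O(\Tr\Sigma^2)$.
\item When a pair is contracted on one side only, it produces terms like $\sigma_j\,{:}x_j'^2{:}$. These couple degree $\ell$ to degree $\ell+2$ (more generally $\ell$ to $\ell+2k$). Hence they give off-diagonal entries $E_{\gamma\gamma'}$ with $\gamma'=\gamma+2\delta$.
\end{itemize}

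Collecting all of this, the operator restricted to polynomials of degree $\le D$ reads $T_k=\mathcal D+\mathcal E$. Here $\mathcal D={\rm diag}\big(g_{|\beta|}|\beta|!\,\sigma^\beta\big)$ with $g_\ell=h_\ell+O(\Tr\Sigma^2)$ (the constants depend only on $D$ and the $h_m$). The key estimate is a relative bound
\[
\big\|\mathcal D^{-1/2}\mathcal E\,\mathcal D^{-1/2}\big\|_{\rm op}=O\big(\Tr(\Sigma^2)^{1/2}\big)=o_d(1).
\]
To get it, note that a one-sided contraction linking $\gamma$ to $\gamma+2e_j$ carries the weight $\sigma^{\gamma}\sigma_j$. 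The geometric mean of the two diagonal entries it connects is $\sigma^\gamma\sigma_j$ times $O(1)$ factors. So after rescaling, the coupling from a fixed $\gamma$ is the vector $(\sigma_j\cdot O(1))_j$ normalized by $\sigma_j$, which is not obviously small entrywise. I would therefore bound it as a Schur/Hilbert–Schmidt sum. Summing over $j$ of $\sigma_j^2/(\sigma_j\cdot\text{(typical)})$ produces $\sum_j\sigma_j^2$ after correctly tracking the combinatorial factors $\sqrt{\gamma_j+1}\sqrt{\gamma_j+2}$. With a Schur test with weights $\sqrt{\mathcal D}$ one gets $\|\cdot\|\lesssim \sqrt{\Tr\Sigma^2}$ uniformly over the (growing) index set.

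Given this relative bound, Ostrowski's theorem / relative Weyl inequality for $\mathcal D^{1/2}(I+\mathcal D^{-1/2}\mathcal E\mathcal D^{-1/2})\mathcal D^{1/2}$ yields that the ordered eigenvalues of $T_k$ equal the ordered diagonal entries up to factors $1\pm o_d(1)$. Combined with $g_\ell=h_\ell(1+o_d(1))$ for every $\ell$ with $h_\ell>0$, this gives the claim, with the eigenvalue indexing by $\beta$ inherited from $\mathcal D$ as in \Cref{th:KernelSpectrum}. Degrees with $h_\ell=0$ contribute eigenvalues that are $O(\Tr\Sigma^2)$-small relative to neighbouring degrees and are handled by the same bound. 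For power-law data with $\alpha\in[0,1]$, $\Tr\Sigma^2=\sum_j j^{-2\alpha}/r_\alpha(d)^2$, which is $\Theta(1/d)$, $\Theta(\log^{-2} d)$ (at $\alpha=1$) or $d^{-2(1-\alpha)}$-type, and in all cases is $o_d(1)$.

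The main obstacle is the uniform relative operator-norm bound on $\mathcal D^{-1/2}\mathcal E\mathcal D^{-1/2}$. The number of multi-indices grows polynomially in $d$ and the diagonal entries span many scales, so a naive entrywise bound fails. I would first try a weighted Schur test exploiting that each one-sided contraction inserts exactly one factor $\sigma_j$ while the target index gains $\sigma_j^2$, so that the square of the relative coupling sums to $O(\sum_j\sigma_j^2)$.
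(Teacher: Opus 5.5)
Your overall scheme matches the paper's: Hermite basis of $z=\Sigma^{-1/2}x$, diagonal part $h_{|\beta|}|\beta|!\,\sigma^\beta$, a perturbation of relative size $O(\sqrt{\Tr\Sigma^2})$, then Ostrowski/Courant--Fischer. However, your claim that degrees with $h_\ell=0$ ``are handled by the same bound'' is false, and under Assumption 1 alone the statement itself fails there. Take $h(t)=t^2$ and $\Sigma=I_d/d$ ($\alpha=0$, $\Tr\Sigma^2=1/d$). For symmetric $A$ one computes $T_k(x^\top Ax)=d^{-2}\big(\Tr(A)\,\|x\|^2+2\,x^\top Ax\big)$. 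So the nonzero spectrum is $2/d^2$ with multiplicity $\binom{d+1}{2}-1$, plus the eigenvalue $(d+2)/d^2\approx 1/d$ with eigenfunction $\|x\|^2$. The formula instead predicts $2/d^2$ for every $|\beta|=2$ and $0$ for $|\beta|<2$. In your language, a degree-$\ell$ direction $\gamma$ with $h_\ell=0$ is not negligible. It inherits from degree $\ell+2$ a diagonal entry of order $\sigma^\gamma\Tr\Sigma^2$; here the constant function has Rayleigh quotient $\Tr\Sigma^2=1/d\gg 2/d^2$. Relative to that entry, the couplings to $\gamma+2e_j$ have size $\sigma_j/\sqrt{\Tr\Sigma^2}$, whose $\ell^2$-norm over $j$ is of order one. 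Positivity of the lower coefficients of the same parity is therefore genuinely needed. The paper's proof assumes $h_k>0$ for all $k\le D$, which is also what makes its change-of-basis matrix $B$ well defined.

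With that assumption your route can be completed, but the key estimate is only sketched and the bookkeeping in the sketch is wrong. The coefficient of $H_\gamma(z)H_{\gamma+2e_j}(z')$ created by a one-sided contraction is of order $\sigma^\gamma\sigma_j^2$, not $\sigma^\gamma\sigma_j$: one factor $\sigma_j$ comes from $\mathbb E[x_j^2]$ and one from ${:}x_j'^2{:}=\sigma_j\mathrm{He}_2(z_j')$. Against the geometric mean $\sigma^\gamma\sigma_j$ of the two diagonal entries, this gives a relative coupling $O(\sigma_j)$; as written, your two counts give a ratio $O(1)$ and nothing small. Moreover, contracting \emph{different} pairs on the two sides does not just renormalize $h_{m-2}$. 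It produces same-degree couplings $\rho+2e_k\leftrightarrow\rho+2e_j$ of relative size $\sigma_j\sigma_k$, and $\beta$-dependent diagonal corrections (e.g.\ $\sum_j\sigma_j^2x_jx_j'$). So $\mathcal D$ is not of the form $g_{|\beta|}|\beta|!\,\sigma^\beta$, and these terms must be included in $\mathcal E$ and bounded.

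The paper avoids enumerating contraction patterns altogether. It writes $k(x,x')=\Phi(x)^\top\Phi(x')$ with $\Phi=B\mathcal D_0^{1/2}\mathbf H(z)$ and $\mathcal D_0=\mathrm{diag}(h_{|\beta|}|\beta|!\,\sigma^\beta)$. The Hermite-basis matrix of $T_k$ is then exactly $\mathcal D_0^{1/2}B^\top B\mathcal D_0^{1/2}$. In $B^\top B-I=(B-I)+(B-I)^\top+(B-I)^\top(B-I)$, your one-sided contractions sit in the first two terms and your two-sided ones in the last. One then only needs $\|B-I\|_{\mathrm{op}}\lesssim\sqrt{\Tr\Sigma^2}$, which follows from a blockwise Cauchy--Schwarz. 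In the block from degree $k$ to $k+2\ell$, each row has $O_D(1)$ nonzero entries and each column has squared norm $\lesssim\sum_{|\delta|=\ell}\sigma^{2\delta}\le(\Tr\Sigma^2)^\ell$. Your weighted Schur test with weights $\sqrt{\sigma^\gamma}$ also works on the one-sided blocks, since weighted row sums are $O(D)$ and weighted column sums are $O(\Tr\Sigma^2)$.
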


We prove \Cref{proposition:asymptotic_matching_spectrum} and other properties of the kernel operator in Appendix~\ref{section:kernel_operator}. By applying \Cref{proposition:asymptotic_matching_spectrum} to our setting with a power-law decay in the covariance, $\sigma_j = r_\alpha(d)^{-1} j^{-\alpha}$, we obtain:
\begin{equation}\label{eq:SpectrumSpec}
    \lambda_{\beta} = h_{|\beta|} |\beta|! \, r_\alpha(d)^{-|\beta|} \prod_{j=1}^d j^{-\alpha\beta_j}(1 + o_{d}(1)).
\end{equation}
This spectral structure is strongly determined by the normalization constant $r_\alpha(d)$, which naturally plays the role of the \textit{effective dimension} of the problem. Since $\Tr\Sigma = 1$, we have that to leading order in $d\to\infty$:
\begin{equation}\label{eq:EffectiveDimension}
    r_\alpha =  
    \begin{cases}
        \Theta_{d}\left ( (1-\alpha)^{-1}d^{1-\alpha} \right )& \text{if } 0 \leq \alpha < 1 , \\
        \Theta_{d}\left ( \log(d) \right ) & \text{if } \alpha = 1 , \\
        \zeta(\alpha) & \text{if } \alpha > 1 ,
    \end{cases}
\end{equation}
where $\zeta(\cdot)$ is the Riemann zeta function. Therefore, when $0 < \alpha < 1$, the effective dimension diverges as $d \to \infty$ (albeit sub-linearly), meaning the problem preserves a genuinely \emph{high-dimensional nature}. We refer to this as the \emph{weak anisotropy regime}.
Conversely, when $\alpha > 1$, the covariance decay is so rapid that $r$ saturates to a finite constant, and the problem is effectively finite dimensional. We refer to this as the \emph{strong anisotropy regime}.  A first consequence of this separation can be observed in the spectrum of the kernel. 
\begin{remark}
    A few remarks about the expression in \cref{eq:EffectiveDimension} are in order.
    \begin{itemize}
        \item For $\alpha = 0$ (isotropic case), rotational invariance implies that all features with a given frequency ``shell'' $m = |\beta|$ are equivalent, and therefore the spectrum in ~\cref{eq:KernelSpectrum} is highly degenerated. In particular, for each frequency $m$ there are $\Theta(d^{m})$ degenerated eigenvalues, each with a magnitude of $\Theta_{d}(d^{-m})$.
        \item For $\alpha > 1$ (strong anisotropy), \cite{wortsman_kernel_2025} showed that the kernel eigenvalues satisfies a capacity condition $\lambda_{m}=\tilde{\Theta}(m^{-\alpha})$ with the exponent $\alpha$ exactly inherited from the input data.
        \item For $0 < \alpha < 1$ (weak anisotropy), the spectrum becomes more complex, exhibiting gaps and disconnected sectors. Consequently, a simple capacity condition is insufficient to accurately characterize the true generalization behavior.
    \end{itemize}
\end{remark}
As we discuss next, this separation also has important consequences for the learning curves.

%% file: 4_1_Weakly_Anisotropic_Regime.tex
\section{Generalization properties}
\label{s:Gen}

We now move our attention to our core contribution, which is a sharp characterization of the excess risk in \cref{eq:ExcessPopolationRisk} under the anisotropic setting introduced in \cref{s:Setting}. Our results build on the recent progress in the non-asymptotic analysis of kernel methods, pioneered by \citet{cheng_dimension_2025, misiakiewicz2024non, defilippis_dimension-free-2024}, and which we quickly review in the following.

Consider the bias-variance decomposition of the excess risk:
\begin{equation}\label{eq:StandardDecompositionBiasVariance}
   \mathcal{R}(f_{\star},X,\sigma^{2},\lambda)\coloneqq \mathbb{E}_\varepsilon[\mathcal{R}(f_\star, X,\varepsilon,\lambda)] = \mathcal{B}(f_\star,  X, \lambda) + \mathcal{V}(X,\sigma^{2},\lambda)
\end{equation}
where $X = [x_1, \dots, x_n ]^{\top} \in \RR^{n \times d}$ is the data matrix, $y = (y_1, \dots, y_n)^{\top} \in \RR^{n}$ and $\varepsilon = (\varepsilon_1, \dots, \varepsilon_n)^{\top} \in \RR^{n}$ are the label and noise vectors and:
\begin{align}\label{eq:BiasVariance}
    \mathcal{V}(X, \sigma^{2}, \lambda) = \mathbb{E}_{ x}\left[\text{Var}_\varepsilon(\hat f_\lambda( x))\right] , \quad\quad
    \mathcal{B}(f_\star,  X, \lambda) = \mathbb{E}_{ x}\left[\left(f_\star(x) - \mathbb{E}_\varepsilon[\hat f_\lambda( x)]\right)^2\right] .
\end{align}
Note that both the bias and variance are random quantities, as they depend on the specific realization of the training data $\mathcal{D}=(X,y)$. \cite{misiakiewicz2024non} has proven that under specific assumptions on the kernel spectrum and eigenfunctions (see \cref{appendix:det_equivalents} for a detailed discussion), the following deterministic approximation of the bias and variance hold with high-probability:
\begin{align}
\label{eq:detequiv:rates}
\mathcal{B}(f_\star, X, \lambda) = \left(1+ \tilde{O}(n^{-1/2})\right) \mathsf{B}_n(\theta, \Lambda, \lambda) \, , \qquad \mathcal{V}(X,\Lambda, \lambda) = \left(1+ \tilde{O}(n^{-1/2})\right) \mathsf{V}_n(\sigma^{2},\Lambda,\lambda). 
\end{align}
where $\Lambda = \text{diag}(\lambda_1, \lambda_2, \dots) \in \mathbb{R}^{\infty\times\infty}$ is a diagonal matrix with the kernel eigenvalues and $ \theta=(\theta_{1},\theta_{2},\dots)\in\mathbb{R}^{\infty}$ are the coefficient of the target function in the $L^{2}(p_{X})$ basis. The functions $\mathsf{B}_{n}, \mathsf{V}_{n}$, which only depend only on these deterministic quantities, are known as \emph{deterministic equivalents} for the bias and variance, and are explicitly given by:
\begin{equation}\label{eq:DetEquivalent}
    \mathsf{V}_{n}(\Lambda,\lambda,\sigma^{2}_{\varepsilon}) = \sigma_\varepsilon^2 \frac{\Tr(\Lambda^2(\Lambda + \nu_{\star} I)^{-2})}{n-\Tr(\Lambda^2(\Lambda + \nu_{\star} I)^{-2})} \qquad \text{ and } \qquad 
    \mathsf{B}_n(\theta,\Lambda,\lambda) = \frac{\nu_{\star}^2\langle\theta, (\Lambda + \nu_{\star} I)^{-2}\theta\rangle}{1-\frac{1}{n}\Tr(\Lambda^2(\Lambda + \nu_{\star} I)^{-2})}, 
\end{equation}
where $I\in\mathbb{R}^{\infty\times \infty}$ is the infinite-dimensional identity matrix and the scalar quantity $\nu_{\star}(\lambda) > 0$ is the unique solution to the following self-consistency equation:
\begin{equation}\label{eq:SelfConsistencyEquation}
    n - \frac{\lambda}{\nu} = \Tr(\Lambda(\Lambda + \nu I)^{-1}).
\end{equation}
\begin{remark} 
Note that the quantity $\nu_{\star}(\lambda)\geq 0$ plays a similar role to the $\ell_{2}$ regularisation in the random expressions for the bias and variance. However, \cref{eq:SelfConsistencyEquation} implies that $\nu_{\star}(0)>0$, meaning that the interpolator $\hat{f}_{\lambda=0}$  will not necessarily overfit. For this reason, it is known as the \emph{effective regularization} \citep{bartlett_2020, bach_high-dimensional_2023}.
\end{remark} 
The conditions under which the deterministic equivalents in \Cref{eq:DetEquivalent} hold crucially depend on the spectrum and eigenfunctions of the kernel. Checking them is a non-trivial task, since they in principle require explicitly diagonalizing the kernel, which as previously discussed is a challenging problem in general. For this reason, these conditions were proven to hold only in the isotropic case where the spectrum and eigenfunctions are explicitly known \citep{misiakiewicz_spectrum_2022, misiakiewicz2024non}. Our first result in this Section is to show that the deterministic equivalents in \cref{eq:DetEquivalent} hold for the anisotropic power-law setting introduced in \cref{s:Setting}. A detailed discussion of the conditions 

\begin{proposition}[Validity of deterministic equivalents]
\label{prop:det}
Consider the KRR setting introduced in \Cref{s:Setting}. Define the deterministic equivalent for the excess risk:
\[
\mathsf{R}_n(\theta,\Lambda,\lambda,\sigma^2_\varepsilon) := \mathsf{B}_n(\theta,\Lambda,\lambda) + \mathsf{V}_{n}(\Lambda,\lambda,\sigma^{2}_{\varepsilon}),
\]
where $\mathsf{B}_{n}, \mathsf{V}_{n}$ given in \cref{eq:DetEquivalent}.
Under Assumptions \ref{ass:target} and \ref{assumption:h}, with probability going to $1$ as $n$ grows, we have
\[
\left|
\mathcal{R}(f_\star, X, \sigma_\varepsilon^2, \lambda)
-
\mathsf{R}_n(\theta,\Lambda,\lambda,\sigma^2_\varepsilon)
\right|
\leq
C_{x,\varepsilon,D,K}\,
\mathcal{E}_{R,n}(m)\,
\mathsf{R}_n(\theta,\Lambda,\lambda,\sigma^2_\varepsilon),
\]
where $\mathcal{E}_{R,n}(m)$ is a relative approximation error defined in \Cref{appendix:det_equivalents} that satisfies $\mathcal{E}_{R,n}(m) \to 0$ with $n$, 
\end{proposition}
A detailed discussion of the conditions and the proof of \cref{prop:det} are discussed in Appendix \ref{appendix:det_equivalents}. With this result in hand, we can move to the core of our contribution, which is to study how anisotropy impact the generalization properties of the kernel predictor.

\subsection{Weakly anisotropic regime}\label{s:WA}
As discussed in \Cref{s:Spec}, the behavior of the spectrum in the high-dimensional regime strongly depends on the level of anisotropy of the inputs. Therefore, it should be no surprise that the same will be true for the excess risk. We start by considering the \emph{weak anisotropy regime}, for which feature space is effectively high-dimensional (see \cref{eq:EffectiveDimension}). 

The starting point is to find the unique solution of the self-consistent \cref{eq:SelfConsistencyEquation} for the explicit setting introduced in \cref{s:Setting}. 
\begin{lemma}[\textit{Kernel state equation}]\label{def:KernelStateEquation}
    Consider the KRR setting introduced in \Cref{s:Setting} in the weak anisotropy regime $\alpha\in[0,1)$. Then, in the high-dimensional limit $d\to\infty$ with $\tfrac{n}{d^{\kappa}}\to\psi>0$ and $\kappa\in\mathbb{N}$, the solution $\nu_{\star}>0$ of \cref{eq:SelfConsistencyEquation} takes the form $\nu_{\star}=\xi_{\star}d^{-\kappa}$, where $\xi_{\star}>0$ is the unique positive root of the following equation:
    \begin{equation}
    \label{eq:KernelStateEquation}
        \psi = \frac{\lambda_{\text{eff}}}{\xi} + \frac{1}{\kappa! (\kappa-1)!} \int_{0}^{\infty} dt \> e^{-t} t^{\kappa-1} \frac{h_\kappa \kappa!}{h_\kappa \kappa! + (1-\alpha)^{-\kappa} \xi e^{-\alpha t}}
    \end{equation}
where $\lambda_{\text{eff}} \coloneqq  \lambda + \sum_{m > \kappa} h_m$ acts as an \textit{effective ridge regularization}. 
\end{lemma}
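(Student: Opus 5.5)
Substituting $\nu = \xi d^{-\kappa}$ into \cref{eq:SelfConsistencyEquation}, dividing by $n = \psi d^{\kappa}(1+o(1))$, and splitting $\Tr(\Lambda(\Lambda+\nu I)^{-1}) = \sum_\beta \lambda_\beta/(\lambda_\beta+\nu)$ according to the degree $m = |\beta|$ gives
\[
\psi = \frac{\lambda}{\xi} + d^{-\kappa}\sum_{m=0}^{D}\sum_{|\beta|=m}\frac{\lambda_\beta}{\lambda_\beta+\xi d^{-\kappa}} + o(1).
\]
Throughout I use \Cref{prop:spec:highd}, which applies since $\alpha<1$, in the form \cref{eq:SpectrumSpec}: $\lambda_\beta = h_m m!\, r_\alpha^{-m}\prod_j j^{-\alpha\beta_j}(1+o(1))$ with $r_\alpha \sim (1-\alpha)^{-1}d^{1-\alpha}$. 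I would treat three groups of shells separately.

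\textbf{Low shells $m<\kappa$.} There are $O(d^m)$ terms, each bounded by $1$, so after multiplying by $d^{-\kappa}$ their total contribution is $o(1)$.

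\textbf{High shells $m>\kappa$.} The goal is to show $\lambda_\beta \ll \nu$ for all but a negligible fraction of the mass. Then $\lambda_\beta/(\lambda_\beta+\nu) = \lambda_\beta/\nu\,(1+o(1))$, and the shell contributes
\[
d^{-\kappa}\nu^{-1}\sum_{|\beta|=m}\lambda_\beta = \xi^{-1}\sum_{|\beta|=m}\lambda_\beta .
\]
Exactly, $\sum_{|\beta|=m}\lambda_\beta = h_m\,\mathbb{E}\langle x,x'\rangle^m$ restricted to the top Hermite component. This equals $h_m(\Tr\Sigma)^m(1+o(1)) = h_m(1+o(1))$, which one gets either from the trace of $T_k$ (the diagonal $\mathbb{E}[h(\|x\|^2)]$, with $\|x\|^2\to1$ since $\Tr\Sigma^2\to0$) or directly by summing the product formula through the multinomial identity. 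This is how $\lambda_{\rm eff} = \lambda + \sum_{m>\kappa}h_m$ arises. The delicate part is justifying $\lambda_\beta/\nu\to 0$ in the mass-weighted sense. Multi-indices concentrated on the first few coordinates have $\lambda_\beta \approx r_\alpha^{-m}\gg d^{-\kappa}$ only when $m(1-\alpha)<\kappa$. I would bound their total mass by the mass carried by indices with $\prod_j j^{\beta_j}\le d^{\gamma}$ for a suitable $\gamma$, and show that this is $o(1)$ using the same Gamma-type computation as in the next step.

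\textbf{Critical shell $m=\kappa$.} Here $\nu$ and the eigenvalues are of comparable size. Write $\lambda_\beta/\nu = h_\kappa\kappa!(1-\alpha)^\kappa d^{-\kappa(1-\alpha)}\prod_j j^{-\alpha\beta_j}/\xi$. Up to a factor $1/\kappa!$ from ordering, and since repeated indices are negligible, the sum over $|\beta|=\kappa$ is a sum over distinct $j_1<\dots<j_\kappa$. Set $u_i = j_i/d\in(0,1]$ and replace the sum by $d^\kappa\int_{[0,1]^\kappa}du/\kappa!$. The summand then becomes
\[
\frac{h_\kappa\kappa!}{h_\kappa\kappa!+(1-\alpha)^{-\kappa}\xi\prod_i u_i^{\alpha}}
\]
(the $d$-powers cancel exactly, since $d^{-\kappa(1-\alpha)}d^{-\alpha\kappa}=d^{-\kappa}$, and $\nu d^{\kappa} = \xi$). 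Substituting $u_i = e^{-s_i}$ and $t = \sum_i s_i$, the law of $t$ under $e^{-\sum s_i}ds$ is $\mathrm{Gamma}(\kappa,1)$ with density $t^{\kappa-1}e^{-t}/(\kappa-1)!$. This yields exactly the integral in \cref{eq:KernelStateEquation} with prefactor $1/(\kappa!(\kappa-1)!)$.

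\textbf{Remaining points and main obstacle.} Making the Riemann-sum replacement rigorous requires dominated convergence, since the summand is bounded by $1$, together with control near $u_i\to0$ where the $(1+o(1))$ in \cref{eq:SpectrumSpec} must hold uniformly; I would verify that uniformity from the proof of \Cref{prop:spec:highd}. Uniqueness of $\xi_\star$ then follows because the right-hand side of \cref{eq:KernelStateEquation} is strictly decreasing in $\xi$, running from $+\infty$ to $0$ (or to $\psi$-compatible limits when $\lambda_{\rm eff}=0$), and convergence of the exact root to $\xi_\star$ follows from monotone uniform convergence of the equations. The main obstacle I expect is the high-shell step: proving that the "spiky" eigenvalues with $m>\kappa$ concentrated on top coordinates do not contribute $\Theta(1)$ mass. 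The $1/(1-\alpha)$ factors and the gap between $\kappa$ and $m(1-\alpha)$ make this the place where $\alpha<1$ is genuinely used.
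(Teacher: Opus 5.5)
Your argument is correct, and it differs from the paper's mainly in how the shells $m\ge\kappa$ are handled.

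\textbf{Paper's route.} The paper runs every such shell through one machine: a Riemann integral, a rescaling to variables $y_i$ (which divides by $\alpha$), and the Irwin--Hall lemma. For $m=\kappa$ it expands the Irwin--Hall density at its right edge and applies dominated convergence. For $m>\kappa$ it replaces $1/(1+y^\alpha)$ by $y^{-\alpha}$ and integrates explicitly. The band $\kappa<m\le\kappa/(1-\alpha)$ is covered only by the remark that the integral is ``UV-dominated''.

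\textbf{High shells, your route.} You linearize $\lambda_\beta/(\lambda_\beta+\nu)\approx\lambda_\beta/\nu$ and use that each high shell carries total mass $h_m$. Everything then reduces to showing that the mass with $\lambda_\beta\gtrsim\nu$ vanishes. Your Gamma computation does give this: the offending mass is $\widetilde{O}(d^{-(1-\alpha)(m-\kappa)/\alpha})$. This is a quantitative version of the paper's UV-dominance step, and it makes transparent why $\lambda_{\text{eff}}$ collects exactly $\sum_{m>\kappa}h_m$.

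\textbf{Critical shell, your route.} For $m=\kappa$, the substitution $u_i=j_i/d$, $u_i=e^{-s_i}$ lands directly on a $\mathrm{Gamma}(\kappa,1)$ law. The only limit you need is a Riemann-sum limit of a bounded function continuous on $[0,1]^\kappa$, with no $\log d$ bookkeeping. It also covers $\alpha=0$, where the paper's rescaling is undefined.

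\textbf{What each approach buys.} You additionally prove uniqueness of $\xi_\star$ by monotonicity and convergence of $\nu_\star d^{\kappa}$ to $\xi_\star$. The paper merely posits $\nu=\xi d^{-\kappa}$ as an ansatz. Conversely, the paper's Irwin--Hall representation is what it reuses for $\tau^{(m)}$ and the bias. There the squared denominators defeat your linearization, and the $\alpha\lessgtr 1/2$ dichotomy appears.

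\textbf{Two small fixes.} First, justify the per-shell mass through $\sum_{|\beta|=m}m!\,\sigma^\beta=(\mathrm{Tr}\,\Sigma)^m+O(\mathrm{Tr}(\Sigma^2))$, i.e.\ your second route. The ``exact'' identity you state first is not correct as written, and the trace of $T_k$ only controls the sum over all shells. Second, the uniformity you worry about is automatic. The proof of \Cref{prop:spec:highd} gives a two-sided Loewner bound, hence a relative error that is uniform over the sorted spectrum. The map $\lambda\mapsto\lambda/(\lambda+\nu)$ changes by at most a factor $1\pm\epsilon$ under a multiplicative perturbation $1\pm\epsilon$, so no separate check near $u_i\to0$ is needed.
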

We now sketch the derivation of this result, and refer the reader to \Cref{appendix:KernelStateEquation} for the details. Considering the ansatz $\nu = \xi d^{-\kappa}$, for some $\xi >0$ and inserting in the self-consistent equations \cref{eq:SelfConsistencyEquation} together with the asymptotic spectrum given by \Cref{prop:spec:highd}, one can unroll and group the terms in the trace in the self-consistency \cref{eq:SelfConsistencyEquation} according to the value of $|\beta|$:
\begin{align}
\Tr(\Lambda(\Lambda + \nu I)^{-1}) = \sum_{m=0}^\infty \sum_{|\beta|=m} \frac{\lambda_{\beta}}{\lambda_{\beta}+\nu} \equiv \sum_{m=0}^\infty t^{(m)}. 
\end{align}
We call the group of $\beta \in \ZZ^{d}_{\geq 0}$ with $|\beta|=m$ the \emph{shell} of degree $m$. Taking the $d\to\infty$ limit, the spectrum separates into three groups:
    \begin{itemize}
        \item \textbf{Learned features ($m < \kappa$):} Terms in the sum which vanish in the limit. These correspond to fully resolved features. 
        \item \textbf{Unlearnable tail ($m > \kappa$):} Terms in the sum for which the contribution does not vanish and becomes independent of the sample complexity $\psi$. Their aggregate effect converges to a constant, which combined with the original $\ell_{2}$-regularization gives the \textit{effective ridge regularization} $\lambda_{\text{eff}} = \lambda + \sum_{m > \kappa} h_m$.
        \item \textbf{The threshold shell ($m = \kappa$):} This boundary shell is only partially resolved and its contribution strictly depends on $\psi$, yielding the continuous integral in the kernel state \cref{eq:KernelStateEquation}.
    \end{itemize}
    
\begin{remark}
    In this regime, the high-order tails of the kernel Taylor expansion $l(\kappa) = \sum_{m > \kappa}h_m$ act as an implicit regularization mechanism. This implies that even in the ``ridgeless'' limit $\lambda\to 0^{+}$, the kernel will still exhibit a self-regularizing effect which is completely determined by the high-degree components of the kernel $h$. This mechanism is exactly the same as for in isotropic case, first described by \cite{misiakiewicz_spectrum_2022}.
\end{remark}

Given a solution $\xi_{\star} \in \RR$ of \cref{eq:KernelStateEquation}, one can compute $\mathsf{V}_n(\Lambda,\lambda,\sigma^{2}_{\varepsilon})$ and $\mathsf{B}_n(\theta,\Lambda,\lambda)$. We start by discussing the variance term. 

\subsubsection{Variance}\label{ss:variancewa}
\begin{figure}[t]
\centering
\includegraphics[width=0.75\linewidth]{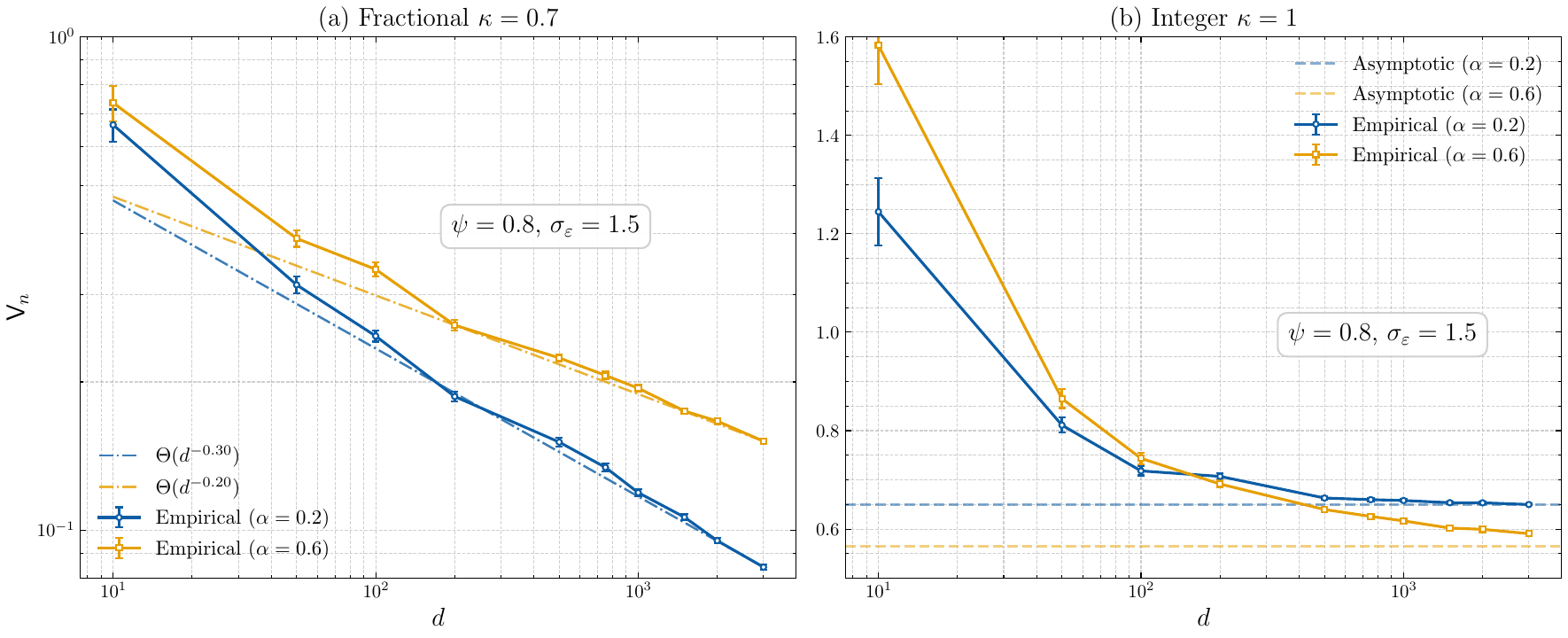}
\caption{Numerical illustration of the asymptotic variance of KRR. We use a null target function ($f_\star = 0$) to isolate the variance component of the risk. The experimental parameters are $h(t) = 1 + 8t + 3t^2 + 0.1 t^3 + t^4$, $\lambda = 0$, $\psi = 0.8$ and $\sigma_\varepsilon^2 = 1.5^2$. (\textbf{Left}) For fractional sample complexity ($\kappa = 0.7$), the empirical variance vanishes. The curve matches the theoretical power-law decay rate predicted by Eq.~\eqref{eq:DecayRatesVarianceFraction} (dashed line). (\textbf{Right}) For integer sample complexity, the variance instead plateaus to a finite, order-one constant given by Eq.~\eqref{eq:VarianceWA}.}
\label{fig:VarianceConvergence}
\end{figure}
Once the value of $\xi$ in \Cref{eq:KernelStateEquation} is found, we get the following result for the deterministic equivalent of the variance $\mathsf{V}_n(\lambda)$: 

\begin{theorem}[\textit{Variance in the weakly anisotropic regime}]\label{def:VarianceWeak} Consider KRR in the setting described in Section~\ref{s:Setting} in the weakly anisotropic regime ($0 \leq \alpha < 1$). Denote by $\xi_\star$ the unique root of the Kernel State Equation defined in \Cref{def:KernelStateEquation}. Under the sample complexity scaling $n = \psi d^{\kappa}$ (with $\kappa \in \mathbb{N}$), the deterministic equivalent for the variance in the high-dimensional limit ($d \to \infty$) is given by:
    \begin{equation}\label{eq:VarianceWA}
        \mathsf{V}_{n} = \sigma_\varepsilon^2 \frac{\tau}{1-\tau}
    \end{equation}
    where $\tau = \tau(\psi, \kappa, \alpha, \lambda, \{h_m\})$ is given by:
    \begin{equation}\label{eq:tauWA}
        \tau = \frac{1}{\psi} \frac{1}{\kappa!(\kappa-1)!} \int_0^\infty dt \> e^{-t} t^{\kappa - 1} \frac{(h_\kappa \kappa!)^2}{(h_\kappa \kappa! + (1-\alpha)^{-\kappa} \xi_\star e^{-\alpha t})^2}
    \end{equation}
    For fractional sample complexity $\kappa \not\in\mathbb{N}$, the variance asymptotically vanishes to zero as $\mathsf{V} = \Theta(d^{-\gamma_V})$ where:
\begin{equation}
        \gamma_V = 
        \begin{cases}
            \min\big(\kappa - \lfloor \kappa \rfloor, \; \lceil \kappa \rceil - \kappa \big) & \text{if } 0 \leq \alpha \leq \frac{1}{2} \\[6pt]
            \min\big(\kappa - \lfloor \kappa \rfloor, \; \frac{1-\alpha}{\alpha}(\lceil \kappa \rceil - \kappa) \big) & \text{if } \frac{1}{2} < \alpha < 1
        \end{cases}
        \label{eq:DecayRatesVarianceFraction}
    \end{equation}
\end{theorem}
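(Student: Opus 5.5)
The plan is to start from the deterministic equivalent in \cref{eq:DetEquivalent}. Write $T \coloneqq \Tr(\Lambda^2(\Lambda+\nu_\star I)^{-2})$, so that $\mathsf{V}_n=\sigma_\varepsilon^2\,\frac{T/n}{1-T/n}$. The whole statement then reduces to computing $\tau=\lim T/n$. I will use the asymptotic spectrum of \Cref{prop:spec:highd}, which is valid for $\alpha<1$, and the scaling $\nu_\star=\xi_\star d^{-\kappa}$ from \Cref{def:KernelStateEquation}. As in the sketch of that lemma, I split $T=\sum_{m} T^{(m)}$ into shells $|\beta|=m$, with
\[
T^{(m)}=\sum_{|\beta|=m}\frac{\lambda_\beta^2}{(\lambda_\beta+\nu_\star)^2}.
\]

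\emph{Integer $\kappa$.} There are three groups of shells.
\begin{itemize}
\item \emph{Low shells ($m<\kappa$).} Each summand is at most $1$, so $T^{(m)}\le \binom{d+m-1}{m}=O(d^m)=o(n)$.
\item \emph{High shells ($m>\kappa$).} I bound $T^{(m)}\le \nu_\star^{-2}\sum_{|\beta|=m}\lambda_\beta^2 \lesssim d^{2\kappa} h_m^2 (m!)^2 \sum_{|\beta|=m}\sigma^{2\beta}$. The last sum is at most $C_m\Tr(\Sigma^2)^{?}$-type products. More usefully, I split the shell into terms with $\lambda_\beta\ge\nu_\star$, which I count, and the rest, which I bound by $\lambda_\beta^2/\nu_\star^2$. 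In both cases the contribution should be $o(d^\kappa)$, because for $\alpha<1$ the masses of the top coordinates, $\sigma_j\sim d^{-(1-\alpha)}j^{-\alpha}$, all vanish. This is the same estimate that shows these shells contribute the $\psi$-independent constant $\sum_{m>\kappa}h_m$ in the state equation. Here, however, the squared summand kills them relative to $n$.
\item \emph{Threshold shell ($m=\kappa$).} I reuse verbatim the Riemann-sum/integral representation established for \Cref{def:KernelStateEquation}. That argument shows $d^{-\kappa}\sum_{|\beta|=\kappa} g\bigl(\lambda_\beta d^{\kappa}\bigr)\to \frac{1}{\kappa!(\kappa-1)!}\int_0^\infty e^{-t}t^{\kappa-1} g\bigl(h_\kappa\kappa!(1-\alpha)^{\kappa}e^{-\alpha t}\bigr)\,dt$ for bounded continuous $g$. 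The variable $t$ parametrizes $-\sum_j\beta_j\log(j/d)$, and the $\Gamma(\kappa)$ density comes from counting $\beta$ with a given log-product. I apply it with $g(u)=u^2/(u+\xi_\star)^2$ in place of $u/(u+\xi_\star)$. Dividing by $n=\psi d^\kappa$ and rewriting gives exactly \cref{eq:tauWA}.
\end{itemize}
That $\tau<1$ follows from comparing with the state equation, since $g\le u/(u+\xi)$. Hence \cref{eq:VarianceWA} holds.

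\emph{Fractional $\kappa$.} Now no shell sits at the scale $d^{-\kappa}$. First I solve \cref{eq:SelfConsistencyEquation} to leading order. Shells $m\le\lfloor\kappa\rfloor$ contribute $O(d^{\lfloor\kappa\rfloor})=o(n)$, and the high shells are almost entirely unresolved. Hence $\lambda/\nu$ plus the unresolved part must balance $n$, which gives $\nu_\star\asymp \lambda_{\rm eff}/n\asymp d^{-\kappa}$. Then $T/n$ has two sources.
\begin{itemize}
\item \emph{Shell $\lfloor\kappa\rfloor$.} It is fully learned and gives $d^{\lfloor\kappa\rfloor}/n=d^{-(\kappa-\lfloor\kappa\rfloor)}$.
\item \emph{Shell $M=\lceil\kappa\rceil$.} It gives $n^{-1}\sum_{|\beta|=M}\min\bigl(1,\lambda_\beta^2/\nu_\star^2\bigr)$, with $\lambda_\beta/\nu_\star\asymp d^{\kappa-M(1-\alpha)}\prod_j j^{-\alpha\beta_j}$.
\end{itemize}
Higher shells are smaller by the same bounds as in the integer case. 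The estimate for shell $M$ is the main obstacle, and I expect it to produce the two regimes in \cref{eq:DecayRatesVarianceFraction}. For $\alpha\le 1/2$ the sum $\sum_{|\beta|=M}\lambda_\beta^2$ is dominated by generic $\beta$ with all $j\sim d$. This gives roughly $d^{M}d^{2\kappa-2M}/d^{\kappa}=d^{-(M-\kappa)}$. For $\alpha>1/2$ the squared weights $j^{-2\alpha}$ are summable, so the sum concentrates on small indices $j=O(1)$. Balancing the count of $\beta$ with $\lambda_\beta\ge\nu_\star$ against the tail then yields the exponent $\frac{1-\alpha}{\alpha}(M-\kappa)$.

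I would do this balancing by writing the shell sum as an integral over $s=\sum_j\beta_j\log j$, with density $\approx s^{M-1}e^{s/\alpha}$-type growth near the cutoff. I would then locate the cutoff $s_c$ at which $\lambda_\beta=\nu_\star$ and optimize. Finally I take the minimum of the two rates, and use $\mathsf{V}\asymp T/n$ since $T/n\to0$.
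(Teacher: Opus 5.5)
Your proposal follows essentially the paper's proof: the same shell decomposition with the shells $m\neq\kappa$ shown negligible, the threshold shell evaluated through the same $\Gamma(\kappa)$-type integral used for the state equation (this is exactly what the paper's Irwin--Hall computation produces) with $u/(u+\xi_\star)$ replaced by its square, and, for fractional $\kappa$, a comparison of the shells $\lfloor\kappa\rfloor$ and $\lceil\kappa\rceil$. One sign slip needs fixing: with $t=-\sum_j\beta_j\log(j/d)$ you have $\lambda_\beta d^{\kappa}\simeq h_\kappa\kappa!(1-\alpha)^{\kappa}e^{+\alpha t}$, not $e^{-\alpha t}$ (small indices mean large $t$ and large eigenvalues), and with your sign the choice $g(u)=u^2/(u+\xi_\star)^2$ puts $\xi_\star e^{+\alpha t}$ in the denominator, so it would not reproduce \eqref{eq:tauWA}.
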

\Cref{fig:VarianceConvergence} illustrates  \Cref{def:VarianceWeak} numerically, showing the good agreement between the deterministic equivalent for the variance and a finite-size simulations.
\begin{remark}
    The quantity $\tau$ in \Cref{eq:DecayRatesVarianceFraction} is closely tied to the effective dimensionality of the kernel operator. Formally, its value corresponds to
\[
\tau = \frac{\text{df}_2(\nu_{\star})}{\text{df}_1(\nu_{\star})+ \frac{\lambda}{\nu_{\star}}}
\]
where $\text{df}_1(\nu)$ and $\text{df}_2(\nu)$ denote respectively the the \emph{first and second degree of freedom} of the kernel eigenvalue matrix. The degrees-of-freedom offer an alternative measure of the kernel's effective dimensionality, which is diverging in the high-dimensional limit \citep{zhang_learning_2005}. 
Specifically, $\tau$ captures the interplay between these two distinct measures of capacity: in the ridgeless limit ($\lambda \to 0^+$), $\tau$ reduces to the ratio $\text{df}_2(\nu_{\star}) / \text{df}_1(\nu_{\star})$, thereby quantifying how dissimilar these metrics are. Indeed, whenever the first and second degree of freedom are approximately equal $\text{df}_1 \approx \text{df}_2$, then $\tau \approx 1$ (and the variance explodes to infinity). Conversely, when the second degree of freedom is negligible compared to the first ($\text{df}_2 \ll \text{df}_1$), then $\tau \to 0$ (and the variance collapses to zero). The specific behavior of $\tau$ as a function of the power law exponent $\alpha$, as defined in Eq.~\eqref{eq:tauWA}, is illustrated in \Cref{fig:tau}. Interestingly, as $\alpha$ approaches $1$, $\tau \to 0$ independently of $\psi$ and $\lambda$; this indicates that the second degree of freedom becomes strictly negligible when the data covariance spectrum approximately decays as $\sigma_j \propto j^{-1}$.
\end{remark}

\begin{figure}
    \centering
    \includegraphics[width=0.8\linewidth]{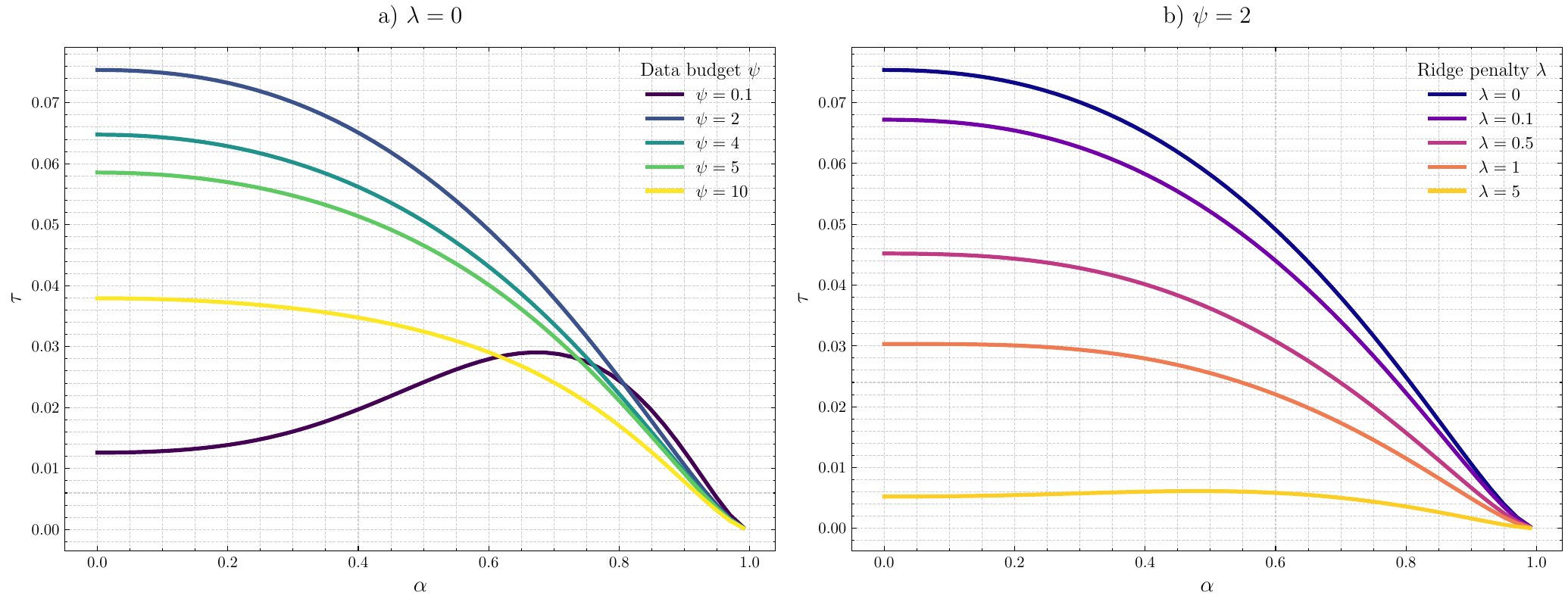}
    \caption{Behaviour of $\tau$ as a function of input covariance power-law exponent $\alpha \in [0,1)$, obtained by numerically solving the Kernel state \cref{def:KernelStateEquation}. \textbf{(Left)} Different sample complexities $n= \psi d^{\kappa}$ with $\psi \in \{0.1, 2, 4, 5, 10\}$, in the interpolation limit ($\lambda = 0$). \textbf{(Right)} Different regularization strengths $\lambda \in \{0, 0.1, 0.5, 1, 5\}$ at fixed sample complexity ($\psi = 2$). In both figures, we take $\kappa = 2$ ($n = \psi d^2$) under the fourth-order truncated exponential kernel $h(t) = \sum_{j=0}^4 t^j/j!$. 
    %Crucially, as the spectrum disperses ($\alpha \to 1^-$), the capacity parameter is systematically suppressed ($\tau \to 0$), shielding the estimator from the interpolation variance singularity.
    }
    \label{fig:tau}
\end{figure}
More fundamentally, \cref{eq:VarianceWA} reveals that the variance is strictly controlled by $\tau$ and, consequently, by the gap between $\rm{df}_{1}$ and $\rm{df}_{2}$ \citep{bach_high-dimensional_2023}. Indeed, given $\tau = \text{df}_2(\nu_{\star}) / \text{df}_1(\nu_{\star})$ for $\lambda = 0$, one can rewrite:
\[
\mathsf{V}_{n} = \sigma_\varepsilon^2 \left(\frac{\text{df}_1(\nu_{\star}) - \text{df}_2(\nu_{\star})}{\text{df}_2(\nu_{\star})}\right)^{-1}.
\]
Given $\tau$, the the behavior of the variance $\mathsf{V}_{n}$ as a function of the sample complexity $\psi$, at fixed scaling $\kappa\in\mathbb{N}$ can be easily evaluated. 
\begin{figure}[t]
    \centering
    \includegraphics[width=0.8\linewidth]{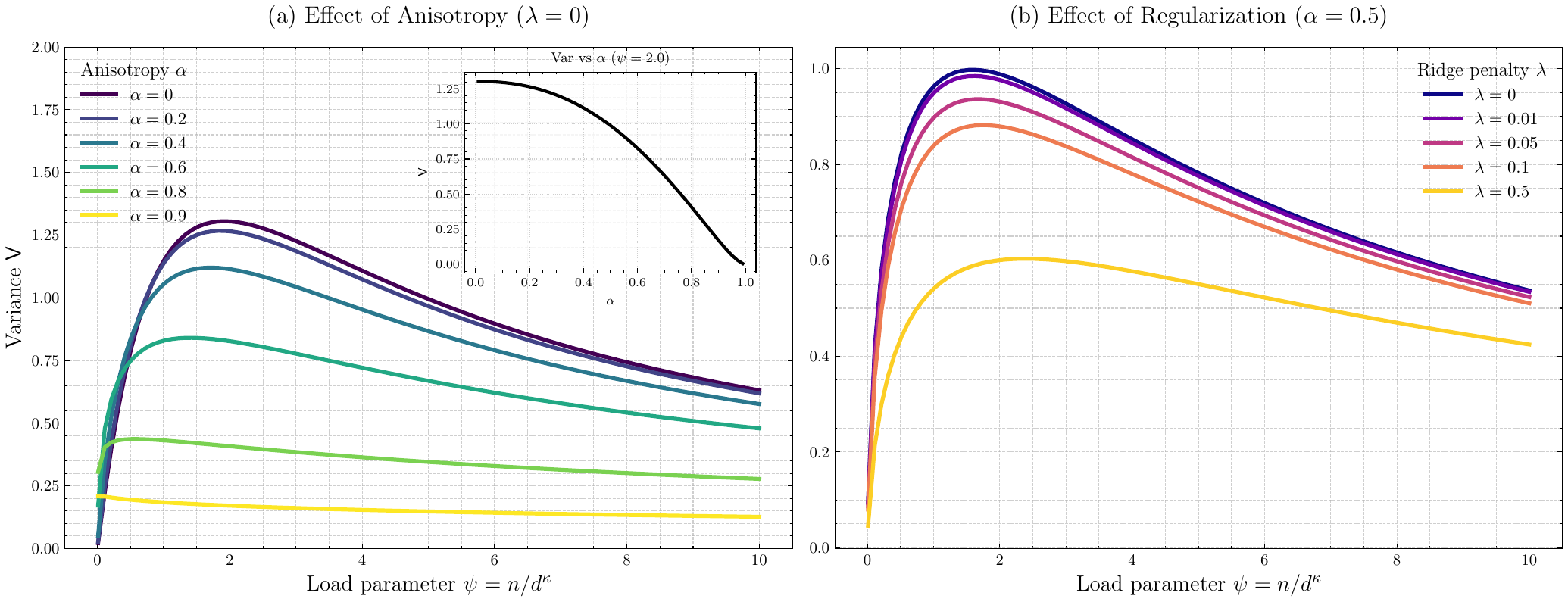}
    \caption{Deterministic equivalent of the variance as a function of the sample complexity for $\kappa=2$, $\sigma_\varepsilon = 4$ and polynomial kernel $h(t) = 1 + t + \frac{1}{2}t^2 + \frac{1}{6}t^3 + \frac{1}{24}t^4$.
    (\textbf{Left}) Fixed $\lambda=0$ for different values of anisotropy. The inset shows the variance curve for a fixed $\psi = 2$; (\textbf{Right}) Fixed $\alpha = 0.5$ for different regularization $\lambda$. As expected, stronger regularization suppresses the variance.}
    \label{fig:VarianceWA}
\end{figure}
\Cref{fig:VarianceWA} (Left) shows that the variance exhibits a distinct peak at a finite value $\psi^\star > 0$, and systematically vanishes at both extremes of the domain ($\psi \to 0, \psi \to \infty$). This behavior is consistent with the previously discussed result showing that the variance vanishes for fractional values of $\kappa$. 

By horizontally concatenating the variance curves across different scales $\kappa$, one obtain a global picture given by a \textit{multiple descent} curve. Each variance peak emerges at sample size $n$ matching the number of degrees of freedom in the next polynomial feature space (i.e., whenever $n \asymp d^{\kappa}$ for $\kappa \in \mathbb{N}$), while the variance vanishes for intermediate values. This multiple descent pattern in the variance is akin to the one shown by \cite{misiakiewicz_spectrum_2022, Canatar_2021, liang_multiple_2020} for isotropic data, and the fractional decay rates in \cref{eq:DecayRatesVarianceFraction} are consistent with the results of \citet{liang_multiple_2020} (Theorem 1) in the isotropic limit $\alpha \to 0$.

An important different, however, is that increasingly has a suppressing effect on the peaks. Indeed, as $\alpha\in[0,1)$ increases, the entire variance curve is progressively suppressed, flattening the interpolation peak towards zero and shifting its position toward smaller sample complexity. Indeed, in the limit $\alpha \to 1^{-}$, \Cref{eq:tauWA} implies that $\mathsf{V}_{n}\to 0$ identically since $(1-\alpha)^{-\kappa}\to\infty$. This is also a consequence of the fact that, as $\alpha \to 1^-$, $\tau \to 0$ and $\text{df}_2(\nu)$ becomes negligible compared to the first degree of freedom $\text{df}_1(\nu)$. 

\begin{remark}[\textit{Anisotropy suppresses multiple descent}]   
    Consider the variance decay for fractional $\kappa$. In the limit $\alpha \to 1^-$, the second argument of the minimum in \cref{eq:DecayRatesVarianceFraction} vanishes, analytically yielding $\gamma_V \to 0$. This implies that as we approach the strongly anisotropic regime, the fast power-law decay of the variance at fractional sample complexities $\kappa$ progressively slows down, closing the gap with the integer case. As a consequence, the multiple descent phenomenon gets gradually suppressed: the peaks at integer $\kappa$ are progressively flattened (to zero) while the vanishing valleys in the intermediate fractional regimes disappear. In other words, the variance curve effectively flattens out as we approach the strongly anisotropic regime, washing out the sharp distinction between integer and fractional regimes.
\end{remark}

\subsubsection{Bias}\label{sssection:BiasWA}
We now turn our attention to the bias. \Cref{eq:DetEquivalent} depends on the coefficients of the target function $f_\star$ into the basis that diagonalises the kernel operator, defined in \cref{eq:Tk:diag}. For the polynomial inner-product kernel, this defines a basis of degree $D$ polynomials in $\mathbb{R}^{d}$, which we denote $(e_{\beta})_{\beta\in\mathbb{Z}_{\geq 0}^{d}: |\beta|\leq D}$:
\begin{equation}\label{eq:decompo1}
    f_{\star}(x) = \sum_{\beta \in \mathbb{Z}^d_{\geq 0}} \theta_{\beta} e_{\beta}(x). 
\end{equation}
Working directly with the sum over the multi-index $\beta$ is analytically challenging, however, we can still group the indices according to the degree of $\beta$ (which does not necessarily coincides with the degree of the polynomial $\phi_{\beta})$.
\begin{lemma}
    Let $f_\star \in L_2(p_X)$ denote the target function. The decomposition in \cref{eq:decompo1} can be equivalently rewritten using a sequence of indices $(m, i_1, \dots, i_m)$ as:
    \begin{equation}\label{eq:DecompositionIndices}
            f_\star(x) = \sum_{m=0}^{\infty} \> \sum_{1 \leq i_1 \leq \dots \leq i_m \leq d} \theta_{i_1, \dots, i_m} e_{i_1, \dots, i_m}(x)
    \end{equation}
    where the indices $i_k$ map to the multi-index components via $\beta_j = \sum_{k=1}^m \delta_{i_k, j}$.
\end{lemma}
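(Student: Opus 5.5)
The lemma is a reindexing statement: the sum over multi-indices $\beta\in\mathbb{Z}^d_{\geq 0}$ in \cref{eq:decompo1} is rewritten as a sum over degrees $m$ and non-decreasing index sequences. The plan is therefore to exhibit an explicit bijection between the two index sets and then relabel the coefficients and basis functions along it. Since each term $\theta_\beta e_\beta$ is simply renamed, the function $f_\star$ is unchanged.

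\textbf{Step 1: partition by degree.} Partition $\mathbb{Z}^d_{\geq 0}$ into shells $S_m=\{\beta:|\beta|=m\}$, $m\geq 0$. These are disjoint and exhaust all multi-indices, so
\[
\sum_{\beta\in\mathbb{Z}^d_{\geq 0}} \theta_\beta e_\beta = \sum_{m\geq 0}\sum_{\beta\in S_m}\theta_\beta e_\beta .
\]

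\textbf{Step 2: bijection within a shell.} Fix $m$ and let $T_m=\{(i_1,\dots,i_m): 1\le i_1\le\dots\le i_m\le d\}$. Define $\Phi_m:T_m\to S_m$ by $\Phi_m(i)_j=\sum_{k=1}^m\delta_{i_k,j}$. Each coordinate is a nonnegative integer and the coordinates sum to $m$, so $\Phi_m(i)\in S_m$. The inverse map sends $\beta\in S_m$ to the unique non-decreasing word that lists the symbol $j$ exactly $\beta_j$ times, in increasing order of $j$. Both compositions are the identity: a non-decreasing word is determined by its letter multiplicities, and those multiplicities are exactly $\beta$. This is the standard stars-and-bars correspondence; as a check, both sets have cardinality $\binom{d+m-1}{m}$. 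For $m=0$ both sets are singletons, the empty word and $\beta=0$.

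\textbf{Step 3: relabel.} Set $\theta_{i_1,\dots,i_m}:=\theta_{\Phi_m(i)}$ and $e_{i_1,\dots,i_m}:=e_{\Phi_m(i)}$. Because $\Phi_m$ is a bijection, the inner sum over $S_m$ equals the inner sum over $T_m$ term by term, which gives \cref{eq:DecompositionIndices}.

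\textbf{Convergence.} Rearranging the series is legitimate for two reasons. Under \Cref{ass:target} and \Cref{assumption:h}, only degrees $m\le D$ contribute, so each sum is finite. More generally, the family $(e_\beta)$ is orthonormal, so the expansion converges unconditionally in $L^2(p_X)$ for any $f_\star\in L^2(p_X)$, and any reordering or regrouping preserves the limit.

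The only point needing care is the bijectivity in Step 2; everything else is bookkeeping. I do not expect a genuine obstacle here.
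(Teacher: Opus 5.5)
Your proposal is correct and follows essentially the same route as the paper, which (in the appendix on reparametrizing the kernel eigenvalues) groups multi-indices into shells $m=|\beta|$, identifies each $\beta$ in a shell with the non-decreasing tuple $1\le i_1\le\dots\le i_m\le d$ via $\beta_j=\sum_{k}\delta_{i_k,j}$, and relabels $\theta_\beta e_\beta$ accordingly. You additionally spell out two things the paper leaves implicit: that this correspondence is a bijection, and that regrouping the orthonormal expansion does not change its limit.
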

However, working with ordered summations $1 \leq i_1 \leq \dots \leq i_m \leq d$ can be difficult. When $d\to\infty$, we can switch to a free sum.
\begin{corollary}
        Assume $f_\star$ is not a sparse function, meaning that $\theta_{i_1, \dots, i_m}$ are non-zero. By extending $\theta_{i_1, \dots, i_m}$ to be a completely symmetric tensor with respect to index permutations, the expansion of the target function in the $d\to\infty$ limit can be rewritten as:
    \begin{equation}\label{eq:DecompositionTargetGeneric}
        f_\star(x) \asymp \sum_{m=0}^\infty \frac{1}{m!} \sum_{i_1, \dots, i_m = 1}^d \theta_{i_1, \dots, i_m} e_{i_1, \dots, i_m}(x) 
    \end{equation}
\end{corollary}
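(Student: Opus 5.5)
The plan is to go from the ordered sum in \cref{eq:DecompositionIndices} to the free sum over all $m$-tuples in two steps: first a symmetrization, then a control of the "diagonal" tuples with repeated indices. Fix a degree $m$.

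\textbf{Step 1 (symmetrization).} Extend $\theta_{i_1,\dots,i_m}$ and $e_{i_1,\dots,i_m}$ to all $(i_1,\dots,i_m)\in[d]^m$ by setting them equal to their value on the sorted tuple. This is consistent because both depend only on the multi-index $\beta_j=\sum_k\delta_{i_k,j}$. The free sum $\sum_{i_1,\dots,i_m=1}^d$ then visits each ordered tuple, equivalently each $\beta$ with $|\beta|=m$, exactly $\binom{m}{\beta}=m!/\beta!$ times. This gives the exact identity
\[
\frac{1}{m!}\sum_{i_1,\dots,i_m=1}^d \theta_{i_1,\dots,i_m}e_{i_1,\dots,i_m}
=\sum_{|\beta|=m}\frac{1}{\beta!}\,\theta_\beta e_\beta .
\]

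\textbf{Step 2 (reduction to repeated-index tuples).} The right-hand side of Step 1 differs from the shell-$m$ part of \cref{eq:DecompositionIndices} only on multi-indices with some $\beta_j\ge 2$, because $\beta!=1$ exactly when all indices are distinct. So the claim reduces to showing that these repeated-index multi-indices are asymptotically negligible. Their number is $\binom{d+m-1}{m}-\binom{d}{m}=O(d^{m-1})$, while the total number of shell-$m$ multi-indices is $\Theta(d^m)$. They are therefore a vanishing fraction $O(1/d)$ of the shell.

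\textbf{Step 3 (making the reduction quantitative).} This is the delicate point, and it is where the non-sparsity hypothesis is used. I would interpret $\asymp$ in the $L^2(p_X)$ sense, shell by shell. Orthonormality of the $e_\beta$ gives
\[
\Big\|\sum_{|\beta|=m}\tfrac{1}{\beta!}\theta_\beta e_\beta\Big\|^2=\sum_{|\beta|=m}\beta!^{-2}\theta_\beta^2 ,
\]
which differs from $\sum_{|\beta|=m}\theta_\beta^2$ only through the repeated-index terms. To bound that difference I would assume the coefficients are spread out rather than concentrated: either $\theta_\beta^2\le C\,\|\theta^{(m)}\|^2/|\{\beta:|\beta|=m\}|$ for all $\beta$, or, in the weighted case relevant later, a comparable ratio bound in which each coefficient is controlled by a product of one-dimensional profiles $\prod_k g(i_k)$. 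Under such a condition the repeated-index mass is $O(1/d)$ of the shell mass, or $o(1)$ whenever $\sum_i g(i)^2$ diverges. This gives a relative error $1+o_d(1)$ per shell. Summing over the finitely many shells $m\le D$ (\cref{ass:target}) preserves the $1+o_d(1)$ relative error, which is the statement.

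Step 3 is the main obstacle. The corollary's hypothesis that the $\theta$ are "non-zero" is too weak by itself: a target concentrated on the diagonal, such as $\mathrm{He}_2(x_1)$, would violate the claim. So I would make the non-concentration condition explicit and then verify it for the power-law-weighted targets the paper treats next.
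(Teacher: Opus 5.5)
Your proposal is correct and follows the same route as the paper's derivation in the reparametrization appendix. Both arguments extend $\theta$ symmetrically, do the exact multinomial bookkeeping (your $1/\beta!$ is the paper's $\prod_j\beta_j!$ factor in \cref{eq:BetaCorrection}), and discard repeated-index tuples as a vanishing fraction; the paper's closing remark there likewise concedes that this needs a delocalized target, not merely non-zero coefficients. Your explicit non-concentration condition turns that heuristic into an actual bound. Be aware, though, that for the power-law weights it gives $o(1)$ only when $\omega\le 1/2$: for $\omega>1/2$ the squared $L^2$ discrepancy stays a non-vanishing fraction of the shell energy (for $m=2$ it tends to $\zeta(4\omega)/\bigl(2(\zeta(2\omega)^2+\zeta(4\omega))\bigr)$), so there the free sum is a parametrization of the target rather than an asymptotic identity.
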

The actual formula for the bias will depend on the specific shape and values of the tensor $\theta_{i_1, \dots, i_m}$. In the following paper, we focus on the following case:
\begin{equation}\label{eq:DecompositionTargetFunction}
    \theta_{i_1, \dots, i_m} = \theta_m (i_1 \dots i_m)^{-\omega}  \quad \quad    \omega \in \mathbb{R}_{\geq 0},
\end{equation}
where the scalar $\theta_m$ depends only on $m$, and is chosen by imposing the normalization of the target function:
\begin{equation}
    ||f_{\star}||_{L^{2}(p_{X})}\coloneqq \mathbb{E}[f_\star(x)^2] = 1
\end{equation}
\begin{remark}
    Notice that the functional form of \cref{eq:DecompositionTargetFunction}  resembles a \textit{source condition} often assumed in the kernel literature, where the target coefficients are ordered according to the decreasing eigenvalues such that $\theta_k =\Theta( k^{-g})$. However, mapping this multi-dimensional tensor to an exact source condition is not straightforward, as sorting the product $(i_1 \cdots i_m)$ across all possible multi-indices into a single univariate sequence involves non-trivial combinatorial constraints, a challenge similarly highlighted by \citet{simon_webpage}.
\end{remark}
\begin{figure}[t]
    \centering
    \includegraphics[width=0.8\linewidth]{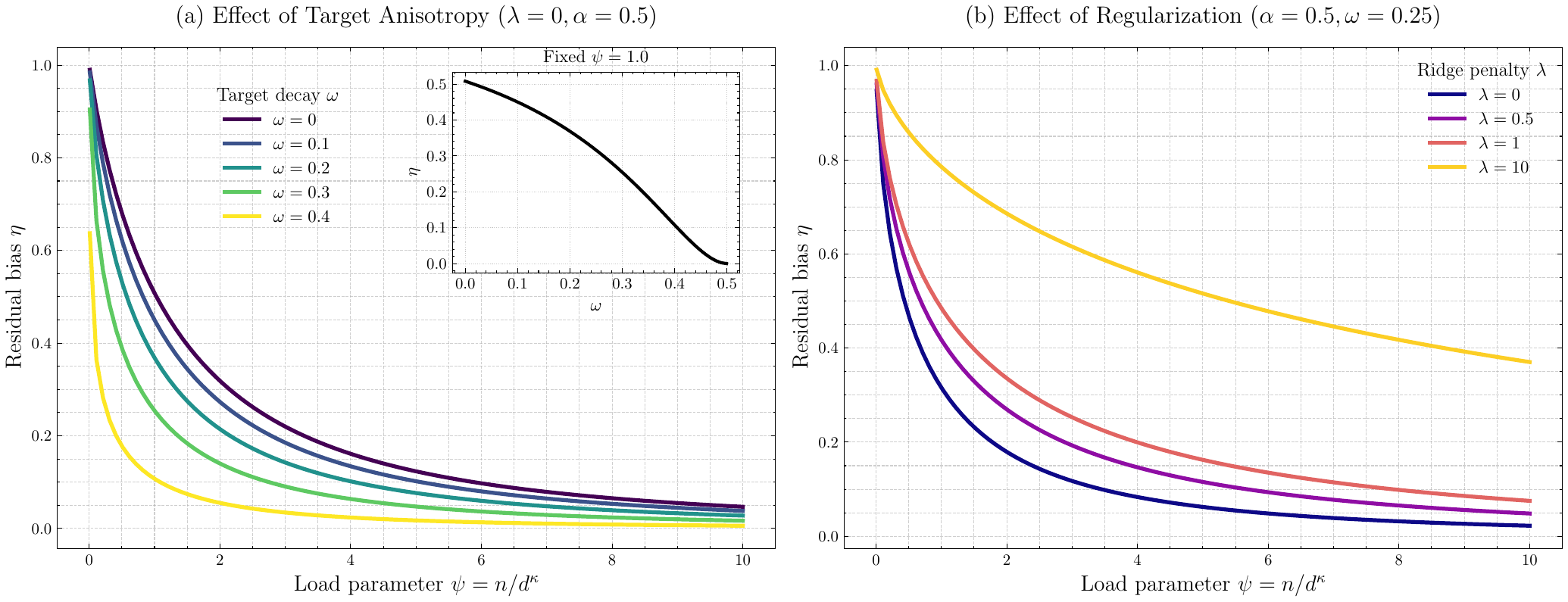}
    \caption{Residual bias $\eta$ as a function of the sample complexity $\psi$ for $\alpha = 0.5$, $\kappa=2$, $\sigma_\varepsilon = 4$ and polynomial kernel $h(t) = 1 + t + \frac{1}{2}t^2 + \frac{1}{6}t^3 + \frac{1}{24}t^4$, evaluated by solving eqs.~\cref{eq:KernelStateEquation}, \eqref{eq:VarianceWA}, and \eqref{eq:eta}. (\textbf{Left}) Varying $\omega$ for fixed $\lambda=0$. The \textit{inset} shows a cross-section of $\eta$ as a function of $\omega$ at a fixed load $\psi = 1.0$, highlighting how a stronger target anisotropy accelerates the learning of the $\kappa$-th shell. (\textbf{Right}) Fixed $\omega = 0.25$ for varying ridge $\lambda$. As expected, regularization increases the bias.}
    \label{fig:BiasWA}
\end{figure}
Denote by $E(m)$ the ``energy'' of the target function contained in the $m$-th shell, that is:
\begin{equation*}
    E(m) = \sum_{|\bm{\beta}| = m} \theta_{\bm{\beta}}^2 = \frac{1}{m!} \sum_{i_1, \dots, i_m = 1}^d \big(\theta_{i_1, \dots, i_m}\big)^2 = \|\text{P}_{m} f_\star\|^2_{L_2}
\end{equation*}
where $\text{P}_{m}$ is the projection onto the subspace spanned by the polynomials of degree $m$ in the kernel orthonormal basis. Under the assumption of Eq.~\eqref{eq:DecompositionTargetFunction}, we have the following result for the bias.
\begin{theorem}[\textit{Bias in the weakly anisotropic regime}]
\label{thm:bias:weak}
Consider the KRR setting from \Cref{s:Setting}, and let $f_\star \in L_2(p_X)$ denote a target function whose spectral decomposition satisfies \cref{eq:DecompositionTargetFunction} with a given decay rate $\omega \geq 0$. Let $\xi_\star$ denote the solution of the kernel state equation \cref{eq:KernelStateEquation} and $\tau$ given by \cref{eq:tauWA} as before. We have two distinct regimes for the bias:
    \begin{description}
        \item[a) Case $\bm{\omega > \frac{1}{2}}$:] Denoting $\kappa_{\text{eff}} = \frac{\kappa}{1-\alpha} \notin \mathbb{N}$, the bias is given by:
        \begin{equation}\label{eq:BiasWA>}
            \mathsf{B}(\omega, \lambda, \alpha) = \frac{1}{1-\tau(\alpha, \lambda)} \sum_{m > \kappa_{\text{eff}}} E(m)
        \end{equation}
        \item[b) Case $\bm{0 \leq \omega < \frac{1}{2}}$:] The bias is given by:
        \begin{equation}\label{eq:BiasWA<}
            \mathsf{B}(\omega, \lambda, \alpha) = \frac{1}{1-\tau(\alpha, \lambda)} \Bigg( \eta(\alpha, \omega) E(\kappa) + \sum_{m > \kappa} E(m) \Bigg) 
        \end{equation}
    where the quantity $0 \leq \eta \leq 1$, called \textit{residual bias fraction}, is defined as:
    \begin{equation}\label{eq:eta}
        \eta(\alpha, \omega) = \frac{1}{(\kappa-1)!} \int_0^\infty dt \> e^{-t} t^{\kappa-1} \frac{e^{2t\omega}(1-2\omega)^{\kappa}}{\big(1+ h_\kappa \kappa! \xi_\star^{-1} (1-\alpha)^{\kappa} e^{t\alpha}\big)^2}
    \end{equation}
    \end{description}
\end{theorem}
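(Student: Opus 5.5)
We start from the deterministic equivalent for the bias in \cref{eq:DetEquivalent}. Its denominator $1-\frac1n\Tr(\Lambda^2(\Lambda+\nu_\star I)^{-2})$ is exactly $1-\tau$. The reason is that the computation behind \Cref{def:VarianceWeak} groups this trace by shells, just as in the proof of \Cref{def:KernelStateEquation}. Only the threshold shell $m=\kappa$ contributes at order $n$: shells with $m<\kappa$ contribute $O(d^{m})=o(n)$, and shells with $m>\kappa$ contribute $O(d^{\kappa}\cdot$ vanishing$)$. The prefactor $(1-\tau)^{-1}$ is therefore inherited, and all the work lies in the numerator $\nu_\star^2\langle\theta,(\Lambda+\nu_\star I)^{-2}\theta\rangle=\sum_\beta \theta_\beta^2\,\nu_\star^2/(\lambda_\beta+\nu_\star)^2$. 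We write $\nu_\star=\xi_\star d^{-\kappa}$ and take $\lambda_\beta$ from \cref{eq:SpectrumSpec}. For the index sequence $(i_1,\dots,i_m)$ the eigenvalue is $\lambda=h_m m!\,r_\alpha^{-m}(i_1\cdots i_m)^{-\alpha}$ with $r_\alpha\simeq d^{1-\alpha}/(1-\alpha)$. Each term of the shell-$m$ sum is the filter $F=(1+\lambda/\nu_\star)^{-2}$ weighted by $\theta_m^2(i_1\cdots i_m)^{-2\omega}/m!$, and we analyse the sum shell by shell using the symmetric free-sum form \cref{eq:DecompositionTargetGeneric}.

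The key change of variables is $u_k=\log(d/i_k)$, i.e. $i_k=de^{-u_k}$. Then $\lambda/\nu_\star=h_m m!(1-\alpha)^m\xi_\star^{-1}d^{\kappa-m}e^{\alpha\sum u_k}$. For $\omega<\tfrac12$ the target weights $(i_1\cdots i_m)^{-2\omega}$ are not summable, so the mass of $E(m)$ is spread over indices $i_k\sim d$. Writing sums as integrals with $di_k=d e^{-u_k}du_k$ and normalising by $E(m)$, the shell-$m$ mass becomes the law of $m$ i.i.d.\ variables $u_k\sim(1-2\omega)e^{-(1-2\omega)u}$ on $[0,\infty)$. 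Their sum $t'$ is Gamma distributed, and the substitution $t=(1-2\omega)t'$ produces exactly the measure $e^{-t}t^{\kappa-1}(1-2\omega)^{\kappa}e^{2\omega t}/(\kappa-1)!$ up to the relabelling appearing in \cref{eq:eta}. This gives three cases. For $m<\kappa$ we have $\lambda/\nu_\star\to\infty$ on a set of full mass, so the contribution vanishes by dominated convergence. For $m>\kappa$ we have $\lambda/\nu_\star\to0$, except on a region where some $u_k\gtrsim (m-\kappa)\log d/\alpha$, which carries vanishing mass, so the filter tends to $1$ and $E(m)$ survives in full. For $m=\kappa$ the filter converges pointwise to $(1+h_\kappa\kappa!\,\xi_\star^{-1}(1-\alpha)^\kappa e^{\alpha t})^{-2}$, which yields $\eta E(\kappa)$. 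To justify the exchange of limits we use the Euler--Maclaurin approximation of sums by integrals together with the uniform bound $F\le1$; this is the same argument as in \Cref{appendix:KernelStateEquation}.

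For $\omega>\tfrac12$ the weights are summable, so $E(m)$ concentrates on indices $i_k=O(1)$: a fraction $1-o(1)$ of the mass lies on $\prod i_k\le M$ for fixed large $M$. On these indices $\lambda_\beta\asymp d^{-m(1-\alpha)}$ while $\nu_\star\asymp d^{-\kappa}$. If $m(1-\alpha)<\kappa$, i.e.\ $m<\kappa_{\rm eff}$, then $\lambda_\beta\gg\nu_\star$ and the filter tends to $0$, so the shell is learned. If $m>\kappa_{\rm eff}$ the filter tends to $1$ and the shell is not learned. This accounts for the assumption $\kappa_{\rm eff}\notin\NN$, which avoids the boundary case. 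The tail with $\prod i_k>M$ is controlled by $F\le 1$ and summability, giving \cref{eq:BiasWA>}.

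The main obstacle is the $\omega<\tfrac12$ threshold shell. There we must make the continuum limit rigorous: the discrete sum over $i_k\in[d]$ has to be replaced by the Gamma integral, uniformly in the cutoff $u_k\ge0$. We must also check that the normalisation of $\theta_m$ makes $E(\kappa)$ factor out exactly with the prefactor $(1-2\omega)^\kappa$. Our first attempt will be to bound the sum between two integrals using the monotonicity of $i\mapsto i^{-2\omega}$, and then apply dominated convergence in the $u$ variables.
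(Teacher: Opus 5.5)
Your proposal is correct and shares the paper's skeleton: $\mathsf{B}=U/(1-\tau)$ with $\tau$ inherited from the variance computation, and a shell-by-shell analysis of $U$. For $\omega>\tfrac12$ it is essentially the paper's argument: the paper applies dominated convergence with the summable dominator $(i_1\cdots i_m)^{-2\omega}$, which is your truncation at $\prod_k i_k\le M$ in another form.

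The genuine difference is the slow-decay case. The paper rescales to $y_k=x_k(C_m\nu)^{1/(m\alpha)}d^{(1-\alpha)/\alpha}$. It then treats the bands $m<\kappa$, $m=\kappa$, $\kappa<m\le\kappa/(1-\alpha)$ and $m>\kappa/(1-\alpha)$ separately by power-integral approximations, with the intermediate band handled only by a ``UV-dominance'' heuristic. The threshold shell is evaluated with the Irwin--Hall lemma: the prefactor $\log(d)^{\kappa-1}$ is cancelled by the edge expansion $f_{IH}(\kappa-\epsilon;\kappa)\approx\epsilon^{\kappa-1}/(\kappa-1)!$, followed by dominated convergence in $t=(\kappa-x)\log d$.

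You instead put the target weights into the measure. After dividing by $E(m)$, the log-indices $u_k=\log(d/i_k)$ are asymptotically i.i.d.\ $\mathrm{Exp}(1-2\omega)$, so the factor $(1-2\omega)^\kappa$ comes for free. The single identity $\lambda/\nu_\star\propto d^{\kappa-m}e^{\alpha\sum_k u_k}$ then settles every band at once: uniformly for $m<\kappa$, by one exponential tail bound for all $m>\kappa$ (cleaner than the paper's split), and by the Gamma law for $m=\kappa$.

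The step you flag as the main obstacle is easier than you expect. The normalised discrete law of $u=\log(d/i)$ under weights $i^{-2\omega}$ converges weakly (compare CDFs), and products of independent coordinates inherit this. For $m=\kappa$ the filter is a bounded continuous function of $\sum_k u_k$, so the portmanteau theorem suffices and Euler--Maclaurin is not needed.

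What the paper's Irwin--Hall device buys in exchange is uniformity of method. The same tool is reused for $\psi^{(m)}$, $\tau^{(m)}$ and the strongly anisotropic shells, where the relevant mass sits in the interior of the Irwin--Hall support rather than at the corner $i_k\sim d$.

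One bookkeeping slip: no substitution $t=(1-2\omega)t'$ is needed, and making it would spoil the match with the stated formula. The $\mathrm{Gamma}(\kappa,1-2\omega)$ density of $t'$ is $(1-2\omega)^\kappa t'^{\kappa-1}e^{-(1-2\omega)t'}/(\kappa-1)!$. This is literally the weight $e^{-t}t^{\kappa-1}e^{2\omega t}(1-2\omega)^\kappa/(\kappa-1)!$ in the definition of $\eta$ with $t=t'$, and the filter $(1+h_\kappa\kappa!\,\xi_\star^{-1}(1-\alpha)^\kappa e^{\alpha t'})^{-2}$ then matches as well. Rescaling would instead give the standard Gamma weight with $e^{\alpha t/(1-2\omega)}$ in the filter: the same number, but not the formula in the theorem.
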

This theorem provides a detailed picture of how the kernel learns different degrees of the target function in the weak anisotropic regime.
\begin{figure}[t]
    \centering
    \includegraphics[width=0.8\linewidth]{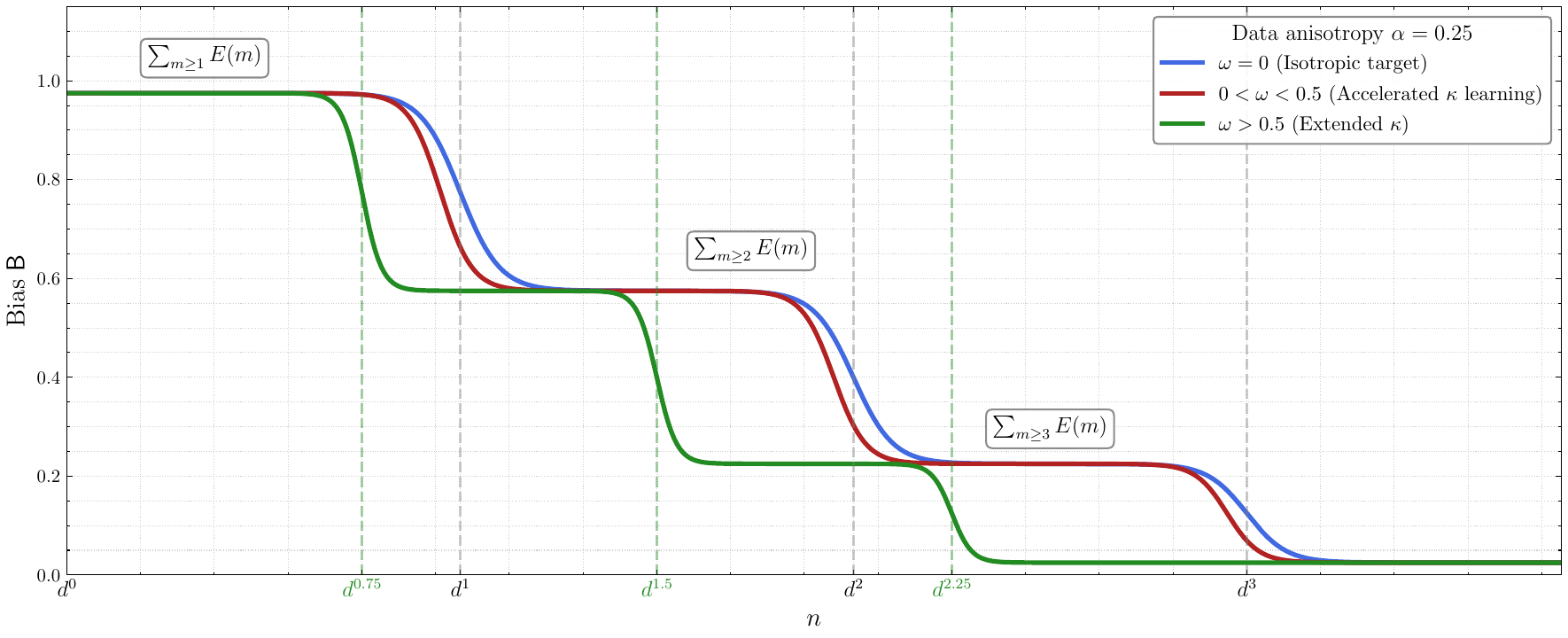}
    \caption{\textit{Schematic learning curves for the bias $\mathsf{B}_{n}$ in the weakly anisotropic regime, comparing different target functions with different $\omega$ but same energies $E(m)$ and fixed power-law exponent $\alpha = 0.25$. When $\omega < 1/2$, polynomial features are still learned at integer transitions $\kappa \in \mathbb{N}$. However, higher value of $\omega$ causes the $\kappa$-th sector to be learned more rapidly. When $\omega > 1/2$, the model can learn higher-order features at fractional scaling exponents $\kappa = m(1-\alpha) = 0.75m$. This accelerates the learning process: for example, given a sample size $n=O(d^{2-\epsilon})$, the isotropic model has only learned the linear part of the target, whereas the anisotropic model can learn a fraction of the quadratic components of the target.}}
    \label{fig:LearningCurveWA}
\end{figure}
\begin{itemize}
    \item When the target coefficients slowly decays ($0 \leq \omega < \frac{1}{2}$), the kernel can learn at best the degree $\kappa=\tfrac{\log{n}}{\log{d}}$ components of the target function $f_{\star}$,\footnote{Notice that we strictly compare target functions with the same total norm $||f_{\star}||_{L^{2}(p_{X})}$ and $\alpha$. This is because the kernel basis $e_{i_1, \dots, i_m}$ depends on the kernel operator, which in turn depends on the specific data geometry and, finally, on $\alpha$. Hence, comparing the excess risk across different values of $\alpha$ at a fixed decay rate $\omega$ makes function comparisons analytically ill-posed}
    \emph{irrespective of the data anisotropy $\alpha\in[0,1)$}. The exact fraction of degree $\kappa$ components learned is quantified by the fraction $\eta\in[0,1]$, which depends on the sample complexity $\psi$. \Cref{fig:BiasWA} (left) illustrates the effect of increasing $\omega$, showing that the faster the target decays, the faster the kernel is able to learn a fraction of the degree $\kappa$ features. In the data-scarce limit ($\psi \to 0$), the sample size is insufficient to resolve the $\kappa$-th degree, yielding $\eta \to 1$ regardless of $\omega$. Conversely, in the data-rich limit ($\psi \to \infty$, while still maintaining $n=\Theta(d^\kappa)$), the model fully learns the $\kappa$-th order eigenfunctions, driving $\eta \to 0^{+}$. Horizontally concatenating the bias behavior across different scales $\kappa$ yields a staircase behavior of the bias where different degrees of the target function are resolved at integer $\kappa \in \mathbb{N}$, akin to the isotropic case \citep{misiakiewicz_spectrum_2022}.
    \item If the target coefficients strongly decay $\omega > \frac{1}{2}$, the kernel is able to learn features up to order $\lfloor \frac{\kappa}{1-\alpha} \rfloor$, even though $n=\Theta(d^\kappa)$. In other words, learning the target features of degree $m$ strictly requires $n=\Theta(d^{m(1-\alpha)})$ samples in this regime, instead of $n=\Theta(d^{m})$ in the $\omega<1/2$ or isotropic case $\alpha=0$. As we approach the strong anisotropy regime $\alpha\to 1^{-}$, this result suggests the predictor is able to resolve all degrees independently of $\kappa$.
\end{itemize}
\Cref{fig:LearningCurveWA} summarize this discussion on the bias. 

Note that since the variance is by definition independent of the target, this decoupling admits a natural reading in terms of overfitting. In eq.~\eqref{eq:DetEquivalent}, a direction $\beta$ shifts from contributing to the bias to contributing to the variance as its eigenvalue $\lambda_\beta$
crosses the effective regularization $\nu_\star$: for $\lambda_\beta \ll \nu_\star$ the direction
is unresolved and its target component is missed, while for $\lambda_\beta \gg \nu_\star$ it is fully fit and contributes a unit to the variance trace, with both effects overlapping in a narrow window around $\lambda_\beta \approx \nu_\star$. In the isotropic case, degeneracy forces an entire degree-$m$ shell through this window at once, so capturing its signal and paying its full variance cost coincide at $n = \Theta(d^m)$. Anisotropy spreads the shell's eigenvalues instead: a target aligned with the data's principal directions concentrates on the few directions that cross the window earliest, resolving them at a small variance cost, while the bulk of the shell, largely orthogonal to the target, only crosses later, at the same threshold $n = \Theta(d^m)$, producing a variance peak no longer accompanied by a matching gain in bias. This yields two qualitatively different learning curves depending on the target's alignment, which we illustrate in \cref{fig:RiskWA}.
\begin{figure}[t]
    \centering
    \includegraphics[width=0.8\linewidth]{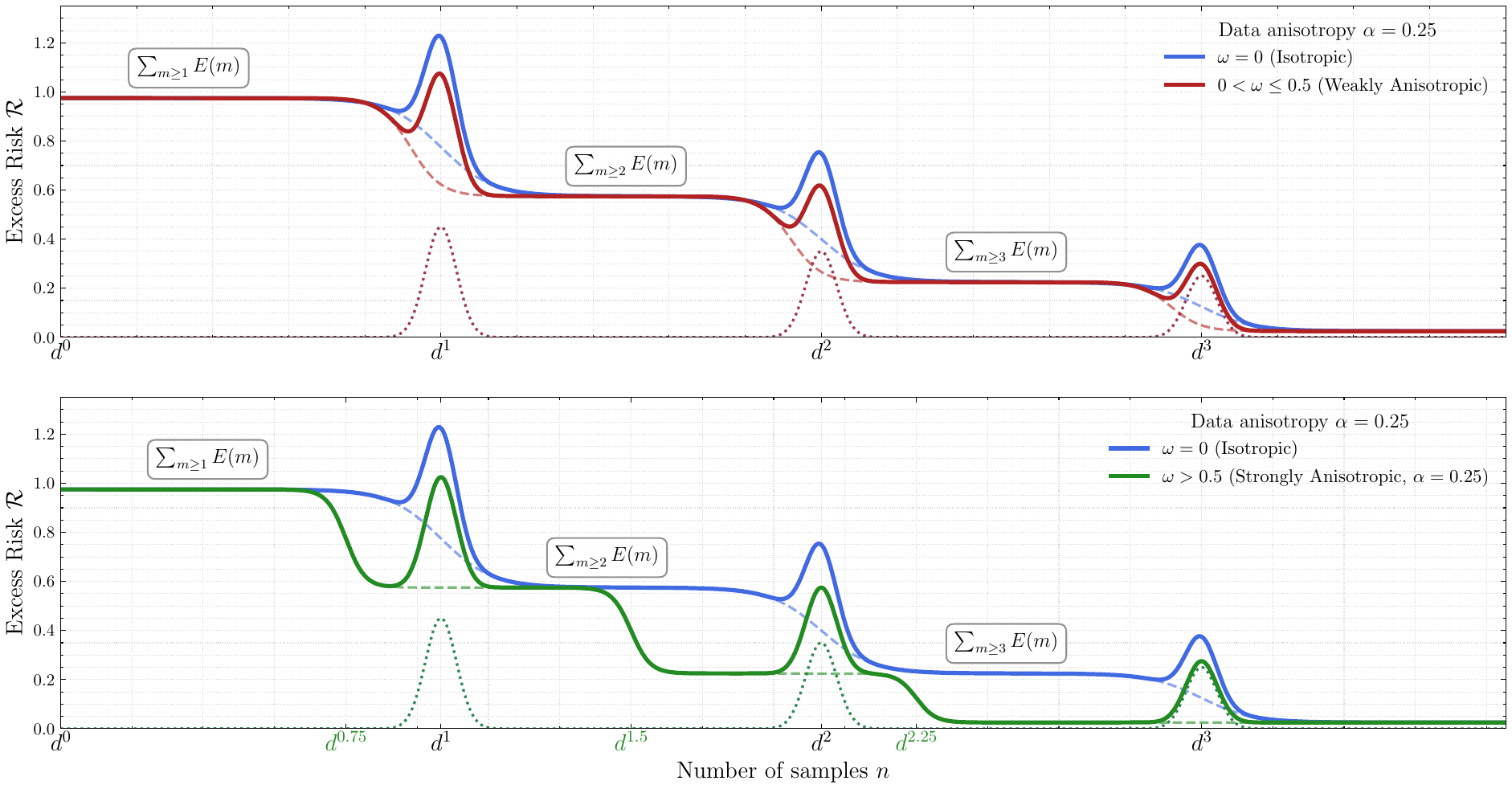}
    \caption{Schematic learning curves for the excess risk in the weakly anisotropic regime ($\alpha < 1$). The figure compares a  target function with a slow decay $0 < \omega < \frac{1}{2}$ (top panel) and a fast decay $\omega > \frac{1}{2}$ (bottom panel) against the isotropic baseline ($\omega = 0$). Solid lines represent the total excess risk, dashed lines indicate the bias contribution, and dotted lines show the variance. (\textbf{Top}): For $0 < \omega < \frac{1}{2}$, the variance peaks exactly at integer polynomial scaling ($n \asymp d^m$), in-phase with the resolution of new polynomial features. Here, the target decay $\omega$ accelerates the learning of the active $\kappa$-th shell. (\textbf{Bottom}): For $\omega > \frac{1}{2}$, a decoupling occurs. The bias drops at fractional scales $n \asymp d^{m(1-\alpha)}$, while the variance continues to peak at integer scalings.}
    \label{fig:RiskWA}
\end{figure}

%% file: 4_2_Strong_Anisotropic_Regime.tex
\subsection{Strongly anisotropic regime}
We now turn our attention to the strongly anisotropic regime ($\alpha > 1$). A fundamental feature of this regime is that the effective dimension $r_\alpha$, introduced in Eq.~\eqref{eq:EffectiveDimension}, no longer diverges but instead converges to a finite constant $r_\alpha = \zeta(\alpha)$ as $d \to \infty$. This reflects the fact that even though the data is supported in $\mathbb{R}^{d}$, it is effectively finite dimensional. 

\subsubsection{Variance}
\begin{proposition}[Variance in the strongly anisotropic regime]\label{th:VarianceSA}
    Consider the KRR problem in the setting introduced in \Cref{s:Setting} in the strong anisotropy regime $\alpha > 1$. Then, in the high-dimensional limit ($d \to \infty$), the deterministic equivalent of the variance is given by:
    \begin{itemize}
        \item \textbf{Ridgeless limit ($\lambda = 0^{+}$):} The variance reduces to exactly:
        \begin{equation}
            \mathsf{V}_{n}(\lambda=0^{+},\sigma^{2}_{\varepsilon}) = C\sigma_\varepsilon^2 (\alpha - 1)
        \end{equation}
        for a constant $C$ independent of $\psi$ and $\kappa$ (and consequently, independent of $n$).
        
        \item \textbf{Regularized regime ($\lambda \asymp 1$):} If the ridge penalty is strictly positive and independent of $d$, the variance vanishes is asymptotically of order:
        \begin{equation}
            \mathsf{V}_{n}(\lambda, \sigma^{2}_{\varepsilon}) = \tilde{\Theta}\left(d^{-\kappa\left(1-\frac{1}{\alpha}\right)}\right) = \tilde{\Theta}\left(n^{-1+\frac{1}{\alpha}}\right)
        \end{equation}
    \end{itemize}
\end{proposition}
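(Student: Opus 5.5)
We start from the deterministic equivalent in \cref{eq:DetEquivalent}, which is valid here by \Cref{prop:det}. The variance is $\mathsf{V}_n=\sigma_\varepsilon^2\,\mathrm{df}_2(\nu_\star)/(n-\mathrm{df}_2(\nu_\star))$, where $\mathrm{df}_k(\nu)=\Tr(\Lambda^k(\Lambda+\nu I)^{-k})$ and $\nu_\star$ solves \cref{eq:SelfConsistencyEquation}. For $\alpha>1$ we have $r_\alpha\to\zeta(\alpha)$, so \Cref{prop:spec:highd} no longer applies. We therefore work with the two-sided bounds of \Cref{th:KernelSpectrum}:
\[
\lambda_\beta\asymp h_{|\beta|}|\beta|!\,\zeta(\alpha)^{-|\beta|}\prod_j j^{-\alpha\beta_j}.
\]
These bounds suffice for $\Theta$-statements. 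For the ridgeless constant we will need a slightly finer counting argument.

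The first step is to compute the eigenvalue counting function $N(\epsilon)=\#\{\beta:\lambda_\beta>\epsilon\}$. Within a shell of degree $m\le D$, the eigenvalue is proportional to $(\prod_j j^{\beta_j})^{-\alpha}$. The number of multi-indices with $\prod_j j^{\beta_j}\le T$ equals the number of multiplicatively ordered $m$-tuples with product at most $T$, which is $\Theta(T(\log T)^{m-1})$. Hence $N(\epsilon)=\tilde\Theta(\epsilon^{-1/\alpha})$, and the top shell $D$ carries the largest logarithmic power. This recovers the capacity condition $\lambda_k=\tilde\Theta(k^{-\alpha})$ of \cite{wortsman_kernel_2025}, with the caveat that in finite $d$ the index $j\le d$ truncates the count once $T\gtrsim d$. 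From the counting function we get, by standard Abelian estimates,
\[
\mathrm{df}_1(\nu)=\tilde\Theta(\nu^{-1/\alpha}),\qquad \mathrm{df}_2(\nu)=\tilde\Theta(\nu^{-1/\alpha}).
\]
More precisely, for a pure power law $\lambda_k=k^{-\alpha}$, integral comparison gives
\[
\mathrm{df}_1\sim \nu^{-1/\alpha}\,\frac{\pi/\alpha}{\sin(\pi/\alpha)},\qquad \mathrm{df}_2\sim\nu^{-1/\alpha}\,\frac{(1-1/\alpha)\,\pi/\alpha}{\sin(\pi/\alpha)},
\]
so that $\mathrm{df}_2/\mathrm{df}_1\to 1-1/\alpha$. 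The log factors are common to both and cancel in the ratio to leading order, since they are slowly varying.

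\textbf{Ridgeless case.} With $\lambda=0^+$, \cref{eq:SelfConsistencyEquation} reads $n=\mathrm{df}_1(\nu_\star)$. This forces $\nu_\star\to0$ with $\nu_\star=\tilde\Theta(n^{-\alpha})$, which lies inside the regime where the truncation at $d$ is irrelevant because $n^{1/\alpha}\ll d$ in the relevant scale; this point must be checked. Then
\[
\frac{\mathrm{df}_2}{n-\mathrm{df}_2}=\frac{\mathrm{df}_2/\mathrm{df}_1}{1-\mathrm{df}_2/\mathrm{df}_1}\to\frac{1-1/\alpha}{1/\alpha}=\alpha-1.
\]
This gives $\mathsf{V}_n\to C\sigma_\varepsilon^2(\alpha-1)$, independent of $\psi$ and $\kappa$. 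The constant $C$ absorbs the deviation from the pure power law, namely the regularly varying log corrections and the constants $C_1,C_2$. Regular variation of $N(\epsilon)$ with index $-1/\alpha$ is exactly what is needed for the ratio to converge, by Karamata's theorem.

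\textbf{Fixed ridge.} For $\lambda\asymp1$, we guess $\nu_\star\asymp\lambda/n$, because $\mathrm{df}_1(\lambda/n)=\tilde\Theta(n^{1/\alpha})=o(n)$ and the equation $n-\lambda/\nu=\mathrm{df}_1(\nu)$ is then balanced. Monotonicity in $\nu$ gives uniqueness. Substituting,
\[
\mathrm{df}_2(\nu_\star)=\tilde\Theta(n^{1/\alpha}),\qquad \mathsf{V}_n=\sigma_\varepsilon^2\,\tilde\Theta(n^{1/\alpha})/n=\tilde\Theta(n^{-1+1/\alpha}),
\]
which equals $\tilde\Theta(d^{-\kappa(1-1/\alpha)})$.

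The main obstacle is the ridgeless constant. It requires that $\mathrm{df}_2/\mathrm{df}_1$ converge, and not merely be bounded. The $\Theta$-bounds of \Cref{th:KernelSpectrum} alone do not give this, since multiplicative constants $C_1,C_2$ on individual eigenvalues can shift the ratio. I would first try to show that the eigenvalue counting function is regularly varying up to a bounded factor, so that any constant distortion only changes the effective $C$, and then apply Karamata's Abelian theorem. If exact convergence fails, I would settle for $\mathsf{V}_n=\Theta(\sigma_\varepsilon^2(\alpha-1))$ uniformly in $n$.
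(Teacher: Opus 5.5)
Your route differs from the paper's and its core mechanism is sound, but the step you flagged as ``must be checked'' fails as you set it up. Write $N_d(\epsilon)=\#\{\beta:\lambda_\beta>\epsilon\}$ for the actual, finite-rank counting function and $N_\infty$ for its $d=\infty$ version.

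The coordinate truncation $j\le d$ is \emph{not} irrelevant at the scale of $\nu_\star$, and $n^{1/\alpha}\ll d$ is not the relevant comparison. Indeed, $\lambda_\beta>\nu$ iff $\prod_j j^{\beta_j}\lesssim T:=\nu^{-1/\alpha}$. At $\nu_\star=\tilde\Theta(n^{-\alpha})$ one has $T=\tilde\Theta(d^{\kappa})$, which is polynomially larger than $d$ as soon as $\kappa>1$. So $N_\infty$ is not what enters $\mathrm{df}_{1,2}(\nu_\star)$. For example, the leading degree-$D$ count changes from $\frac{T}{D!}\frac{(\kappa\log d)^{D-1}}{(D-1)!}$ to $\frac{T}{D!}(\log d)^{D-1}f_{IH}(\kappa;D)$. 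This is exactly the Irwin--Hall factor the paper carries, and the two agree only for $\kappa\le 1$.

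Relatedly, Karamata's theorem concerns a fixed regularly varying function as $\nu\to0$. Here the spectrum has finite rank and $\nu_\star$ moves with $d$, so you need a triangular-array version:
\begin{itemize}
\item[(i)] $N_d(\nu_\star u)/N_d(\nu_\star)\to u^{-1/\alpha}$ for each $u>0$;
\item[(ii)] a bound $N_d(\nu_\star u)\le C\max(1,u^{-1/\alpha-\delta})\,N_d(\nu_\star)$, uniform in $d$, so that dominated convergence applies to $\mathrm{df}_s(\nu)=\int_0^\infty N_d(\nu u)\,\tfrac{d}{du}\big(\tfrac{u}{1+u}\big)^{s}\,du$.
\end{itemize}
Both hold. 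For (i), an $O(1)$ shift of $\log T$ is invisible on the $\log d$ scale, and $f_{IH}(\cdot\,;D)$ is continuous and positive at $\kappa<D$. For (ii), use $N_d\le N_\infty$, Potter bounds for $N_\infty$, and $N_d(\nu_\star)\asymp N_\infty(\nu_\star)$. So $\mathrm{df}_2/\mathrm{df}_1\to 1-1/\alpha$ survives. But establishing this is precisely the content of the paper's Irwin--Hall analysis, which your argument replaces with an incorrect claim.

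\textbf{Comparison with the paper.} The paper works shell by shell. It turns $\psi^{(m)}$ and $\tau^{(m)}$ into $m$-fold integrals and shows, via the Irwin--Hall lemma, that for every $m\ge\kappa$ both reduce to the same prefactor (containing $f_{IH}(\kappa;m)\log(d)^{m-1}$ and $f(d)^{-1/\alpha}$) times $\int_{\mathbb{R}}e^{z}(1+e^{\alpha z})^{-s}dz$, $s=1,2$. The identity $\int e^{z}(1+e^{\alpha z})^{-2}dz=(1-\frac1\alpha)\int e^{z}(1+e^{\alpha z})^{-1}dz$ is your Beta-integral computation after $u=e^{z}$. It gives $\tau^{(m)}\sim(1-\frac1\alpha)\psi^{(m)}/\psi$ and, after summing, $\tau=1-\frac1\alpha$.

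Your counting-function formulation is more conceptual. It shows that only the local index $1/\alpha$ matters, so logarithmic factors and any single global rescaling of the spectrum cancel.
\begin{itemize}
\item The log corrections therefore contribute nothing to $C$, contrary to what you suggest.
\item The paper simply posits $\lambda_\beta=c\,h_{|\beta|}|\beta|!\,\sigma^\beta$ with one constant $c$, which cancels, so in effect $C=1$.
\item You are right that genuinely eigenvalue-dependent factors in $[C_1,C_2]$ could prevent $\tau$ from converging at all, rather than merely changing $C$. The paper does not address this either.
\item Your fixed-ridge argument matches the paper's ($\nu_\star=(\lambda/\psi)d^{-\kappa}$, $\mathrm{df}_2=\tilde\Theta(n^{1/\alpha})$) and needs only the two-sided bounds.
\end{itemize}

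Finally, appealing to the validity of deterministic equivalents is unnecessary, since the statement concerns $\mathsf{V}_n$ itself. In any case, the paper's appendix justifies them for $\alpha>1$ only with $\lambda$ bounded below, so it would not cover $\lambda=0^+$.
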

Here, we retrieve two expected limits from the source \& capacity literature on kernels \citep{defilippis_dimension-free-2024}, but from a different angle.
Indeed, for $\alpha > 1$ the effective dimension of the data $r_{\alpha}=\Theta_{d}(1)$ is constant, meaning that the polynomial growth of samples ($n \sim \psi d^\kappa$) effectively results in an infinitely large dataset compared to the dimension, and thus the classical large-$n$ asymptotics in the statistical analysis of kernel methods.

\subsubsection{Bias}\label{sssection:BiasSA}
Consider a setting similar to the one discussed in \Cref{sssection:BiasWA} where the decay of the target coefficients is dictated by $\omega>0$, see \cref{eq:DecompositionTargetGeneric} and \cref{eq:DecompositionTargetFunction}. As hinted by the extrapolation of \Cref{thm:bias:weak}, the bias in the strong anisotropic regime can be essentially characterized by a continuous rate in which the target coefficients are progressively learned as the sample complexity is increased.
\begin{theorem}[Bias in the strongly anisotropic regime]
\label{thm:bias:strong}
Let the target decay rate be $\omega \geq 0$. Depending on the explicit ridge penalty $\lambda$, the macroscopic bias $\mathsf{B}(\omega, \alpha)$ exhibits the following asymptotic behaviour:
\begin{itemize}
    \item \textbf{Ridgeless limit ($\lambda = 0$):} 
    \begin{equation}\label{eq:BiasSARidgless}
        \mathsf{B} =
        \begin{cases}
            C\alpha \sum\limits_{m \geq \kappa} E(m) & \text{if} \quad 0 \leq \omega < \frac{1}{2} \\[8pt]
            \tilde{\Theta}\left(d^{-\kappa(2\omega - 1)}\right) & \text{if} \quad \frac{1}{2} < \omega \leq \alpha + \frac{1}{2} \\[8pt]
            \tilde{\Theta}\left(d^{-2\kappa\alpha}\right) & \text{if} \quad \omega > \alpha + \frac{1}{2}
        \end{cases}
    \end{equation}

    \item \textbf{Regularized regime ($\lambda =\Theta_{d}(1)$):} 
    \begin{equation}\label{eq:BiasSARegularized}
        \mathsf{B} =
        \begin{cases}
            C_{1}\mathbb{I}_{\left\{\frac{\kappa}{\alpha} \in \mathbb{N}\right\}} \eta' E\left(\frac{\kappa}{\alpha}\right) + C_{2}\sum_{m > \frac{\kappa}{\alpha}} E(m) & \text{if} \quad 0 \leq \omega < \frac{1}{2} \\[8pt]
            \Theta\left(d^{-\frac{\kappa(2\omega - 1)}{\alpha}}\right) & \text{if} \quad \frac{1}{2} < \omega < \alpha + \frac{1}{2} \\[8pt]
            \Theta\left(d^{-2\kappa}\right) & \text{if} \quad \omega > \alpha + \frac{1}{2}
        \end{cases}
    \end{equation}
    for constants $C, C_{1},C_{2}$, where $\mathbb{I}_{\{\cdot\}}$ denotes the indicator function, and $\eta' \in (0,1)$ is given by:
    \begin{equation}\label{eq:eta'}
            \eta' = \frac{(1-2\omega)^{\kappa'}}{(\kappa'-1)!} \int_0^{\infty} dt \> e^{-t} t^{\kappa'-1} \frac{e^{2t\omega}}{(1+ h_{\kappa'} \kappa'! \zeta(\alpha)^{-\kappa'} \frac{\psi}{\lambda} e^{\alpha t} )^2}
    \end{equation}
    with $\kappa' = \frac{\kappa}{\alpha}$
\end{itemize}
\end{theorem}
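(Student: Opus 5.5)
We plan to start from the deterministic equivalent $\mathsf{B}_n$ in \cref{eq:DetEquivalent}, whose validity is guaranteed by \Cref{prop:det}, and to evaluate it with the spectrum \cref{eq:SpectrumSpec} with $r_\alpha=\zeta(\alpha)$. Since \Cref{prop:spec:highd} is not available for $\alpha>1$, we will use the two-sided bounds of \Cref{th:KernelSpectrum}; this is why the statement is only up to constants $C, C_1, C_2$ and $\tilde\Theta$. The first step is to locate the effective regularization $\nu_\star$ from \cref{eq:SelfConsistencyEquation}. Writing $\lambda_\beta \asymp h_{|\beta|}|\beta|!\,\zeta(\alpha)^{-|\beta|}\prod_j j^{-\alpha\beta_j}$, we reparametrize each shell $m$ by $s=\log(i_1\cdots i_m)$. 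The number of ordered index tuples with product at most $e^s$ behaves like $e^s s^{m-1}/(m-1)!$, which is the same change of variables that produced the integrals in \cref{eq:KernelStateEquation} and \cref{eq:eta}. The trace $\Tr\Lambda(\Lambda+\nu)^{-1}$ is then essentially the number of $\beta$ with $\lambda_\beta\gtrsim\nu$, namely $\tilde\Theta(\nu^{-1/\alpha})$, dominated by the shells of order up to $D$.

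In the ridgeless case, the equation $n=\Tr(\cdot)$ gives $\nu_\star=\tilde\Theta(n^{-\alpha})=\tilde\Theta(d^{-\kappa\alpha})$. In the regularized case $\lambda\asymp1$, the term $\lambda/\nu$ dominates, so $\nu_\star\approx \lambda/n$ up to lower-order corrections. This is the same computation as in \Cref{th:VarianceSA}, which we take as already established together with the resulting denominator $1-\tau=\Theta(1)$. That leaves only the numerator $\nu_\star^2\langle\theta,(\Lambda+\nu_\star)^{-2}\theta\rangle$.

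Next we compute the numerator shell by shell using \cref{eq:DecompositionTargetFunction}. For shell $m$ this reads
\[
\sum_{i_1,\dots,i_m}\frac{\theta_m^2 (i_1\cdots i_m)^{-2\omega}}{m!\,\bigl(1+\nu_\star^{-1}c_m (i_1\cdots i_m)^{-\alpha}\bigr)^2},
\]
with $c_m=h_m m!\zeta(\alpha)^{-m}$. After the substitution $u=i_1\cdots i_m$, with density $u^{-1}(\log u)^{m-1}$ per unit $\log u$, the problem reduces to one-dimensional integrals with a cutoff at $u_\star=(c_m/\nu_\star)^{1/\alpha}$. Directions with $u\ll u_\star$ are suppressed by the factor $\nu_\star^2 u^{2\alpha}$, and directions with $u\gg u_\star$ contribute their full energy. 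We then split into three cases.

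\begin{enumerate}
\item If $\omega<\tfrac12$, the normalization $\theta_m^2$ is dominated by the largest indices, of order $d^{m(1-2\omega)}$. The shell is therefore either fully unresolved or partially resolved according to whether $u_\star$ is comparable to $d^m$. Comparing $u_\star=d^{\kappa}$ (ridgeless) or $u_\star\asymp d^{\kappa/\alpha}$ (regularized) with $d^m$ gives the thresholds $m\ge\kappa$, respectively $m>\kappa/\alpha$. In the boundary case $m=\kappa/\alpha\in\mathbb N$, we rescale $t=m\log d-\log u$ to obtain the integral $\eta'$ of \cref{eq:eta'}, mirroring the derivation of \cref{eq:eta}.
\item If $\tfrac12<\omega<\alpha+\tfrac12$, the energy is concentrated at small indices. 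The tail $\sum_{u>u_\star}u^{-2\omega}$ gives $u_\star^{1-2\omega}$, up to logarithmic factors, which yields $d^{-\kappa(2\omega-1)}$ and $d^{-\kappa(2\omega-1)/\alpha}$ respectively.
\item If $\omega>\alpha+\tfrac12$, the suppressed sum $\nu_\star^2\sum u^{2\alpha-2\omega}$ converges and gives $\nu_\star^2$. This is $d^{-2\kappa\alpha}$ in the ridgeless case and $n^{-2}=d^{-2\kappa}$ in the regularized case.
\end{enumerate}

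The main obstacle is making the counting of multi-indices by the product $i_1\cdots i_m$ rigorous uniformly in $d$, with the cutoff at $i_j\le d$, while tracking logarithmic factors. The same difficulty appears in pinning down $\nu_\star$ in the ridgeless case, where the shells do not separate as they do in the weak regime. Our first attempt would be to sandwich the lattice sums between integrals via monotonicity in each $i_j$, and to use the bounds of \Cref{th:KernelSpectrum} so that every constant is absorbed into $C$ and $\tilde\Theta$.
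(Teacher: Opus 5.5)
Your route is the same as the paper's. You evaluate the bias numerator shell by shell, reparametrize by $\log(i_1\cdots i_m)$ (the Irwin--Hall trick of Lemma~\ref{lemma:IrwinHall}), introduce the cutoff $u_\star=(c_m/\nu_\star)^{1/\alpha}$, take $\nu_\star\simeq\lambda/n$ under a macroscopic ridge, and take $1-\tau=\Theta(1)$ from Proposition~\ref{th:VarianceSA}. The regularized branch and the two fast-decay ridgeless branches are handled as in the paper, since there $\tilde\Theta$-level knowledge of $\nu_\star$ suffices.

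\textbf{The gap: the resonant shell $m=\kappa$ in the ridgeless slow-decay case.}

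You set $u_\star=d^{\kappa}$ and then assert the threshold $m\ge\kappa$. By your own criterion, a shell with $u_\star$ comparable to $d^m$ is only \emph{partially} resolved. Write $t=\kappa\log d-\log u$. The energy of the $\kappa$-shell has density proportional to $t^{\kappa-1}e^{-(1-2\omega)t}$, concentrated at $t=O(1)$. So if $\log(d^{\kappa}/u_\star)=O(1)$, you would obtain an $\eta$-type fraction of $E(\kappa)$, exactly as in \eqref{eq:eta} or \eqref{eq:eta'}. That is not the full $E(\kappa)$ required by \eqref{eq:BiasSARidgless}. Which outcome occurs at $m=\kappa$ (learned, partially learned, or unlearned) is decided precisely by the sub-polynomial factor hidden in your $\nu_\star=\tilde\Theta(d^{-\kappa\alpha})$. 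So $\tilde\Theta$ control of $\nu_\star$ cannot settle this case.

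\textbf{What is missing.} You need the quantitative statement $\nu_\star d^{\kappa\alpha}\to\infty$, equivalently $u_\star/d^{\kappa}\to0$. You already note that the trace is dominated by the top shells; that is exactly the mechanism, used quantitatively:
\begin{itemize}
\item For $m>\kappa$, count the tuples with $i_j\le d$ and $i_1\cdots i_m\le u$, where $u=d^{\kappa}e^{-o(\log d)}$. This count is of order $u(\log d)^{m-1}f_{IH}(\kappa;m)$, by the same computation as in Lemma~\ref{thm:IntegralDivergence}.
\item Hence, as soon as some $h_m>0$ with $m>\kappa$, the ridgeless equation $\Tr(\Lambda(\Lambda+\nu_\star I)^{-1})=\psi d^{\kappa}$ forces $u_\star\lesssim d^{\kappa}(\log d)^{-(m-1)}$.
\item Then $t_\star:=\log(d^{\kappa}/u_\star)\to\infty$. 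The learned region $t\gtrsim t_\star$ carries vanishing energy, and $U^{(\kappa)}\to E(\kappa)$.
\end{itemize}
The paper encodes this in the ansatz \eqref{eq:ScalingNuSA}, $\nu_\star=\xi f(d)d^{-\kappa\alpha}$ with $f(d)\to\infty$. This ansatz makes $y_{\max}$ diverge at $m=\kappa$ (Fig.~\ref{fig:MinMaxSA}). It is also what separates this case from the weak regime, where the $\kappa$-shell is only partially learned.

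\textbf{A smaller point.} You work only with the two-sided bounds of Proposition~\ref{th:KernelSpectrum}, whose constants may vary with $\beta$. Under those bounds, the rescaling at $m=\kappa/\alpha$ only sandwiches $U^{(\kappa/\alpha)}/E(\kappa/\alpha)$ between two versions of $\eta'$ with different constants in the denominator. That supports the formula in the ``up to a constant $C_1$'' sense, not the exact $\eta'$ of \eqref{eq:eta'}.
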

Again, \Cref{thm:bias:strong} give us a fine analysis of what components of the target are learned by the kernel, as the sample complexity is increased.
\begin{itemize}
    \item For a slowly decaying target $\omega\in[0,1/2)$, a finite portion of the target function cannot be learned. As in the weakly anisotropic case, in the ridgeless case $(\lambda=0)$, the model fails to resolve target components $m > \kappa$; however, different from $\alpha<1$, it also fails to learn the $m=\kappa$ components. Learning these $\kappa$-th degree features requires an amount of data $n\sim d^{\kappa+\epsilon}$ for arbitrarily small $\epsilon>0$, indicating a sharp learning transition. The fast spectral decay of the kernel effectively means its features are low-dimensional, meaning the kernel is not able to capture a high-dimensional target characterized by a very slow decay $\omega<1/2$. 

    A similar picture holds for finite regularization $\lambda= \Theta(1)$, with the crucial difference that now the kernel resolves features up to order $\lfloor\kappa/\alpha\rfloor < \kappa$. Notably, the finite ridge restores a smooth transition between polynomial degrees mediated by the fraction $\eta'(\omega)$, which activates exactly at the resonant conditions $\kappa = r\alpha, r\in\mathbb{N}$. Ultimately, this dictates a slower learning process compared to the ridgeless case, with the $m$-th degree features being learned at a delayed sample complexity of $n\sim d^{m\alpha}$. This is a manifestation of the classical bias-variance trade-off: while the explicit ridge penalty drives the variance to zero (Theorem~\ref{th:VarianceSA}), it simultaneously increases the bias.
    \item When $\omega > 1/2$, the target rapidly decaying coefficients are aligned with the kernel's. In this regime, regardless of whether $\lambda = 0$ or $\lambda = \Theta(1)$, the bias drops to zero with a decay rate that depends on $\omega$. Notably, a crossover occurs at $\omega = \alpha + 1/2$, which alters the convergence rate. When expressed in terms of $n=\Theta(d^{\kappa})$, the rates in this regime recover precisely to known rates under source \& capacity conditions by \citet{cui_generalization_2022}.
\end{itemize}
This discussion is summarized in \Cref{fig:SchematicaBiasSA}.
\begin{figure}[t]
    \centering
    \includegraphics[width=0.8\linewidth]{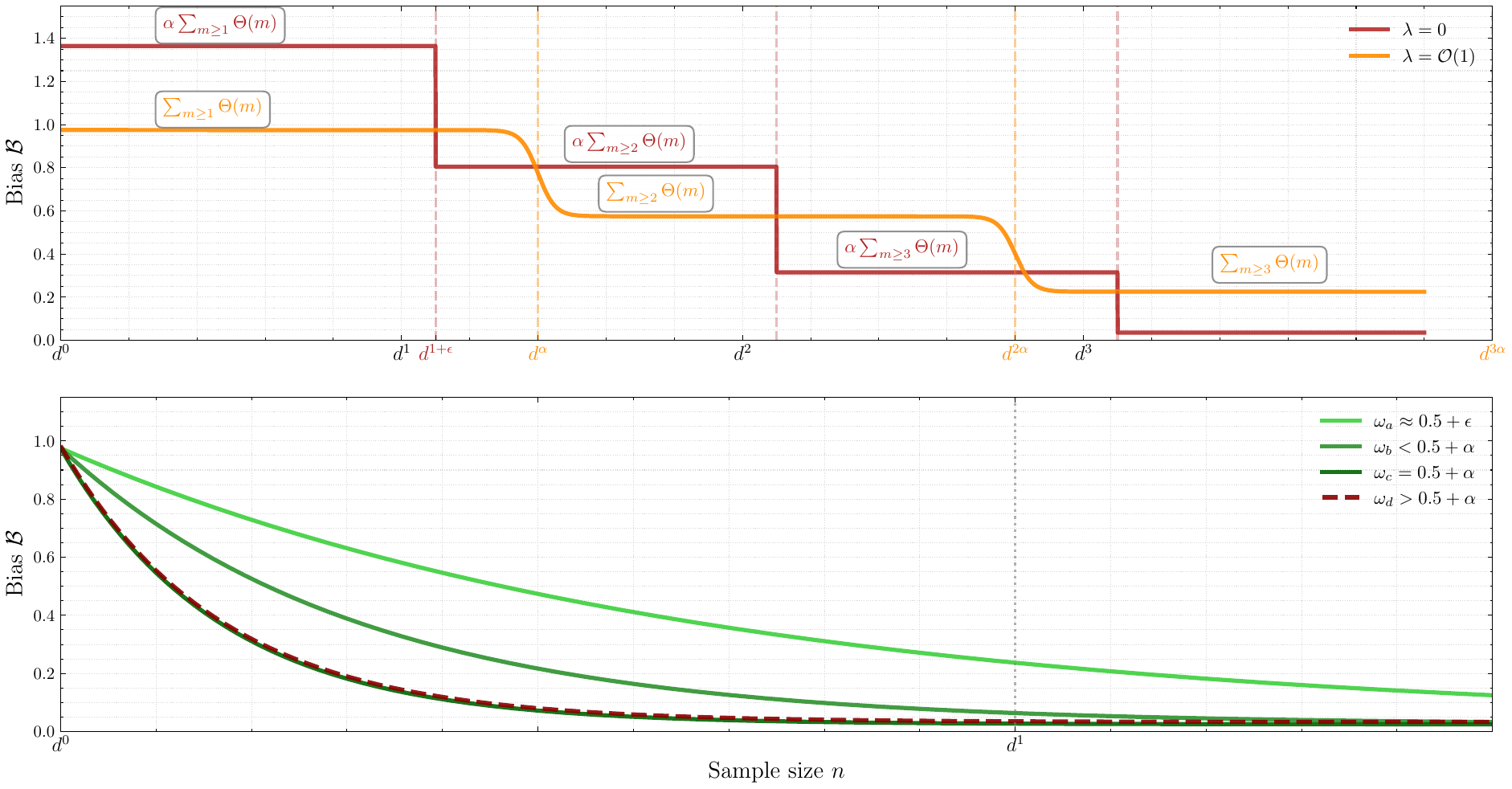}
\caption{Schematic learning curves for the bias $\mathsf{B}$ in the strongly anisotropic regime. (\textbf{Top}) $0 \le \omega < 1/2$: In the ridgeless limit ($\lambda = 0$, red), learning is instantaneous and occurs abruptly at $n =\Theta(d^{m+\epsilon})$. In the regularized regime ($\lambda = \Theta(1)$, orange), the ridge penalty shifts the activation of the $m$-th shell to a further delayed sample complexity $n=\Theta(d^{m\alpha})$. (\textbf{Bottom}) $\omega > 1/2$: The bias collapses to zero as a power-law (using $\lambda = 0$). Below the critical threshold $\omega < \alpha + 1/2$ (light and forest green), the learning rate continuously accelerates as the target energy compresses. Once the threshold is crossed at $\omega = \alpha + 1/2$ (dark green, red), the asymptotic decay rate saturates to $-2\kappa$}
    \label{fig:SchematicaBiasSA}
\end{figure}
\begin{remark}
    Note that the weakly and strongly anisotropic regimes continuously match at the critical point $\alpha = 1$. For instance, the unregularized variance, given by \cref{eq:VarianceWA} for $\alpha < 1$, is progressively suppressed as $\alpha \to 1^{-}$, ultimately vanishing. Analogously, in the strongly anisotropic regime ($\alpha > 1$), the variance is proportional to $\sigma_\varepsilon^2 (\alpha - 1)$, which also smoothly converges to zero as $\alpha \to 1^{+}$. The same can be said about the bias. Consider, for example, the case where $\omega < \frac{1}{2}$. For $\alpha < 1$, the learning of the $\kappa$-th shell depends on the parameter $\eta$. Taking the limit $\alpha \to 1^-$ in \cref{eq:eta} yields $\eta \to 1$, indicating that the $\kappa$-th shell remains entirely unlearned and behaves similarly to higher-order shells (which is the case for the $\alpha > 1$ case, Eq.~\eqref{eq:BiasSARidgless}). Thus, the theoretical predictions remain perfectly consistent across the boundary $\alpha = 1$.
\end{remark}
A comprehensive visual summary of the various analysed regimes, illustrating the different behaviours for $\alpha \lessgtr 1$ and $\omega \lessgtr 1/2$ for $\lambda = 0$, is presented in \Cref{fig:PhaseDiagram}.
\begin{figure}
    \makebox[\textwidth][c]{
        \input{figures/phaseDiagram}
    }
    \caption{\textit{Phase diagram summarizing the asymptotic bias $\mathsf{B}_{n}$ in the ridgeless limit ($\lambda=0$), highlighting the various regimes as $\alpha$ (data anisotropy) and $\omega$ (target decay rate) are varied. The red dashed lines represent the critical values where new qualitative behaviours appear.}}
    \label{fig:PhaseDiagram}
\end{figure}

%% file: figures/phaseDiagram.tex
\begin{tikzpicture}[
    scale=1.05, 
    every node/.style={font=\sffamily},
    boxBase/.style={
        thick,
        fill=white,
        fill opacity=0.9,
        text opacity=1,
        rounded corners=4pt,
        inner sep=5pt,
        align=center,
        font=\small
    },
    boxBlue/.style={boxBase, draw=blue!70!black},
    boxGreen/.style={boxBase, draw=green!60!black},
    boxOrange/.style={boxBase, draw=orange!80!black}
    ]

    \fill[blue!10] (0, 2.5) rectangle (6.5, 7);
    \fill[green!15] (6.5, 2.5) rectangle (13.5, 7);
    \fill[orange!10] (0, 0) rectangle (13.5, 2.5);
    
    \draw[thick, ->] (0,0) -- (14,0) node[right, align=center] {$\alpha$};
    \draw[thick, ->] (0,0) -- (0,7.5) node[above, align=center] {$\omega$};

    \draw[dashed, red, ultra thick] (0, 2.5) -- (13.5, 2.5);
    
    \draw[dashed, red, ultra thick] (6.5, 0) -- (6.5, 7);
    
    \draw[dotted, thick, darkgray] (0,2.5) -- (-0.2,2.5) node[left, red, font=\bfseries] {1/2};
    \node[below, red, font=\bfseries] at (6.5, -0.1) {1};
    
    \node[below] at (3.25, -0.1) {\textbf{Weak Anisotropy} ($\alpha < 1$)};
    \node[below] at (10, -0.1) {\textbf{Strong Anisotropy} ($\alpha > 1$)};
    
    \node[boxBlue] at (3.25, 4.8) {
        \textbf{\textcolor{blue!70!black}{Extended capacity}}\\[1ex]
        $\kappa_{\text{eff}} = \frac{\kappa}{1-\alpha}$\\[1.5ex]
        $\mathsf{B} = \frac{1}{1-\tau(\alpha)} \sum\limits_{m > \kappa_{\text{eff}}} E(m)$
    };

    \node[boxOrange] at (6.5, 1.75) {
        \textbf{\textcolor{orange!80!black}{Isotropic-like behavior}}\\[0.5ex]
        $\kappa_{\text{eff}} = \kappa$
    };
    
    \node[boxOrange] at (3.25, 0.7) {
        $\mathsf{B} = \frac{1}{1-\tau(\alpha)}\Big( \eta(\alpha,\omega)E(\kappa) + \sum\limits_{m > \kappa} E(m) \Big)$
    };
    
    \node[boxOrange] at (10, 0.7) {
        $\mathsf{B} = \alpha \, C \Big( E(\kappa) + \sum\limits_{m > \kappa} E(m) \Big)$
    };
    \draw[thick, dashed, green!40!black] (6.5, 3.2) -- (12, 7) node[pos=0.4, sloped, above=-15pt, font=\footnotesize, text=green!30!black] {$\omega = \alpha + 1/2$};
    \node[boxGreen] at (10.5, 6.3) {
        \textbf{\textcolor{green!50!black}{Asymptotically perfect learning}}\\[0.5ex]
        $\kappa_{\text{eff}} \to \infty$
    };
    \node[boxGreen, scale=0.9] at (7.9, 5.35) {
        $\mathsf{B} = \widetilde{\Theta}(d^{-2\kappa\alpha})$
    };
    \node[boxGreen, scale=0.9] at (11.5, 4) {
        $\mathsf{B} = \widetilde{\Theta}\left(d^{-\kappa(2\omega-1)}\right)$
    };
    
\end{tikzpicture}

%% file: 5_Single_Index_Models.tex
\section{Case study: single-index models}
In \Cref{s:Gen}, we characterized the generalization properties of the kernel starting from assumptions on the target function $f_\star$ decomposition onto the kernel eigenbasis. We now discuss a particular case which has drawn recent interest in the literature, \textit{single-index models} (SIMs). This will allow us to illustrate how some of the behavior described in \Cref{s:Gen} can naturally emerge from a function directly defined in input space.  A SIM is a function that depends on the inputs $x\in\mathbb{R}^{d}$ only through a scalar function of its projection in one direction $v\in\mathbb{R}^d$:  
\begin{equation}\label{eq:SIMdefinition}
    f_\star( x) = g(v^{\top}  x).
\end{equation}
We refer to $v\in\mathbb{R}^d$ as the \emph{index}, and assume without loss of generality that it is normalized such that $\mathbb{E}[(v^\top  x)^2] = 1$. The scalar function $g : \mathbb{R}\to\mathbb{R}$ is known as the link function, and we assume it is a continuous function $g \in L_2(p_X)$. For this class of functions, we can explicitly compute their decomposition onto the multivariate Hermite polynomial basis in $L_2(p_X)$. This expansion is analogous to the one in Eq.~\eqref{eq:DecompositionIndices}, though the latter is carried out over the kernel eigenfunctions (which themselves form an orthonormal basis).

\begin{theorem}\label{thm:SIMDecomposition}
    Let $f_\star(x) = g(v^{\top}  x)$ be a single-index target function with $\mathbb{E}[(v^\top  x)^2] = 1$. Let $g \in L_2(p_X)$ admit the 1D orthonormal Hermite expansion $g(t)  = \sum_{m=0}^\infty \frac{g_m}{\sqrt{m!}}\He_m(t)$. Let $\alpha \geq 0$ and consider the generic decomposition of $f_\star$ onto the multi-variate Hermite eigenbasis:
    \[
        f_\star(x) = \sum_{m=0}^{\infty} \> \sum_{1 \leq i_1 \leq \dots \leq i_m \leq d} \theta_{i_1, \dots, i_m} H_{i_1, \dots, i_m}(\Sigma^{-1/2} x)
    \]
    Then, the tensor coefficients $\theta_{i_1, \dots, i_m}$ are given in terms of $v$ and $\alpha$ by
    \begin{equation}\label{eq:conversionSparse}
        \theta_{i_1, \dots, i_m} = g_m \sqrt{\frac{m!}{\beta_1!\dots \beta_d!}} \> r_\alpha(d)^{-m/2} (i_1 \dots i_m)^{-\alpha/2} \prod_{k=1}^m v_{i_k}  
    \end{equation}
    with $\beta_j = \sum_{k=1}^m \delta_{i_k, j}$. For dense functions and when $d\to\infty$, this expression can be further approximated with:
    \begin{equation}\label{eq:conversionDense}
        \theta_{i_1, \dots, i_m} \simeq g_m \sqrt{m!} \> r_\alpha(d)^{-m/2} (i_1 \dots i_m)^{-\alpha/2} \prod_{k=1}^m v_{i_k}       
    \end{equation}
\end{theorem}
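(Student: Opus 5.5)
The plan is to whiten the input and reduce everything to a one-dimensional Hermite expansion along a unit direction. Set $z=\Sigma^{-1/2}x\sim\mathcal N(0,I_d)$ and $u\coloneqq\Sigma^{1/2}v$. Then $v^\top x=u^\top z$. The normalization $\mathbb E[(v^\top x)^2]=v^\top\Sigma v=\|u\|^2=1$ says $u$ is a unit vector. Since $\Sigma=\mathrm{diag}(\sigma_j)$ with $\sigma_j=r_\alpha(d)^{-1}j^{-\alpha}$ by \cref{eq:data:power}, its components are explicit:
\[
u_j=r_\alpha(d)^{-1/2}j^{-\alpha/2}v_j .
\]
Because $f_\star(x)=\sum_m \tfrac{g_m}{\sqrt{m!}}\He_m(u^\top z)$ converges in $L^2$, it suffices to expand each $\He_m(u^\top z)$ in the tensor basis $H_\beta(z)$. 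The coefficient of $f_\star$ on $H_\beta$ with $|\beta|=m$ then comes only from the degree-$m$ term, since $\He_m(u^\top z)$ lies in the degree-$m$ Wiener chaos.

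The key step is the multivariate Hermite addition formula: for $\|u\|=1$,
\[
\He_m(u^\top z)=\sum_{|\beta|=m}\frac{m!}{\beta!}\,u^\beta\prod_{j=1}^d\He_{\beta_j}(z_j).
\]
I would prove it with the generating function $e^{st-s^2/2}=\sum_{k}\He_k(t)s^k/k!$. Using $\sum_j u_j^2=1$,
\[
\exp\!\big(s\,u^\top z-\tfrac{s^2}{2}\big)=\prod_j\exp\!\big(su_jz_j-\tfrac{(su_j)^2}{2}\big)=\prod_j\sum_{k}\He_k(z_j)\frac{(su_j)^k}{k!},
\]
and then compare the coefficients of $s^m$ on both sides. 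Next I substitute $\prod_j\He_{\beta_j}(z_j)=\sqrt{\beta!}\,H_\beta(z)$. This gives the coefficient of $H_\beta$ as
\[
\frac{g_m}{\sqrt{m!}}\cdot\frac{m!}{\beta!}\sqrt{\beta!}\,u^\beta=g_m\sqrt{\frac{m!}{\beta!}}\,u^\beta .
\]
To get \cref{eq:conversionSparse}, I relabel $\beta$ by the ordered tuple $i_1\le\dots\le i_m$ via $\beta_j=\sum_k\delta_{i_k,j}$, as in \cref{eq:DecompositionIndices}. Then $u^\beta=\prod_{k=1}^m u_{i_k}$, and inserting the formula for $u_j$ yields exactly $r_\alpha(d)^{-m/2}(i_1\cdots i_m)^{-\alpha/2}\prod_k v_{i_k}$.

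For the dense approximation \cref{eq:conversionDense}, I would argue that $\beta!=1$ whenever the indices $i_1,\dots,i_m$ are distinct, and that tuples with a repeated index carry a vanishing share of the shell energy as $d\to\infty$. The degree-$m$ energy is $g_m^2$ in total. The contribution from tuples with a coincidence $i_k=i_l$ is bounded by a constant (depending on $m$) times $\sum_j u_j^4\le\max_j u_j^2$. This is $o_d(1)$ precisely when $u$ is delocalized, which is what ``dense'' should mean. On the remaining tuples the exact formula coincides with \cref{eq:conversionDense}. The same reasoning justifies passing from ordered to free sums with the $1/m!$ factor, as in \cref{eq:DecompositionTargetGeneric}.

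The main obstacle is making ``dense'' precise. For $\alpha>1$, or for $v$ aligned with the top directions, $u$ may stay localized, and then the diagonal terms are not negligible. There I would state the approximation only under the explicit hypothesis $\|u\|_\infty\to0$, and keep the exact formula \cref{eq:conversionSparse} otherwise.
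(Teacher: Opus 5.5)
Your proposal is correct and follows the paper's own argument essentially step for step. You whiten to $z=\Sigma^{-1/2}x$ with $u=\Sigma^{1/2}v$ of unit norm, expand $\He_m(u^\top z)$ by the multivariate Hermite addition formula, relabel $\beta$ by ordered index tuples, and drop the $\beta!$ factors for dense targets. The only differences are refinements: you derive the addition formula from the generating function instead of citing it, and your collision bound $\binom{m}{2}\sum_j u_j^4\le\binom{m}{2}\|u\|_\infty^2$ turns the paper's informal ``dense'' hypothesis into the precise condition $\|u\|_\infty\to0$, which correctly flags that the dense approximation is not justified when $u$ stays localized (e.g.\ for $\alpha>1$).
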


\subsection{Hermite ansatz}
Even though we can explicitly write the decomposition of SIMs in the Hermite basis, writing them in terms of the kernel's basis is a challenging problem, as we don't have the explicit eigenfunctions of the kernel. We therefore consider the following ansatz, which postulates that the kernel
behaves, as far as the risk is concerned, as if its eigenfunctions were the Hermite
tensors $H_\beta$ with eigenvalues given by the leading term of Proposition~\ref{prop:spec:highd}. 
More precisely, let $c_\beta:=\langle f_\star,H_\beta\rangle_{L^2(p_X)}$ denote the Hermite coefficients of the target and $\theta_\beta=\langle f_\star,e_\beta\rangle$ the true kernel coefficient for the eigen-pair $(\lambda_{\beta}, \Phi_{\beta})$ in \Cref{proposition:asymptotic_matching_spectrum}. The deterministic equivalents \eqref{eq:DetEquivalent} depend on $\theta_\beta$ only through the cumulative target energy below a given eigenvalue: $A(u):=\!\!\sum_{\beta:\lambda_\beta\le u}\!\!\theta_\beta^{2}$. Denote by $\widehat A$ the same function built
from $c_\beta$.
\begin{conjecture}[Hermite ansatz]\label{conjecture:hermite_ansatz}
Let $\alpha\ge0$ and let $k$ satisfy Assumption~\ref{assumption:h}. Then there exists
$\varepsilon(d)$, such that $\varepsilon(d) \to 0$ as $d\to\infty$ such that, for every $f_\star\in L^2(p_X)$ with
$\|f_\star\|_{L^2(p_X)}=1$,
% \begin{equation}
%   \widehat A\big(u\,e^{-\varepsilon_d}\big)-\varepsilon_d
%   \;\le\; A(u)\;\le\;
%   \widehat A\big(u\,e^{\varepsilon_d}\big)+\varepsilon_d
%   \qquad\text{for all } u>0,
% \end{equation}
\begin{equation}
  A(u)=(1+o_{d})\widehat A(u)
  \qquad\text{for all } u>0,
\end{equation}
\end{conjecture}
In words: the distribution of squared target coefficients is
asymptotically the same in the true kernel eigenbasis and in the Hermite tensor basis. Under Conjecture~\ref{conjecture:hermite_ansatz}, the deterministic equivalents in \cref{eq:DetEquivalent} may be evaluated on $(c_\beta)$ in place of
$(\theta_\beta)$, with $o_d(1)$ relative error.

We provide numerical evidence for \Cref{conjecture:hermite_ansatz} in Appendix \ref{appendix:HermiteAnsatz}. Moreover, in a recent work, \cite{karkada2025predicting} has provided extensive numerical support for for a more general version of this statement. More precisely, they have shown that the alignment of the target function with Hermite polynomials can capture the learning curves of orthogonally invariant kernels in the high-dimensional setting, what they refer to as the \emph{Hermite ansatz}. 
%We expect \Cref{conjecture:hermite_ansatz} to hold in the particular case where $\Tr(\Sigma^2) = o_{d}(1)$, as in \Cref{proposition:asymptotic_matching_spectrum}. 

\subsection{Learning a SIM under anisotropy}\label{ssec:LearningSIM}
Under \cref{conjecture:hermite_ansatz}, we now discuss the consequences of our results to learning SIM under power-law anisotropic Gaussian data. This will strongly depend on how the index $v\in\mathbb{R}^{d}$ is aligned with the principle directions of the data.
\paragraph{Localized target}
Consider the case where the index is aligned with a principle direction $j\in[d]$ of the data $v = \frac{1}{\sqrt{\sigma_j}} e_j$. Since this target is strictly localized (sparse), we must use the exact formulation of Eq.~\eqref{eq:conversionSparse} so that the tensor coefficients collapses to:
\begin{equation}
    \theta_{i_1,\dots,i_m} = g_m \delta_{i_1, j} \dots \delta_{i_m, j} 
\end{equation}
This implies that for each polynomial of degree $m$, only a single term in the eigendecomposition of $f_{\star}$ remains, i.e., the fully diagonal element where $i_1 = \dots = i_m = j$. The spectral energy contained within each shell is thus trivially $E(m) = g_m^2$.

To evaluate the learning behavior, we must distinguish not only between the data anisotropy regimes (weak or strong) but also based on the target direction $j$. Specifically, we analyze whether the target aligns with the leading principal components ($j = O(1)$) or is one of the last components ($j = O(d)$). For the sake of simplicity, we discuss the two absolute extremes: $j = 1$ and $j = d$.

\vspace{0.5em}

\textbf{a) Weakly anisotropic regime ($0 \leq \alpha < 1$):} 
Assuming the non-resonant condition $\frac{\kappa}{1-\alpha} \notin \mathbb{N}$, and letting $\xi_\star, \tau$ be the solutions to Eqs.~\eqref{eq:KernelStateEquation} and~\eqref{eq:tauWA} (which are independent of the target function), the bias evaluates to:
\begin{subequations}
\begin{align}
    \mathsf{B}(j=1, \alpha) &= \frac{1}{1-\tau} \sum_{m > \frac{\kappa}{1-\alpha}} g_m^2 \label{eq:BiasSparse_HeadWA} \\
    \mathsf{B}(j=d, \alpha) &= \frac{1}{1-\tau} \left( \frac{g_\kappa^2}{\big(1+\xi^{-1}_\star h_\kappa \kappa! (1-\alpha)^\kappa\big)^2} + \sum_{m > \kappa} g_m^2 \right) \label{eq:BiasSparse_TailWA}
\end{align}
\end{subequations}

\vspace{0.5em}

\textbf{b) Strongly anisotropic regime ($\alpha > 1$):} 
In this regime, the equations for the bias reduce to:
\begin{subequations}
\begin{align}
    \mathsf{B}(j=1, \alpha, \lambda = 0) &= \widetilde{\Theta}(d^{-2\kappa\alpha})  \label{eq:BiasSparse_HeadSA1} \\
    \mathsf{B}(j=1, \alpha, \lambda \asymp 1) &= \Theta(d^{-2\kappa}) \label{eq:BiasSparse_HeadSA} \\
    \mathsf{B}(j=d, \alpha, \lambda = 0) &=  C \alpha \sum_{m \geq \kappa} g_m^2 \\
    \mathsf{B}(j=d, \alpha, \lambda \asymp 1) &= \alpha \left( C_1 \mathbb{I}_{\left\{\frac{\kappa}{\alpha} \in \mathbb{N}\right\}}\frac{g_{\kappa /\alpha}^2}{\left(1+h_{\kappa / \alpha} (\kappa / \alpha)! \zeta(\alpha)^{-\kappa/\alpha}\psi \lambda^{-1}\right)^2} + C_2\sum_{m \geq \kappa / \alpha} g_m^2 \right) \label{eq:BiasSparse_TailSA}
\end{align}
\end{subequations}
\begin{remark}
    Note that the above results can also be recovered using the formalism of Sec.~\ref{sssection:BiasWA} where we modeled $\theta_{i_1, \dots, i_m}\propto (i_1 \dots i_m)^{-\omega}$. In particular, substituting $g_m^2 = E(m)$, one can easily show that the case $j = 1$ formally coincides with $\omega \to \infty$. Equivalently, the case $j = d$ coincides with the limit $\omega \to -\infty$. Indeed, one can show that $\eta$ as defined in Eq.~\eqref{eq:eta} converges to the prefactor of $g_\kappa^2$ in Eq.~\eqref{eq:BiasSparse_TailWA} when $\omega \to -\infty$. The same reasoning applies for Eq.~\eqref{eq:BiasSparse_TailSA} where the prefactor of $g_{\kappa / \alpha}$ is equivalent to the limit $\omega\to-\infty$ of $\eta'$ defined in Eq.~\eqref{eq:eta'}.
    % \bl{$-\infty$ or $0$?}\lr{Should be $-\infty$. Here I'm cheating a bit, I know. The theory above is stated for $\omega \leq 0$. Surprisingly enough, it also works if we formally extend $\omega < 0$, if you just trust the formulas (Arie told me it's something that one can consider, a sort of adversarial target). In addition, when $\omega = 0$ all components are equally important. But here, the only important component is the last one $j = d$. This should correspond to a power-law decay which actually grows as power law (so that the only main component is the last one, upon proper normalization)}
\end{remark}

\paragraph{Power-law target}
Let us now consider a scenario where the index follows a power-law:
\[
v_{j} = Z^{-1} j^{-\gamma/2}
\]
where $Z$ is a normalization constant ensuring the index is correctly normalized $\mathbb{E}[(v^\top  x)^2] = 1$, and $\gamma$ dictates the decay rate of the target features. Substituting this into the dense asymptotic approximation of Eq.~\eqref{eq:conversionDense}, we obtain:
\begin{equation}
    \theta_{i_1, \dots, i_m} \simeq g_m \sqrt{m!} \> r_{\alpha+\gamma}(d)^{-m/2} \> (i_1 \dots i_m)^{-\frac{\alpha+\gamma}{2}}
\end{equation}
where $r_{\alpha+\gamma}(d) = \sum_{k=1}^d k^{-(\alpha+\gamma)}$. Notice that this exact functional form for the tensor $\theta_{i_1, \dots, i_m}$ perfectly matches the power-law condition proposed in Eq.~\eqref{eq:DecompositionTargetFunction}, yielding an effective decay rate $\omega$ driven by both the intrinsic data anisotropy $\alpha$ and the target weight decay $\gamma$:
\[
    \omega = \frac{\alpha+\gamma}{2}
\]
Consequently, this case is essentially equivalent to the ones analyzed in Sec.~\ref{sssection:BiasSA} and Sec.~\ref{sssection:BiasWA}, and the bias is strictly dictated by the specific interplay between $\alpha$ and $\gamma$. In particular, if the data is strongly anisotropic ($\alpha > 1$) and assuming $\gamma \geq 0$, we automatically have $\omega > 1/2$ (the fast learning regime). In this scenario, the asymptotic generalization bias completely vanishes in the thermodynamic limit. Furthermore, the spectral energy contained within each polynomial shell is simply given by $E(m) = g_m^2$

\paragraph{Delocalized target}
Finally, we consider a scenario where the index is completely delocalized. More precisely, we take:
\[
    v\sim \mathcal{N}(0, I_{d}),
\]
In this case, the exact value of the bias will depend on the specific realization of the random index vector. However, we can meaningfully study the average bias, denoted as $\mathbb{E}_{v}[\mathsf{B}]$.

Note that coefficients $\theta_{i_1, \dots, i_m}$ also become a random tensor. The fundamental quantity required to compute the averaged bias is their second moment, given by:
\[
\mathbb{E}_{v}\left[\left(\theta_{i_1, \dots, i_m}\right)^2\right] = g_m^{2} m!\> r_\alpha(d)^{-m} (i_1 \dots i_m)^{-\alpha}
\]
This shows that the effective decay rate is exactly $\omega = \alpha / 2$, which falls under the the power-law target scenario with $\gamma = 0$, representing a signal with no intrinsic decay. Physically, this equivalence is highly consistent: when $\gamma = 0$, all deterministic components of $\theta_\star$ contribute uniformly and isotropically, a condition that the random delocalized target satisfies on average.

Ultimately, these results establish a general dictionary that maps the various single-index model configurations into the unified power-law framework $\theta_{i_1, \dots, i_m} \sim (i_1\dots i_m)^{-\omega}$ analysed in the previous sections, with an effective exponent $\omega$ strictly determined by the localization or decay properties of the target vector ${\theta}_\star$ (Fig.~\ref{fig:SIMMapping}).
\begin{figure}
    \makebox[\textwidth][c]{
        \input{figures/SIMMapping}
    }
    \caption{\textit{Mapping of the various Single-Index Model (SIM) scenarios onto the general phase diagram of Fig.~\ref{fig:PhaseDiagram}, assuming $\lambda = 0$. Power-law single-index models (PL-SIMs) with target decay $\gamma$ yield an effective decay rate $\omega = \frac{\alpha+\gamma}{2}$, translating into a series of diagonal lines that intersect the different generalization regimes. The sparse SIM limit ($j = 1$) corresponds to $\omega \to \infty$, while the perfectly delocalized scenario is captured by $\omega = \alpha / 2$.}}
    \label{fig:SIMMapping}
\end{figure}

%% file: figures/SIMMapping.tex
\begin{tikzpicture}[
    scale=0.75, 
    every node/.style={font=\sffamily},
    boxBase/.style={
        thick,
        fill=white,
        fill opacity=0.9,
        text opacity=1,
        rounded corners=4pt,
        inner sep=4pt,
        align=center,
        font=\small
    },
    boxBlue/.style={boxBase, draw=blue!70!black},
    boxGreen/.style={boxBase, draw=green!60!black},
    boxOrange/.style={boxBase, draw=orange!80!black},
    simLine/.style={draw=purple!80!black, dashed, very thick}
    ]

    \fill[blue!10] (0, 2) rectangle (6.5, 8);
    \fill[green!15] (6.5, 2) rectangle (13.5, 8);
    \fill[orange!10] (0, 0) rectangle (13.5, 2);
    
    \draw[thick, ->] (0,0) -- (14,0) node[right, align=center] {$\alpha$};
    \draw[thick, ->] (0,0) -- (0,8.8) node[left, align=center] {$\omega$};

    \draw[thick, black] (0, 2) -- (13.5, 2);
    \draw[thick, black] (6.5, 0) -- (6.5, 8);
    
    \draw[thick] (0,2) -- (-0.2,2) node[left] {1/2};
    \draw[thick] (6.5,0) -- (6.5,-0.2) node[below] {1}; 
    
    \node[boxBlue] at (3.25, 7) {
        $\kappa_{\text{eff}} = \frac{\kappa}{1-\alpha}$
    };

    \node[boxOrange] at (6.5, 1) {
        $\kappa_{\text{eff}} = \kappa$
    };
    
    \node[boxGreen] at (11, 4.4) {
        $\kappa_{\text{eff}} \to \infty$
    };
    
    \draw[thick, green!40!black] (6.5, 6) -- (9.75, 8) node[pos=0.45, sloped, above=-2pt, font=\footnotesize, text=green!30!black] {$\omega = \alpha + 1/2$};

    \draw[ultra thick, ->, purple!90!black] (1, 8.2) -- (1, 9);
    \draw[ultra thick, ->, purple!90!black] (12.5, 8.2) -- (12.5, 9);
    \node[above, text=purple!90!black] at (6.75, 8.3) {Sparse SIM $j=1$ ($\omega \to +\infty$)};
    \draw[thick, purple!90!black, decorate, decoration={snake, amplitude=1pt, segment length=8pt}] (0, 8.1) -- (13.5, 8.1);

    \draw[simLine] (0,0) -- (13.5, 4.15) node[pos=0.6, sloped, above=-2pt, font=\small, text=purple!80!black] {PL-SIM ($\gamma = 0$) - Delocalized};

    \draw[simLine] (0, 2) -- (13.5, 6.15) node[pos=0.5, sloped, above=-2pt, font=\small, text=purple!80!black] {PL-SIM ($\gamma = 1$)};

    \draw[simLine] (0, 4) -- (13, 8) node[pos=0.4, sloped, above=-2pt, font=\small, text=purple!80!black] {PL-SIM ($\gamma = 2$)};

\end{tikzpicture}

%% file: 6_Conclusion.tex
\section{Conclusion}
In this work, we provided a high-dimensional characterization of the generalization properties of kernel ridge regression under anisotropic data, with a covariance with power-law decaying eigenvalues. By leveraging the exact asymptotic spectrum of inner-product kernels, we derived closed-form deterministic equivalents for both the bias and the variance in the ridgeless regime ($\lambda = 0$) or in the macroscopic ridge case $\lambda = \Theta_d(1)$. Depending on the power-law exponent $\alpha$, we identified two qualitatively distinct regimes. 

In the weakly anisotropic regime ($\alpha < 1$), the variance depends on the sample size $n$, exhibiting the characteristic multiple descent profile with interpolation peaks occurring at integer polynomial transitions ($n \sim d^\ell, \ell \in \mathbb{N}$). Within this regime, increasing $\alpha$ systematically dampens the magnitude of these peaks. In the strongly anisotropic regime ($\alpha > 1$), the variance becomes sample-size independent. In the high-dimensional limit, it either saturates to a strictly positive constant proportional to $\sigma_\varepsilon^2 (\alpha-1)$ in the ridgeless setting, or asymptotically collapses to $0$ upon the introduction of an order-one ridge penalty ($\lambda = \Theta_d(1)$). Notably, the two regimes reconcile continuously at the critical boundary $\alpha = 1$: taking either limit ($\alpha \to 1^\pm$) identically yields a vanishing variance ($\mathsf{V} \to 0$).

Regarding the bias, we assumed a specific structure of the target function upon expansion into the kernel eigenbasis (Eq.~\eqref{eq:DecompositionTargetFunction}), where the parameter $\omega$ governs the multi-dimensional power-law decay of the coefficients. A comprehensive visual summary of the resulting bias regimes, formulated as a phase diagram over the $(\alpha, \omega)$ parameter space, is presented in Fig.~\ref{fig:PhaseDiagram} for $\lambda = 0^+$.
When $\alpha < 1$ and $\omega < 1/2$, the kernel effectively resolves the polynomial features of the target function up to order $m < \kappa$. All higher-order features ($m > \kappa$) remain unlearned, while the current $\kappa$-th shell is only partially resolved, with its learned fraction strictly depending on the parameter $\omega$. Holding $\alpha$ fixed while increasing $\omega$ (equivalent to moving upward along a vertical line in the leftmost region of Fig.~\ref{fig:PhaseDiagram}) systematically reduces $\eta$, thereby decreasing the unlearned fraction of the $\kappa$-th shell. Crucially, as $\omega$ surpasses the critical threshold of $1/2$, a phase transition occurs: the kernel not only perfectly resolves the $\kappa$-th shell, but also unlocks higher-order polynomial features up to the extended effective capacity $\kappa_{\text{eff}} = \lfloor \frac{\kappa}{1-\alpha} \rfloor$.

In the strongly anisotropic regime ($\alpha > 1$) when the target energy is diffused ($0 \leq \omega < 1/2$), the kernel fails to learn any polynomial features of degree $m \geq \kappa$, causing the 
bias to plateau at a constant value independent of $\omega$. However, as soon as $\omega > 1/2$, the target function effectively aligns with the kernel's geometric structure, allowing the bias to decay toward zero. Within this learning phase, a further transition occurs at $\omega = \alpha + 1/2$, demarcating two distinct power-law decay rates for the finite-sample convergence.

\section*{Acknowledgements}
We would like to thank Elliot Paquette and Florent Krzakala for useful discussion. This work was supported by the French government, managed by the National Research Agency (ANR), under the France 2030 program with the project references ``ANR-23-IACL-0008'' (PR[AI]RIE-PSAI) and ``ANR-25-CE23-5660'' (MAPLE), as well as the Choose France - CNRS AI Rising Talents program. AW was also funded by the PSL Graduate Program in Computer Science. LR acknowledges the support of Scuola Galileiana di Studi Superiori and École Normale Supérieure during his visiting research period in Paris.

%% file: Appendix/Kernel_operator_properties.tex
\section{Properties of the Kernel Operator}
\label{section:kernel_operator}

In this section, we prove \Cref{proposition:asymptotic_matching_spectrum} from the main manuscript. 

\begin{proposition}[\Cref{proposition:asymptotic_matching_spectrum} in the main manuscript] 
\label{prop:spec:highd}
Assume $\Tr(\Sigma^2) = o_{d}(1)$, and $h_{k} > 0$ for all $k \in [D]$ in \Cref{assumption:h}. Then, the spectrum of the kernel in \Cref{assumption:h} indexed by multi-indices $\beta \in \ZZ^{d}_{\geq 0}$, $|\beta| \leq D $  satisfies: 
\begin{equation}
    \lambda_{\beta} = h_{|\bm\beta|} |\beta|! \, \sigma_1^{\beta_1} \dots \sigma_d^{\beta_d}( 1+ o_{d}(1) ) . 
\end{equation}
\end{proposition}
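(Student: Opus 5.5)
We work in the whitened variable $z=\Sigma^{-1/2}x\sim\mathcal N(0,I_d)$, so that $\langle x,x'\rangle=\sum_j \sigma_j z_j z'_j$, and represent the operator $T_k$ in the orthonormal Hermite basis $\{H_\beta(z)\}$ of $L^2(p_X)$. The plan is to show that, restricted to the finite-dimensional space of polynomials of degree at most $D$, $T_k$ is a small perturbation of the diagonal operator $\Lambda_0$ with entries $h_{|\beta|}|\beta|!\,\sigma^\beta$, and that the perturbation is small relative to each diagonal entry.

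\textbf{Step 1 (Hermite expansion of the kernel).} Mehler's formula gives, coordinate by coordinate,
\[
(\sigma_j z_j z_j')^{m}=\sum_{\ell\le m,\ \ell\equiv m\ (2)} c_{m,\ell}\,\sigma_j^{m}\,\tfrac{1}{\ell!}\He_\ell(z_j)\He_\ell(z_j').
\]
The identity is exact for $\ell=m$ with coefficient $c_{m,m}=m!$ in the normalized basis. Expanding $\langle x,x'\rangle^m$ multinomially, $T_k$ acts on $H_\gamma$ by mapping it into the span of $\{H_{\gamma'}\}$ with $|\gamma'|\le D$. Because $T_k$ is a kernel of the form $\sum_\beta a_\beta\, p_\beta(z)p_\beta(z')$, it is block structured.

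The term of total degree $m=|\beta|$ with every coordinate exponent saturated ($\ell_j=\beta_j$) produces exactly $\binom{m}{\beta}\beta!\,\sigma^\beta H_\beta H_\beta=m!\,\sigma^\beta H_\beta(z)H_\beta(z')$, which is the diagonal $\Lambda_0$. Every remaining contribution comes from a degree-$m$ term in which some coordinate drops by $2$. Such a term couples $H_\gamma$, $H_{\gamma'}$ with $|\gamma|,|\gamma'|\le m-2$. After summing over the dropped coordinates, it carries an extra factor of at least $\sum_j\sigma_j^{2}=\Tr(\Sigma^2)$ (pairs $z_jz_j'$ contracted: $\sigma_j\cdot\sigma_j$), multiplied by $\sigma^{\gamma}$-type factors.

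\textbf{Step 2 (relative perturbation bound).} Write $T_k=\Lambda_0+E$. We aim to show
\[
|E_{\gamma\gamma'}|\le C\,\Tr(\Sigma^2)\sqrt{\sigma^{\gamma}\sigma^{\gamma'}},
\]
or, if the exact powers do not come out, some bound of that geometric-mean form. Concretely, the degree-$m$ term with a total of $k$ contracted pairs contributes to the $(\gamma,\gamma')$ entry an amount proportional to
\[
\sum_{j_1,\dots,j_k}\prod_i\sigma_{j_i}^2\cdot\sigma^{\gamma}\ \lesssim\ (\Tr\Sigma^2)^k\sigma^\gamma,
\]
with $\gamma=\gamma'$. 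Cross terms with $\gamma\neq\gamma'$ do not arise, since each coordinate's Mehler expansion is diagonal in $\ell_j$. Hence $E$ is itself diagonal in the Hermite basis: it is a sum over lower-degree shells, with the factor $h_m$ for $m>|\gamma|$ replacing $h_{|\gamma|}$.

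This simplification means the eigenfunctions are exactly the $H_\beta$. The eigenvalue is then $\lambda_\beta=\sigma^\beta\sum_{k\ge0}h_{|\beta|+2k}\,c_{\beta,k}\,S_k(\beta)$, where $S_k$ is a sum of products of $\sigma_j^2$ over the $k$ contracted pairs, with possible weights from the $\beta_j$. The sum is bounded by $C_D(\Tr\Sigma^2)^k$ because $\sigma_j\le1$.

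\textbf{Step 3 (conclusion).} Since $D$ is finite, the constants $c_{\beta,k}$ depend only on $D$. The $k=0$ term gives exactly $h_{|\beta|}|\beta|!\,\sigma^\beta$, and $h_{|\beta|}>0$ makes the correction terms relatively $O(\Tr\Sigma^2)=o_d(1)$. For power-law data with $\alpha\in[0,1]$ we have $\Tr\Sigma^2=r_{2\alpha}/r_\alpha^2\to0$: for $\alpha<1/2$ the ratio is of order $d^{-1}$, and for $\alpha\in[1/2,1]$ the numerator grows at most like $\log d$ while $r_\alpha^2$ grows polynomially or like $\log^2 d$.

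\textbf{Main obstacle.} The delicate point is the combinatorial bookkeeping in Step 2. We must verify that the weights attached to contracted pairs sitting on coordinates already in $\operatorname{supp}\beta$ stay bounded uniformly in $d$. These weights involve factors like $\binom{\beta_j+2k_j}{\beta_j}$, and the concern is that they might produce $\sigma_j^{2}$ rather than a full trace. A single such term is only bounded by $\max_j\sigma_j^2\le\Tr\Sigma^2$, so in fact both cases give $O(\Tr\Sigma^2)$. I would handle this by bounding every contraction sum crudely by $\binom{D}{\cdot}$-type constants times $(\Tr\Sigma^2)^k$.
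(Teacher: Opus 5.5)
There is a genuine gap. Steps 2--3 rest on the claim that the perturbation $E$ is diagonal in the Hermite basis, so that the $H_\beta$ are exactly the eigenfunctions and $\lambda_\beta$ is given by your series. That claim is false.

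The coordinatewise identity you attribute to Mehler's formula does not hold. $(z_jz_j')^m=z_j^m\,z_j'^m$ is a product of two one-variable polynomials, so its Hermite expansion is $\bigl(\sum_\ell a_{m,\ell}\He_\ell(z_j)\bigr)\bigl(\sum_{\ell'}a_{m,\ell'}\He_{\ell'}(z_j')\bigr)$. This contains every cross term $\ell\neq\ell'$ of matching parity, e.g.\ $z^2z'^2=\He_2(z)\He_2(z')+\He_2(z)+\He_2(z')+1$. Mehler's formula diagonalizes $\exp\bigl(\frac{2\rho zz'-\rho^2(z^2+z'^2)}{2(1-\rho^2)}\bigr)$, not $e^{\rho zz'}$ or its Taylor coefficients $(zz')^m$. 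Degree-wise diagonalization of inner-product kernels is a feature of the sphere, and it fails for Gaussian inputs even when $\Sigma\propto I_d$.

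A one-line check confirms this: $T_k\mathbf 1(x)=\mathbb E_{x'}h(\langle x,x'\rangle)=h_0+h_2\,x^\top\Sigma x+3h_4(x^\top\Sigma x)^2+\dots$ is not constant, so $H_0$ is not an eigenfunction. The paper itself notes that the eigenfunctions are polynomials but not Hermite tensors. That is exactly why the Hermite ansatz has to be stated as a conjecture.

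Your fallback target in Step 2 does not repair this. Measure entries in units of $\sqrt{d_\gamma d_{\gamma'}}$ with $d_\gamma=h_{|\gamma|}|\gamma|!\,\sigma^\gamma$. The entry coupling $H_0$ and $H_{2e_j}$ comes from $h_2\sigma_j^2z_j^2z_j'^2$ and has size $\asymp\sigma_j$ when $h_0,h_2>0$. This is not $O(\Tr\Sigma^2)$ in general: for power-law data with $0<\alpha<1$, $\sigma_1/\Tr\Sigma^2\to\infty$ polynomially in $d$. Moreover, entrywise bounds on a matrix of size $\binom{d+D}{D}$ would not control its eigenvalues anyway.

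The missing idea is an operator-norm bound on the normalized perturbation $\Lambda_0^{-1/2}E\Lambda_0^{-1/2}$. In it, the couplings between shell $k$ and shell $k+2\ell$ are square-summed over the contracted coordinates, giving $(\sum_j\sigma_j^2)^{\ell/2}=(\Tr\Sigma^2)^{\ell/2}$. You then need a Loewner sandwich $(1-\epsilon)^2\Lambda_0\preceq T_k\preceq(1+\epsilon)^2\Lambda_0$ and Courant--Fischer to match the sorted eigenvalues with the sorted $d_\beta$.

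This is what the paper does. It expands the monomial features $z^\beta$ in Hermite tensors, so that $T_k$ in the Hermite basis becomes $\mathrm{diag}(d_\beta)^{1/2}B^\top B\,\mathrm{diag}(d_\beta)^{1/2}$ with $B$ unit lower triangular. It then proves $\|B-I\|_{\mathrm{op}}\le C_D(\Tr\Sigma^2)^{1/2}$ block by block. This route yields a relative error bound $O(\sqrt{\Tr\Sigma^2})$ rather than the $O(\Tr\Sigma^2)$ you claim, which still suffices for $o_d(1)$.
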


\begin{proof}
    Denote $z = \Sigma^{-\frac{1}{2}} x \sim \mathcal{N}(0,I_{d})$. We will follow the arguments of Appendix 1 in \citep{wortsman_kernel_2025}. Given $x, x' \in \RR^{d}$, we can write our kernel in the following form by expanding the inner product: 
    \begin{align}
        k(x,x') &= \sum_{k = 0}^{D} h_k \langle x, x'\rangle^{k} \\ 
        & = \sum_{k=0}^{D} h_k \sum_{\beta \in \ZZ^{d}_{\geq 0 }: |\beta| = k} \binom{k}{\beta} x^{\beta} x'^{\beta} \\ 
        & = \sum_{k=0}^{D} h_k \sum_{\beta \in \ZZ^{d}_{\geq 0 }: |\beta| = k} \binom{k}{\beta} \sigma^{\beta} z^{\beta} z'^{\beta},
    \end{align}
    where we denote $\sigma^{\beta} = \prod_{j=1}^{d} \sigma_i^{\beta_i}$. Let 
    \begin{equation}
    \Phi_{\beta}(x) =  \sqrt{h_k\binom{k}{\beta} \sigma^{\beta}} z^{\beta},
    \end{equation}
    and denote $\Phi(x) \in \RR^{\binom{D + d}{D}}$ the vector with this coordinates.  We can expand each coordinate into Hermite polynomials via the transformation precised in \citep{wortsman_kernel_2025}: 
    \begin{equation}
         \Phi_{\alpha}(x) =  \sqrt{h_k\binom{k}{\beta} \sigma^{\beta}} z^{\beta} = \sqrt{h_k\binom{k}{\beta} \sigma^{\beta}}\sum_{\gamma \leq \beta: \gamma \equiv_{2} \beta } \dfrac{\beta !}{2^{\frac{|\beta - \gamma|}{2}} \left( \frac{|\beta - \gamma|}{2}\right )! \sqrt{\gamma !}} He_{\gamma}(z). 
         \label{eq:monomial_to_hermite}
    \end{equation}
    Let $d_{\beta} = h_{|\beta|} |\beta|! \sigma^{\beta}$, and define $D = \mathrm{diag}\left ( (d_{\beta})_{|\beta| \leq D } \right )$. We also define the matrix $B \in \RR^{\binom{d +D }{D}}$  : 
    \begin{equation}
        B_{\beta, \gamma } : = \begin{cases}
            \sqrt{\dfrac{h_{|\beta|}|\beta|! \beta! }{h_{|\gamma|}|\gamma|! \gamma! }} \dfrac{\sigma^{\frac{\beta - \gamma}{2}}}{2^{\frac{|\beta - \gamma|}{2}} \left ( \frac{|\beta - \gamma|}{2} \right )! }  & \text{ if } \gamma \leq \beta \text{  and } \gamma \equiv_{2} \beta \\ 
            0 & \text{ else.}
        \end{cases}
    \end{equation}
    In particular, note that $B_{\beta, \beta} = 1$. Then by \Cref{eq:monomial_to_hermite} we have that: 
    \begin{equation}
        \Phi(x) = B D^{\frac{1}{2}}. 
    \end{equation}
    Now, assume that 
    \begin{equation}
        \| B - I\|_\mathrm{op} = o_{d}(1)
        \label{eq:B_minus_I}
    \end{equation}. Then we would have that: 
    \begin{equation}
        (1 - o_{d}(1))^{2} I  \preceq B^TB \preceq (1 + o_{d}(1) I,
    \end{equation}
    and therefore, multiplying by $D^{\frac{1}{2}}$: 
    \begin{equation}
        (1 - o_{d}(1))^{2} D^{\frac{1}{2}}  \preceq D^{\frac{1}{2}}B^TBD^{\frac{1}{2}} \preceq (1 + o_{d}(1) D^{\frac{1}{2}},
    \end{equation}
    and since the eigenvalues of the kernel are in the same as the eigenvalues of $D^{\frac{1}{2}}B^TBD^{\frac{1}{2}}$ by Eq. A.26 in \citep{wortsman_kernel_2025}, we would conclude the result. Therefore, we now focus on proving \Cref{eq:B_minus_I}. 

    Denote by $\mathcal{I}_{k}:= \{ \beta \in \ZZ^{d}_{\geq 0}: |\beta|=k\}$ for $k \in [D]$. Given $\ell, k \in [d]$, define 
    \[
    B_{k, \ell} = (B_{\beta, \gamma})_{|\beta|=k, \gamma = \ell}.
    \]
    Note that if $k - \ell$ is not even, then $B_{\ell, k} = 0$. Also, note that $B_{k,k} = I$. Let $\ell >0$, and let $u \in \RR^{\binom{d+k-1}{k}}$, so the coordinates of $u$ can be indexed by $(u_{\beta})_{\beta = k}$. Then we have: 
    \begin{align}
        (B_{k + 2\ell, \ell} u)_{\beta}& = \sqrt{\dfrac{h_{k + 2\ell} (k+2\ell)!}{h_{\ell} \ell!}} \sum_{\gamma \in  \ZZ^{d}_{\geq 0}:|\gamma|=\ell, 2\gamma \leq \beta} \dfrac{\sigma^{\gamma}}{2^{\ell} \gamma !} \sqrt{\dfrac{\beta!}{(\beta - 2\gamma)!}} u_{\alpha - 2\gamma}.
    \end{align}
    Then, since we can bound most of the coefficients of the sum by constants that no not depend on the dimension: 
    \begin{equation}
        \|B_{k + 2\ell, \ell} u\|^{2}_2 \leq C_{D} \sum_{|\beta|=k+2\ell}\sum_{\gamma \in  \ZZ^{d}_{\geq 0}:|\gamma|=\ell, 2\gamma \leq \beta} \sigma^{2\gamma} u_{\alpha - 2\gamma}^2.
    \end{equation}
    Applying Cauchy-Schwartz, we get: 
    \begin{equation}
         \|B_{k + 2\ell, \ell} u\|^{2}_2 \leq C \|u\|_2^{2 }  \sum_{|\gamma| = \ell} \sigma^{2\ell} \leq C \| u\|_{2} \left ( \Tr(\Sigma^2)\right )^{\ell }.
    \end{equation}
    Hence, we conclude: 
    \begin{equation}
        \| B_{k + 2\ell, \ell}\|_\mathrm{op} \leq \left ( \Tr(\Sigma^2)\right )^{\frac{\ell}{2}}.
    \end{equation}
    Then, by applying triangular inequality: 
    \begin{equation}
        \| B - I \|_\mathrm{op} \leq C_{D}\sum_{k = 0}^{D} \sum_{\ell = 1, k + 2\ell \leq D }^{k} (\Tr(\Sigma^2))^{\frac{\ell }{2}}. 
    \end{equation}
    Since we have a constant number of sums, we conclude:
    \begin{equation}
        \| B - I \|_\mathrm{op} \leq C_{D}(\Tr(\Sigma^2))^{\frac{1}{2}},
    \end{equation}
    and since $Tr(\Sigma^2) =o_{d}(1)$ by assumption, we conclude. 
\end{proof}

%% file: Appendix/Deterministic_Equivalents.tex
\section{Deterministic Equivalents}
\label{appendix:det_equivalents}

The bias-variance decomposition in \Cref{eq:BiasVariance} consists on two random terms. As discussed in \cref{s:Gen}, under specific assumptions, \cite{cheng_dimension_2025, misiakiewicz2024non, defilippis_dimension-free-2024} proved that this quantities concentrate around deterministic quantities, called \textit{deterministic equivalents}, and  presented in \Cref{eq:DetEquivalent}. In particular, \cite{misiakiewicz2024non} provides a abstract assumptions to derive deterministic equivalents for Kernel Ridge Regression. We will first briefly state these assumptions and then we will provide a proof of their validity in our setting.  

The idea behind the following assumptions is to separate the kernel eigenfunctions into lower are higher order. Given the $m$ largest eigenvalues, we denote by $\Lambda_{\leq m}$ the covariance of the associated eigenfunctions (which we will sometimes call the lower-degree part) and $\Lambda_{>m}$ the covariance associated to the higher eigenvalues (which we will also call the higher-degree part). The exposition of the different notions and assumptions in this section follows \cite{misiakiewicz2024non}. 

\begin{definition}[Ranks] Let $p \in \NN \cup \{ \infty\}$ be the rank of the kernel $k$. We introduce the following notions of rank for our covariance $\Lambda$: 
\begin{enumerate}
    \item \textbf{Regularized tail Rank:} For any integer $m \leq p$ and regularization parameter $\lambda \geq 0$, the regularized tail rank $r_{\lambda}(m)$ is defined by 
    \[
    r_{\lambda}(m) = \dfrac{\lambda + \sum_{j = m+1}^{p}\lambda_{j}}{\lambda_{m+1}}.
    \]
    We denote $\lambda_{>m}:= \lambda + \sum_{j = m+1}^{p}\lambda_{j}$. 
    \item \textbf{Effective Rank:} For any integers $n$ and $m \leq p$, the effective rank $r_{\mathrm{eff}, m}(n)$ of $(n, \Lambda_{\leq m})$ is defined as the smallest scalar such that $r_{\mathrm{eff}, m}\geq n$ and 
    \[
    r_{\mathrm{eff}, m}(n) \geq \dfrac{\sum_{j=k+1}^{m}\xi_j}{\xi_{k+1}} , \text{ for all } 0 \leq k \leq \min(n,m) -1. 
    \]
    \item \textbf{Relative Approximation bound}: For any $n, m \in \NN$ and ridge regularization parameter $\lambda \geq 0$, we define: 
    \begin{equation}
        \nu_{\lambda, m}(n) = 1 + \dfrac{\lambda_{\lfloor \eta n\rfloor, m} r_{\mathrm{eff}, m}(n) \sqrt{\log (r_{\mathrm{eff}, m}(n))}}{\lambda + \Tr(K_{>m})}. 
    \end{equation}
    for $\eta \in (0,\frac{1}{2})$, and $\lambda_{\lfloor \eta n\rfloor, m} = \lambda_{\lfloor \eta n \rfloor}$ if $\lfloor \eta x \rfloor \leq m$, and $0$ otherwise. 
\end{enumerate}
\label{definition:ranks}
\end{definition}

The first assumption we will need concerns the behaviour of the lower and higher order eigenfunctions.

\begin{assumption}[Concentration at $n \in \NN$] There exists constants $C_x,c_x, \beta_x > 0$ and $m \in \NN \cup \{\infty\}$, $m \leq p$, such that the regularized tail rank at $m$ satisfies 
\begin{equation}
r_{\lambda}(m) \geq 2n,
\label{eq:spectral_condition}
\end{equation}
and the following hold: 
\begin{enumerate}
    \item \textbf{Low-degree Features:} There exists $\varphi_1(m) >0$ such that for any deterministic vector $v \in \RR^{m}$, with $\| \Lambda^{\frac{1}{2}} v\|_2 < \infty$ and p.s.d matrix $A \in \RR^{m \times m}$ with $\mathrm{Tr}(\Lambda_{\leq m}A) < \infty$ we have 
    \begin{align*}
        \mathbb{P} \left ( \left | \langle v, \phi_{\leq m }\rangle \right | \geq t v^{t} \Lambda_{\leq m} v\right ) \leq C_{x}\exp \left( -c_{x}t^{\frac{2}{\beta_{x}}}\right ) \\ 
        \mathbb{P} \left ( \left | \phi_{\leq m}^T A \phi_{\leq m} - \mathrm{Tr}(\Lambda_{\leq m}A)\right | \geq t \varphi_{1}(m) \| \Lambda_{\leq m}^{\frac{1}{2}}A\Lambda_{\leq m}^{\frac{1}{2}}\right ) \leq C_{x}\exp \left( -c_{x}t^{\frac{1}{\beta_{x}}}\right )
    \end{align*}
    \item \textbf{High-degree Features:} There exists $p_{2,n}(m) \in (0,1)$ and $\varphi_{2,n}(m) \geq 1$ such that with probability at least $1- p_{2,n}(m)$, we have 
    \[
    \| \Phi_{> m} \Phi_{>m}^T - \mathrm{Tr}(\Lambda_{>m}) I_{n}\|_\mathrm{op} \leq \varphi_{2,n}(m) \sqrt{\dfrac{n}{r_{\lambda}(m)}}(\lambda+ \mathrm{Tr}(\Lambda_{\geq m}))
    \]
    \item \textbf{Target Function: } The high-degree part of the target function satisfies the tail bound
    \[
    \mathbb{P}\left ( |f_{\star, >m}(x) | \geq t \| f_{\star, >m}\|_{L^{2}}\right ) \leq C_x \exp (-c_{x} t^{\frac{2}{\beta}}). 
    \]
\end{enumerate}
\label{assumption:lower_and_higher_order}
\end{assumption}

At last, we need the following assumptions on the target function $f^\star(x)$. 

\begin{assumption}[Label noise]
The label noises $\varepsilon_i, i\in [n]$ are independent, mean-zero, $\sigma_{\varepsilon}^{2}$-sub-Gassian random variables. 
\label{assumption:label_noise}
\end{assumption}

We can now state the main theorem that allows us to rigorously state the quantities in \Cref{eq:DetEquivalent} as deterministic equivalents for the generalization error in our setting. 

\begin{theorem}[Theorem 1 in \citep{misiakiewicz2024non}]
Consider $D,K>0$, an integer $n$, a regularization parameter $\lambda \geq 0$, 
and a target function $f_*\in L^2(\mathcal{X})$  with parameters
\[
\|\theta_*\|_2=\|f_*\|_{L^2}<\infty.
\]
Suppose \Cref{assumption:lower_and_higher_order} is satisfied for some
\[
m:=m(n)\in\mathbb{N}\cup\{\infty\},
\]
and that $\{\varepsilon_i\}_{i\in[n]}$ satisfy \Cref{assumption:label_noise}. There exist
constants
\[
\eta:=\eta_x\in\left(0,\frac12\right),
\qquad
C_{D,K}>0,
\qquad
C_{x,\varepsilon,D,K}>0,
\]
such that, for all $n\geq C_{D,K}$ and $\lambda_{>m}>0$, if
\[
\lambda_{>m}\,\nu_{\lambda,m}(n)\geq n^{-K},
\qquad
\varphi_{2,n}(m)\sqrt{\frac{n}{r_\lambda(m)}}\leq\frac12,
\qquad
\varphi_1(m)\nu_{\lambda,m}(n)^8
\log^{3\beta+\frac12}(n)
\leq K\sqrt{n},
\]
then, with probability at least $1-n^{-D}-p_{2,n}(m)$, 
we have
\[
\left|
\mathcal{R}
\bigl(\theta_*;X,\varepsilon,\lambda\bigr)
-
\mathsf{R}_n(\theta_*,\lambda)
\right|
\leq
C_{x,\varepsilon,D,K}\,
\mathcal{E}_{R,n}(m)\,
R_n(\theta_*,\lambda),
\]
where the relative approximation rate is given by
\[
\mathcal{E}_{R,n}(m)
:=
\frac{
\varphi_1(m)\nu_{\lambda,m}(n)^6
\log^{3\beta+\frac12}(n)
}{
\sqrt{n}
}
+
\nu_{\lambda,m}(n)\varphi_{2,n}(m)
\sqrt{\frac{n}{r_\lambda(m)}}.
\]
\label{theorem:main_theorem_deterministic_equivalents}
\end{theorem}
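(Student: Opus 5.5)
The plan is to reduce the kernel problem to a finite-dimensional ridge regression on the low-degree features, with an inflated ridge, and then invoke dimension-free deterministic equivalents for ridge regression with concentrated features. Write the kernel matrix as $K=\Phi_{\le m}\Phi_{\le m}^\top+\Phi_{>m}\Phi_{>m}^\top$. The High-degree Features part of \Cref{assumption:lower_and_higher_order} gives, on an event of probability at least $1-p_{2,n}(m)$,
\[
\Phi_{>m}\Phi_{>m}^\top+\lambda I_n=\lambda_{>m}\,(I_n+\Delta),\qquad \|\Delta\|_{\mathrm{op}}\le\varphi_{2,n}(m)\sqrt{n/r_\lambda(m)}\le \tfrac12 .
\]
Substituting this into the closed forms of the bias and variance (which involve $(K+\lambda I)^{-1}$), a Neumann expansion in $\Delta$ shows the following. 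Up to a multiplicative error $1+O(\|\Delta\|_{\mathrm{op}})$, the risk equals that of ridge regression on the features $\phi_{\le m}$ with ridge $\lambda_{>m}$. In this reduced problem the high-degree target part $f_{\star,>m}$ is treated as additional noise of variance $\|f_{\star,>m}\|^2_{L^2}$. The tail bound on $f_{\star,>m}$ (item 3 of \Cref{assumption:lower_and_higher_order}) lets us treat it like sub-Gaussian noise that is independent of the low-degree design up to orthogonality. This step produces the second term $\nu_{\lambda,m}(n)\varphi_{2,n}(m)\sqrt{n/r_\lambda(m)}$ of $\mathcal{E}_{R,n}(m)$. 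The factor $\nu_{\lambda,m}(n)$ enters because the perturbation must be measured relative to the smallest relevant scale of the reduced resolvent.

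Next, for the reduced $m$-dimensional problem, I would prove deterministic equivalents for the functionals $\Tr(A(\Phi_{\le m}^\top\Phi_{\le m}+\lambda_{>m})^{-1})$ and their squared-resolvent versions, for a p.s.d. matrix $A$. The tool is the leave-one-out / Sherman--Morrison approach. Writing the resolvent in terms of rank-one updates, each quadratic form $\phi_i^\top R_{-i}\phi_i$ concentrates around $\Tr(\Lambda_{\le m}R_{-i})$ by the Hanson--Wright-type condition on low-degree features, at scale $\varphi_1(m)$. A martingale (Efron--Stein) argument then gives concentration of the normalized trace. Together these yield the fixed-point equation \cref{eq:SelfConsistencyEquation} with ridge $\lambda_{>m}$, which at leading order matches the infinite-dimensional $\nu_\star$ after reabsorbing the tail trace. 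The linear-form assumption controls the bias cross terms $\langle \theta,\cdot\rangle$. The effective rank $r_{\mathrm{eff},m}(n)$ enters in lower-bounding $\nu_\star$: the resolvent norm is at most $\nu_{\lambda,m}(n)/\lambda_{>m}$, because the top $\lfloor\eta n\rfloor$ directions may be near-degenerate only to the extent allowed by $r_{\mathrm{eff}}$, with the $\sqrt{\log r_{\mathrm{eff}}}$ factor coming from a union bound over these directions. Each resolvent factor then costs a power of $\nu_{\lambda,m}(n)$, which produces the $\nu^6$ in $\mathcal{E}_{R,n}(m)$. The condition $\lambda_{>m}\nu_{\lambda,m}(n)\ge n^{-K}$ keeps these bounds polynomial, so that a union bound over $n$ leave-one-out events holds with probability $1-n^{-D}$.

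The main obstacle will be obtaining a \emph{relative} (multiplicative) error for the bias. The bias involves the second-order resolvent functional $\nu_\star^2\langle\theta,(\Lambda+\nu_\star)^{-2}\theta\rangle$ together with the denominator $1-\tfrac1n\Tr(\Lambda^2(\Lambda+\nu_\star)^{-2})$, which can be small near interpolation. I would handle this by expressing the bias as a derivative in an auxiliary ridge (or auxiliary $A$) of first-order functionals, and controlling that derivative through a second leave-one-out expansion. I would then show that the error in the denominator is itself relative, since it is bounded by $\mathcal{E}\cdot(1-\tfrac1n\mathrm{df}_2)$ via the same martingale bounds. Finally, I would combine this with the reduction step, and check that the bias from $f_{\star,>m}$ in the reduced problem reproduces the high-degree contribution to $\mathsf{B}_n$, which holds since $\lambda_j\ll\nu_\star$ there.
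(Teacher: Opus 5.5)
The paper does not prove this statement. It imports it verbatim as Theorem~1 of \citet{misiakiewicz2024non}, and the appendix only checks its hypotheses in the power-law setting. Your architecture is the right one: absorb the high-degree block into the effective ridge $\lambda_{>m}$, then run dimension-free leave-one-out deterministic equivalents on the low-degree problem. But the reduction step, as you state it, has a genuine gap: you drop the high-degree part of the estimator. Writing $\hat f_\lambda=\hat f_{\le m}+\hat f_{>m}$, one has
\[
\|\hat f_{>m}\|_{L^2}^2=y^\top (K+\lambda I_n)^{-1}\Phi_{>m}\Lambda_{>m}\Phi_{>m}^\top (K+\lambda I_n)^{-1}y .
\]
The matrix $\Phi_{>m}\Lambda_{>m}\Phi_{>m}^\top$ entering the closed-form risk is not an $O(\|\Delta\|_{\mathrm{op}})$ perturbation of anything in ridge regression on $\phi_{\le m}$. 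This term is what generates the part $\sum_{j>m}\lambda_j^2/(\lambda_j+\nu_\star)^2\approx\Tr(\Lambda_{>m}^2)/\nu_\star^2$ of $\mathrm{df}_2$, both in the numerator of $\mathsf{V}_n$ and in the denominators of $\mathsf{R}_n$. Your reduced problem only produces $\Tr(\Lambda_{\le m}^2(\Lambda_{\le m}+\nu_\star I)^{-2})$.

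This term is not negligible in general. Take $f_\star=0$ and $\lambda\gg n\lambda_1$, so that $r_\lambda(m)\ge\lambda/\lambda_{m+1}\gg n$, $\nu_{\lambda,m}(n)\approx1$ and $\mathcal{E}_{R,n}(m)$ is small. Then $\nu_\star\approx(\lambda+\Tr\Lambda)/n$ and $\mathsf{R}_n\approx\sigma_\varepsilon^2\, n\Tr(\Lambda^2)/(\lambda+\Tr\Lambda)^2$. Your reduced problem recovers only the fraction $\Tr(\Lambda_{\le m}^2)/\Tr(\Lambda^2)$ of this, and nothing at all for $m=0$.

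To close the gap you must keep $\hat f_{>m}$, which requires three things:
\begin{enumerate}
\item Show that $\Phi_{>m}\Lambda_{>m}\Phi_{>m}^\top$ acts like $\Tr(\Lambda_{>m}^2)I_n$ inside this quadratic form. This needs its own argument, since the high-degree assumption only controls $\Phi_{>m}\Phi_{>m}^\top$.
\item Prove a deterministic equivalent for the Gram-side functional $\|(\Phi_{\le m}\Phi_{\le m}^\top+\lambda_{>m}I_n)^{-1}y\|_2^2$. Its noise part is $\sigma_\varepsilon^2\Tr((\Phi_{\le m}\Phi_{\le m}^\top+\lambda_{>m}I_n)^{-2})\approx\sigma_\varepsilon^2/(\nu_\star^2(n-\mathrm{df}_2))$.
\item Control the cross term between $f_{\star,>m}$ and $\hat f_{>m}$.
\end{enumerate}

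Two further points in your sketch need repair.

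First, you have the role of $\nu_{\lambda,m}(n)$ inverted. The bound ``resolvent norm $\le\nu_{\lambda,m}(n)/\lambda_{>m}$'' is vacuous, since $(\Phi_{\le m}^\top\Phi_{\le m}+\lambda_{>m}I)^{-1}$ has norm at most $1/\lambda_{>m}$ anyway. Moreover, $\lambda_{\lfloor\eta n\rfloor}$ and $r_{\mathrm{eff},m}(n)$ give an upper bound on $\nu_\star$, not a lower one. If $\nu\ge2\lambda_{\lfloor\eta n\rfloor+1}r_{\mathrm{eff},m}(n)/n$ then $\mathrm{df}_1(\nu)\le(\eta+\tfrac12)n$, and from this $n\nu_\star\lesssim\nu_{\lambda,m}(n)\lambda_{>m}$. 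This is exactly what your denominator argument is missing. In the reduced problem,
\[
1-\tfrac1n\mathrm{df}_2\ge1-\tfrac1n\mathrm{df}_1=\lambda_{>m}/(n\nu_\star)\gtrsim\nu_{\lambda,m}(n)^{-1}.
\]
So the absolute errors produced by your martingale bounds become relative errors only after paying a factor $\nu_{\lambda,m}(n)$. This is where powers of $\nu_{\lambda,m}$ enter $\mathcal{E}_{R,n}(m)$; the martingale bounds by themselves give no relative control near interpolation.

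Second, $f_{\star,>m}(x_i)$ is only uncorrelated with $\phi_{\le m}(x_i)$, not independent of it. Terms like $f_{\star,>m}(x_i)^2\,\phi_{\le m}(x_i)^\top B\,\phi_{\le m}(x_i)$ therefore have to be controlled through the quadratic-form concentration together with the tail bound on $f_{\star,>m}$, not by an independence heuristic.
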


\Cref{theorem:main_theorem_deterministic_equivalents} roughly tells us that, if we can prove both \Cref{assumption:lower_and_higher_order} and \Cref{assumption:label_noise}, then we can apply the deterministic equivalents from \Cref{eq:DetEquivalent}. Hence, before manipulating the quantities in \Cref{eq:DetEquivalent} by applying \Cref{th:KernelSpectrum}, we need to prove that \Cref{assumption:lower_and_higher_order} and \Cref{assumption:label_noise} are satisfied in our setting. 

In the following sections, we will prove the Assumptions of \Cref{theorem:main_theorem_deterministic_equivalents}. 
We will separate the spectrum into the weakly anisotropic setting $\alpha <1$ and the strongly anisotropic setting $\alpha >1$. 

Before beginning, we note that the rank of our kernel is $p := O(d^{D})$. 

\subsection{The Weakly Anisotropic Setting}

We begin by proving \Cref{assumption:lower_and_higher_order}. We begin by defining our threshold $m = m(n)$. We will only do the case $ 0 < \alpha < 1$. The case $\alpha = 0$ follows by the same arguments. 

\begin{lemma}
Let $\alpha \in (0,1)$, and let $n = \psi d^{\kappa}$, for $\psi >0$ independent of $d$ and $\kappa >0$ also independent of $d$, with $\lfloor \kappa \rfloor \in [1, D]$. Define: 
\begin{equation}
        m := \left |\{ k \in \NN : k\leq p,  \lambda_k > d^{-(\kappa + \rho)}\} \right |, 
\label{eq:def_m}        
\end{equation}
with $\rho >0$ such that $ 0 < \rho < (1-\alpha) (\lfloor \kappa \rfloor + 1-\kappa)$. Then, if $\kappa \not \in \NN$, \[
\exists \delta_0 >0 \text{ s. t } m \leq n^{1- \delta_0}.
\]
If $\kappa \in \mathbb{N}$, then 
\[
m \leq \Theta(n).
\]
\label{lemma:introduction_m}
\end{lemma}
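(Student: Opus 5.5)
The plan is to bound $m$ shell by shell, using the asymptotic spectrum of \Cref{proposition:asymptotic_matching_spectrum} in the explicit form \cref{eq:SpectrumSpec}. Since the kernel has finite degree $D$, write
\[
m=\sum_{k=0}^{D} N_k,\qquad N_k:=\bigl|\{\beta:|\beta|=k,\ \lambda_\beta>d^{-(\kappa+\rho)}\}\bigr|.
\]
It then suffices to bound each $N_k$ separately, because $D$ is a constant. Shells with $h_k=0$ contribute nothing.

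\textbf{Step 1: reduce to a lattice-point count.} For $|\beta|=k$ with $h_k>0$, we have $\lambda_\beta=(1+o_d(1))\,h_k k!\, r_\alpha(d)^{-k}\prod_j j^{-\alpha\beta_j}$. By \cref{eq:EffectiveDimension}, $r_\alpha(d)=\Theta(d^{1-\alpha})$. Encode $\beta$ as a multiset $1\le i_1\le\dots\le i_k\le d$. The condition $\lambda_\beta>d^{-(\kappa+\rho)}$ then implies
\[
i_1\cdots i_k\le C_k\, d^{\,(\kappa+\rho-k(1-\alpha))/\alpha}=:T_k
\]
for a constant $C_k$ independent of $d$. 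The $(1+o_d(1))$ factor is absorbed into $C_k$.

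\textbf{Step 2: count the lattice points.} The number of ordered $k$-tuples of positive integers with product at most $T$ is at most $T(1+\log T)^{k-1}$, by the standard divisor-sum or induction bound. These tuples are also trivially at most $d^k$ in number. Hence
\[
N_k\le \min\bigl(d^k,\ T_k(\log d)^{k-1}\bigr),
\]
and $N_k=0$ if $T_k<1$.

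\textbf{Step 3: compare each shell with $n=\psi d^\kappa$.}
\begin{itemize}
\item For $k\le\lfloor\kappa\rfloor$, use $N_k\le d^k\le d^{\lfloor\kappa\rfloor}$. If $\kappa\notin\mathbb{N}$, this is $n^{1-\delta}$ with $\delta$ of order $(\kappa-\lfloor\kappa\rfloor)/\kappa$. If $\kappa\in\mathbb{N}$, the shell $k=\kappa$ gives $\binom{d+\kappa-1}{\kappa}=\Theta(d^\kappa)=\Theta(n)$, which is the claimed $m\le\Theta(n)$.
\item For $k\ge\lfloor\kappa\rfloor+1$, use the second bound. The exponent of $T_k$ satisfies $(\kappa+\rho-k(1-\alpha))/\alpha<\kappa$ if and only if $\rho<(k-\kappa)(1-\alpha)$. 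Since $k-\kappa\ge\lfloor\kappa\rfloor+1-\kappa$, this holds for every such $k$ by the hypothesis on $\rho$.
\end{itemize}
The inequality in the second case is strict, with a margin $\epsilon=\bigl((1-\alpha)(\lfloor\kappa\rfloor+1-\kappa)-\rho\bigr)/\alpha>0$ that is uniform in $k$. This margin absorbs the polylogarithmic factor, so each such $N_k\le d^{\kappa-\epsilon/2}$ for large $d$. In the integer case the same computation applies with $\lfloor\kappa\rfloor+1-\kappa=1$, so these shells are $o(n)$.

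Summing the finitely many shells gives $m\le n^{1-\delta_0}$ with $\delta_0=\min\bigl(\kappa-\lfloor\kappa\rfloor,\ \epsilon/2\bigr)/\kappa$ (up to constants) when $\kappa\notin\mathbb{N}$, and $m=O(n)$ when $\kappa\in\mathbb{N}$.

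\textbf{Main obstacle.} I expect the delicate step to be the uniformity in Step 1. The $o_d(1)$ relative error in \Cref{proposition:asymptotic_matching_spectrum} must hold uniformly over all $\beta$ with $|\beta|\le D$, so that it can be turned into the constant $C_k$. The proof of that proposition gives this via the operator-norm bound $\|B-I\|_{\mathrm{op}}=o_d(1)$. However, that bound controls the spectrum as a whole, not individual eigenvalue–multi-index pairings. I would therefore phrase Step 1 through Weyl or Courant–Fischer: the $j$-th largest eigenvalue lies within a factor $(1\pm o(1))$ of the $j$-th largest $d_\beta$. This makes counting eigenvalues above the threshold equivalent, up to adjusting the threshold by a constant factor, to counting the $d_\beta$ above it, which is exactly what Step 1 needs.
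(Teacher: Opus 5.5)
Your proof is correct, and it is more self-contained than the paper's. The paper handles the non-integer case by citing Lemma C.10 of \citet{wortsman_kernel_2025}. For $\kappa\in\mathbb{N}$ it only remarks that the threshold set contains the whole degree-$\kappa$ shell. Taken literally, that remark gives the lower bound $m\gtrsim d^{\kappa}$, which is how $m=\Theta(n)$ is used later in the appendix. The stated upper bound still needs the shells $k>\kappa$ to contribute $O(n)$. Your divisor-sum estimate $N_k\le T_k(1+\log T_k)^{k-1}$, with the margin $\epsilon>0$ uniform in $k$, supplies exactly this, in both cases at once and with an explicit $\delta_0$. You also make rigorous a step the paper glosses over. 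The asymptotic-spectrum proposition is really a statement about ordered eigenvalues, obtained from the sandwich $(1-o_d(1))^2\operatorname{diag}(d_\beta)\preceq \operatorname{diag}(d_\beta)^{1/2}B^{\top}B\operatorname{diag}(d_\beta)^{1/2}\preceq(1+o_d(1))^2\operatorname{diag}(d_\beta)$, with $d_\beta=h_{|\beta|}|\beta|!\,\sigma^{\beta}$. Your use of Courant--Fischer turns this into a comparison of counting functions, at the cost of rescaling the threshold by a constant, which is precisely what a counting argument needs. The paper's route buys brevity by outsourcing the lattice-point count to an external lemma whose hypotheses are not rechecked. Yours costs a few lines but is complete, and it shows where the hypothesis $\rho<(1-\alpha)(\lfloor\kappa\rfloor+1-\kappa)$ enters: as the condition $\rho<(k-\kappa)(1-\alpha)$ for the first shell above $\kappa$.
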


\begin{proof}
   Define: 
    \begin{equation}
        m := \left |\{ k \in \NN : k\leq p,  \lambda_k > d^{-(\kappa + \rho)}\} \right |. 
    \end{equation}
    Note that $\lambda_{m + 1} = O (d ^{-(\kappa + \rho)})$ by construction. By Lemma C.10 from \cite{wortsman_kernel_2025}, we conclude the first equation. For the second case, when $\kappa \in \NN$, the set that defines $m$ necessarily contains the whole degree $\kappa$ shell, and the result follows. 
\end{proof}

\begin{lemma}
Let $m=m(n)$ as defined in \Cref{lemma:introduction_m}. Then, for any ridge regularization parameter $\lambda > 0$, 
\[
r_{\lambda}(m) \geq 2n. 
\]
Moreover, for $n = \Theta(d^{\kappa})$ the relative approximation bound $\nu_{\lambda, m} = 1$ when $\kappa \not \in \NN$, and $\nu_{\lambda, m} = O(\mathrm{poly}\log(n))$ when $\kappa \in \NN$.  
\label{lemma:regularized_eff_dim_condition}
\end{lemma}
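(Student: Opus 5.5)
The plan is to use the sharp spectrum of \Cref{proposition:asymptotic_matching_spectrum}, $\lambda_\beta = h_{|\beta|}|\beta|!\, r_\alpha(d)^{-|\beta|}\prod_j j^{-\alpha\beta_j}(1+o_d(1))$ with $r_\alpha(d)=\Theta(d^{1-\alpha})$, and to split the indices into the head $\{\lambda_k > d^{-(\kappa+\rho)}\}$ (the first $m$ eigenvalues) and the tail. For the claim $r_\lambda(m)\ge 2n$ we need a lower bound on the numerator $\lambda+\sum_{j>m}\lambda_j$ and an upper bound on $\lambda_{m+1}$. The upper bound is immediate from the definition of $m$: $\lambda_{m+1}\le d^{-(\kappa+\rho)}$.

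For the numerator, the ridge alone gives $\lambda+\sum_{j>m}\lambda_j \ge \lambda$, which is $\Theta(1)$ when $\lambda>0$ is fixed. Independently of $\lambda$, the tail trace is bounded below by the mass of the shells $k>\kappa+\rho$ that lies below the threshold. Summing the whole shell gives $\sum_{|\beta|=k}\lambda_\beta \approx h_k(\Tr\Sigma)^k = h_k$. Only an $O(m)$-sized portion of these shells can lie above the threshold, and its mass should vanish. Hence $\lambda_{>m}\gtrsim \lambda + \sum_{k>\kappa} h_k(1-o(1))$, so $r_\lambda(m)\gtrsim d^{\kappa+\rho}\gg 2\psi d^\kappa$. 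Since $\rho>0$, the $d^\rho$ factor absorbs the constant $2\psi$ for $d$ large.

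For $\nu_{\lambda,m}(n)$ there are two cases.
\begin{itemize}
\item When $\kappa\notin\NN$, \Cref{lemma:introduction_m} gives $m\le n^{1-\delta_0}<\lfloor\eta n\rfloor$ for large $n$. Then $\lambda_{\lfloor\eta n\rfloor,m}=0$ by definition, and $\nu_{\lambda,m}(n)=1$ exactly.
\item When $\kappa\in\NN$, we only know $m=O(n)$, so we must bound $\lambda_{\lfloor\eta n\rfloor}\, r_{\mathrm{eff},m}(n)\sqrt{\log r_{\mathrm{eff},m}(n)}$ by $\mathrm{polylog}(n)$ times $\lambda+\Tr(K_{>m})=\Theta(1)$.
\end{itemize}

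The plan for the integer case is as follows. First, count the eigenvalues of shells $\le\kappa$ in the weakly anisotropic regime. The number of $\beta$ with $|\beta|\le\kappa$ and $\lambda_\beta\ge u$ behaves like $(\log)^{\kappa-1}$ times a power of $1/u$, via the standard count of products $i_1\cdots i_\kappa\le N$. From this, deduce $\lambda_{\lfloor\eta n\rfloor}\lesssim n^{-1}\mathrm{polylog}(n)$.

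Next, bound $r_{\mathrm{eff},m}(n)$, which is $\max\bigl(n,\max_{k<n}\sum_{j=k+1}^m\lambda_j/\lambda_{k+1}\bigr)$. The ratio $\sum_{j>k}^m\lambda_j/\lambda_{k+1}$ is controlled by the power-law eigenvalue counting within the shell. Since $\lambda_j$ decays roughly like $j^{-\alpha}$ up to logarithms across the product indices and $\alpha<1$, this ratio is at most $O(m\cdot\mathrm{polylog})=O(n\,\mathrm{polylog}(n))$.

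Multiplying the two bounds gives $\nu_{\lambda,m}(n)=1+O(\mathrm{polylog}(n))$.

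The main obstacle is this integer-case effective-rank bound. It requires a uniform estimate on partial sums of the degree-$\kappa$ shell eigenvalues over arbitrary starting index $k$. I would establish it by rewriting the partial sums as integrals over $\log(i_1\cdots i_\kappa)$, which is Gamma-distributed in the limit, exactly as in the derivation of \Cref{def:KernelStateEquation}, and tracking the logarithmic factors carefully.
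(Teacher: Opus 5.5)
Your proposal is sound and follows the same skeleton as the paper's proof. First, $\lambda_{m+1}\le d^{-(\kappa+\rho)}$ together with a numerator of order one gives $r_\lambda(m)\gtrsim d^{\kappa+\rho}\gg 2n$; the ridge alone suffices here, so your tail-mass heuristic is not needed. Second, for $\kappa\notin\mathbb{N}$ the bound $m\le n^{1-\delta_0}$ forces $\lambda_{\lfloor\eta n\rfloor,m}=0$.

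You depart from the paper only in the integer case, and there your route is much heavier than necessary.

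\textbf{Effective rank.} The step you call the main obstacle is immediate. Since the eigenvalues are sorted non-increasingly, $\sum_{j=k+1}^{m}\lambda_j\le(m-k)\lambda_{k+1}$. Hence $r_{\mathrm{eff},m}(n)\le\max(n,m)=\Theta(n)$, with no logarithms and no use of $\alpha<1$. This is exactly what the paper does, so the uniform Gamma-type partial-sum estimate is unnecessary.

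\textbf{The eigenvalue $\lambda_{\lfloor\eta n\rfloor}$.} The paper cites a spectral estimate from \citet{wortsman_kernel_2025} to get $O(d^{-\kappa})$; you propose a counting argument instead. That counting can work, but as written it only counts shells $|\beta|\le\kappa$. An upper bound on the $\lfloor\eta n\rfloor$-th largest eigenvalue needs a count of \emph{all} eigenvalues above the threshold $u$. Whenever $k(1-\alpha)<\kappa$, the top of a shell $k>\kappa$ sits at $d^{-k(1-\alpha)}\gg d^{-\kappa}$. So you must also show, by the same product count, that these shells contribute only $\tilde O\bigl(d^{k-(k-\kappa)/\alpha}\bigr)=o(n)$ eigenvalues above $u$.

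\textbf{A simpler route.} Use the trace bound
\[
\lfloor\eta n\rfloor\,\lambda_{\lfloor\eta n\rfloor}\le\sum_j\lambda_j=\mathbb{E}\bigl[h(\|x\|^2)\bigr]=O(1),
\]
where the last step uses the moment bound on $\|x\|^{2\ell}$ from the technical lemmas. This gives $\lambda_{\lfloor\eta n\rfloor}=O(1/n)$ with no shell analysis at all, and then $\nu_{\lambda,m}(n)=1+O(\sqrt{\log n})$ follows directly.
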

\begin{proof}
    By definition, we have: 
    \begin{equation}
        r_\lambda(m) = \dfrac{1}{\lambda_{m+1}} \left ( \lambda +  \sum_{j \geq m +1 } \lambda_j \right ). 
        \label{eq:eff_dim_reg_1}
    \end{equation}
    Assuming the kernel has a shell after degree $\lfloor \kappa \rfloor$, and using the fact that $\lambda_{m+1} \leq C d^{-(\kappa + \rho) }$
    we get: 
    \[
    r_\lambda(m) \geq C d^{\kappa + \rho}.
    \]
    Therefore: 
    \begin{equation}
        \dfrac{r_{\lambda }(m)}{n} \geq Cd^{\rho} =\omega_{d}(1).
    \end{equation}
    Hence, for $d$ large enough, 
    \begin{equation}
        r_{\lambda}(m) \geq 2n .
    \end{equation}
    We now focus on proving the results for $\nu_{\lambda, m}$. Recall: 
    \begin{equation}
         \nu_{\lambda, m}(n) = 1 + \dfrac{\lambda_{\lfloor \eta n\rfloor, m} r_{\mathrm{eff}, m}(n) \sqrt{\log (r_{\mathrm{eff}, m}(n))}}{\lambda + \Tr(K_{>m})}.
    \end{equation}   
    If $\kappa \not \in \NN$, by \Cref{lemma:introduction_m} we have $m \leq n^{1-\delta_0}$. Therefore, $\lambda_{\lfloor \eta n\rfloor, m}=0$ and we conclude $\nu_{\lambda, m}(n) = 1$. 

    Now assume $\kappa \in \NN$. Since by \Cref{lemma:introduction_m} we have $m = \Theta(n)$, for any constant $\eta$, the index $\lfloor \eta n \rfloor$ lies in the $\kappa$-th shell. Then, by Corollary 2 in \citep{wortsman_kernel_2025}: 
    \begin{equation}
        \lambda_{\lfloor \eta n\rfloor } = O (d^{(1-\alpha)\kappa} \lfloor \eta n \rfloor ^{-\alpha }) = O \left ( d^{-(1-\alpha)\kappa } d^{-\kappa \alpha}\right ) = O \left ( d^{-\kappa}\right ). 
        \label{eq:lambda_lfloor}
    \end{equation}
    On the other hand, to compute $r_{\mathrm{eff}, m}(n)$ we have: 
    \begin{align}
        r_{\mathrm{eff}, m}(n)  & = \max_{0 \leq k \leq \min (m,n ) - 1} \left \{\dfrac{\sum_{j=k+1}^{m} \lambda_j}{\lambda_{k+1}} \right \} \\
        & \leq  \max_{0 \leq k \leq \min (m,n ) - 1} m \dfrac{\lambda_{k+1} }{\lambda_{k+1}} \leq m = \Theta(n). 
        \label{eq:bound_reff_m}
    \end{align}
    Finally, since $\lambda >0$, we can replace \Cref{eq:lambda_lfloor} and \Cref{eq:bound_reff_m} in the definition of $\nu_{\lambda, m}(n)$ to obtain: 
    \begin{equation}
        \nu_{\lambda, m}(n)  = O(\mathrm{poly}\log(n)). 
    \end{equation}
\end{proof}

So far, we have only proved the regularized effective dimension condition from \Cref{assumption:lower_and_higher_order}. We now proceed to prove the rest of the conditions. We begin by Lower-degree conditions. 

\begin{lemma}[Lower-Order Conditions in the Weakly Anisotropic Setting] Let $\alpha \in [0,1)$, and assume $n = \psi d^{\kappa}$ for $\psi, \kappa >0$ constants independent of the dimension. Then the lower-degree conditions of \Cref{assumption:lower_and_higher_order} are satisfied with 
\[
 \varphi_{1}(m) = O\left ( \sqrt{m}\mathrm{poly}\log(n)\right )
\]
\label{lemma:lower_order_weak}
\end{lemma}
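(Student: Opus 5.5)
We will verify the two low-degree conditions of \Cref{assumption:lower_and_higher_order} for the feature vector $\phi_{\leq m}(x) = (\sqrt{\lambda_k}\, e_k(x))_{k\leq m}$. The guiding fact is that every kernel eigenfunction is a polynomial of degree at most $D$ in the whitened Gaussian $z=\Sigma^{-1/2}x\sim\mathcal{N}(0,I_d)$. This is because the kernel feature map of \Cref{prop:spec:highd} lives in the span of the Hermite tensors $H_\gamma(z)$ with $|\gamma|\leq D$, so the eigenfunctions of $T_k$ lie in the same finite-dimensional polynomial space. Hence all quantities involved are Gaussian polynomial chaoses of bounded degree, and hypercontractivity will supply the concentration.

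\textbf{First condition (linear forms).} For a deterministic $v$, the random variable $\langle v,\phi_{\leq m}(x)\rangle$ is a polynomial of degree at most $D$ in $z$. Since the $e_k$ are orthonormal in $L^2(p_X)$, its variance equals $v^\top\Lambda_{\leq m}v$. Gaussian hypercontractivity gives
\[
\|P\|_{L^q}\leq (q-1)^{D/2}\|P\|_{L^2}
\]
for any polynomial $P$ of degree at most $D$. Converting these moment bounds into a tail bound yields the first inequality, with $\beta_x=D$ and constants depending only on $D$. No dimension dependence enters this step.

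\textbf{Second condition (quadratic forms).} Here $\phi_{\leq m}^\top A\phi_{\leq m}$ is a polynomial of degree at most $2D$ in $z$, and its mean is $\Tr(\Lambda_{\leq m}A)$. By hypercontractivity it therefore suffices to bound its variance. We will write $\tilde A=\Lambda^{1/2}_{\leq m}A\Lambda^{1/2}_{\leq m}$, so that the quadratic form equals $e^\top\tilde A e$ with $e=(e_k)_{k\leq m}$. The key estimate is
\[
\mathrm{Var}(e^\top\tilde A e)\leq \|\tilde A\|_{\mathrm{op}}^2\,\mathbb{E}\|e\|_2^4,
\]
combined with $\mathbb{E}\|e\|_2^4\leq C_D\,(\mathbb{E}\|e\|_2^2)^2=C_D m^2$, again by hypercontractivity applied to the degree-$2D$ polynomial $\|e\|^2$. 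This crude route produces $\varphi_1=O(m)$. To reach the claimed $\sqrt m$, we will instead use the trace-norm version
\[
\mathrm{Var}\leq C_D\|\tilde A\|_F^2\leq C_D\, m\,\|\tilde A\|_{\mathrm{op}}^2.
\]
This should follow from expanding $e^\top\tilde A e$ in the Hermite basis and noting that the Gram structure of products $e_ke_l$ is bounded, since it is controlled by the near-identity matrix $B$ from the proof of \Cref{prop:spec:highd}, for which $\|B-I\|_{\mathrm{op}}=o_d(1)$. Taking the square root gives $\sqrt m$. The polylog factors arise from the tail exponent $t^{1/\beta_x}$ once $t$ is set to a polylogarithm of $n$ for the union bound in \Cref{theorem:main_theorem_deterministic_equivalents}.

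\textbf{Main obstacle.} The hardest step is justifying $\mathrm{Var}(e^\top\tilde A e)\lesssim\|\tilde A\|_F^2$ uniformly in $d$. Products $e_ke_l$ of distinct eigenfunctions are not orthogonal, so a dimension-free bound on the operator norm of the Gram matrix $\big(\mathbb{E}[e_ke_le_{k'}e_{l'}]\big)$ is required. I would first attempt this by passing to Hermite tensors via $B$, using the fact that $\Tr(\Sigma^2)=o_d(1)$ in the weak regime. Then I would bound the Hermite product linearization coefficients by constants depending only on $D$, mirroring the Cauchy--Schwarz argument used in the proof of \Cref{prop:spec:highd}. If this attempt fails, the fallback is the cruder estimate $\varphi_1=O(m\,\mathrm{poly}\log n)$. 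Since $m\leq n^{1-\delta_0}$ for $\kappa\notin\NN$ by \Cref{lemma:introduction_m}, that weaker bound would be insufficient there, so the $\sqrt m$ estimate is genuinely needed.
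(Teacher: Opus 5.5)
The linear-form step is fine: $\langle v,\phi_{\le m}\rangle$ is a Gaussian polynomial of degree $\le D$, and hypercontractivity gives the tail bound with $\beta_x=D$. The gap is in the quadratic-form step. Your key estimate $\mathrm{Var}(e^\top\tilde A e)\le C_D\|\tilde A\|_F^2$, uniformly in $d$, is false for chaos features of degree $\ge 2$. It fails already at $\alpha=0$, which the lemma includes. Take the degree-$2$ Hermite tensors and let $\tilde A$ be the identity on that block. Then exactly $\sum_{|\beta|=2}H_\beta(z)^2=\tfrac12\|z\|^4-\|z\|^2+\tfrac d2$, whose variance is $\asymp d^3$, while $\|\tilde A\|_F^2=d(d+1)/2$.

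Your plan reduces to Hermite tensors via $B$, so it lands exactly on this example, and the near-identity of $B$ does not help. Even with $B=I$, the products $H_\beta H_\gamma$ are strongly correlated, because such sums fluctuate through $\|z\|^2$. Hence the fourth-moment Gram matrix has operator norm growing with $d$. The correct Hermite factor is $\sqrt{m/d}$, as in Lemma~\ref{lemma:varphi_1_improvement}, not $O(1)$.

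There is also a norm mix-up. The condition is stated with $\|\tilde A\|_F$, as in Misiakiewicz--Saeed, and with that norm your estimate would give $\varphi_1=O(1)$, which is too strong. If you meant $\|\tilde A\|_{\mathrm{op}}$, then the $\sqrt m$ claim itself fails on the example above, since the ratio there is $\asymp m/\sqrt d\gg\sqrt m$.

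The missing idea is short and uses only hypercontractivity together with $A\succeq 0$. Write $\tilde A=\sum_i a_i u_iu_i^\top$ with $a_i\ge 0$. Each $\langle u_i,e\rangle$ is a polynomial of degree $\le D$ in $z$ with unit second moment, so Minkowski's inequality gives
\[
\|e^\top\tilde A e\|_{L^q}\le\sum_i a_i\,\|\langle u_i,e\rangle\|_{L^{2q}}^2\le(2q-1)^D\,\Tr(\tilde A)\le(2q-1)^D\sqrt m\,\|\tilde A\|_F .
\]
The last step is Cauchy--Schwarz over the at most $m$ eigenvalues. Centering costs one more $\Tr(\tilde A)$. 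Markov's inequality with $q\asymp t^{1/D}$ then yields the tail $\exp(-ct^{1/D})$ with $\varphi_1=O(\sqrt m)$ up to polylogarithmic factors. This is the observation after Eq.~(25) of \citet{misiakiewicz2024non} that the paper's proof invokes. It needs no variance computation and no control of fourth-moment Gram matrices, and it makes your $O(m)$ fallback unnecessary.
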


\begin{proof}
    The condition on the lower-order eigenfunctions says that we need 
 \begin{align}
        \mathbb{P} \left ( \left | \langle v, \phi_{\leq m }\rangle \right | \geq t v^{t} \Lambda_{\leq m} v\right ) \leq C_{x}\exp \left( -c_{x}t^{\frac{2}{\beta_{x}}}\right ) \label{eq:linear_test}\\ 
        \mathbb{P} \left ( \left | \phi_{\leq m}^T A \phi_{\leq m} - \mathrm{Tr}(\Lambda_{\leq m}A)\right | \geq t \varphi_{1}(m) \| \Lambda_{\leq m}^{\frac{1}{2}}A\Lambda_{\leq m}^{\frac{1}{2}}\right ) \leq C_{x}\exp \left( -c_{x}t^{\frac{1}{\beta_{x}}}\right ). \label{eq:quadratic_test}
\end{align}

\citep{misiakiewicz2024non} notes that, in order to proof this condition it suffices to prove that the top eigenspaces are hypercontractive, that is, there exists constants $C,c$  such that for all $g \in \mathrm{span} \{\phi_j : j \in [m] \}$, and for any $q \geq 2$: 
\begin{equation}
    \| g\|^{2}_{L^{q}} \leq (C)^{c}\| g\|^2_{L^{2}}. \label{eq:hypercontractivity_low_order}
\end{equation}
For our cases, we are working with polynomial inner product kernels. Its not hard to prove that, under this assumption, the eigenfunctions of the kernel are polynomials (although not Hermite polynomials), and hence, satisfy \Cref{eq:hypercontractivity_low_order} with $C = 1$ and $c = D$. This implies \Cref{eq:linear_test} is satisfied by Markov's Inequality. 

On the other hand, as noted after Equation 25 in \citep{misiakiewicz2024non}, \Cref{eq:hypercontractivity_low_order} and the fact that we have Gaussian polynomials also implies \Cref{eq:quadratic_test}
is satisfied with 
\begin{equation}
    \varphi_{1}(m) = O\left ( \sqrt{m}\log(n)\right ).
    \label{eq:varphi_1_n}
\end{equation}
Therefore, Lower-order Conditions are satisfied in our setting. 
\end{proof}

We can actually improve the bound in \Cref{eq:varphi_1_n}. 
\begin{lemma}
    Let $e(x) = (e_{j}(x))_{j \in [m]}$ be the vector with the top $m$ eigenfunctions of the kernel. Then, there exists $\delta >0$ such that, for any matrix $A \in \RR^{m \times m}$
    \begin{equation}
        \mathbb{P}\left (\left | e(x) ^T A e(x) - \mathrm{Tr}(A)\right |\geq t \mathrm{poly}\log(n) \sqrt{m\mathrm{Tr}(\Sigma^2)}\| A \|_{F}\right ) \leq C \exp \left ( -c mt^{m}\right ). 
    \end{equation}
    Therefore, we can take
    \begin{equation}
        \varphi_{1}(m) = \tilde{O}\left ( \sqrt{m\mathrm{Tr}(\Sigma^2)}\log(n)\right )
    \end{equation}
    \label{lemma:varphi_1_improvement}
\end{lemma}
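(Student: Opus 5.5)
We propose to compare the kernel eigenfunctions with Hermite tensors and then apply Gaussian hypercontractive concentration to the Hermite chaos. The proof of \Cref{prop:spec:highd} gives $\Phi(x)=BD^{1/2}H(z)$, where $H(z)=(H_\gamma(z))_{|\gamma|\le D}$ and $\|B-I\|_{\mathrm{op}}\le C_D\sqrt{\Tr(\Sigma^2)}$. The kernel Gram operator is $D^{1/2}B^TBD^{1/2}$. Its top-$m$ eigenvectors give $e(x)=U^T H(z)$ for a matrix $U$ with orthonormal columns, and $U$ is close, in the perturbative sense, to the coordinate projection onto the top-$m$ Hermite multi-indices. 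We therefore write
\[
e(x)^TAe(x)-\Tr(A)=H(z)^T\tilde A H(z)-\Tr(\tilde A),\qquad \tilde A=UAU^T ,
\]
which uses $\mathbb{E}[H H^T]=I$. The right-hand side is a Gaussian polynomial of degree at most $2D$, and $\|\tilde A\|_F=\|A\|_F$.

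The key step is to bound the variance of $Q:=H^T\tilde AH-\Tr\tilde A$ by $\mathrm{poly}\log(n)\,m\,\Tr(\Sigma^2)\|A\|_F^2$ rather than by the crude $m\|A\|_F^2$ that yields \Cref{eq:varphi_1_n}. Expanding products of Hermite tensors, $H_\gamma H_{\gamma'}=\sum_\delta c_{\gamma\gamma'\delta}H_\delta$, gives
\[
\mathrm{Var}(Q)=\sum_{\delta\neq0}\Big(\sum_{\gamma,\gamma'}\tilde A_{\gamma\gamma'}c_{\gamma\gamma'\delta}\Big)^2 .
\]
We split $\tilde A$ into its component on exact Hermite coordinates (the diagonal block of $U\approx$ identity) and the remainder, which is $O(\|B-I\|_{\mathrm{op}})$. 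On the main part, we plan to exploit the anisotropic weights. The top-$m$ eigenfunctions are dominated by monomials $z^\beta$ whose eigenvalue mass $\sigma^\beta$ is spread over many coordinates. The nonzero chaos overlaps $c_{\gamma\gamma'\delta}$ with $\gamma\ne\gamma'$ force shared coordinates, and counting the index collisions weighted by the eigenvalue ratios should produce a factor $\sum_j\sigma_j^2=\Tr(\Sigma^2)$ per collision, in the same way as the bound on $\|B_{k+2\ell,\ell}\|_{\mathrm{op}}$. The remainder is controlled by $\|B-I\|_{\mathrm{op}}^2\,m\|A\|_F^2\lesssim m\Tr(\Sigma^2)\|A\|_F^2$, using the same Cauchy–Schwarz over $m$ indices that gave \Cref{eq:varphi_1_n}.

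Once the variance bound is in hand, Gaussian hypercontractivity for degree-$2D$ chaos gives $\|Q\|_{L^q}\le (q-1)^{D}\|Q\|_{L^2}$. Markov's inequality then yields the tail $\mathbb{P}(|Q|\ge t\,\|Q\|_{L^2})\le C\exp(-ct^{1/D})$, which is the stated tail after matching notation. Substituting into \Cref{eq:quadratic_test} gives $\varphi_1(m)=\tilde O(\sqrt{m\Tr(\Sigma^2)}\log n)$.

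We expect the main obstacle to be the variance computation on the main part. Collisions where the identical pair $\gamma=\gamma'$ contributes $\mathrm{Var}(H_\gamma^2)=\Theta(1)$ do not come with an obvious $\Tr(\Sigma^2)$ factor. Our first approach is to bound these diagonal terms separately by $\sum_\gamma\tilde A_{\gamma\gamma}^2\le\|A\|_F^2$, which is absorbed because $m\Tr(\Sigma^2)\gtrsim1$ in the relevant regime; we will need to check this inequality for $\kappa\ge1$. For the off-diagonal terms we would use a combinatorial count of shared coordinates.
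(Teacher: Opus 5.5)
\textbf{There is a genuine gap, and it sits at the step you yourself flag as the main obstacle.} It cannot be closed, because the inequality you aim for, $\mathrm{Var}(Q)\lesssim \mathrm{poly}\log(n)\,m\,\Tr(\Sigma^2)\|A\|_F^2$, is false for the anisotropic top-$m$ set.

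Two things go wrong in your plan for the variance. First, since $A$ is arbitrary, no eigenvalue weights enter $\mathrm{Var}(Q)$ at all. In the bound on $\|B_{k+2\ell,\ell}\|_{\mathrm{op}}$ the factors $\sigma^{\gamma}$ sit inside the matrix entries, but here nothing plays that role. So $\mathrm{Var}(Q)$ is governed only by how the selected multi-indices overlap. Second, your treatment of the terms $\gamma=\gamma'$ drops $\mathrm{Cov}(H_\gamma^2,H_{\gamma'}^2)$, which is of order one whenever $\gamma,\gamma'$ share a coordinate.

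These covariances are decisive, as a concrete case shows.
\begin{itemize}
\item Take $\alpha=1/2$, $\kappa=1$, and $m$ as in \Cref{lemma:introduction_m} with $\rho\in(0,1/2)$. Then $m\asymp d$ and $m\Tr(\Sigma^2)\asymp \log d$.
\item $\lambda_{e_1+e_j}\approx \frac{h_2}{2d}j^{-1/2}>d^{-(1+\rho)}$ for all $j\le k$ with $k\asymp d^{2\rho}$. So the top-$m$ set contains the star $\{e_1+e_j\}_{2\le j\le k}$.
\item $T_k$ commutes with each sign flip $x_i\mapsto -x_i$, so it preserves the parity pattern of Hermite multi-indices. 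In the parity class of $e_1+e_j$, every other eigenvalue is smaller by a factor $O(\|\Sigma\|_{\mathrm{op}}^2)$. Hence these eigenfunctions are $H_{e_1+e_j}(z)+o_{L^2}(1)$.
\item Let $A$ be the identity on these $k$ coordinates. Then $e(x)^TAe(x)-\Tr(A)\approx z_1^2\sum_{j\le k}z_j^2-k\approx k(z_1^2-1)$, while $\|A\|_F=\sqrt k$.
\end{itemize}
Deviations of size $d^{\rho}\|A\|_F$ therefore occur with constant probability, so $\varphi_1(m)\gtrsim d^{\rho}$, not $\mathrm{poly}\log(n)$.

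So no collision count can finish your argument. What your exact reduction ($e=U^TH$, $\|UAU^T\|_F=\|A\|_F$) plus hypercontractivity legitimately gives is only the generic $\varphi_1(m)=O(\sqrt m\,\mathrm{poly}\log(n))$. With a correct count, you could get a bound in terms of the largest number of selected multi-indices sharing a coordinate.

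The paper takes a different route that avoids any variance computation. It writes the form as a Hermite quadratic form in $B_m^TAB_m$ plus the deterministic term $\Tr((B_mB_m^T-I)A)$. It bounds the latter by $\sqrt m\,\|B_mB_m^T-I\|_{\mathrm{op}}\|A\|_F\le C\sqrt{m\Tr(\Sigma^2)}\|A\|_F$. For the former it imports an isotropic Hermite bound at scale $\sqrt{m/d}\,\|C\|_F\le \sqrt{m\Tr(\Sigma^2)}\|C\|_F$.

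Your identity $\mathbb{E}[ee^T]=I$ shows the trace term is only an artifact of truncating $U$ to an $m\times m$ block. The star example shows the imported isotropic bound does not transfer to an anisotropically selected family of $m$ Hermite tensors either, since there $\sqrt{m/d}\asymp 1$. So the paper's argument breaks at the same point. As stated, the lemma needs a $\varphi_1$ reflecting the overlap structure of the top-$m$ multi-indices.

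A smaller point: hypercontractivity gives tails $\exp(-ct^{1/D})$. That is what the assumption actually uses, but it is not the literal $\exp(-cmt^m)$ of the statement, so the two do not match "after matching notation."
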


\begin{proof}
    Indeed, note that: 
    \begin{align}
        e ^T A \varphi(x) - \mathrm{Tr}(A) & = \phi(x)^T B_{m}^T A B_{m} \phi(x) - \mathrm{Tr}(A),
    \end{align}
    where $\phi(x) =  (phi_{j}(x))_{j \in [m]}$ is a vector with $m$ Hermite polynomials with matching multi-indices, and $B_{m}$ is a change of basis matrix. By the arguments in \Cref{section:kernel_operator}, we have: 
    \begin{equation}
    \|B_mB_m^T - I \|_{\mathrm{op}} \leq C \sqrt{\mathrm{Tr}(\Sigma^{2})}. 
    \label{eq:approximation_B_m}
    \end{equation}
    for $0 \leq \alpha \leq 1$. Therefore: 
    \begin{align}
        e(x) ^T A e(x) - \mathrm{Tr}(A) & = \phi(x)^T B_{m}^T A B_{m} \phi(x) - \mathrm{Tr}(A) \\
        & = \phi(x)^T B_{m}^T A B_{m} \phi(x) - \mathrm{Tr}(B_mAB_m) + \mathrm{Tr}((B_mB_m^T - I_m)A).
    \end{align}
    As discussed after Eq. 25 in \citep{misiakiewicz2024non}, for Hermite polynomials, for any p.s.d matrix $C \in \RR^{m \times m}$
    \begin{equation}
        \mathbb{P}\left (\left | \phi(x) ^T C \phi(x) - \mathrm{Tr}(C)\right |\geq t \mathrm{poly}\log(n) \sqrt{\dfrac{m}{d}}\| C \|_{F}\right ) \leq C \exp \left ( -c mt^{m}\right )
        \label{eq:bound_hermite_p}
    \end{equation}
    Since $\|B_{m}^T A B_{m}\|_\mathrm{F} \leq \| B_{m}^T B\|_{\mathrm{op}}\| A \|_{F} \leq C(D) A$, we already get the desired bound for this quantity.   On the other hand, by \Cref{eq:approximation_B_m}: 
    \begin{equation}
        \mathrm{Tr}((B_mB_m^T - I_m)A) \leq C\sqrt{m\mathrm{Tr}(\Sigma^2)} \| A\|_{F}.
        \label{eq:bound_trace_BA}
    \end{equation}    
    Putting \Cref{eq:bound_hermite_p} and \Cref{eq:bound_trace_BA} together, we conclude. 
\end{proof}

For higher order conditions the fact that eigenfunctions are polynomials also makes things easier, as hypercontractivity is a key property allowing to prove the higher order condition in \Cref{assumption:lower_and_higher_order}. More precisely, following \citep{misiakiewicz2024non}, we need to prove that \textbf{a)} The higher-order eigenspaces are hypercontractive, and \textbf{b)} The higher-order diagonal of the kernel matrix concentrates. The latter can be written as: 
\begin{equation}
    \exists \alpha_{1} >0, \text{ such that } \mathbb{E} \left [ \max_{i \in [n]} \left | K_{>m}(x_i, x_i ) - \mathbb{E} \left[ K_{>m}(x_i, x_i )\right ]\right |\right] \leq \alpha_{1}\sqrt{\dfrac{n \lambda_{m+1}}{\mathrm{Tr}(\Lambda_{>m})}} \mathbb{E}\left [K_{>m}(x_i, x_i) \right]. 
    \label{eq:diagonal_concentration}
\end{equation}

Since we already have hypercontractivity, we are left with proving \Cref{eq:diagonal_concentration}. 

\begin{lemma}[Concentration of Diagonal Features]
    Let $\alpha \in [0,1)$. Let 
    \[
    F_m (x) : = K_{>m}(x_i, x_i ),
    \]
    Then
    \[
    \mathbb{E} \left [(F_m - \Tr(K_{>m}))^{2} \right ] \leq C \| \Sigma \|_\mathrm{op}\Tr(K_{>m}).
    \]
    \label{lemma:variance_concentration}
\end{lemma}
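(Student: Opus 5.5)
The plan is to prove the bound on $\mathrm{Var}(F_m)$ by reducing from the true kernel eigenfunctions to Hermite tensors, and then computing variances explicitly in the Hermite basis. I would split $F_m$ according to the shell structure used in the proof of \Cref{proposition:asymptotic_matching_spectrum}.

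\textbf{Step 1: shell decomposition.} The cut-off $m$ in \Cref{lemma:introduction_m} is defined by the threshold $\lambda_k>d^{-(\kappa+\rho)}$. Hence $K_{>m}$ consists of three pieces:
\begin{itemize}
\item all shells of degree $k>\lceil\kappa\rceil$, which lie entirely in the tail;
\item possibly a partial shell $k=\lceil\kappa\rceil$, or $k=\kappa$ for integer $\kappa$;
\item nothing from lower shells, since their eigenvalues are $\Theta(d^{-k(1-\alpha)})$ or larger.
\end{itemize}
I would write $F_m(x)=\sum_{k}F^{(k)}(x)$, where $F^{(k)}(x)=\sum_{\beta\in S_k}\lambda_\beta e_\beta(x)^2$ and $S_k\subseteq\mathcal I_k$ is the set of tail indices in shell $k$. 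Since the number of shells is at most $D$, Cauchy--Schwarz gives $\mathrm{Var}(F_m)\le D\sum_k\mathrm{Var}(F^{(k)})$. It then suffices to show $\mathrm{Var}(F^{(k)})\le C\|\Sigma\|_{\mathrm{op}}\,\mathbb E[F^{(k)}]$ for each $k$, because $\sum_k\mathbb E[F^{(k)}]=\Tr(K_{>m})$.

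\textbf{Step 2: replace $e_\beta$ by Hermite tensors.} By the argument in the proof of \Cref{proposition:asymptotic_matching_spectrum}, the eigenfunctions are obtained from $(H_\gamma(z))_\gamma$ by a change of basis $B$ with $\|B-I\|_{\mathrm{op}}\le C\sqrt{\Tr(\Sigma^2)}$. In the weak regime $\Tr(\Sigma^2)\le\|\Sigma\|_{\mathrm{op}}\Tr\Sigma=\|\Sigma\|_{\mathrm{op}}$.

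I would write $F^{(k)}=\widetilde F^{(k)}+\mathcal R^{(k)}$, where $\widetilde F^{(k)}=\sum_{\beta\in S_k}\lambda_\beta H_\beta(z)^2$. The remainder $\mathcal R^{(k)}$ is a quadratic form in Hermite tensors whose coefficient matrix is controlled by $\|B-I\|_{\mathrm{op}}$. Its second moment can be bounded by Gaussian hypercontractivity: all functions involved are polynomials of degree at most $2D$, so $L^4$ norms are controlled by $L^2$ norms up to a constant $C_D$. This should give $\mathbb E[(\mathcal R^{(k)})^2]\le C_D\Tr(\Sigma^2)\,\mathbb E[F^{(k)}]^2$. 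Since $\mathbb E[F^{(k)}]\le\Tr(K)=h(\Tr\Sigma)=O(1)$, this is at most $C\|\Sigma\|_{\mathrm{op}}\,\mathbb E[F^{(k)}]$.

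\textbf{Step 3: the Hermite computation.} By independence of coordinates, $\mathrm{Cov}(H_\beta^2,H_\gamma^2)=0$ unless $\mathrm{supp}\,\beta\cap\mathrm{supp}\,\gamma\neq\emptyset$. Each such covariance is bounded by a constant $C_D$ (fourth moments of normalized Hermite tensors of degree $\le D$). Therefore
\[
\mathrm{Var}(\widetilde F^{(k)})\le C_D\sum_{j=1}^d\ \sum_{\beta,\gamma\in S_k:\,\beta_j,\gamma_j\ge1}\lambda_\beta\lambda_\gamma .
\]
Using $\lambda_\beta\asymp h_k k!\,\sigma^\beta$ from \Cref{proposition:asymptotic_matching_spectrum}, I would write $\beta=e_j+\beta'$ with $|\beta'|=k-1$. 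Then $\sum_{\beta\ni j}\lambda_\beta\le C\sigma_j(\Tr\Sigma)^{k-1}=C\sigma_j$. Summing over $j$ gives
\[
\sum_j\Big(\sum_{\beta\in S_k,\beta_j\ge1}\lambda_\beta\Big)\Big(\sum_{\gamma\in S_k,\gamma_j\ge1}\lambda_\gamma\Big)\le C\max_j\sigma_j\sum_j\sum_{\beta\in S_k,\beta_j\ge1}\lambda_\beta\le C\,k\,\|\Sigma\|_{\mathrm{op}}\sum_{\beta\in S_k}\lambda_\beta ,
\]
which is the desired bound.

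\textbf{Main obstacle.} I expect the hard step to be Step 2 for the partial threshold shell. There $S_k$ is a truncation of the shell that is not aligned with the Hermite basis, so the error $B_{S_k}^\top B_{S_k}-I$ must be handled as a restricted block rather than through the whole-shell identity $\sum_{|\beta|=k}\binom{k}{\beta}\sigma^\beta z^{2\beta}=\|x\|^{2k}$. For full tail shells I would try that identity first, as a shortcut: it expresses $F^{(k)}$ up to lower-degree corrections as $h_k\|x\|^{2k}$, and $\mathrm{Var}(\|x\|^2)=2\Tr(\Sigma^2)\le2\|\Sigma\|_{\mathrm{op}}$. For the partial shell I would fall back on the restricted-block operator bound, which still inherits the estimate $\|B-I\|_{\mathrm{op}}\le C\sqrt{\Tr\Sigma^2}$.
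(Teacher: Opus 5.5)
\textbf{Gap in Step~2.} Step~2 is where the proof breaks, and it is the entire difficulty.

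In the proof of the asymptotic-spectrum proposition, $B$ does not send Hermite tensors to eigenfunctions. It expresses the monomial features as $\Phi=BD^{1/2}H(z)$. In the Hermite basis the kernel operator is therefore $M=D^{1/2}B^\top B D^{1/2}$, and the eigenfunctions are $H(z)^\top v$ for eigenvectors $v$ of $M$.

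The bound on $\|B-I\|_{\mathrm{op}}$ gives $(1-\epsilon)D\preceq M\preceq(1+\epsilon)D$. This matches the \emph{sorted eigenvalues}, but not the eigenvectors. The $d_\beta$ are exactly degenerate (e.g.\ $\sigma_1\sigma_6=\sigma_2\sigma_3$) and densely clustered, and $B^\top B-I$ couples indices within a shell and between shells $k$ and $k+2\ell$. So an eigenvector can spread over many $H_\beta$.

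Consequently $e_\beta$, $S_k$ and $F^{(k)}$ are not well defined. What you actually have is $F_m=H^\top QMQ\,H$, with $Q$ the spectral projector of $M$ below the cutoff. Split $QMQ-D_S=Q(M-D)Q+(QDQ-P_SDP_S)$.
\begin{itemize}
\item The first term is harmless. Hypercontractivity gives $\|H^\top AH\|_{L^2}\le C_D\|A\|_*$, and $\|D^{1/2}ED^{1/2}\|_*\le\|E\|_{\mathrm{op}}\Tr D$. It is this trace-norm form, not an operator-norm bound on a coefficient matrix, that controls a quadratic form in $\sim d^{D}$ Hermite tensors.
\item The second term requires comparing the spectral projector $Q$ with a coordinate projector $P_S$ at a cutoff where the spectrum has no gap, i.e.\ a relative Davis--Kahan-type estimate. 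Your ``this should give'' is exactly that missing lemma.
\end{itemize}

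Your Step~1 description of the tail is also off. For $0<\alpha<1$, the shell-$k$ eigenvalues run from $\Theta(d^{-k(1-\alpha)})$ down to $\Theta(d^{-k})$. Hence every shell with $\kappa<k<(\kappa+\rho)/(1-\alpha)$ straddles the cutoff, and for integer $\kappa$ the shell $k=\kappa$ lies entirely above it. Even a shell lying wholly in the tail contributes $\|(P_{>m}\Phi)_k\|^2$, not $h_k\|x\|^{2k}$, because the spectral projector mixes monomial degrees of equal parity. So the $\|x\|^{2k}$ shortcut needs the same projector comparison. Step~3 itself is a correct computation.

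\textbf{How the paper avoids this.} The paper never identifies eigenfunctions. It applies the Gaussian Poincar\'e inequality $\mathrm{Var}(F_m)\le\|\Sigma\|_{\mathrm{op}}\,\mathbb{E}\|\nabla F_m\|^2$ to $F_m=\|P_{>m}\Phi(x)\|^2$. Here $\Phi_\beta=\sqrt{h_{|\beta|}\binom{|\beta|}{\beta}}\,x^\beta$ and $P_{>m}$ may be an arbitrary orthogonal projector. Then $\partial_jF_m=2\langle P_{>m}\Phi,\partial_j\Phi\rangle$. A Cauchy--Schwarz weighted by $\beta_j$ in each degree-$\ell$ block, together with the identity $\sum_{|\beta|=\ell}\binom{\ell}{\beta}\beta_j x^{2(\beta-e_j)}=\ell\|x\|^{2(\ell-1)}$, gives $\|\nabla F_m\|^2\le C(1+\|x\|^{2(D-1)})F_m$. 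The right-hand side of the identity is independent of $j$, which is what prevents a factor $d$. Bounded moments of $\|x\|^2$ (from $\Tr\Sigma=1$) and hypercontractivity, $\|F_m\|_{L^2}\le C\,\mathbb{E}F_m=C\Tr(K_{>m})$, then yield $\mathbb{E}\|\nabla F_m\|^2\le C\Tr(K_{>m})$. Your Step~3 is essentially this estimate done by hand for the idealized diagonal $\sum_{\beta\in S}d_\beta H_\beta^2$. Applying Poincar\'e directly to $F_m$ makes Steps~1--2 unnecessary.
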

\begin{proof}
    We first note that 
    \begin{equation}
        \mathbb{E} \left [(F_m - \Tr(K_{>m}))^{2} \right ] = \mathrm{Var}(F_m). 
    \end{equation}
    Therefore, we want to bound a variance. The Gaussian Poincaré Inequality \Cref{eq:Gaussian_Poincare} gives: 
    \begin{equation}
        \mathrm{Var}(F_m) \leq \| \Sigma\|_\mathrm{op } \mathbb{E}\left [\| \nabla F_m (x)\|^2 \right ]. 
        \label{eq:gaussian_poincare_app}
    \end{equation}
    Now, let $P_{>m}$ be the orthogonal projector into the higher degree features. We will now compute the gradient on the RHS. Let $j \in  [d]$. We have: 
    \begin{align}
        \partial_{j} F_m(x) = \partial_j \| P_{>m}\Phi \|^{2} = 2 \langle P \Phi(x), \partial_j \Phi(x) \rangle,
        \label{eq:inner_prod_derivative}
    \end{align}
    where $\Phi(x)$ is the vector of polynomial features, and the inner product is taken in $\RR^{p}$, for $p$ the range of the kernel. Let $\beta \in \ZZ^{d}_{\geq 0}$, with $|\beta| \leq D$. Then: 
    \begin{equation}
        \partial_j \Phi_{\beta}(x) = \partial_j \sqrt{h_{|\beta|} \binom{|\beta|}{\beta}} x^{\beta} = \mathbf{1}_{\beta_j >0}\sqrt{h_{|\beta|} \binom{|\beta|}{\beta}} \beta_j x^{\beta-e_j}. 
    \end{equation}
    Replacing in  \Cref{eq:inner_prod_derivative}: 
    \begin{equation}
        \partial_{j} F_m(x) = 2 \sum_{\ell = 1}^{D} \sum_{\beta \in  \ZZ^{d}_{\geq 0 }: |\beta| = \ell, \beta_j >0 } ( P \Phi(x))_{\beta} \sqrt{h_{|\beta|}} \beta_j x^{\beta-e_j}.
    \end{equation}
    By taking the norm and repeatidly using Cauchy-Schwarz, we have: 
    \begin{align}
        \|\nabla F_m(x)\|^2 & \leq  \sum_{j= 1}^{d} \left ( 2 \sum_{\ell = 1}^{D} \sum_{\beta \in  \ZZ^{d}_{\geq 0 }: |\beta| = \ell, \beta_j >0 } ( P \Phi(x))_{\beta} \sqrt{h_{|\beta|}} \beta_j x^{\beta-e_j}\right ) ^{2} \\
        & \leq 4 \sum_{j=1}^{d} \sum_{\ell = 1}^{D} h_\ell \left( \sum_{\beta \in  \ZZ^{d}_{\geq 0 }: |\beta| = \ell } \beta_j (P\Phi)_{\beta }^{2}\right ) \left( \sum_{\beta \in  \ZZ^{d}_{\geq 0 }: |\beta| = \ell, \beta_j >0 } \binom{\ell}{\beta} \beta_j x^{2\beta - 2e_j}\right ) \\
        & = 4 \sum_{\ell = 1}^{D} h_\ell \sum_{j=1}^{d} \left( \sum_{\beta \in  \ZZ^{d}_{\geq 0 }: |\beta| = \ell } \beta_j (P\Phi)_{\beta }^{2}\right ) \left( \ell \sum_{\beta \in  \ZZ^{d}_{\geq 0 }: |\beta| = \ell-1} \binom{\ell-1}{\beta} x^{2\beta }\right )  \\
        & = 4 \sum_{\ell = 1}^{D} \ell \| x\|^{2(\ell -1)}\sum_{j=1}^{d} \left( \sum_{\beta \in  \ZZ^{d}_{\geq 0 }: |\beta| = \ell } \beta_j (P\Phi)_{\beta }^{2}\right )  \\
        & \leq  4 \sum_{\ell = 1}^{D} h_\ell \ell^2 \| x\|^{2(\ell -1)} \| P\Phi_\ell\|_{2}^2 \\
        & \leq  \left ( \sum_{\ell = 1}^{D} h_\ell \ell^2 \| x\|^{2(\ell -1)}\right ) \left ( \sum_{\ell =1}^{D} h_\ell \| P\Phi_\ell \|_{2}^2 \right ) \leq  \left ( \sum_{\ell = 1}^{D} h_\ell \ell^2 \| x\|^{2(\ell -1)}\right ) F_m(x) .
    \end{align}
    The, taking expectation: 
    \begin{equation}
        \mathbb{E}\left [\|\nabla F_m(x)\|^2 \right ] \leq \mathbb{E} \left [ \left ( \sum_{\ell = 1}^{D} h_\ell \ell^2 \| x\|^{2(\ell -1)}\right ) F_m(x)\right ].
    \end{equation}
    Applying \Cref{lemma:moments_norm} and Cauchy-Schwarz, we conclude: 
    \begin{equation}
         \mathbb{E}\left [\|\nabla F_m(x)\|^2 \right ] \leq C_{D, h} \mathbb{E} \left[ F_m^{2}\right ]^{\frac{1}{2}} = C_{D,h} \Tr(K_{>m}). 
    \end{equation}
    Replacing this in \Cref{eq:gaussian_poincare_app}:
    \begin{equation}
         \mathrm{Var}(F_m) \leq C_{D,h}\| \Sigma\|_\mathrm{op} \Tr(K_{>m}), 
    \end{equation}
    and we conclude. 
\end{proof}

We can now prove \Cref{eq:diagonal_concentration}. 

\begin{lemma}[Diagonal Concentration] For $m$ as chosen in \Cref{lemma:introduction_m}, \Cref{eq:diagonal_concentration} holds with 
\[
\alpha_1 = \frac{ \mathbb{E} \left [ \max_{i \in [n]} \left |  F_m(x_i) - \Tr(K_{>m}) \right |\right ] }{\sqrt{n\lambda_{m+1}\Tr(K_{>m})}}.
\]
\label{lemma:diagonal_concentration_1}
\end{lemma}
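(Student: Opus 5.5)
The identity defining $\alpha_1$ makes \Cref{eq:diagonal_concentration} hold trivially: the right-hand side is $\alpha_1\sqrt{n\lambda_{m+1}/\Tr(K_{>m})}\,\Tr(K_{>m}) = \alpha_1\sqrt{n\lambda_{m+1}\Tr(K_{>m})}$. The real content of the lemma is therefore that $\alpha_1 = O(\mathrm{poly}\log(n))$, so that it can be absorbed into $\varphi_{2,n}(m)$. My plan has three steps: bound the numerator $\mathbb{E}\max_{i\in[n]}|F_m(x_i)-\Tr(K_{>m})|$ by $\mathrm{poly}\log(n)$ times a standard deviation, bound that standard deviation with \Cref{lemma:variance_concentration}, and compare the result with the denominator using the structure of the spectrum near the threshold $d^{-(\kappa+\rho)}$.

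\textbf{Step 1 (maximum via hypercontractivity).} Since $F_m(x)=\|P_{>m}\Phi(x)\|^2$ is a polynomial of degree at most $2D$ in the Gaussian vector $x$, Gaussian hypercontractivity gives
\[
\|F_m-\Tr(K_{>m})\|_{L^q}\le (q-1)^{D}\,\|F_m-\Tr(K_{>m})\|_{L^2}
\]
for all $q\ge 2$. A union bound in $L^q$ then yields
\[
\mathbb{E}\max_{i\le n}|F_m(x_i)-\Tr(K_{>m})|\le n^{1/q}\max_i\|F_m(x_i)-\Tr(K_{>m})\|_{L^q}.
\]
Choosing $q=\log n$ bounds the numerator by $C(\log n)^{D}\sqrt{\mathrm{Var}(F_m)}$.

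\textbf{Step 2 (variance).} By \Cref{lemma:variance_concentration},
\[
\mathrm{Var}(F_m)\le C\|\Sigma\|_{\mathrm{op}}\Tr(K_{>m}).
\]
For power-law data, $\|\Sigma\|_{\mathrm{op}}=\sigma_1=r_\alpha(d)^{-1}=\Theta(d^{-(1-\alpha)})$ by \cref{eq:EffectiveDimension}. Combining with Step 1,
\[
\alpha_1\le C(\log n)^D\sqrt{\frac{d^{-(1-\alpha)}}{n\lambda_{m+1}}}.
\]

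\textbf{Step 3 (lower bound on $\lambda_{m+1}$), the main obstacle.} It remains to show $n\lambda_{m+1}\gtrsim d^{-(1-\alpha)}$. By construction $\lambda_{m+1}\le d^{-(\kappa+\rho)}$, but the argument needs a matching lower bound $\lambda_{m+1}\ge c\,d^{-(\kappa+\rho)}$. That is, the spectrum must have no macroscopic gap just below the threshold.

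I would establish this from the explicit form \cref{eq:SpectrumSpec} given by \Cref{proposition:asymptotic_matching_spectrum}. Take the shell $\ell=\lfloor\kappa\rfloor+1$ and multi-indices $\beta=(1,\dots,1,0,\dots)$ supported on a single coordinate $j$ plus fixed small coordinates. Varying $j$ over $[d]$, the eigenvalues $h_\ell\ell!\,r_\alpha^{-\ell}\,j^{-\alpha}(\cdots)$ range from $\Theta(d^{-(1-\alpha)\ell})$ down to $\Theta(d^{-\ell})$. Consecutive values of $j$ change them only by the factor $(1+1/j)^{\alpha}\le 2^\alpha$.

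The condition $0<\rho<(1-\alpha)(\lfloor\kappa\rfloor+1-\kappa)$ places $d^{-(\kappa+\rho)}$ strictly inside this range (and inside the shell-$\lfloor\kappa\rfloor$ range when $\kappa\in\mathbb N$). Hence some eigenvalue lies in $[2^{-\alpha}d^{-(\kappa+\rho)},\,d^{-(\kappa+\rho)}]$, and because $\lambda_{m+1}$ is the largest eigenvalue at or below the threshold, $\lambda_{m+1}\ge c\,d^{-(\kappa+\rho)}$ up to the $(1+o_d(1))$ factors.

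Then $n\lambda_{m+1}\gtrsim\psi\,d^{-\rho}$, and $d^{-\rho}\ge d^{-(1-\alpha)}$ because $\rho<1-\alpha$. This gives $\alpha_1=O((\log n)^D)$, which is polylogarithmic as required.

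The delicate points are that the sandwiching must be uniform in $j$ despite the $o_d(1)$ corrections, and that the integer case $\kappa\in\mathbb N$ needs the same argument run inside shell $\kappa$.
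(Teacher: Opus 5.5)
Your Steps 1 and 2 are the paper's argument: hypercontractivity plus the $n^{1/q}$ union bound, then \Cref{lemma:variance_concentration}. Your choice $q=\log n$ makes the paper's ``taking $q$ large enough'' precise. The paper closes in one line, appealing to $\rho\le 1-\alpha$ (and to $\|\Sigma\|_{\mathrm{op}}=o_d(1)$ for integer $\kappa$). That line is only valid given a lower bound on $n\lambda_{m+1}$ of the kind you isolate, so you are right that this is the crux.

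However, your proof of that bound in Step 3 is wrong. The eigenvalues of shell $\ell=\lfloor\kappa\rfloor+1$ are of order $d^{-e}$ with $e\in[(1-\alpha)\ell,\ell]$. The condition on $\rho$ only gives $\kappa+\rho<\ell$; nothing forces $\kappa+\rho\ge(1-\alpha)\ell$. Take $\alpha=0.1$, $\kappa=1.5$, $\rho=0.1$. Shell $1$ occupies $e\in[0.9,1]$, shell $2$ occupies $e\in[1.8,2]$, and the threshold $e=1.6$ falls in the gap between them. So $\lambda_{m+1}\asymp d^{-1.8}\ll d^{-(\kappa+\rho)}$, and your claimed bound $\lambda_{m+1}\ge c\,d^{-(\kappa+\rho)}$ is false. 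For $\alpha=0$ it fails for every $\kappa$.

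There are two further slips in Step 3.
\begin{itemize}
\item Your family with one varying coordinate only spans a factor $d^{\alpha}$ below the top of the shell, not down to $d^{-\ell}$. Density across the whole shell needs the full product structure, e.g.\ $a$ indices equal to $d$ and one more free index.
\item For $\kappa\in\mathbb{N}$, all of shell $\kappa$ lies above the threshold: its smallest eigenvalue is $\asymp d^{-\kappa}\gg d^{-(\kappa+\rho)}$, as \Cref{lemma:introduction_m} notes. The relevant shell is therefore $\kappa+1$, not $\kappa$.
\end{itemize}

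What you actually need, $n\lambda_{m+1}\gtrsim d^{-(1-\alpha)}$, is still true. It follows from a two-case argument on shell $\ell=\lfloor\kappa\rfloor+1$, for integer and non-integer $\kappa$ alike.
\begin{itemize}
\item If the top eigenvalue of that shell, $\asymp d^{-(1-\alpha)\ell}$, lies below the threshold, then $\lambda_{m+1}\gtrsim d^{-(1-\alpha)\ell}$.
\item Otherwise the shell straddles the threshold, since its bottom $\asymp d^{-\ell}$ is below it because $\kappa+\rho<\ell$. The $2^{\alpha}$-density of the shell's eigenvalues then gives $\lambda_{m+1}\gtrsim d^{-(\kappa+\rho)}$.
\end{itemize}
Hence
\[
n\lambda_{m+1}\gtrsim\min\bigl\{d^{-\rho},\,d^{\kappa-(1-\alpha)(\lfloor\kappa\rfloor+1)}\bigr\}\gtrsim d^{-(1-\alpha)},
\]
using $\rho<1-\alpha$ in the first branch and $\kappa\ge(1-\alpha)\lfloor\kappa\rfloor$ in the second. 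Combined with your Steps 1 and 2, this gives $\alpha_1=O((\log n)^D)$.

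The uniformity of the $(1+o_d(1))$ factors, which you flag, is not a concern. The operator-norm sandwich in the proof of \Cref{proposition:asymptotic_matching_spectrum} yields, via Courant--Fischer, a uniform comparison of the sorted spectra.
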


\begin{proof}
    By \Cref{lemma:variance_concentration} and \Cref{lemma:L2_bounds_diagonal}, we have that: 
    \begin{equation}
        \mathbb{E} \left[ (F_m - \Tr(K_{>m})^{2}\right ]^{\frac{1}{2}} \leq C_{D,h}\min \{ \Tr(K_{>m}), \sqrt{\| \Sigma\|_\mathrm{op} \Tr(K_{>m})} \}. 
    \end{equation}
    Then, by hypercontractivity: 
    \begin{equation}
         \mathbb{E} \left[ (F_m - \Tr(K_{>m})^{q}\right ]^{\frac{1}{q}} \leq C_{D,q, h}\min \{ \Tr(K_{>m}), \sqrt{\| \Sigma\|_\mathrm{op} \Tr(K_{>m})} \}.
    \end{equation}
    Then we have:
    \begin{align}
        \mathbb{E} \left [ \max_{i \in [n]} \left |  F_m(x_i) - \Tr(K_{>m}) \right |\right ] & \leq \mathbb{E} \left [ \max_{i \in [n]} \left |  F_m(x_i) - \Tr(K_{>m}) \right |^{q}\right ] ^{\frac{1}{q}} \\ 
        & \leq \mathbb{E} \left [ \sum_{i=1}^{n} \left |  F_m(x_i) - \Tr(K_{>m}) \right |^{q}\right ] ^{\frac{1}{q}} \\
        & \leq n^{\frac{1}{q}} \mathbb{E} \left[ \left |  F_m(x_i) - \Tr(K_{>m}) \right |^{q}\right ] ^{\frac{1}{q}} \\
        & \leq C_{D, q, h}n^{\frac{1}{q}} \min \{ \Tr(K_{>m}), \sqrt{\| \Sigma\|_\mathrm{op} \Tr(K_{>m})} \}.
    \end{align}
    Taking $q$ large enough, we conclude: 
    \begin{equation}
         \mathbb{E} \left [ \max_{i \in [n]} \left |  F_m(x_i) - \Tr(K_{>m}) \right |\right ] \leq C_{D,h}\mathrm{poly}\log(n) \min \{ \Tr(K_{>m}), \sqrt{\| \Sigma\|_\mathrm{op} \Tr(K_{>m})} \}. 
    \end{equation}
    Then, defining $\alpha_1 : = \frac{ \mathbb{E} \left [ \max_{i \in [n]} \left |  F_m(x_i) - \Tr(K_{>m}) \right |\right ] }{\sqrt{n \lambda_{m+1}\Tr(K_{>m})}}$, we have: 
    \begin{align}
        \alpha & = O \left( \sqrt{\| \Sigma \|_\mathrm{op}\dfrac{1}{n \lambda_{m+1}}} \right ).
    \end{align}
    Note that if $\kappa \in \NN$, then we get $\alpha  = o_{d}(1)$ since $\| \Sigma\|_\mathrm{op} = o_{d}(1)$. On the other hand, if $\kappa \not \in \NN$, since $\rho \leq 1- \alpha$, we can also conclude that $\alpha = O_{n}(1)$. Therefore, we conclude the Lemma. 
\end{proof}

By Proposition 1 in \cite{misiakiewicz2024non}  \Cref{lemma:diagonal_concentration_1} implies the concentration for the higher order part of the diagonal of the kernel matrix with a function 
\begin{equation}
\varphi_{2,n}(m) \leq \dfrac{C_{D}}{p_{2,n}} (\alpha_{1} + C_D \mathrm{poly} \log (n)), 
\label{eq:bound_phi_2_n}
\end{equation}
for any chosen $p_{2,n}$. 

Note that \Cref{assumption:label_noise} already assumes Part 3 of \Cref{assumption:lower_and_higher_order}. Therefore, we are only left with proving the spectral assumptions in \Cref{theorem:main_theorem_deterministic_equivalents}.

\begin{lemma}Let $\alpha \in (0,1)$, and assume $n = \psi  d^{\kappa}$ for $\psi, \kappa>0$. Let $m=m(n)$ as defined in \Cref{lemma:introduction_m}. Then: 
\begin{enumerate}
    \item $\lambda + \Tr (K_{>m}) \nu_{\lambda,m}(n) \geq n^{-K}$. 
    \item $\varphi_{2,m}(m) \sqrt{\dfrac{n}{r_{\lambda}(m)}} \leq \frac{1}{2}$. 
    \item $\varphi_1(m) \nu_{\lambda, m}^8(n) \log^{3 \beta + \frac{1}{2}}(n) \leq K \sqrt{n}$. 
\end{enumerate}    
\label{lemma:spectral_conditions_weak}
\end{lemma}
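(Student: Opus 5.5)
We verify the three spectral conditions one at a time by substituting the bounds already obtained. These are \Cref{lemma:regularized_eff_dim_condition} for $r_\lambda(m)$ and $\nu_{\lambda,m}(n)$, \Cref{lemma:varphi_1_improvement} for $\varphi_1(m)$, and \Cref{lemma:diagonal_concentration_1} together with \eqref{eq:bound_phi_2_n} for $\varphi_{2,n}(m)$. The regime is $n=\psi d^{\kappa}$, so every power of $d$ converts into a power of $n$, and polylogarithmic factors in $n$ are harmless against any positive power of $d$. Throughout we split into the two cases used before: $\kappa\notin\NN$, where $m\le n^{1-\delta_0}$ and $\nu_{\lambda,m}=1$; and $\kappa\in\NN$, where $m=\Theta(n)$ and $\nu_{\lambda,m}=O(\mathrm{poly}\log n)$.

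\textbf{Condition 1.} Since $\nu_{\lambda,m}(n)\ge 1$, it suffices to show that $\lambda+\Tr(K_{>m})$ is bounded below by a fixed constant, or at worst by an inverse polynomial. If $\lambda>0$ is fixed this is immediate. In the ridgeless case we use that the tail contains the full shells of degree $>\lceil\kappa\rceil$. By \Cref{prop:spec:highd}, each such shell has trace $h_\ell\,\ell!\,(\Tr\Sigma)^\ell/\ell!\,(1+o_d(1))=h_\ell(1+o_d(1))$, so $\Tr(K_{>m})\ge h_{\lceil\kappa\rceil+1}(1+o(1))$. This is $\Theta(1)\ge n^{-K}$, assuming as in \Cref{lemma:regularized_eff_dim_condition} that some positive coefficient $h_\ell$ exists beyond $\lfloor\kappa\rfloor$.

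\textbf{Condition 2.} From \Cref{lemma:regularized_eff_dim_condition} we have $n/r_\lambda(m)\le C d^{-\rho}$, so $\sqrt{n/r_\lambda(m)}\le C d^{-\rho/2}$. For $\varphi_{2,n}(m)$ we use \eqref{eq:bound_phi_2_n} with a choice such as $p_{2,n}=d^{-\rho/8}$, which still tends to zero. Then $\alpha_1=O\big(\sqrt{\|\Sigma\|_{\mathrm{op}}/(n\lambda_{m+1})}\big)$, and $n\lambda_{m+1}\gtrsim d^{-\rho}$ up to shell gaps, hence $\alpha_1\le \mathrm{poly}(d^{\rho})$. This gives $\varphi_{2,n}\sqrt{n/r_\lambda}\le C d^{\rho/8}\big(\alpha_1+\mathrm{poly}\log n\big)d^{-\rho/2}$.

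This is the step I expect to be the main obstacle. The bound on $\alpha_1$ has to be compared carefully with $d^{-\rho/2}$. I would first exploit $\|\Sigma\|_{\mathrm{op}}=r_\alpha(d)^{-1}\asymp d^{-(1-\alpha)}$ together with the constraint $\rho<(1-\alpha)(\lfloor\kappa\rfloor+1-\kappa)$, so that the factor $d^{-(1-\alpha)/2}$ absorbs the growth $d^{\rho/2}$ coming from $1/(n\lambda_{m+1})$. If this falls short, the fallback is to shrink $\rho$ further. The lemma only requires existence of some admissible $\rho$, and shrinking $\rho$ also shrinks $p_{2,n}$ appropriately. With that, the product is $o_d(1)$ and hence $\le 1/2$ for $d$ large.

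\textbf{Condition 3.} By \Cref{lemma:varphi_1_improvement}, $\varphi_1(m)=\tilde O\big(\sqrt{m\Tr(\Sigma^2)}\big)$, where $\Tr(\Sigma^2)\asymp d^{-(1-\alpha)}$ for $\alpha<1/2$ and $\asymp d^{-2(1-\alpha)}$ for $\alpha>1/2$; in either case it is $d^{-c}$ with $c>0$. For $\kappa\notin\NN$ we have $\nu=1$ and $m\le n^{1-\delta_0}$, so $\varphi_1\log^{3\beta+1/2}n\le n^{(1-\delta_0)/2}\mathrm{poly}\log n\le K\sqrt n$. For $\kappa\in\NN$ we have $m=\Theta(n)$ and $\nu^8=\mathrm{poly}\log n$. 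The left-hand side is then $\sqrt n\, d^{-c/2}\,\mathrm{poly}\log n=o(\sqrt n)$, which is where the improved rather than the crude bound on $\varphi_1$ is essential. Combining the three conditions, the assumptions of \Cref{theorem:main_theorem_deterministic_equivalents} hold for $d$ large.
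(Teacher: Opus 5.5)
Your proposal is correct and follows the paper's proof: you use the same three checks and the same split into $\kappa\notin\NN$ (where $m\le n^{1-\delta_0}$ and $\nu_{\lambda,m}=1$) and $\kappa\in\NN$ (where the improved bound of \Cref{lemma:varphi_1_improvement} closes Condition~3), you rely on the same lemmas, and you are more explicit than the paper about Condition~2. The obstacle you anticipate in Condition~2 does not arise, because $\alpha_1\sqrt{n/r_\lambda(m)}$ cancels the factor $n\lambda_{m+1}$ exactly and equals $\tilde O\big(\sqrt{\|\Sigma\|_{\mathrm{op}}/(\lambda+\Tr(K_{>m}))}\big)=\tilde O(d^{-(1-\alpha)/2})$, so you need no lower bound on $\lambda_{m+1}$ and no shell-gap or shrink-$\rho$ fallback; two harmless slips are that in Condition~1 a shell of degree $\ell\ge\lfloor\kappa\rfloor+1$ need not lie entirely in the tail (only all but a vanishing fraction of its trace does), and that $\Tr(\Sigma^2)\asymp d^{-1}$ rather than $d^{-(1-\alpha)}$ for $\alpha<1/2$.
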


\begin{proof}
    By \Cref{lemma:regularized_eff_dim_condition}, $\nu_{\lambda, m}(n) = \tilde{O}(1)$ regardless of $\kappa \in \NN$ or not. Therefore,  the first condition gives: 
    \begin{equation}
        \lambda + \Tr (K_{>m}) \nu_{\lambda,m}(n) =  \Theta(1). 
    \end{equation}
    Therefore, for any $n$ large enough the first condition holds. 

    By \Cref{eq:bound_phi_2_n}, the second condition holds for $d$ large enough, as we have: 
    \begin{equation}
        \varphi_{2mn}(m) \sqrt{\dfrac{n}{r_{\lambda}(m)}} = o_{d}(1). 
    \end{equation}
    For the third condition, using again the fact that $\nu_{\lambda, m}(n) =\tilde{O}(1)$: 
    \begin{equation}
        \varphi_1(m) \nu_{\lambda, m}(n)^8 \log^{3 \beta + \frac{1}{2}}(n) = O(\varphi_1(m) \log^{3 \beta + \frac{1}{2}}(n)). 
    \end{equation}
    By \Cref{eq:varphi_1_n}, we conclude: 
    \begin{equation}
        \varphi_1(m) \nu_{\lambda, m}(n)^{8} \log^{3 \beta + \frac{1}{2}}(n) = O \left ( \sqrt{m} \mathrm{poly}\log (n) \right ).
    \end{equation}
    Then, for $\kappa \not \in \NN$, \Cref{lemma:introduction_m} tells us that $m \leq n^{1-\delta_0}$, so we can conclude. For $\kappa \in \NN$, we use the estimate from \Cref{lemma:varphi_1_improvement} to conclude for $d$ large enough. 
\end{proof}

Putting \Cref{lemma:introduction_m}, \Cref{lemma:regularized_eff_dim_condition}, \Cref{lemma:lower_order_weak}, \Cref{lemma:diagonal_concentration_1} and  \Cref{lemma:spectral_conditions_weak}, all the assumptions of \Cref{theorem:main_theorem_deterministic_equivalents} hold, and therefore conclude \Cref{prop:det}, that is, we can apply the deterministic equivalents in the weakly anisotropic setting. 

\subsection{The Strongly Anisotropic Setting}

The strongly anisotropic setting is different than the weakly anisotropic setting only by some subtle technicalities. We sketch the validity of the deterministic equivalents in this setting below. 

First, we can choose $m(n)$ as in \Cref{lemma:introduction_m}, however, we have to restrict $\rho \in (0, (\alpha -1) \kappa)$. We get $m \leq  n^{1-\delta_0}$ for some $\delta_0 >0$ regardless of $\kappa \in \NN$ or not. 

The conclusions for the effective dimensions in \Cref{lemma:regularized_eff_dim_condition} still hold. Indeed, by Corollary 2 in  \citep{wortsman_kernel_2025}, in this case, each shell almost follows a capacity condition: 
\begin{equation}
    \lambda_k = \Theta( \dfrac{1}{r_0(\Sigma)}k^{-\alpha} \mathrm{poly}\log(d)) = \Theta(k^{-\alpha}\mathrm{poly}\log(d)),
\end{equation}
because the normalizing constant $r_0(\Sigma) = \Theta_{d}(1)$ in this setting. Therefore, in this case: 
\begin{equation}
    \Tr(K_{>m}) = \sum_{k \geq m} \lambda_m \leq C \mathrm{poly}\log(m) \int_m^{\infty} x^{-\alpha} dx = C  \mathrm{poly}\log(m)m^{1-\alpha},
\end{equation}
since $\alpha >1$. Since $m$ grows with $n$, we have that $\Tr(K_{>m}) \to 0$ with $n$, and therefore we need the ridge regularization parameter $\lambda >0$ in order to have $\frac{r_m(\lambda)}{n} \not \to 0$ in \Cref{lemma:regularized_eff_dim_condition}.

The lower and higher order conditions, and the conditions in \Cref{lemma:spectral_conditions_weak} still hold in this setting. Therefore, we the deterministic equivalents are still valid as long as we have the ridge regularization $\lambda >c$ for some constant $c$ independent of $n,d$.

%% file: Appendix/Reparametrization_kernel_eigenvalues.tex
\section{Reparametrization of the kernel eigenvalues}\label{appendix:Reparametrization}
By Proposition~\ref{th:KernelSpectrum} and Eq.~\eqref{eq:SpectrumSpec}, we know that in the high-dimensional limit, the kernel eigenvalues can be indexed by a multi-index $\beta\in\mathbb{Z}_{\geq 0}^d$:
\[
\lambda_{\bm\beta} = h_{|\beta|} |\beta|! \, r_\alpha(d)^{-|\beta|} \prod_{j=1}^d j^{-\alpha\beta_j} 
\]
assuming $\sigma_j = r_\alpha(d)^{-1} j^{-\alpha}$. To evaluate the deterministic equivalents, it is convenient to reparameterize these eigenvalues using a new set of variables over which we can sum more naturally. Specifically, we can group the eigenvalues using $m = |\bm\beta|$ that can be seen as a single \textit{shell}. Within each shell of constant $m$, we can map $\bm\beta$ to a set of $m$ indices $i_1, \dots, i_m \in \{1, \dots, d\}$ with $1 \leq i_1 \leq \dots \leq i_m \leq d$ defined such that:
\begin{equation}\label{eq:DefinitionIndices}
    \beta_j = \sum_{k=1}^m \delta_{i_k, j} 
\end{equation}
Equivalently, the index $i_k$ simply tracks the active dimensions composing the multi-index $\bm\beta$. Using this representation, the eigenvalues can be rewritten as:
\[
\lambda_{i_1, \dots, i_m}^{(m)} = h_m m! \, r_\alpha(d)^{-m} (i_1 \cdots i_m)^{-\alpha} 
\]
where $m \in \mathbb{N}$ and the indices are ordered as $1 \leq i_1 \leq \dots \leq i_m \leq d$. Consequently, summations over the entire spectrum can be recast as:
\[
\sum_{\bm\beta} F(\lambda_{\bm\beta}) = \sum_{m=0}^\infty \> \sum_{1\leq i_1 \leq \dots \leq i_m \leq d} F\big(\lambda^{(m)}_{i_1, \dots, i_m}\big)
\]
for a generic function $F$. While conceptually useful, this formulation still presents analytical difficulties due to the constrained ordered summation. To circumvent this issue, we can relax the strict ordering and extend the sum to a free summation over all possible index permutations:
\begin{equation}\label{eq:BetaCorrection}
\sum_{1\leq i_1 \leq \dots \leq i_m \leq d} F\big(\lambda^{(m)}_{i_1, \dots, i_m}\big) = \frac{1}{m!} \sum_{i_1, \dots, i_m = 1}^d F\big(\lambda^{(m)}_{i_1, \dots, i_m}\big) \prod_{j=1}^d \beta_j!
\end{equation}
The combinatorial factor $1/m!$ accounts for the permutations of the $m$ free indices, while the correction term $\prod_{j=1}^d \beta_j!$ (with $\beta_j$ determined by Eq.~\eqref{eq:DefinitionIndices}) accurately compensates for the multinomial overcounting of "diagonal" elements, where two or more indices coincide. However, in the high-dimensional limit ($d \to \infty$), the number of tuples composed of at least one repeated index becomes strictly negligible. The sum is overwhelmingly dominated by bulk configurations where $\beta_j \in \{0, 1\}$, for which the correction factor reduces identically to $1$. Since the contribution of the diagonal elements is asymptotically subleading, we can drop the factorial terms and approximate the trace with a pure free summation:
\[
\sum_{1\leq i_1 \leq \dots \leq i_m \leq d} F\big(\lambda^{(m)}_{i_1, \dots, i_m}\big) \simeq \frac{1}{m!} \sum_{i_1, \dots, i_m = 1}^d F\big(\lambda^{(m)}_{i_1, \dots, i_m}\big) 
\]

A similar approach can be employed to represent the target function $f_\star$ in the kernel eigenbasis. As in Eq.~\eqref{eq:decompo1}, we can re-write $f_\star$ in the basis that diagonalises the kernel operator $e_{\beta}(x)$:
\[
f_\star(\bm x) = \sum_{\bm\beta \in \mathbb{Z}^d_{\geq 0}} \theta_{\beta}e_{\beta}(\bm x)
\]
Grouping these functions by their polynomial shell $m = |\bm\beta|$, we can rewrite the target function as:
\begin{equation}\label{eq:appendixBeta}
    f_\star(x) = \sum_{m=0}^\infty \sum_{|\beta| = m} \theta_{\beta} e_{\beta}(\bm x) = \sum_{m=0}^\infty \sum_{1 \leq i_1 \leq \dots \leq i_m \leq d} \theta_{i_1, \dots, i_m} e_{i_1, \dots, i_m} (\bm x) 
\end{equation}
To convert this strictly ordered sum into a free summation over $\{1, \dots, d\}^m$, the sequence of scalar weights $\theta_{i_1, \dots, i_m}$, originally defined only for ordered index tuples, must be symmetrically extended. By defining the extended weights to be invariant under any permutation of the indices $(i_1, \dots, i_m)$, we can safely drop the ordering constraint. Neglecting the multinomial corrections from the diagonal elements we obtain the free summation form:
\begin{equation}\label{eq:DecompositionTargetFunctionProved}
    f_\star(x) \simeq \sum_{m=0}^\infty \frac{1}{m!} \sum_{i_1, \dots, i_m = 1}^d \theta_{i_1, \dots, i_m} e_{i_1, \dots, i_m}(\bm x) 
\end{equation}

\begin{remark}
    The free summation in Eq.~\eqref{eq:DecompositionTargetFunctionProved} relies on the assumption that the bulk terms ($\beta_j \in \{0,1\}$) overwhelmingly dominate the trace. This holds strictly true for ``delocalized'' or ``dense'' target functions, where the spectral energy is distributed across a vast number of tuples. However, if $f_\star$ is highly concentrated on a few specific diagonal features (e.g., purely dependent on a single coordinate raised to the $m$-th power, implying $\beta_j = m$), the diagonal correction factor $\prod_{j=1}^d \beta_j!$ from Eq.~\eqref{eq:BetaCorrection} can no longer be ignored and we must take care of this correction.
\end{remark}

%% file: Appendix/Weak_Anisotropy.tex
\section{Weakly anisotropic regime}
In this section, our goal is to prove mathematically the formulaes shown in Sec.\ref{s:WA}
\subsection{Kernel State Equation}
\label{appendix:KernelStateEquation}
Consider Eq.~\eqref{eq:SelfConsistencyEquation} that reads:
\[
n - \frac{\lambda}{\nu} = \Tr(\Lambda(\Lambda + \nu I)^{-1})
\]
Under the polynomial scaling $n = \psi d^{\kappa}$ and using the results of Appendix \ref{appendix:Reparametrization}, we can unroll the $\Tr$ operator and rewrite the self-consistency equation for $\nu$ as
\begin{equation}\label{eq:DecompPsiPsimReg}
    \begin{aligned}
        \psi &= \frac{\lambda}{\nu}d^{-\kappa} + d^{-\kappa} \sum_{\bm\beta} \frac{\lambda_{\beta}}{\lambda_{\beta}+\nu_\star} = 
        \\ &=  \frac{\lambda}{\nu}d^{-\kappa}  +\sum_{m=0}^\infty \frac{d^{-\kappa}}{m!} \sum_{i_1, \dots, i_m = 1}^d \frac{1}{1+\nu(i_1 \dots i_m)^\alpha (h_m m!)^{-1} r_\alpha(d)^{m}} = \\
        &= \frac{\lambda}{\nu}d^{-\kappa}  +\sum_{m=0}^\infty \psi^{(m)}
    \end{aligned}
\end{equation}
where we define:
\[
\psi^{(m)} = \frac{1}{m!} \sum_{i_1, \dots, i_m = 1}^d \frac{d^{-\kappa}}{1+\nu(i_1 \dots i_m)^\alpha (h_m m!)^{-1} r_\alpha(d)^{m}}
\]
In the regime where $\alpha < 1$, then $r_\alpha(d) = (1-\alpha)^{-1}d^{1-\alpha}$ at leading order (Eq.~\eqref{eq:EffectiveDimension}). Let us denote by $\delta_m = h_m m!$, so that:
\begin{equation}\label{eq:RefPsimNu}
    \psi^{(m)} =  \frac{1}{m!} \sum_{i_1, \dots, i_m = 1}^d \frac{d^{-\kappa}}{1+\nu (i_1 \dots i_m)^\alpha d^{m(1-\alpha)} \delta_m^{-1} (1-\alpha)^{-m}}
\end{equation}
To solve the self-consistency equation $\psi = \frac{\lambda}{\nu}d^{-\kappa} + \sum_m \psi^{(m)}$, we need to formulate an ansatz on the form of $\nu$. In particolar, we will assume:
\[
\nu = \xi d^{-\kappa}
\]
where $\xi$ is of order $1$ with respect to $d$, $\xi \asymp 1$, and should only depend on $\alpha, \psi, \lambda, \{\delta_m\}$. Under this ansatz, the generic term $\psi^{(m)}$ becomes:
\begin{equation}\label{eq:RefPsim}
    \psi^{(m)} =  \frac{1}{m!} \sum_{i_1, \dots, i_m = 1}^d \frac{d^{-\kappa}}{1+ (i_1 \dots i_m)^\alpha d^{m(1-\alpha)-\kappa} \xi \delta_m^{-1} (1-\alpha)^{-m}}
\end{equation}
The behavior of $\psi^{(m)}$ depends on $m$. In particular, we can distinguish different regimes depending on the relative value of $m$ with respect to $\kappa\in\mathbb{N}$:

\paragraph*{Case a: $\bm{m < \kappa}$}\label{appendix:m<kappa} 
Since $i_k \geq 1$, then $(i_1 \cdots i_m)^\alpha \geq 1$. This means that we can bound Eq.~\eqref{eq:RefPsim}:
\begin{equation}
	\begin{aligned}
		\psi^{(m)} &\leq \frac{1}{m!} \sum_{i_1, \dots, i_m = 1}^d \frac{d^{-\kappa}}{1+ d^{m(1-\alpha)-\kappa} \xi \delta_m^{-1} (1-\alpha)^{-m}} = \\
		&=\frac{1}{m!} \frac{d^{m-\kappa}}{1+ d^{m(1-\alpha)-\kappa} \xi \delta_m^{-1} (1-\alpha)^{-m}}
	\end{aligned}
\end{equation}
Since $m < \kappa$, then $m(1-\alpha)-\kappa < 0$. This means that when $d\to\infty$, this upper bound simplifies to:
\[
\psi^{(m)} \leq \frac{1}{m!} \frac{d^{m-\kappa}}{1+ d^{m(1-\alpha)-\kappa} \xi \delta_m^{-1} (1-\alpha)^{-m}} \sim \frac{1}{m!}d^{m-\kappa} \Longrightarrow \psi^{(m)} = O(d^{m-\kappa})
\]
Similarly, we have that $(i_1 \dots i_m)^\alpha \leq d^{m\alpha}$ as $i_k \leq d$ so that:
\begin{equation*}
	\begin{aligned}
		\psi^{(m)} &\geq \frac{1}{m!} \sum_{i_1, \dots, i_m = 1}^d \frac{d^{-\kappa}}{1+ d^{m-\kappa} \xi \delta_m^{-1} (1-\alpha)^{-m}} = \\
		&=\frac{1}{m!} \frac{d^{m-\kappa}}{1+ d^{m-\kappa} \xi \delta_m^{-1} (1-\alpha)^{-m}} \, .
	\end{aligned}
\end{equation*}
Again, since $m < \kappa$, the term $d^{m-\kappa} \to 0$ as $d\to\infty$. Thus, the denominator converges to $1$:
\[
\psi^{(m)} \geq \frac{1}{m!} d^{m-\kappa} \implies \psi^{(m)} = \Omega(d^{m-\kappa}) \, .
\]
As a consequence,  $\psi^{(m)} = \Theta(d^{m-\kappa})$ and vanishes when $d\to\infty$ for $m < \kappa$. When summing all these terms up, we will obtain:
\[
\sum_{m < \kappa} \psi^{(m)} \asymp d^{(\kappa-1)-\kappa} = d^{-1} \to 0 \, .
\]
Hence the contribution on the full summation of the under-resonating $m < \kappa$ terms, as long as it amounts to a finite value, is negligible.
\paragraph*{The integral regime and the Irwin-Hall trick}
For values of $m \geq \kappa$, the direct bounding argument employed above is no longer sufficiently tight. However, as $d \to \infty$, the discrete indices $i_q$ can be treated as continuous variables $x_q \in [1, d]$. Since the summand varies slowly over the integer grid, we can approximate the discrete summation with an $m$-dimensional Riemann integral:
\begin{equation}
	\psi^{(m)} \simeq \frac{1}{m!} \int_1^d \cdots \int_1^d \mathrm{d}x_1 \cdots \mathrm{d}x_m \, \frac{d^{-\kappa}}{1+ (x_1 \cdots x_m)^\alpha d^{m(1-\alpha)} \nu C_m} \, .
\end{equation}
Upon introducing the change of variables $y_i = x_i \, C_m^{\frac{1}{m\alpha}}d^{\frac{1-\alpha}{\alpha}} \nu^{\frac{1}{m\alpha}}$ and recalling the ansatz $\nu = \xi d^{-\kappa}$, we obtain:
\begin{equation}\label{eq:PsimIntegral}
	\psi^{(m)} = \frac{1}{m!} C_m^{-1/\alpha}\xi^{-1/\alpha} d^{(m - \kappa) (1-1/\alpha)} \int_{y_{\min}}^{y_{\max}}\cdots\int_{y_{\min}}^{y_{\max}} \frac{\mathrm{d}y_1 \cdots \mathrm{d}y_m}{1+ (y_1 \cdots y_m)^\alpha} \, ,
\end{equation}
where the integration boundaries explicitly read:  
\begin{subequations}\label{eq:y_limits}
	\begin{align}
		y_{\min} &= C_m^{\frac{1}{m\alpha}} d^{\frac{1-\alpha}{\alpha}} \nu^{\frac{1}{m\alpha}} = C_m^{\frac{1}{m\alpha}} \xi^{\frac{1}{m\alpha}} d^{\frac{1}{\alpha}\left(1-\frac{\kappa}{m} - \alpha\right)} \, , \label{eq:min} \\
		y_{\max} &= C_m^{\frac{1}{m\alpha}} d^{\frac{1-\alpha}{\alpha}} \nu^{\frac{1}{m\alpha}} d = C_m^{\frac{1}{m\alpha}} \xi^{\frac{1}{m\alpha}} d^{\frac{1}{\alpha}\left(1-\frac{\kappa}{m}\right)} \, . \label{eq:max}
	\end{align}
\end{subequations}
The asymptotic behavior of $y_{\min}$ and $y_{\max}$ as $d\to\infty$ is illustrated in Fig.~\ref{fig:MinMaxWA}.

\begin{figure}[htbp]
	\centering
	\begin{tikzpicture}[
		scale=1.2,
		every node/.style={font=\small},
		dot/.style={circle, fill=black, inner sep=1.5pt}
		]
		\node[left, font=\bfseries] at (0, 1.5) {$y_{\max}$};
		\draw[thick, ->] (0, 1.5) -- (8, 1.5) node[right] {$m$};
		\draw[ultra thick, blue!80] (0, 1.5) -- (3, 1.5) node[midway, above=1pt] {$\to 0$};
		\draw[ultra thick, red!80] (3, 1.5) -- (7.8, 1.5) node[midway, above=1pt] {$\to +\infty$};
		\node[dot, label=below:{$\kappa$}] at (3, 1.5) {};
		
		\node[left, font=\bfseries] at (0, 0) {$y_{\min}$};
		\draw[thick, ->] (0, 0) -- (8, 0) node[right] {$m$};
		\draw[ultra thick, blue!80] (0, 0) -- (5.5, 0) node[midway, above=1pt] {$\to 0$};
		\draw[ultra thick, red!80] (5.5, 0) -- (7.8, 0) node[midway, above=1pt] {$\to +\infty$};
		\node[dot, label=below:{$\frac{\kappa}{1-\alpha}$}] at (5.5, 0) {};
	\end{tikzpicture}
	\caption{\textit{Asymptotic behavior of the integration boundaries $y_{\min}$ and $y_{\max}$ derived in Eqs.~\eqref{eq:y_limits}. Notably, when $m = \kappa$ or $m = \frac{\kappa}{1-\alpha} \in \mathbb{N}$, the respective boundary converges to a strictly positive, macroscopic constant $\Theta_d(1)$.}}
	\label{fig:MinMaxWA}
\end{figure}
Having established that the sub-critical terms vanish, we restrict our focus to the remaining shells. Before evaluating these specific branches, we establish a key mathematical identity that will repeatedly emerge:
\begin{lemma}[\textit{Irwin-Hall Trick}]\label{lemma:IrwinHall}
	Let $y_{\min}, y_{\max} \in \mathbb{R}^{+}$ such that $y_{\max} = y_{\min} d$. Consider the $m$-dimensional integral defined as:
	\begin{equation}\label{eq:IHBasis}
			\mathcal{I} = \int_{y_{\min}}^{y_{\max}}\cdots\int_{y_{\min}}^{y_{\max}} f(y_1 \cdots y_m) \, \mathrm{d}y_1 \cdots \mathrm{d}y_m \, .
	\end{equation}
	for some regular enough $f : \mathbb{R}\to \RR$. Then, denoting $l = \log(y_{\min})$ and $L = \log(y_{\max})$, this integral can be exactly rewritten through a 1-dimensional integral as:
	\[
	\mathcal{I} = \log(d)^{m-1} \int_{ml}^{mL} \mathrm{d}S \> e^{S} f(e^{S}) f_{IH}\left(\frac{S-ml}{\log(d)}; m\right) \, ,
	\]
	or, equivalently via variable substitution:
	\[
	\mathcal{I} = \log(d)^{m} \int_{0}^{m} \mathrm{d}x \> e^{x\log(d) + ml} f(e^{x\log(d)+ml}) f_{IH}(x; m) \, ,
	\]
	where $f_{IH}$ is the Irwin-Hall probability density function:
	\[
	f_{IH}(x; m) = \frac{1}{(m-1)!} \sum_{k=0}^{\lfloor x \rfloor} (-1)^k \binom{m}{k} (x-k)^{m-1} \, .
	\]
\end{lemma}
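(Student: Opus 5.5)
We change variables to logarithmic coordinates $u_i=\log y_i$, so that $\mathrm{d}y_i=e^{u_i}\mathrm{d}u_i$. Each $u_i$ then ranges over $[l,L]$, an interval of length $L-l=\log(y_{\max}/y_{\min})=\log d$ by the hypothesis $y_{\max}=y_{\min}d$. The integral becomes
\[
\mathcal{I}=\int_{[l,L]^m} e^{u_1+\cdots+u_m} f\big(e^{u_1+\cdots+u_m}\big)\,\mathrm{d}u_1\cdots\mathrm{d}u_m ,
\]
and the integrand depends on $u$ only through the sum $S=u_1+\cdots+u_m$. This is the key observation: the $m$-dimensional integral reduces to a one-dimensional integral against the pushforward of Lebesgue measure on the cube $[l,L]^m$ under the map $u\mapsto\sum_i u_i$.

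Next, we normalise by setting $u_i=l+(\log d)\,w_i$ with $w_i\in[0,1]$. The Jacobian is $(\log d)^m$, and $S=ml+(\log d)\,x$ with $x=\sum_i w_i\in[0,m]$. The density of $x$ under uniform $w\in[0,1]^m$ is, by definition, the Irwin--Hall density $f_{IH}(x;m)$. To keep the argument self-contained, we recall its explicit formula: it follows from the $m$-fold convolution of $\mathbf{1}_{[0,1]}$, or from the Laplace transform $\big((1-e^{-s})/s\big)^m$, expanding binomially and inverting term by term to obtain $\frac{1}{(m-1)!}\sum_{k}(-1)^k\binom{m}{k}(x-k)_+^{m-1}$. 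Applying the coarea/pushforward identity
\[
\int_{[0,1]^m} g\Big(\sum_i w_i\Big)\,\mathrm{d}w=\int_0^m g(x)\,f_{IH}(x;m)\,\mathrm{d}x
\]
with $g(x)=e^{x\log d+ml}f(e^{x\log d+ml})$ yields the second stated form,
\[
\mathcal{I}=(\log d)^m\int_0^m e^{x\log d+ml}\,f\big(e^{x\log d+ml}\big)\,f_{IH}(x;m)\,\mathrm{d}x .
\]

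The first stated form follows from the substitution $S=x\log d+ml$, with $\mathrm{d}S=\log d\,\mathrm{d}x$. This removes one power of $\log d$ and maps $[0,m]$ onto $[ml,mL]$, using $mL=ml+m\log d$. For the identity to make sense, "regular enough" only needs to mean that $t\mapsto t f(t)$ is integrable on $[y_{\min}^m,y_{\max}^m]$; Fubini/Tonelli then justifies every step, for example when $f$ is continuous and nonnegative, as in the application $f(t)=1/(1+t^\alpha)$. The only step requiring real care is the pushforward identity together with the closed form of $f_{IH}$. Since this is a classical fact, we would either cite it or verify it by induction on $m$ via convolution.
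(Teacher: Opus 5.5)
Your proposal is correct and follows essentially the same route as the paper: the logarithmic substitution $u_i=\log y_i$, the observation that the integrand depends only on $S=\sum_i u_i$, and the identification of the pushforward of the uniform measure on the cube (of side $\log d$) with the rescaled Irwin--Hall density, the two forms being related by $x=(S-ml)/\log d$. The only differences are cosmetic: you rescale to $[0,1]^m$ and derive the $x$-form first, whereas the paper phrases the same step as an expectation over uniform $u_i\in[l,L]$ and obtains the $S$-form first. You also spell out the integrability condition and the closed form of $f_{IH}$, which the paper simply takes as known.
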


\begin{proof}
	Let us apply the change of variables $y_i = e^{u_i}$ to Eq.~\eqref{eq:IHBasis}:
	\begin{equation}\label{eq:IH1}
		\mathcal{I} = \int_l^L \cdots \int_l^L e^{u_1 + \dots + u_m} f(e^{u_1 + \dots + u_m}) \, \mathrm{d}u_1 \cdots \mathrm{d}u_m \, ,
	\end{equation}
	where $l = \log(y_{\min})$ and $L = \log(y_{\max})$. The core insight is to treat $u_1, \dots, u_m$ as independent random variables uniformly distributed in the interval $[l, L]$, allowing us to interpret Eq.~\eqref{eq:IH1} as an expectation value:
	\[
	\mathcal{I} = (L-l)^m \, \mathbb{E}_{u_i \sim \mathcal{U}[l,L]} \big[ e^{u_1 + \dots + u_m} f(e^{u_1 + \dots + u_m}) \big] \, .
	\]
	By defining the sum variable $S = \sum_i u_i$, which represents the sum of $m$ independent and uniformly distributed random variables, and noticing that $L-l = \log(y_{\max}/y_{\min}) = \log(d)$, we can rewrite this expectation as:
	\[
	\mathcal{I} = \log(d)^m \, \mathbb{E}_{S} \big[ e^{S} f(e^{S}) \big] \, .
	\]
	The probability distribution for the sum of $m$ independent standard uniform variables $z_i \sim \mathcal{U}[0,1]$ is a well-known result and corresponds to the Irwin-Hall distribution. From this base distribution, it is straightforward to deduce the probability density function for our shifted and scaled sum $S$:
	\[
	p_S(S) = \log(d)^{-1} f_{IH}\left(\frac{S-ml}{\log(d)}; m\right) \, ,
	\]
	where the standard Irwin-Hall density is defined as:
	\[
	f_{IH}(x; m) = \frac{1}{(m-1)!} \sum_{k=0}^{\lfloor x \rfloor} (-1)^k \binom{m}{k} (x-k)^{m-1} \, .
	\]
	Integrating over the support of $S$ yields the first form of the integral:
	\[
	\mathcal{I} = \log(d)^{m-1} \int_{ml}^{mL} dS \> e^{S} f(e^{S}) f_{IH}\left(\frac{S-ml}{\log(d)}; m\right) \, .
	\]
	Alternatively, introducing the substitution $x = \frac{S-ml}{\log(d)}$, we directly obtain the equivalent formulation:
	\[
	\mathcal{I} = \log(d)^{m} \int_{0}^{m} dx \> e^{x\log(d) + ml} f(e^{x\log(d)+ml}) f_{IH}(x; m) \, .
	\]
\end{proof}Equipped with Lemma~\ref{lemma:IrwinHall}, we can now systematically analyze the specific branches for $m \geq \kappa$ by turning $m$-dimensional integrals into simpler one-dimensional integrals. 
\paragraph*{Case b: $\bm{m = \kappa}$} 
When $m = \kappa$, we have from Eqs.~\eqref{eq:y_limits} that $y_{\min} \to 0$ and $y_{\max} = (C_\kappa \xi)^{1/(\kappa\alpha)} = \Theta_d(1)$. From Eq.~\eqref{eq:PsimIntegral}, the resonant term reads:
\begin{equation*}
	\psi^{(\kappa)} = \frac{1}{\kappa!} (C_\kappa \xi)^{-1/\alpha} \int_{y_{\min}}^{y_{\max}}\cdots \int_{y_{\min}}^{y_{\max}} \frac{\mathrm{d}y_1\cdots \mathrm{d}y_m}{1+(y_1\cdots y_m)^\alpha} = \frac{1}{\kappa!} (C_\kappa \xi)^{-1/\alpha} \mathcal{I} \, .
\end{equation*}
Applying Lemma~\ref{lemma:IrwinHall}, we can exactly recast the integral $\mathcal{I}$ as:
\begin{equation}\label{eq:gancio20}
	\mathcal{I} = \log(d)^\kappa \int_0^\kappa dx \, \frac{e^{x\log(d) + \kappa \log(y_{\min})}}{1+e^{\alpha x\log(d) + \alpha \kappa \log(y_{\min})}} f_{IH}(x; \kappa) \, .
\end{equation}
Exploiting the relation $y_{\max} = y_{\min} d$, which implies $\log(y_{\min}) = \log(y_{\max}) - \log(d)$, we obtain:
\begin{equation*}
	\begin{aligned}
		\mathcal{I} &= \log(d)^\kappa y_{\max}^{\kappa} \int_0^\kappa \mathrm{d}x \, \frac{e^{\log(d)(x-\kappa)}}{1+ y_{\max}^{\alpha\kappa}e^{\log(d) \alpha (x-\kappa)}} f_{IH}(x; \kappa) \\
		&= \log(d)^\kappa (C_\kappa \xi)^{1/\alpha} \int_0^\kappa \mathrm{d}x \, \frac{e^{\log(d)(x-\kappa)}}{1+ C_\kappa \xi e^{\log(d) \alpha (x-\kappa)}} f_{IH}(x; \kappa) \, .
	\end{aligned}
\end{equation*}
Finally, introducing the change of variables $t = \log(d) (\kappa - x)$ yields:
\begin{equation}\label{eq:gancio21}
	\mathcal{I} = \log(d)^{\kappa-1} (C_\kappa \xi)^{1/\alpha} \int_0^{\kappa \log(d)} \mathrm{d}t \, \frac{e^{-t}}{1+ C_\kappa \xi e^{-\alpha t}} f_{IH}\left(\kappa - \frac{t}{\log(d)}; \kappa\right) \, .
\end{equation}
We now compute the asymptotic limit of this integral as $d \to \infty$ using the Dominated Convergence Theorem (DCT). To this end, we recast $\mathcal{I}$ as:
\begin{equation*}
	\mathcal{I} = (C_\kappa \xi)^{1/\alpha} \int_{\mathbb{R}^+} g_d(t) \,\mathrm{d}t \, ,
\end{equation*}
where:
\begin{equation*}
	g_d(t) = \mathbf{1}_{[0,\kappa\log(d)]}(t) \, \log(d)^{\kappa-1} \frac{e^{-t}}{1+ C_\kappa \xi e^{-\alpha t}} f_{IH}\left(\kappa - \frac{t}{\log(d)}; \kappa\right) \, ,
\end{equation*}
and $\mathbf{1}_{[0,\kappa\log(d)]}(t)$ denotes the indicator function of the domain $[0,\kappa\log(d)]$. 

To evaluate the pointwise limit of $g_d(t)$, we exploit the asymptotic expansion of the Irwin-Hall distribution near its rightmost boundary:
\begin{equation*}
	f_{IH}(\kappa - \epsilon; \kappa) = \frac{\epsilon^{\kappa-1}}{(\kappa-1)!} \quad \text{as } \epsilon \to 0 \, .
\end{equation*}
This can be easily seen from its definition and from the fact that $f_{IH}(\epsilon;\kappa) = f_{IH}(\kappa-\epsilon;\kappa)$. Setting $\epsilon = \frac{t}{\log(d)}$, the diverging logarithmic factor cancels out as $d \to \infty$, yielding:
\begin{equation*}
	\lim_{d\to\infty} g_d(t) = \lim_{d\to\infty} \log(d)^{\kappa-1} \frac{e^{-t}}{1+ C_\kappa \xi e^{-\alpha t}} \left( \frac{t}{\log(d)} \right)^{\kappa-1} \frac{1}{(\kappa-1)!} = \frac{1}{(\kappa-1)!} \frac{e^{-t} t^{\kappa-1}}{1+ C_\kappa \xi e^{-\alpha t}} \, .
\end{equation*}
To apply the DCT, we must find a dominating function $G(t) \ge |g_d(t)|$ independent of $d$. Leveraging the global inequality for the Irwin-Hall distribution $f_{IH}(\kappa-x; \kappa) \le x^{\kappa-1}$ (valid for all $x \ge 0$), we bound $g_d(t)$ as:
\begin{equation*}
	|g_d(t)| \le \log(d)^{\kappa-1} \frac{e^{-t}}{1+ C_\kappa \xi e^{-\alpha t}} \left( \frac{t}{\log(d)} \right)^{\kappa-1} = \frac{e^{-t} t^{\kappa-1}}{1+ C_\kappa \xi e^{-\alpha t}} \equiv G(t) \, ,
\end{equation*}
which is strictly integrable over $\mathbb{R}^+$ since $\alpha < 1$. Since the hypotheses of the DCT are satisfied, we can exchange the limit and the integral:
\begin{equation*}
	\lim_{d\to\infty} \mathcal{I} = (C_\kappa \xi)^{1/\alpha} \int_{0}^{\infty} \left(\lim_{d\to\infty} g_d(t)\right) dt = (C_\kappa \xi)^{1/\alpha} \frac{1}{(\kappa-1)!} \int_{0}^{\infty} \frac{e^{-t} t^{\kappa-1}}{1+ C_\kappa \xi e^{-\alpha t}} \, dt \, .
\end{equation*}
Substituting this back into the expression for $\psi^{(\kappa)}$, we obtain:
\begin{equation*}
	\psi^{(\kappa)} = \frac{1}{\kappa!} (C_\kappa \xi)^{-1/\alpha} \mathcal{I} = \frac{1}{\kappa! (\kappa-1)!} \int_{0}^{\infty} \frac{e^{-t} t^{\kappa-1}}{1+ C_\kappa \xi e^{-\alpha t}} \, dt \, .
\end{equation*}
Finally, expanding the constant $C_\kappa = \delta_\kappa^{-1}(1-\alpha)^{-\kappa} = \frac{(1-\alpha)^{-\kappa}}{h_\kappa \kappa!}$, we arrive at the final form:
\begin{equation*}
	\psi^{(\kappa)} = \frac{1}{\kappa! (\kappa-1)!} \int_{0}^{\infty} dt \, e^{-t} t^{\kappa-1} \frac{1}{1 + \frac{(1-\alpha)^{-\kappa}}{h_\kappa \kappa!} \xi e^{-\alpha t}} \, .
\end{equation*}
This quantity is rigorously well-defined, and the integral securely converges to a finite macroscopic value. This aligns with the physical constraint dictated by the state equation, which requires $\psi^{(m)} \le \psi = \Theta_d(1)$.
\paragraph*{Case c: $\bm{m > \frac{\kappa}{1-\alpha}}$}
In this regime, both integration boundaries diverge to infinity ($y_{\min} \to \infty$ and $y_{\max} \to \infty$). However, since $y_{\max} = y_{\min} d$, the upper limit scales significantly faster than the lower one. As $d \to \infty$, the unity term in the denominator becomes negligible compared to the polynomial growth of $(y_1 \cdots y_m)^\alpha$. We can therefore asymptotically approximate the integral in Eq.~\eqref{eq:PsimIntegral} as follows:
\begin{equation}\label{eq:UV}
	\begin{aligned}
		\int_{y_{\min}}^{y_{\max}}\cdots\int_{y_{\min}}^{y_{\max}} \frac{\mathrm{d}y_1 \cdots \mathrm{d}y_m}{1+ (y_1 \cdots y_m)^\alpha} &\simeq \int_{y_{\min}}^{y_{\max}}\cdots\int_{y_{\min}}^{y_{\max}} (y_1 \cdots y_m)^{-\alpha} \, \mathrm{d}y_1 \cdots \mathrm{d}y_m \\
		&= (1-\alpha)^{-m} \big(y_{\max}^{1-\alpha} - y_{\min}^{1-\alpha}\big)^m \\
		&\simeq (1-\alpha)^{-m} y_{\max}^{m(1-\alpha)} \\
		&= (1-\alpha)^{-m} C_m^{\frac{1-\alpha}{\alpha}} \xi^{\frac{1-\alpha}{\alpha}} d^{\frac{1-\alpha}{\alpha}(m-\kappa)} \, .
	\end{aligned}
\end{equation}
Substituting this approximation back into Eq.~\eqref{eq:PsimIntegral}, the fractional scalings in $d$ perfectly cancel out, yielding a dimension-free expression:
\[
\psi^{(m)} = \frac{1}{m!} C_m^{-1}\xi^{-1} (1-\alpha)^{-m} \, .
\]
Recalling the definition of the constant $C_m = \delta_m^{-1} (1-\alpha)^{-m} = (h_m m!)^{-1} (1-\alpha)^{-m}$, we finally isolate the self-regularizing contribution of these higher-degree modes:
\[
\psi^{(m)} = \frac{h_m}{\xi} \quad \text{ for } \quad m > \frac{\kappa}{1-\alpha} \, .
\]
\paragraph*{Case d: $\bm{\kappa < m \leq \frac{\kappa}{1-\alpha}}$}
In this intermediate band, the integration limits exhibit a mixed asymptotic behavior: the upper limit diverges ($y_{\max} \to \infty$), while the lower limit $y_{\min}$ either vanishes to $0$ or converges to a strictly positive constant (when $\frac{\kappa}{1-\alpha} \in \mathbb{N}$). Nevertheless, the integral is heavily ultraviolet-dominated (see Eq.~\eqref{eq:UV}, since $y_{\min}$ is raised to the power of $1-\alpha > 0$). The leading-order asymptotic contribution is overwhelmingly driven by the diverging upper boundary, so that the precise behavior near the origin is asymptotically negligible. Therefore, the evaluation rigorously follows the exact same asymptotic expansion as the super-critical Case c, yielding:
\[
\psi^{(m)} = \frac{h_m}{\xi} \, .
\]

\paragraph*{Synthesis: The Macroscopic Kernel State Equation}
We can finally assemble the self-consistency equation. By substituting the results from the various regimes analyzed above into Eq.~\eqref{eq:DecompPsiPsimReg}, we obtain
\begin{equation}
    \begin{aligned}
        \psi &= \frac{\lambda}{\xi} + 0 + \psi^{(\kappa)} + \sum_{m > \kappa} \frac{h_m}{\xi} \\
             &= \frac{1}{\xi}\Big(\lambda + \sum_{m > \kappa} h_m\Big) + \psi^{(\kappa)} \, .
    \end{aligned}
\end{equation}
We identify the term in parentheses as the \textit{effective ridge regularization} generated by the higher-order tails, $\lambda_{\text{eff}} = \lambda + \sum_{m > \kappa} h_m$. Substituting the explicit integral evaluation for $\psi^{(\kappa)}$, we obtain the kernel state equation that implicitly determines the macroscopic parameter $\xi$:
\begin{equation}\label{eq:AppendixWAKE}
    \psi = \frac{\lambda_{\text{eff}}}{\xi} + \frac{1}{\kappa! (\kappa-1)!} \int_{0}^{\infty} dt \> e^{-t} t^{\kappa-1} \frac{1}{1 + \frac{(1-\alpha)^{-\kappa}}{h_\kappa \kappa!} \xi e^{-\alpha t}} \, .
\end{equation} \qed
\subsection{Variance}\label{appendix:Variance}
The variance $\mathsf{V}$ is given by Eq.~\eqref{eq:BiasVariance} and can be rewritten as $\mathsf{V} = \sigma_\varepsilon^2 \frac{\tau}{1-\tau}$ where:
\[
\tau = \frac{1}{n}\Tr(\Lambda^2(\Lambda+\nu_\star I)^{-2})
\]
with $\nu_\star = \xi_\star d^{-\kappa}$ and $\xi_\star$ is the solution of Eq.~\eqref{eq:AppendixWAKE}.
Under the polynomial scaling $n = \psi d^{\kappa}$ and using the findings in Appendix \ref{appendix:Reparametrization}, we can write $\tau$ as a sum over the shells:
\begin{equation*}
    \tau = \sum_{m=0}^\infty \tau^{(m)}
\end{equation*}
where, using the same notation as in Appendix~\ref{appendix:KernelStateEquation}:
\begin{equation}\label{eq:tauMDefinition}
    \tau^{(m)} = \frac{d^{-\kappa}}{\psi m!} \sum_{i_1, \dots, i_m = 1}^d \frac{1}{(1+C_m d^{m(1-\alpha)}(i_1 \dots i_m)^\alpha \nu)^2}
\end{equation}
\paragraph*{Case a: $\bm{m < \kappa}$}\label{appendix:sssm<kappa} 
For the lower-degree shells, we can employ the exact same bounding argument used in Appendix~\ref{appendix:m<kappa} to conclude that $\tau^{(m)} = \Theta(d^{m-\kappa})$. Consequently, since $m < \kappa$, these terms strictly vanish in the thermodynamic limit: $\tau^{(m)} \to 0$.

\paragraph*{The integral regime} 
As established previously, for $m \geq \kappa$ we can approximate the discrete summation in Eq.~\eqref{eq:tauMDefinition} with a Riemann integral. Introducing the coordinate transformation $y_i = x_i \, C_m^{\frac{1}{m\alpha}}d^{\frac{1-\alpha}{\alpha}} \nu^{\frac{1}{m\alpha}}$ and substituting the ansatz $\nu = \xi d^{-\kappa}$, we obtain:
\begin{equation}\label{eq:taumIntegral}
	\begin{aligned}
		\tau^{(m)} &\simeq \frac{d^{-\kappa}}{\psi m!} \int_1^d \cdots \int_1^d \mathrm{d}x_1 \cdots \mathrm{d}x_m \frac{1}{\big(1+C_m d^{m(1-\alpha)}(x_1 \cdots x_m)^\alpha \nu_\star\big)^2} \\ 
		&=\frac{1}{\psi m!}C_m^{-1/\alpha}\xi^{-1/\alpha} d^{(m-\kappa)(1-1/\alpha)} \int_{y_{\min}}^{y_{\max}} \cdots \int_{y_{\min}}^{y_{\max}} \frac{\mathrm{d}y_1 \cdots \mathrm{d}y_m}{\big(1+(y_1 \cdots y_m)^\alpha\big)^2} \, .
	\end{aligned}
\end{equation}
The integration boundaries $y_{\min}$ and $y_{\max}$ are identical to those defined in Eqs.~\eqref{eq:y_limits}. According to the asymptotic behavior sketched in Fig.~\ref{fig:MinMaxWA}, we will again distinguish the following regimes:
\paragraph*{Case b: $\bm{m > \frac{\kappa}{1-\alpha}}$} 
In this regime, both boundaries diverge ($y_{\min}, y_{\max} \to \infty$). Thus, the unity term in the denominator becomes negligible, allowing us to asymptotically approximate the integral:
\begin{equation}\label{eq:IntegralBif}
	\begin{aligned}
		\int_{y_{\min}}^{y_{\max}}\cdots\int_{y_{\min}}^{y_{\max}} \frac{\mathrm{d}y_1 \cdots \mathrm{d}y_m}{\big(1+ (y_1 \cdots y_m)^\alpha\big)^2} &\simeq \int_{y_{\min}}^{y_{\max}}\cdots\int_{y_{\min}}^{y_{\max}} (y_1 \cdots y_m)^{-2\alpha} \, \mathrm{d}y_1 \cdots \mathrm{d}y_m \\
		&= (1-2\alpha)^{-m} \big(y_{\max}^{1-2\alpha}-y_{\min}^{1-2\alpha}\big)^m \, .
	\end{aligned}
\end{equation}
The dominant contribution to this expression depends crucially on the sign of $1-2\alpha$:
\begin{itemize}
	\item When $\alpha < \frac{1}{2}$, the integral is asymptotically dominated by the upper limit $y_{\max}$. Substituting this back into Eq.~\eqref{eq:taumIntegral} yields:
	\[
	\tau^{(m)} \asymp d^{(m-\kappa)(1-1/\alpha)}y_{\max}^{m(1-2\alpha)} \asymp d^{(m-\kappa)(1-1/\alpha)} d^{\frac{1-2\alpha}{\alpha}(m-\kappa)} = d^{\kappa-m} \to 0 \, .
	\]
	Even when evaluating the infinite sum $\sum d^{\kappa-m}$, the series is bounded by its largest geometric term, behaving as $\Theta(d^{\kappa-\lceil\frac{\kappa}{1-\alpha}\rceil})$, which is asymptotically negligible.
	
	\item When $\alpha = \frac{1}{2}$, the integral in Eq.~\eqref{eq:IntegralBif} scales logarithmically as $\log(y_{\max}/y_{\min})^m \asymp \log(d)^m$. Overall, we obtain $\tau^{(m)} \asymp d^{\kappa - m} \log(d)^m \to 0$, as the polynomial decay suppresses the poly-logarithmic factor.
	
	\item Finally, when $\alpha > \frac{1}{2}$, the dominating term is the lower boundary $y_{\min}$, causing the integral to scale as $y_{\min}^{m(1-2\alpha)} \asymp d^{\frac{1-2\alpha}{\alpha}(m-\kappa-m\alpha)}$. Putting it all together:
	\[
	\tau^{(m)} \asymp d^{\frac{1-2\alpha}{\alpha}(m-\kappa-m\alpha)} d^{(m-\kappa)(1-1/\alpha)} = d^{\kappa - 2m(1-\alpha)} \, .
	\]
	By definition of this regime, $m > \frac{\kappa}{1-\alpha}$, meaning the exponent $\kappa - 2m(1-\alpha)$ is strictly negative, leading to $\tau^{(m)}\to 0$.
\end{itemize}
We can then conclude that the entire super-critical tail is negligible: $\sum_{m > \frac{\kappa}{1-\alpha}} \tau^{(m)} \to 0$.

\paragraph*{Case c: $\bm{\kappa < m \leq \frac{\kappa}{1-\alpha}}$} 
In this intermediate band, $y_{\min} \to 0$ while $y_{\max} \to \infty$. 

For $\alpha < \frac{1}{2}$, as shown in the previous case, the integral is strictly ultraviolet-dominated by the diverging upper limit $y_{\max}$. Thus, its asymptotic behavior is independent of the origin, directly yielding $\tau^{(m)} \asymp d^{\kappa - m} \to 0$. 

Conversely, for $\alpha \geq \frac{1}{2}$, it becomes crucial to account for the integrand's behavior near the origin. We rely once again on the Irwin-Hall representation (Lemma~\ref{lemma:IrwinHall}). Rewriting the integral over the sum variable $S$, we obtain:
\begin{equation}\label{eq:TauIntermediate}
	\int_{y_{\min}}^{y_{\max}} \cdots \int_{y_{\min}}^{y_{\max}} \frac{\mathrm{d}y_1 \cdots \mathrm{d}y_m}{\big(1+(y_1 \cdots y_m)^\alpha\big)^2} = \log(d)^{m-1} \int_{ml}^{mL} dS \, \frac{e^S}{(1+e^{\alpha S})^2} f_{IH}\left(\frac{S-ml}{\log d}; m\right) \, .
\end{equation}
For $\alpha > \frac{1}{2}$, the function $g(S) = \frac{e^S}{(1+e^{\alpha S})^2}$ is integrable over the entire real line. Its exponential tails rapidly suppress the integral away from the origin. Following the same procedure leveraging the DCT as we did in the state equation, we obtain that the overall term scales as:
\begin{equation}\label{eq:tauUpResonnantAlpha>}
	\tau^{(m)} \asymp d^{(m-\kappa)(1-1/\alpha)}\log(d)^{m-1} = \widetilde{\Theta}\big(d^{(m-\kappa)(1-1/\alpha)}\big) \, .
\end{equation}
Since the exponent $(m-\kappa)(1-1/\alpha)$ is strictly negative, this power-law decay collapses the term to zero. 

When $\alpha = \frac{1}{2}$, the integrand exhibits a constant asymptote $g(S) \sim 1$ as $S \to \infty$. Consequently, integrating up to $mL$ yields an additional dimension-dependent factor $\mathcal{O}(\log d)$. Plugging this into Eq.~\eqref{eq:TauIntermediate}, the total term scales as $\tau^{(m)} \asymp d^{-(m-\kappa)} \log(d)^m$. Since $m > \kappa$, the polynomial suppression effortlessly overcomes the logarithmic growth, ensuring $\tau^{(m)} \to 0$.

Hence, throughout the entire intermediate regime, $\tau^{(m)}\to 0$. Depending on the value of $\alpha$, the decay rate is bounded by either $\mathcal{O}(d^{\kappa - m})$ or $\widetilde{\mathcal{O}}(d^{(\kappa-m)(1/\alpha - 1)})$.
\paragraph*{Case d: $\bm{m = \kappa}$} 
Here $y_{\min}\to 0$ and $y_{\max} = (C_\kappa \xi)^{1/\alpha} = \Theta_d(1)$. The integral reads:
\begin{equation*}
	\tau^{(\kappa)} = \frac{1}{\kappa! \psi} (C_\kappa\xi)^{-1/\alpha} \int_{y_{\min}}^{y_{\max}}\cdots\int_{y_{\min}}^{y_{\max}} \frac{\mathrm{d}y_1 \cdots \mathrm{d}y_m}{\big(1+(y_1\cdots y_m)^\alpha\big)^2} \, .
\end{equation*}
Applying Lemma~\ref{lemma:IrwinHall}, we transform this $m$-dimensional integral into a 1-dimensional one:
\begin{equation*}
	\int_{y_{\min}}^{y_{\max}}\cdots\int_{y_{\min}}^{y_{\max}} \frac{\mathrm{d}y_1 \cdots \mathrm{d}y_m}{\big(1+(y_1\cdots y_m)^\alpha\big)^2} = \log(d)^\kappa \int_0^\kappa dx \, \frac{e^{x\log(d) + \kappa \log(y_{\min})}}{\big(1+e^{\alpha x\log(d) + \alpha \kappa \log(y_{\min})}\big)^2} f_{IH}(x; \kappa) \, .
\end{equation*}
The asymptotic evaluation from this point precisely mirrors the derivations in Appendix~\ref{appendix:KernelStateEquation} for the $m = \kappa$ case (see Eq.~\eqref{eq:gancio20}). Executing the high-dimensional limit $d \to \infty$, the logarithmic prefactors exactly cancel out, yielding:
\begin{equation}
	\tau^{(\kappa)} = \frac{1}{\psi} \frac{1}{\kappa! (\kappa-1)!} \int_{0}^{\infty} dt \, e^{-t} t^{\kappa-1} \frac{1}{\big(1+ \frac{(1-\alpha)^{-\kappa}}{h_\kappa \kappa!} \xi e^{-\alpha t}\big)^2} \, .
\end{equation}

\paragraph*{Synthesis: Final formula for the variance}
Combining the results together, we have that the only contribution to Eq.~\eqref{eq:tauMDefinition} that survives in the high-dimensional limit $d\to\infty$ is the $m = \kappa$, so that:
\begin{equation}
    \tau = \frac{1}{\psi} \frac{1}{\kappa! (\kappa-1)!} \int_{0}^{\infty} dt \> e^{-t} t^{\kappa-1} \frac{1}{(1+ \frac{(1-\alpha)^{-\kappa}}{h_\kappa \kappa!} \xi e^{-t\alpha})^2}
\end{equation}
\qed

\subsubsection{Fractional sample complexity \texorpdfstring{$\kappa$}{kappa}}
If $\kappa \not\in \mathbb{N}$, the asymptotic formulas simplify significantly. In this case, The kernel state equation lacks the exact "resonating" shell $m = \kappa$. Consequently, the threshold integral term $\psi^{(\kappa)}$ vanishes, and the self-consistency equation straightforwardly reduces to:
\[
\psi = \frac{\lambda_{\text{eff}}}{\xi}
\]
The same reasoning rigorously applies to the variance. As demonstrated previously, the only term yielding a macroscopic non-zero contribution $\tau^{(m)} = \Theta_d(1)$ occurs when $m = \kappa$. If $\kappa \not\in \mathbb{N}$, then $\tau$ will necessarily go to $0$. In this regime, the overall asymptotic decay of the variance as $d \to \infty$ is bounded by the slowest decaying adjacent integer shells, which is $\tau^{(\lfloor \kappa \rfloor)}$ or $\tau^{(\lceil \kappa \rceil)}$. 

In particular, as analyzed for the sub-critical terms ($m < \kappa$):
\[
\tau^{(\lfloor\kappa\rfloor)} = \Theta(d^{\lfloor\kappa\rfloor-\kappa})
\]
For $\tau^{(\lceil \kappa \rceil)}$ we have to distinguish between $\alpha < \frac{1}{2}$ and $\alpha > \frac{1}{2}$. When $\alpha < \frac{1}{2}$, the scaling for $\kappa < m < \frac{\kappa}{1-\alpha}$ follows the same regime as $m > \frac{\kappa}{1-\alpha}$, yielding for $\tau^{(\lceil \kappa \rceil)} = \Theta(d^{\kappa - \lceil\kappa\rceil})$. When $\alpha > \frac{1}{2}$, the integral localizes and the variance term scales as $\Theta(d^{(\lceil\kappa\rceil-\kappa)(1-1/\alpha)})$ as we proved in Eq.~\ref{eq:tauUpResonnantAlpha>}.

Defining the variance decay exponent $\gamma_V > 0$ such that $\mathsf{V} = \Theta(d^{-\gamma_V})$, the effective exponent is given by the minimum absolute decay rate among the two bounding shells:
\begin{equation}
        \gamma_V = 
        \begin{cases}
            \min\big(\kappa - \lfloor \kappa \rfloor, \; \lceil \kappa \rceil - \kappa \big) & \text{if } 0 \leq \alpha \leq \frac{1}{2} \\[6pt]
            \min\big(\kappa - \lfloor \kappa \rfloor, \; \frac{1-\alpha}{\alpha}(\lceil \kappa \rceil - \kappa) \big) & \text{if } \frac{1}{2} < \alpha < 1
        \end{cases}
\end{equation}
\Cref{fig:placeholder} illustrates how the deterministic equivalent for the variance improves at increasing dimension.
\begin{figure}
    \centering
    \includegraphics[width=0.75\linewidth]{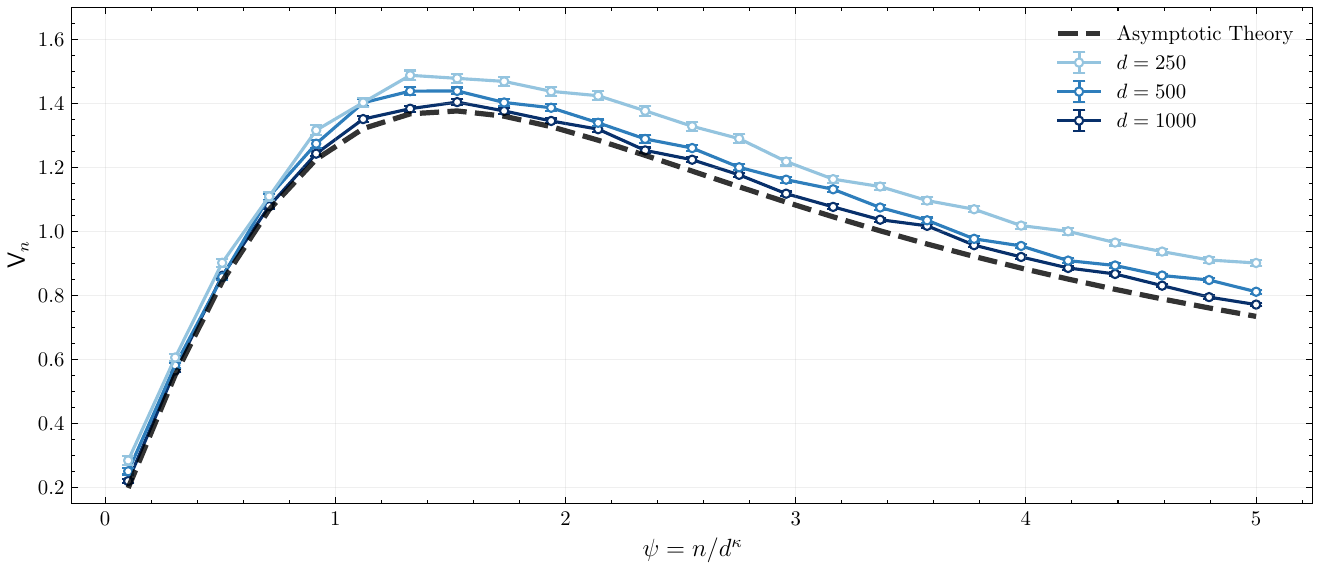}
\caption{Finite-size scaling of the variance learning curve. The empirical variance $\mathsf{V}_n$ is plotted as a function of the load parameter $\psi = n/d^\kappa$ for varying system dimensions $d$ (blue gradients). As the dimension increases, the empirical realizations progressively collapse onto the asymptotic theoretical prediction (black dashed line) derived from our state equations in Theorem~\ref{def:VarianceWeak}. Fixed parameters: $\kappa = 1$, $\alpha = 0.3$, and $\sigma_\varepsilon = 2$, $h(t) = 1 + 8t + 3t^2 + 0.1 t^3 + t^4$, $\lambda = 0$, $\psi = 0.8$ }
    \label{fig:placeholder}
\end{figure}

\subsection{Bias}
The deterministic equivalent for the bias $\mathsf{B}$ in Eq.~\eqref{eq:DetEquivalent} can be recast as:
\begin{equation}\label{eq:BiasIntro}
    \mathsf{B} = \frac{\nu^2\langle\theta, (\Lambda + \nu I)^{-2}\theta \rangle}{1-\frac{1}{n}\Tr(\Lambda^2(\Lambda + \nu I)^{-2})} = \frac{U}{1-\tau}
\end{equation}
where $U \equiv \nu^2\langle\theta, (\Lambda + \nu I)^{-2}\theta \rangle$ captures the explicit dependence on the target function and can be interpreted as the raw unlearned energy of the target function, and $\tau$ is the term previously evaluated in Appendix~\ref{appendix:Variance}  which doesn't depend on the specific target function. The vector $\theta$ contains the projection coefficients of the target function $f_\star$ onto the kernel eigenbasis. Following the reparameterization introduced in Appendix~\ref{appendix:Reparametrization}, we can expand this target contribution across the polynomial shells as
\[
f_\star(\bm x) = \sum_{m=0}^\infty \frac{1}{m!}\sum_{1 \leq i_1 \leq \dots \leq i_m \leq d} \theta_{i_1, \dots, i_m} e_{i_1, \dots, i_m}(\bm\Sigma^{-1/2}\bm x) 
\]
And the numerator of Eq.~\eqref{eq:BiasIntro} becomes:
\begin{equation}
    \begin{aligned}
            U \equiv \nu^2 \langle \theta, (\Sigma + \nu I)^{-2} \theta\rangle &= \sum_{m=0}\>\sum_{1 \leq i_1 \leq \dots \leq i_m \leq d} \frac{\nu^2}{(\lambda_{\beta} + \nu)^2}(\theta_{i_1, \dots, i_m})^2 \\ &= \sum_{m=0}\>\sum_{1 \leq i_1 \leq \dots \leq i_m \leq d} \frac{1}{(1 + \lambda_{ \beta}/\nu)^2}(\theta_{i_1, \dots, i_m})^2= \\
            &= \sum_{m=0}^{\infty} U^{(m)}
    \end{aligned}
\end{equation}
Where:
\begin{equation}
    U^{(m)} = \sum_{1 \leq i_1 \leq \dots \leq i_m \leq d} \frac{1}{(1 + \lambda_{i_1, \dots i_m}^{(m)}/\nu)^2}\left( \theta_{ i_1, \dots, i_m} \right)^2 
\end{equation}
In the limit $d\to\infty$, assuming a ``dense'' target function and extending $\theta_{ i_1, \dots, i_m}$ so that it is perfectly symmetric we have:
\begin{equation}\label{eq:bmSeries}
    U^{(m)}=\frac{1}{m!}\sum_{i_1  \dots i_m =1}^d \frac{\left( \theta_{ i_1, \dots, i_m} \right)^2}{(1 +  \nu^{-1}C_m^{-1} d^{m(\alpha - 1)}(i_1 \dots i_m)^{-\alpha})^2}
\end{equation}
To proceed, we need to assume something about the specific shape of the tensor $\theta_{ i_1, \dots, i_m}$. In particular, we will assume:
\[
\theta_{ i_1, \dots, i_m} = \theta_m (i_1 \dots i_m)^{-\omega}
\]
where $\omega > 0$ is the target anisotropy parameter governing an equivalent source condition for the function, and $\theta_m$ is a shell-dependent scalar coefficient.

The total spectral energy contained in the $m$-th shell, denoted as $E(m)$, is given by the sum of the squared weights:
\begin{equation}\label{eq:TargetEnergyDef}
    E(m) = \frac{1}{m!} \sum_{i_1, \dots, i_m = 1}^d \left( \theta_{i_1, \dots, i_m} \right)^2 = \frac{\theta_m^2}{m!} \sum_{i_1, \dots, i_m = 1}^d (i_1 \dots i_m)^{-2\omega} = \frac{\theta_m^2}{m!} \left( \sum_{i=1}^d i^{-2\omega} \right)^m 
\end{equation}
We want to ensure the target function is square-integrable (i.e., its total energy is finite) as $d \to \infty$. Hence, depending on $\omega$ we will have two different regimes:
\paragraph{Fast decay target ($\boldsymbol{\omega > 1/2}$).} 
In this regime, the target coefficients decay rapidly enough to guarantee the convergence of the infinite series in the thermodynamic limit $d \to \infty$. As established in Eq.~\eqref{eq:TargetEnergyDef}, the spectral energy evaluates to:
\begin{equation}
	E(m) = \frac{\theta_m^2}{m!} \zeta(2\omega)^m \, ,
\end{equation}
implying that the prefactor $\theta_m$ scales as an intensive quantity, $\Theta_d(1)$.

Substituting the critical scaling ansatz $\nu = \xi d^{-\kappa}$ and the source condition into Eq.~\eqref{eq:bmSeries}, we obtain:
\begin{equation}\label{eq:bm1}
	U^{(m)} = \frac{\theta_m^2}{m!} \sum_{i_1, \dots, i_m = 1}^d \frac{(i_1 \cdots i_m)^{-2\omega}}{\big(1 +  \xi^{-1} C_m^{-1} d^{\kappa+m(\alpha - 1)}(i_1 \cdots i_m)^{-\alpha}\big)^2} \, .
\end{equation}
The asymptotic limit of this expression is governed by the exponent of $d$, namely $\kappa + m(\alpha-1)$. Since $\omega > 1/2$, the bounding series $\sum_{i_1, \dots, i_m} (i_1 \cdots i_m)^{-2\omega}$ intrinsically converges. Thus, we can invoke the Dominated Convergence Theorem (DCT) to exchange the limit $d \to \infty$ with the summation sign, using the bound:
\begin{equation}
	\frac{(i_1 \cdots i_m)^{-2\omega}}{\big(1 + \xi^{-1} C_m d^{\kappa+m(\alpha - 1)}(i_1 \cdots i_m)^{-\alpha}\big)^2} \leq (i_1 \cdots i_m)^{-2\omega} \, .
\end{equation}
Assuming we are not exactly at a transition boundary (i.e., $m \neq \frac{\kappa}{1-\alpha}$ for any $m \in \mathbb{N}$), we can evaluate the limit term by term:
\begin{itemize}
	\item \textbf{Unlearned features} $\big(m > \frac{\kappa}{1-\alpha}\big)$\textbf{:} The exponent of $d$ is strictly negative, causing the denominator to approach $1$. The series thus converges to the full spectral energy:
	\begin{equation}
		U^{(m)} \xrightarrow{d \to \infty} \frac{\theta_m^2}{m!} \sum_{i_1, \dots, i_m = 1}^\infty (i_1 \cdots i_m)^{-2\omega} = \frac{\theta_m^2}{m!}\zeta(2\omega)^m = E(m) \, .
	\end{equation}
	\item \textbf{Learned features} $\big(m < \frac{\kappa}{1-\alpha}\big)$\textbf{:} The exponent of $d$ is strictly positive. Consequently, the denominator diverges, yielding $U^{(m)} \xrightarrow{d \to \infty} 0$.
\end{itemize}
By defining $\kappa_{\text{eff}} = \frac{\kappa}{1-\alpha}$, we conclude:
\begin{equation}
	\sum_{m=0}^{\infty} U^{(m)} \xrightarrow{d \to \infty} \sum_{m > \kappa_{\text{eff}}} E(m) \, .
\end{equation}

\paragraph{Slow decay target $\bm{\omega < \frac{1}{2}}$}
In this regime, the unnormalized energy per shell grows to infinity:
\[
E(m) \sim \frac{\theta_m^2}{m!}(1-2\omega)^{-m} d^{m(1-2\omega)}
\]
so that $\theta_m^2$ needs to scale with $d$ in order to keep the full sum $\sum_m E(m) = 1$. We can asymptotically approximate the series in Eq.~\eqref{eq:bmSeries} with the corresponding Riemann integral:
\begin{equation}
	U^{(m)} = \frac{\theta_m^2}{m!} \int_{1}^d \cdots \int_{1}^d \mathrm{d}x_1 \cdots \mathrm{d}x_m  \frac{(x_1 \cdots x_m)^{-2\omega}}{(1+\nu^{-1} C_m^{-1} d^{m(\alpha -1)} (x_1 \cdots x_m)^{-\alpha} )^2} \, .
\end{equation}
Upon the substitution: 
\[
y_i = x_i \nu^{\frac{1}{m\alpha}} C_m^{\frac{1}{m\alpha}} d^{\frac{1-\alpha}{\alpha}}
\]
and using the scaling relation $\nu = \xi d^{-\kappa}$ we get:
\begin{equation}\label{eq:bmIntegralbis}
	\begin{aligned}
			U^{(m)} &= \frac{\theta_m^2}{m!} C_m^{\frac{2\omega-1}{\alpha}}\xi^{\frac{2\omega-1}{\alpha}} d^{\frac{1}{\alpha}(\kappa-m+m\alpha)(1-2\omega)} \int_{y_{\min}}^{y_{\max}} \cdots \int_{y_{\min}}^{y_{\max}} \mathrm{d}y_1 \cdots \mathrm{d}y_m \frac{(y_1 \cdots y_m)^{-2\omega}}{(1+(y_1 \cdots y_m)^{-\alpha})^2} = \\
			&= \frac{\theta_m^2}{m!} C_m^{\frac{2\omega-1}{\alpha}}\xi^{\frac{2\omega-1}{\alpha}} d^{\frac{1}{\alpha}(\kappa-m+m\alpha)(1-2\omega)} \mathcal{I}^{(m)}(\alpha,\omega) \, .
	\end{aligned}
\end{equation}
The asymptotic behavior of the upper and lower limits of the integral is the same as the one in Fig.~\ref{fig:MinMaxWA}. Again, we will divide the analysis according to the different values of $m$:

\begin{description}
	\item[Case a: $\bm{m < \kappa}$] Here both $y_{\min}, y_{\max}\to 0$. The integral $\mathcal{I}^{(m)}(\alpha,\omega)$ can thus be computed by re-writing:
	\[
\mathcal{I}^{(m)}(\alpha,\omega)= \int_{y_{\min}}^{y_{\max}} \cdots \int_{y_{\min}}^{y_{\max}} \mathrm{d}y_1 \cdots \mathrm{d}y_m \frac{(y_1 \cdots y_m)^{2(\alpha -\omega)}}{(1+(y_1 \cdots y_m)^{\alpha})^2}
	\]
	and asymptotically approximating:
	\begin{equation*}
		\begin{aligned}
		\mathcal{I}^{(m)}(\alpha,\omega) &\simeq \int_{y_{\min}}^{y_{\max}} \cdots \int_{y_{\min}}^{y_{\max}} \mathrm{d}y_1 \cdots \mathrm{d}y_m (y_1 \cdots y_m)^{2(\alpha -\omega)} = \\
			&= \Bigl(\int_{y_{\min}}^{y_{\max}} y^{2(\alpha-\omega)} dy \Bigr)^m = \\ 
			&= (1+2(\alpha-\omega))^{-m} (y_{\max}^{1+2(\alpha-\omega)} - y_{\min}^{1+2(\alpha-\omega)})^m \, .
		\end{aligned}
	\end{equation*}
	Now, the leading term in this last expression is determined by the sign of $1+2(\alpha-\omega)$. Since we are considering $\omega < \frac{1}{2}$, then $1+2(\alpha-\omega) > 0$, and thus the leading term is the one with $y_{\max}$:
	\[
	(1+2(\alpha-\omega))^{-m} (y_{\max}^{1+2(\alpha-\omega)} - y_{\min}^{1+2(\alpha-\omega)})^m \sim (1+2(\alpha - \omega))^{-m} y_{\max}^{m+2m(\alpha-\omega)}
	\] 
	which scales as $d^{\frac{1+2(\alpha - \omega)}{\alpha}(m-\kappa)}$. The full term $U^{(m)}$ will then scale like:
	\[
	d^{\frac{1+2(\alpha - \omega)}{\alpha}(m-\kappa)}d^{\frac{\kappa-m+m\alpha}{\alpha}(1-2\omega)} = d^{3m - 2\kappa - 2m\omega} \, .
	\]
	At a first glance, the sign of this exponent might not be immediately obvious. However, if we rewrite it as:
	\[
	3m - 2\kappa - 2m\omega = 2(m-\kappa) + m(1-2\omega)
	\]
	and consider this expression together with $\theta_m^2$ (which, in the case $\omega < \frac{1}{2}$, must scale with $d$ as well):
	\[
	U^{(m)} \asymp \theta_m^2 d^{2(m-\kappa) + m(1-2\omega)} = \theta_m^2 d^{m(1-2\omega)} d^{2(m-\kappa)} = E(m) d^{2(m-\kappa)}
	\]
	we can see that $U^{(m)} \to 0$ when $m < \kappa$.
	
	\item[Case b: $\bm{m > \frac{\kappa}{1-\alpha}}$] 
	Now $y_{\min}, y_{\max}\to \infty$. The integral $\mathcal{I}^{(m)}(\alpha,\omega)$ in Eq.~\eqref{eq:bmIntegralbis} becomes:
	\begin{equation*}
		\begin{aligned}
\mathcal{I}^{(m)}(\alpha,\omega) &\sim \int_{y_{\min}}^{y_{\max}} \cdots \int_{y_{\min}}^{y_{\max}} \mathrm{d}y_1 \cdots \mathrm{d}y_m (y_1 \cdots y_m)^{-2\omega} = \\
			&= \Bigl(\int_{y_{\min}}^{y_{\max}} y^{-2\omega} dy \Bigr)^m = (1-2\omega)^{-m}(y_{\max}^{1-2\omega}-y_{\min}^{1-2\omega})^m \, .
		\end{aligned}
	\end{equation*}
	Since $\omega < \frac{1}{2}$, then:
	\[
	(1-2\omega)^{-m}(y_{\max}^{1-2\omega}-y_{\min}^{1-2\omega})^m \sim (1-2\omega)^{-m}y_{\max}^{m-2m\omega}
	\]
	And, asymptotically:
	\begin{equation*}
		\begin{aligned}
				&\int_{y_{\min}}^{y_{\max}} \cdots \int_{y_{\min}}^{y_{\max}} \mathrm{d}y_1 \cdots \mathrm{d}y_m \frac{(y_1 \cdots y_m)^{2(\alpha -\omega)}}{(1+(y_1 \cdots y_m)^{\alpha})^2}  \sim (1-2\omega)^{-m}y_{\max}^{m-2m\omega} = \\
				&= (1-2\omega)^{-m} \xi^{\frac{1-2\omega}{\alpha}}C_m^{\frac{1-2\omega}{\alpha}}d^{\frac{1-2\omega}{\alpha}(m-\kappa)} \, .
		\end{aligned}
	\end{equation*}
	Overall, the quantity $U^{(m)}$ becomes at a leading order:
	\begin{equation*}
		\begin{aligned}
			U^{(m)} &= \frac{\theta_m^2}{m!} C_m^{\frac{2\omega-1}{\alpha}}\xi^{\frac{2\omega-1}{\alpha}} d^{\frac{1}{\alpha}(\kappa-m+m\alpha)(1-2\omega)} (1-2\omega)^{-m} \xi^{\frac{1-2\omega}{\alpha}}C_m^{\frac{1-2\omega}{\alpha}}d^{\frac{1-2\omega}{\alpha}(m-\kappa)} = \\ 
			&= \frac{\theta_m^2}{m!}(1-2\omega)^{-m} d^{m(1-2\omega)} \, .
		\end{aligned}
	\end{equation*}
	And since we are in the $\omega < \frac{1}{2}$ regime, then the latter can be simply recognized as the spectral energy, yielding:
	\[
	U^{(m)} = E(m) \, .
	\]
	
	\item[Case c: $\bm{\kappa < m \leq \frac{\kappa}{1-\alpha}}$] In this intermediate regime, the integration boundaries behave as $y_{\min} \to 0$ and $y_{\max} \to \infty$. As established in the previous case $m > \frac{\kappa}{1-\alpha}$, the divergence of the integral is strictly driven by the large-$y$ behavior near the upper boundary $y_{\max}$ (UV-divergence). Consequently, the contribution from the lower limit ($y_{\min} \to 0$) is asymptotically sub-leading and can be safely neglected. Evaluating the integral at leading order thus yields the exact same dimensional scaling derived in the previous case. Therefore, even in this intermediate regime, the bias saturates to the full shell energy:
	\begin{equation}
		U^{(m)} = E(m) \, .
	\end{equation}
	
	\item[Case d: $\bm{m = \kappa}$]  In this case, $y_{\min}\to 0$ and $y_{\max} = C_\kappa^{\frac{1}{\kappa\alpha}} \xi^{\frac{1}{\kappa\alpha}} = \Theta_d(1)$. We are not integrating over the pathological region $y \to\infty$ so the final integral should be of order 1. Indeed, we can transform the integral $\mathcal{I}^{(m)}(\alpha,\omega)$ in Eq.~\eqref{eq:bmIntegralbis} using the Irwin-Hall technique in Lemma~\ref{lemma:IrwinHall} to get:
	\begin{equation}
		\begin{aligned}
			\mathcal{I}^{(m)}(\alpha,\omega) &= \log(d)^{\kappa} \int_0^\kappa dx f_{IH}(x;\kappa) \frac{e^{(1-2\omega)(x\log(d)+m\log(y_{\min}))}}{(1+e^{-\alpha(x\log(d)+m\log(y_{\min}))})^2} = \\
			&= \log(d)^{\kappa} (\xi C_\kappa)^{(1-2\omega)/\alpha}\int_0^\kappa dx f_{IH}(x;\kappa) \frac{e^{(1-2\omega)(x-\kappa)\log(d)}}{(1+(C_\kappa \xi)^{-1} e^{-\alpha \log(d) (x-\kappa)})^2} \, .
		\end{aligned}
	\end{equation}
	Under the substitution $(\kappa-x) \log(d) = t$, we obtain:
	\[
	\mathcal{I}^{(m)}(\alpha,\omega) = \log(d)^{\kappa-1} (\xi C_\kappa)^{(1-2\omega)/\alpha} \int_0^{\kappa\log(d)} f_{IH}\left(\kappa - \frac{t}{\log(d)}; \kappa\right) \frac{e^{-(1-2\omega)t}}{(1+(C_\kappa \xi)^{-1}  e^{\alpha t})^2} dt
	\]
	As $d\to\infty$, we can asymptotically approximate the Irwin-Hall density as we did in Sec.~\ref{appendix:KernelStateEquation} with Eq.~\eqref{eq:gancio21} (it works since $1-2\omega-2\alpha > 0$). We will thus omit the full derivation and provide the final results:
	\[
	\lim_{d\to\infty} \mathcal{I}^{(m)}(\alpha,\omega) = \frac{1}{(\kappa-1)!}(\xi C_\kappa)^{(1-2\omega)/\alpha} \int_0^{\infty} t^{\kappa - 1} \frac{e^{-(1-2\omega)t}}{(1+(C_\kappa \xi)^{-1}  e^{\alpha t})^2} dt \, .
	\]
	Getting back to the formula for $U^{(\kappa)}$ from Eq.~\eqref{eq:bmIntegralbis}, we finally have:
	\[
	U^{(\kappa)} = \frac{\theta_\kappa^2}{\kappa!}  d^{\kappa(1-2\omega)} \frac{1}{(\kappa-1)!} \int_0^{\infty} e^{-t} t^{\kappa - 1} \frac{e^{2\omega t}}{(1+(C_\kappa \xi)^{-1}  e^{\alpha t})^2} dt \, .
	\]
	Considering $E(\kappa) = \frac{\theta_\kappa^2}{\kappa!} (1-2\omega)^{-\kappa} d^{\kappa(1-2\omega)}$, then finally:
	\begin{equation}
		U^{(\kappa)} = E(\kappa) \frac{(1-2\omega)^\kappa}{(\kappa-1)!} \int_0^{\infty} e^{-t} t^{\kappa - 1} \frac{e^{2\omega t}}{(1+(C_\kappa \xi)^{-1}  e^{\alpha t})^2} dt \, .
	\end{equation}
\end{description}
\qed
\newpage

%% file: Appendix/Strong_anisotropy.tex
\section{Strongly anisotropic regime}
In this section we aim at proving the formulas valid in the $\alpha > 1$ regime. In the strongly anisotropic regime, we can express the eigenvalues in Eq.~\eqref{eq:KernelSpectrum} as:
\[
\lambda_\beta = c \, h_{|\beta|}|\beta|! \sigma_1^{\beta_1} \cdots \sigma_d^{\beta_d},
\]
where $c$ is a multiplicative constant independent of $d$. Consequently, all the formulas derived hereafter will hold up to an overall multiplicative constant $C$.
\subsection{Kernel State Equation and Variance}
Following the same procedure adopted in Appendix \ref{appendix:KernelStateEquation}, we can write the self-consistency equation Eq.~\eqref{eq:SelfConsistencyEquation} as $\psi = \sum_m \psi^{(m)} + \frac{\lambda}{\nu}$ where:
\begin{equation}
    \psi^{(m)} = \frac{1}{m!} \sum_{i_1, \dots, i_m = 1}^d \frac{d^{-\kappa}}{1+c^{-1}\nu(i_1 \dots i_m)^\alpha (h_m m!)^{-1} r_\alpha(d)^{m}}
\end{equation}  
When $\alpha > 1$, we know that $r_\alpha^{m} = \zeta(\alpha)^m$ at leading order and, labeling $C_m = c^{-1} (h_m m!)^{-1} \zeta(\alpha)^{m}$ we get:
\[
\psi^{(m)} = \frac{1}{m!} \sum_{i_1, \dots, i_m = 1}^d \frac{d^{-\kappa}}{1+\nu(i_1 \dots i_m)^\alpha C_m} \simeq  \frac{1}{m!} \int_1^d \dots \int_1^d \frac{d^{-\kappa}}{1+\nu(x_1 \dots x_m)^\alpha C_m} dx_1 \dots dx_m
\]
using the integral asymptotic approximation. Changing variables $x_i = y_i (C_m \nu)^{-\frac{1}{m\alpha}}$, we finally have:
\begin{equation}\label{eq:PsiMSAIntegral}
   \psi^{(m)} = \frac{1}{m!}C_m^{-\frac{1}{\alpha}} d^{-\kappa}  \nu^{-\frac{1}{\alpha}}\int_{y_{\min}}^{y_{\max}} \dots \int_{y_{\min}}^{y_{\max}} dy_1  \dots dy_m \frac{1}{1+(y_1 \dots y_m)^{\alpha}} 
\end{equation}
We analyze the ridgeless $\lambda = 0$ case and $\lambda \asymp 1$ separately. But before, we claim the following lemma that will come n handy later: 
\begin{lemma}[\textit{Asymptotic Expansion of the Shell Integral}]\label{thm:IntegralDivergence}
	Let $\alpha > 1$ and consider the following $m$-dimensional integral: 
	\begin{equation}
		\mathcal{I}^{(m)} = \int_{y_{\min}}^{y_{\max}}\cdots\int_{y_{\min}}^{y_{\max}} \frac{\mathrm{d}y_1 \cdots \mathrm{d}y_m}{1+(y_1 \cdots y_m)^\alpha} \, ,
	\end{equation}
	where the integration bounds are strictly positive and obey the following scaling behavior in the high-dimensional limit ($d\to\infty$):
	\begin{subequations}
		\begin{align}
			y_{\min} &= \Theta\Big(d^{-\frac{\kappa}{m}} \hat{f}(d)\Big) \to 0 \, , \\
			y_{\max} &= d \, y_{\min} = \Theta\Big(d^{1-\frac{\kappa}{m}} \hat{f}(d) \Big) \to\infty) \, .
		\end{align}
	\end{subequations}
	Here, $m \in \mathbb{N}$, $\kappa \in\mathbb{R}, 0 < \kappa \leq m$, and $\hat{f}(d)$ is a strictly sub-polynomial function (i.e. $f(d) = o(d^\epsilon)$ for any $\epsilon > 0$). 
	
	Then, as $d \to \infty$, the leading-order asymptotic behavior of the integral resolves into two distinct regimes:
	
	\begin{itemize}
		\item \textbf{Super-critical shells} ($m > \kappa$): 
		The upper bound $y_{\max}$ strictly diverges as a power of $d$. The integral exhibits a poly-logarithmic divergence governed by the Irwin-Hall distribution:
		\begin{equation}
			\mathcal{I}^{(m)} \simeq  \log(d)^{m-1} f_{IH}(\kappa; m) \int_{-\infty}^{\infty} dz \frac{e^{z}}{ 1+e^{\alpha z}}  
		\end{equation}
		where $f_{IH}(\kappa; m)$ is the probability density function of the Irwin-Hall distribution evaluated at the critical threshold $\kappa$, explicitly given by:
		\begin{equation}
			f_{IH}(\kappa; m) = \frac{1}{(m-1)!} \sum_{j=0}^{\lfloor \kappa \rfloor} (-1)^j \binom{m}{j} (\kappa - j)^{m-1}
		\end{equation}
		
		\item \textbf{Resonant shell} ($m = \kappa$): 
The integral diverges sub-logarithmically as:
			\begin{equation}
				\mathcal{I}^{(\kappa)} \simeq \log \hat{f}(d)^{\kappa-1}  \frac{\kappa^{\kappa-1}}{(\kappa-1)!} \int_{-\infty}^{\infty} dz \frac{e^{z}}{ 1+e^{\alpha z}}    \, .
			\end{equation}
			
	\end{itemize}
\end{lemma}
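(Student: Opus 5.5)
The plan is to reduce the $m$-dimensional integral to a one-dimensional one with the Irwin--Hall trick (Lemma~\ref{lemma:IrwinHall}), whose hypothesis $y_{\max}=d\,y_{\min}$ holds here. Writing $l=\log y_{\min}$ and $L=\log y_{\max}=l+\log d$, and taking $f(u)=(1+u^\alpha)^{-1}$, the lemma gives
\[
\mathcal{I}^{(m)}=\log(d)^{m-1}\int_{ml}^{mL}\mathrm{d}S\,\frac{e^{S}}{1+e^{\alpha S}}\,f_{IH}\!\left(\frac{S-ml}{\log d};m\right).
\]
The key feature is that the kernel $g(S)=e^S/(1+e^{\alpha S})$ is integrable on all of $\mathbb{R}$ precisely because $\alpha>1$. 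It decays like $e^{S}$ as $S\to-\infty$ and like $e^{-(\alpha-1)S}$ as $S\to+\infty$. So the mass of $g$ sits in an $O(1)$ window around $S=0$. In that window the Irwin--Hall argument equals $x_0=-ml/\log d$ up to $O(1/\log d)$.

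From the scaling of $y_{\min}$ we get $-ml=\kappa\log d-m\log\hat f(d)+O(1)$. Hence $x_0=\kappa-m\log\hat f(d)/\log d\to\kappa$, since $\hat f$ is sub-polynomial.

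\textbf{Super-critical shells ($m>\kappa$).} I would substitute $S=z$ and write
\[
\mathcal{I}^{(m)}=\log(d)^{m-1}\int_{\mathbb{R}}\mathbf{1}_{[ml,mL]}(z)\,g(z)\,f_{IH}\!\left(x_0+\frac{z}{\log d};m\right)\mathrm{d}z .
\]
Then I apply dominated convergence. Pointwise, the indicator tends to $1$ because $ml\to-\infty$, and $mL=m(1-\kappa/m)\log d+\dots\to+\infty$. Also $f_{IH}(x_0+z/\log d;m)\to f_{IH}(\kappa;m)$ by continuity of $f_{IH}(\cdot;m)$ on $\mathbb{R}$, which holds for $m\ge2$; for $m=1$ we have $\kappa<1$, an interior point. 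For domination, $f_{IH}$ is bounded by its maximum, so $g$ times a constant works. This yields $\mathcal{I}^{(m)}\simeq\log(d)^{m-1}f_{IH}(\kappa;m)\int_{\mathbb{R}} e^z/(1+e^{\alpha z})\,\mathrm{d}z$. For $0<\kappa<m$ the density $f_{IH}(\kappa;m)$ is strictly positive, so this is indeed the leading order.

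\textbf{Resonant shell ($m=\kappa$).} This is the main obstacle. Here $x_0=\kappa-\kappa\log\hat f/\log d$ approaches the right endpoint of the Irwin--Hall support, where $f_{IH}(\kappa-\epsilon;\kappa)\simeq\epsilon^{\kappa-1}/(\kappa-1)!$ vanishes. The naive limit therefore gives $0$, and I must track the boundary layer.

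Near $S=0$ the argument is $\kappa-\epsilon$ with $\epsilon=(\kappa\log\hat f(d)-z)/\log d$. Using the boundary expansion already employed in Case~b of Appendix~\ref{appendix:KernelStateEquation}, the integrand becomes
\[
\log(d)^{\kappa-1}\,g(z)\,\frac{(\kappa\log\hat f(d)-z)^{\kappa-1}}{(\kappa-1)!\,\log(d)^{\kappa-1}} .
\]
The powers of $\log d$ cancel. When $\log\hat f(d)\to\infty$, the factor $(\kappa\log\hat f-z)^{\kappa-1}$ is $\simeq(\kappa\log\hat f)^{\kappa-1}$ uniformly on the bulk of $g$. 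This gives $(\log\hat f)^{\kappa-1}\kappa^{\kappa-1}/(\kappa-1)!\cdot\int g$, as claimed.

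The delicate parts are two. First, justify dominated convergence after normalizing by $(\log\hat f)^{\kappa-1}$, using the global bound $f_{IH}(\kappa-x;\kappa)\le x^{\kappa-1}/(\kappa-1)!$ together with the exponential tails of $g$. The region $z>\kappa\log\hat f$ lies outside the support and contributes nothing. Second, check that the upper cutoff $mL=\kappa\log\hat f+O(1)$ still tends to $+\infty$, so the full $\int_{\mathbb{R}}g$ is recovered. If instead $\hat f$ stays bounded, the same computation yields an $O(1)$ constant, consistent with the statement read as a sub-logarithmic bound.
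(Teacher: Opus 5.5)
Your proposal is correct and follows essentially the same route as the paper. The shared steps are the Irwin--Hall reduction, a shift of variables that centers the integrable kernel $e^{z}/(1+e^{\alpha z})$ at the origin, and dominated convergence with the bounded density $f_{IH}$ for $m>\kappa$; for $m=\kappa$ both use the right-edge expansion $f_{IH}(\kappa-\epsilon;\kappa)=\epsilon^{\kappa-1}/(\kappa-1)!$, normalize by $(\kappa\log\hat f(d))^{\kappa-1}$ (the paper's $\Delta^{\kappa-1}$), and dominate by a polynomial times the kernel. Your extra checks are points the paper leaves implicit: that $f_{IH}(\cdot;1)$ is continuous at an interior $\kappa$, and that the upper cutoff $\kappa\log\hat f(d)+O(1)$ tends to infinity, which requires $\hat f\to\infty$.
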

\begin{proof}
    
To manipulate the integral $\mathcal{I}_m$, we will resort again to the Irwin-Hall trick in Lemma~\ref{lemma:IrwinHall}. We will then obtain a one-dimensional integral:
\[
\mathcal{I}^{(m)}  = \log(d)^{m} \int_{0}^{m} dx  \frac{e^{x\log(d)+m\log(y_{\min})}}{1+e^{x\alpha\log(d)+m\alpha\log(y_{\min})}} f_{IH}(x;m) \, .
\]
Since $y_{\min} =  \Theta(d^{-\kappa/m}\hat{f}(d))$, then $e^{m \log(y_{\min})} = e^{\Delta - \kappa\log(d)}$ where $\Delta = \Theta(m\log\hat{f}(d)) = o(\log(d))$ is a quantity that scales sub-logarithmically with $d$. At the end, we have:
\[
\mathcal{I}^{(m)}  = \log(d)^m \int_0^m dx \frac{e^\Delta e^{(x-\kappa)\log(d)}}{ 1+ e^{\alpha\Delta} e^{\alpha\log(d)(x-\kappa)}} f_{IH}(x;m) 
\]Let's change variables $t = (x-\kappa)\log(d)+\Delta$:
\begin{equation}\label{eq:appuybis}
	\mathcal{I}^{(m)} = \log(d)^{m-1} \int_{-\kappa \log(d)+\Delta}^{(m-\kappa)\log(d)+\Delta} dt \frac{e^{t}}{1+ e^{\alpha t}}f_{IH}\left( \kappa + \frac{t-\Delta}{\log(d)}; m\right) 
\end{equation}
We now analyze the behavior of this expression as $d \to \infty$ by leveraging the Dominated Convergence Theorem (DCT). Specifically, let us rewrite the integral using the indicator function $\mathbf{1}_{[\cdot]}(t)$:
\begin{equation}
	\begin{aligned}
		\mathcal{I}^{(m)} &= \log(d)^{m-1} \int_\mathbb{R} dt \, \mathbf{1}_{[-\kappa \log(d)+\Delta, (m-\kappa)\log(d)+\Delta ]}(t) \frac{e^{t}}{1+ e^{\alpha t}}f_{IH}\left( \kappa + \frac{t-\Delta}{\log(d)}; m\right) \\
		&= \log(d)^{m-1} \int_{\mathbb{R}} g_d(t) \, dt \, ,
	\end{aligned}
\end{equation}
To apply the theorem, we first compute the pointwise convergence of $g_d(t)$ for any fixed $t \in \mathbb{R}$:
\begin{equation}
	\lim_{d\to\infty} g_d(t) = \frac{e^{t}}{1+ e^{\alpha t}} f_{IH}( \kappa ; m ) \, ,
\end{equation}
as the upper and lower limit of the indicator function go to $\pm \infty$. This is true since $\Delta$ is sub-logarithmic, hence the leading order behavior is given by $-\kappa\log(d)\to\infty$ and $(m-\kappa)\log(d) \to \infty$.
Next, we construct an integrable bounding function $g(t)$ independent of $d$. Since the Irwin-Hall distribution $f_{IH}(z;m)$ is a continuous probability density function defined on a compact support ($[0,m]$), it is surely globally bounded by a finite constant $M_m$. Consequently, we can bound $g_d(t)$ as:
\begin{equation}
	g_d(t) \le M_m \frac{e^{t}}{1+ e^{\alpha t}} \equiv g(t) \, .
\end{equation}
The bounding function $g(t)$ is strictly integrable over $\mathbb{R}$ for $\alpha > 1$. Hence, by the Dominated Convergence Theorem, we can safely take the limit $d \to \infty$ inside the integral:
\begin{equation}
	\mathcal{I}^{(m)} \simeq f_{IH}(\kappa; m) \log(d)^{m-1} \int_{-\infty}^{\infty} dt \frac{e^{t}}{ 1+e^{\alpha t}} \, .
\end{equation}
The remaining integral is easy to compute and, overall, the integral simplifies to:
\begin{equation}\label{eq:f(d)}
	\mathcal{I}^{(m)} \sim f_{IH}(\kappa; m) \log(d)^{m-1} \frac{\pi}{\alpha \sin(\pi/\alpha)}
\end{equation}

Now let us consider the case $m = \kappa$. In this scenario, Eq.~\eqref{eq:appuybis} becomes:
\begin{equation}
	\mathcal{I}^{(\kappa)} = \log(d)^{\kappa-1} \int_{-\kappa \log(d)+\Delta}^{\Delta} dt \, \frac{e^{t}}{1+ e^{\alpha t}} f_{IH}\left( \kappa - \frac{\Delta-t}{\log(d)}; \kappa\right) \, .
\end{equation}
We can apply a reasoning similar to the previous case, but we must be more careful. If we were to apply the Dominated Convergence Theorem (DCT) directly, the integral alone would trivially collapse to $0$. While mathematically correct, this is not very instructive, as the diverging prefactor $\log(d)^{\kappa-1}$ would lead to an indeterminate form of the type $\infty \times 0$. To extract the correct scaling behavior of the integral as $d\to\infty$, let us recast the expression by factoring out $\Delta^{\kappa-1}$:
\begin{equation}
	\begin{aligned}
		\mathcal{I}^{(\kappa)} &= \Delta^{\kappa-1} \int_\mathbb{R} dt \, \mathbf{1}_{[-\kappa \log(d)+\Delta, \Delta]}(t) \left[ \log(d)^{\kappa-1} \Delta^{1-\kappa} \right] \frac{e^{t}}{1+ e^{\alpha t}} f_{IH}\left( \kappa - \frac{\Delta-t}{\log(d)}; \kappa\right) \\
		&\equiv \Delta^{\kappa-1} \int_{\mathbb{R}} g_d(t) \, dt \, .
	\end{aligned}
\end{equation}
Now we have to evaluate the pointwise limit of $g_d(t)$. Setting $\epsilon = \frac{\Delta-t}{\log(d)}$, when $\epsilon\to 0$ we can expand the Irwin-Hall function near its right edge:
\begin{equation}
	f_{IH}(\kappa - \epsilon; \kappa) = \frac{\epsilon^{\kappa-1}}{(\kappa-1)!} = \frac{(\Delta-t)^{\kappa-1}}{\log(d)^{\kappa-1} (\kappa-1)!} \, .
\end{equation}
Substituting this expansion into $g_d(t)$, the diverging logarithmic terms exactly cancel out:
\begin{equation}
	\begin{aligned}
		\lim_{d\to\infty} g_d(t) &= \lim_{d\to\infty} \Delta^{1-\kappa} \frac{e^{t}}{1+ e^{\alpha t}} (\Delta-t)^{\kappa-1} \frac{1}{(\kappa-1)!} \\
		&= \lim_{d\to\infty} \left( 1 - \frac{t}{\Delta} \right)^{\kappa-1} \frac{1}{(\kappa-1)!} \frac{e^{ t}}{1+e^{\alpha t}} = \frac{1}{(\kappa-1)!} \frac{e^{t}}{1+e^{\alpha t}} \, .
	\end{aligned}
\end{equation}
To rigorously apply the DCT, we can bound the sequence using the global upper limit for the Irwin-Hall distribution $f_{IH}(\kappa-x; \kappa) \leq x^{\kappa-1}$ (valid for all $x \geq 0$). Substituting this into the absolute value of $g_d(t)$ yields:
\begin{equation}
	|g_d(t)| \leq \left[\log(d)^{\kappa-1} \Delta^{1-\kappa}\right] \frac{e^{t}}{1+ e^{\alpha t}} \left( \frac{\Delta-t}{\log(d)} \right)^{\kappa-1} = \left( 1 - \frac{t}{\Delta} \right)^{\kappa-1} \frac{e^{t}}{1+ e^{\alpha t}} \, .
\end{equation}
Since $t \leq \Delta$ on the integration domain, the term $(1 - t/\Delta)$ is less than or equal to $1$ for $t \ge 0$, and bounded by $1 + |t|$ for $t < 0$. Therefore, we can uniformly bound $|g_d(t)|$ by a function $G(t)$ independent of $d$:
\begin{equation}
	|g_d(t)| \leq (1 + |t|)^{\kappa-1} \frac{e^{t}}{1+ e^{\alpha t}} \equiv G(t) \, .
\end{equation}
Because $\alpha > 1$, $G(t)$ is absolutely integrable over $\mathbb{R}$. By the Dominated Convergence Theorem, we finally obtain the leading-order behavior:
\begin{equation}
	\mathcal{I}^{(\kappa)} \simeq \Delta^{\kappa-1} \frac{1}{(\kappa-1)!} \int_{\mathbb{R}} \frac{e^{t}}{1+ e^{\alpha t}} dt = \Delta^{\kappa-1} \frac{1}{(\kappa-1)!} \frac{\pi}{\alpha \sin(\pi/\alpha)} \, .
\end{equation}
Recalling that $\Delta \simeq \kappa \log(\hat{f}(d))$ at leading order, we get:
\begin{equation}
	\mathcal{I}^{(\kappa)} \simeq \frac{\kappa^{\kappa-1}}{(\kappa-1)!} \big(\log \hat{f}(d)\big)^{\kappa-1} \frac{\pi}{\alpha \sin(\pi/\alpha)} \, .
\end{equation}
This demonstrates that, if $\hat{f}(d) \to \infty$, the integral in the case $m = \kappa$ diverges much slower ($\big(\log\hat{f}(d)\big)^{\kappa-1}$ instead of $\log(d)^{\kappa-1}$).
\end{proof}
\subsubsection{Ridgeless variance}
In this case, the self-consistency equation reduces to $\psi = \sum_m \psi^{(m)}$. We thus formulate the following ansatz for $\nu$:
\begin{equation}\label{eq:ScalingNuSA}
    \nu = \xi f(d) d^{-\kappa\alpha}
\end{equation}
where $\xi \asymp 1$ with respect to $d$ and $f(d)$ is a super-poly-logarithmic function in $d$, meaning that:
$$ \lim_{d\to\infty} \frac{f(d)}{d^{\eta}} = 0 \quad \forall \eta > 0 \quad \text{and} \quad \lim_{d\to\infty} \frac{f(d)}{\log(d)^\pi} = \infty \quad \forall \pi > 0 $$
The reasons for this specific scaling will become clear later on.  The upper and lower limit in Eq.~\eqref{eq:PsiMSAIntegral} become:
\begin{equation}
    y_{\min} = (\xi C_{m})^{1 / m\alpha} d^{-\kappa/m} f(d)^{1/m\alpha}, \quad\quad y_{\max} = (\xi C_{m})^{1 / m\alpha} d^{1-\kappa/m} f(d)^{1/m\alpha}
\end{equation}
\paragraph{$\bm{m < \kappa}$} We can use similar bounding arguments as the ones used in Appendix~\ref{appendix:m<kappa} to conclude that the terms $\psi^{(m)}$ are negligible in the full summation (more precisely, $\psi^{(m)} = \Theta(d^{m-\kappa})$).
\paragraph{$\bm{m \geq \kappa}$} Let us consider the integral in Eq.~\eqref{eq:PsiMSAIntegral}. We can use the Lemma~\ref{thm:IntegralDivergence} setting $\hat{f}(d) = f(d)$ (scaling sub-polynomialy as required). Overall, Eq.~\eqref{eq:PsiMSAIntegral} simplifies to:
\begin{equation}\label{eq:f(d)}
    \psi^{(m)} \sim \frac{1}{m!} C_m^{-1/\alpha} \xi^{-1/\alpha}f(d)^{-1/\alpha}  f_{IH}(\kappa; m) \log(d)^{m-1} \int_{-\infty}^{\infty} dz \frac{e^{z}}{ 1+e^{\alpha z}} 
\end{equation}
or, when $m = \kappa$:
\begin{equation}\label{eq:f(d)bis}
    \psi^{(m)} \sim \frac{1}{m!} C_m^{-1/\alpha} \xi^{-1/\alpha}f(d)^{-1/\alpha} \frac{\kappa^{\kappa-1}}{(\kappa-1)!} \log(f(d))^{m-1} \int_{-\infty}^{\infty} dz \frac{e^{z}}{ 1+e^{\alpha z}} 
\end{equation}
\begin{remark}
    Now it should be clearer why we introduced the scaling $f(d)$. Without it, the generic $\psi^{(m)}$ would diverge logarithmically, which would not make sense as $\psi^{(m)} < \psi \asymp 1$. To extract the specific shape of $f(d)$, one should solve $\sum_m \psi^{(m)} \asymp 1$ and $f(d)$ should balance the infinite summation of the logarithmic powers. However, this is not strictly necessary, as we prove now.
\end{remark}
Now consider the formula for the variance similar to the one obtained in Eq.~\eqref{eq:tauMDefinition}, where here:
\[
\tau^{(m)} = \frac{d^{-\kappa}}{\psi m!} \sum_{i_1, \dots, i_m}^d \frac{1}{(1+C_m (i_1 \dots i_m)^\alpha \nu)^2} \sim  \frac{d^{-\kappa}}{\psi m!} \int_1^d \dots \int_1^d dx_1 \dots dx_m \frac{1}{(1+C_m (x_1 \dots x_m)^\alpha \nu)^2}
\]
By manipulating this equation as detailed above, we eventually arrive at an integral analogous to the one in Lemma~\ref{thm:IntegralDivergence}, but featuring a squared denominator. It is straightforward to show that the same asymptotic characterization holds for this modified integral, requiring only the substitution:
\[
\int_{-\infty}^{\infty} \frac{e^{z}}{1+e^{\alpha z}} \, \mathrm{d}z \quad \to \quad \int_{-\infty}^{\infty}  \frac{e^{z}}{(1+e^{\alpha z})^2} \, \mathrm{d}z \, .
\]
Consequently, we obtain:
\begin{equation}\label{eq:tauMSA}
    \tau^{(m)} \sim \frac{1}{\psi m!} C_m^{-1/\alpha} \xi^{-1/\alpha}f(d)^{-1/\alpha}  f_{IH}(\kappa; m) \log(d)^{m-1} \int_{-\infty}^{\infty} dz \frac{e^{z}}{ (1+e^{\alpha z})^2} 
\end{equation}
But since:
\[
\int_{-\infty}^{\infty} dz \frac{e^{z}}{ (1+e^{\alpha z})^2} = \left(1-\frac{1}{\alpha}\right) \int_{-\infty}^{\infty} dz \frac{e^{z}}{1+e^{\alpha z}}
\]
Then, comparing Eq.~\eqref{eq:tauMSA} with Eq.~\eqref{eq:f(d)}:
\[
\tau^{(m)} \sim \left(1-\frac{1}{\alpha}\right) \frac{\psi^{(m)}}{\psi}
\]
When summing everything up, we obtain:
\[
\tau = \sum_{m=0}^\infty \tau^{(m)} = \left(1-\frac{1}{\alpha}\right) \frac{1}{\psi}\sum_{m=0}^\infty \psi^{(m)} = \left(1-\frac{1}{\alpha}\right) 
\]
\paragraph{Final formula for the variance}
Considering the variance $\mathsf{V}$ can be written as $\mathsf{V} = \sigma_\varepsilon^2 \frac{\tau}{1-\tau}$, we finally obtain:
\[
\mathsf{V} = \sigma_\varepsilon^2 \frac{\tau}{1-\tau} = \sigma_\varepsilon^2 (\alpha - 1) 
\]
\begin{remark}
    Note that for $\alpha \to 1^+$, the variance vanishes identically for $n$ large enough. This is in agreement with the limit $\alpha \to 1^-$ when using the weak anisotropy formulas in Appendix.~\ref{appendix:Variance}
\end{remark}
\subsubsection{Regularized case (\texorpdfstring{$\lambda \asymp 1$}{lambda = O(1)})}
In the presence of explicit ridge regularization, the self-consistency equation reads:
\[
\psi = \sum_m \psi^{(m)} + \frac{\lambda}{\nu}d^{-\kappa}
\]
Assuming $\lambda \asymp 1$ we must modify our scaling ansatz. We postulate:
\begin{equation}\label{eq:ScalingNuRidge}
    \nu = \xi d^{-\kappa}
\end{equation}
Under this scaling, the generic macroscopic term $\psi^{(m)}$ evaluated through the integral representation in Eq.~\eqref{eq:PsiMSAIntegral} will scale as:
\[
\psi^{(m)} \sim \frac{1}{m!}C_m^{-\frac{1}{\alpha}} d^{-\kappa} \nu^{-\frac{1}{\alpha}}\int_{y_{\min}}^{y_{\max}} \dots \int_{y_{\min}}^{y_{\max}} \frac{dy_1 \dots dy_m}{1+(y_1 \dots y_m)^{\alpha}} \asymp d^{-\kappa \left(1-\frac{1}{\alpha}\right)}\log(d)^{m-1}
\]
Since $\alpha > 1$, the exponent $(1-1/\alpha)$ is strictly positive. Consequently, $\psi^{(m)}$ gets power-law suppressed by $d^{-\kappa(1-1/\alpha)}$. Even when summing over the macroscopic shells $m$, the accumulated logarithmic factors $\log(d)^{m-1}$ yield at most a sub-polynomial growth $f(d)$. Hence, the entire kernel contribution vanishes in the thermodynamic limit ($\sum_{m} \psi^{(m)} = \tilde{\Theta}(d^{-\kappa (1-1/\alpha)}) \to 0$). The state equation thus trivially reduces to the regularization term:
\begin{equation}\label{eq:SolutionXi}
    \psi = \frac{\lambda}{\xi}
\end{equation}
This implies $\xi = \lambda/\psi$, which is an order one macroscopic constant, definitively validating our scaling ansatz.

Following the same analysis, the terms $\tau^{(m)}$ (Eq.~\eqref{eq:tauMSA}) undergo the exact same algebraic suppression:
\[
\tau^{(m)} \asymp d^{-\kappa} \nu^{-1/\alpha}\int_{y_{\min}}^{y_{\max}}\dots\int_{y_{\min}}^{y_{\max}} \frac{dy_1 \dots dy_m}{(1+ (y_1 \dots y_m)^\alpha)^2} \asymp d^{-\kappa\left(1-\frac{1}{\alpha}\right)} \log(d)^{m-1}
\]
Summing all target shell contributions, the total $\tau = \sum_m \tau^{(m)}$ asymptotically vanishes ($\tau = \tilde{\Theta}(d^{-\kappa\left(1-\frac{1}{\alpha}\right)}\to 0$). The variance therefore scales as:
\[
\mathsf{V} = \tilde\Theta\left(d^{-\kappa\left(1-\frac{1}{\alpha}\right)}\right)
\]
where the $\tilde\Theta$ notation absorbs the residual sub-polynomial factor $f(d)$.

\subsection{Bias}
We now turn our attention to the bias component. Recall that the deterministic equivalent $\mathsf{B}$ can be computed by:
\begin{equation*}
    \mathsf{B} = \frac{U}{1-\tau}
\end{equation*}
where the unlearned energy $U$ explicitly depends on the target function:
\begin{equation}\label{eq:appendixb}
        U = \nu^2 \langle \theta, (\Lambda + \nu I)^{-2}\theta\rangle
\end{equation}
In the previous section, we proved that the $\tau$ term evaluates to $\tau = 1 - \frac{1}{\alpha}$ in the ridgeless limit ($\lambda = 0$), whereas it vanishes ($\tau \to 0$) in the regularized regime ($\lambda = \Theta_d(1)$). We now proceed to compute the unlearned energy $U$ across both regularization regimes.
\subsubsection{Ridgeless limit}
The scaling of $\nu$ is given by $\nu = \xi d^{-\kappa\alpha}f(d)$. We can recast Eq.~\eqref{eq:appendixb} so that $U = \sum_m U^{(m)}$ and:
\begin{equation}
    U^{(m)} = \frac{1}{m!} \> \sum_{i_1, \dots, i_m = 1}^d \frac{\left( \theta_{ i_1, \dots, i_m} \right)^2}{(1 + (i_1 \dots i_m)^{-\alpha} \nu^{-1}C_m^{-1} )^2}\
\end{equation}
with $C_m = c^{-1}(h_m m!)^{-1} \> \zeta(\alpha)^{m}$. Using $\theta_{i_1, \dots, i_m} = \theta_m (i_1 \dots i_m)^{-\omega}$ we get:
\begin{equation}\label{eq:bmSA}
    U^{(m)} = \frac{\theta_m^2}{m!} \> \sum_{i_1, \dots, i_m = 1}^d \frac{(i_1 \dots i_m)^{-2\omega}}{(1 + (i_1 \dots i_m)^{-\alpha} \nu^{-1}C_m^{-1} )^2} = \frac{\theta_m^2}{m!} \> \sum_{i_1, \dots, i_m = 1}^d \frac{(i_1 \dots i_m)^{-2\omega}}{(1 + (i_1 \dots i_m)^{-\alpha} \nu^{-1}C_m^{-1} )^2} 
\end{equation}

The mathematical evaluation of Eq.~\eqref{eq:bmSA} naturally splits into two primary categories, with the first one requiring a further subdivision based on the magnitude of $\omega$ with respect to $\alpha$:
\begin{itemize}
	\item \textbf{The fast decay regime} ($\omega > \frac{1}{2}$). This range can be further partitioned into two distinct sub-categories:
	\begin{itemize}
		\item \textit{Ultra-fast decay} ($\omega > \alpha + \frac{1}{2}$);
		\item \textit{Moderately fast decay} ($\frac{1}{2} < \omega \leq \alpha + \frac{1}{2}$).
	\end{itemize}
	\item \textbf{The slow decay regime} ($\omega < \frac{1}{2}$).
\end{itemize}
\paragraph{Ultra-fast decay $\bm{\omega > \alpha + \frac{1}{2}}:$} 
Let us rewrite the unlearned energy $U^{(m)}$ as:
\[
U^{(m)} = \frac{\theta_m^2}{m!}\nu^2 \> \sum_{i_1, \dots, i_m = 1}^d \frac{(i_1 \dots i_m)^{-2\omega}}{(\nu + (i_1 \dots i_m)^{-\alpha}C_m^{-1} )^2} 
\]
We haven't yet made the scaling of $\nu$ explicit. As it turns out, it's not strictly required; we will only need to assume $\nu\to0$ as $d\to\infty$. When performing this high-dimensional limit, we can apply the Dominated Convergence Theorem, bounding each term using the absolutely summable sequence $\sum_{i_1, \dots, i_m} (i_1 \dots i_m)^{-2(\omega-\alpha)}$ (converging since $2(\omega-\alpha) > 1$). Taking the limit $\nu \to 0$, then at leading order:
\[
U^{(m)} \sim \frac{\theta_m^2}{m!}\nu^2 C_m^2 \sum_{i_1, \dots, i_m = 1}^\infty (i_1 \dots i_m)^{-2(\omega-\alpha)} \, .
\]
Now we have to consider the scaling behavior of each term appearing in the above equation. The infinite summation is converging to $\zeta(2(\omega-\alpha))$ and, since $\omega > \frac{1}{2}$, then $\theta_m$ is of order-one with respect to $d$. Hence the only asymptotic dependence is on $\nu$:
\[
U^{(m)} \asymp \nu^2 \, .
\]
We know the scaling of $\nu$ up to a sub-polynomial correction, so that we obtain
\[
U^{(m)} = \widetilde\Theta(d^{-2\kappa\alpha}) \, ,
\]
meaning that the kernel has learned every polynomial feature. The final bias will therefore scale similarly
\[
\mathsf{B} = \tilde\Theta(d^{-2\kappa\alpha})
\]
or, remembering $n \asymp d^\kappa$, then:
\[
\mathsf{B} = \tilde\Theta(n^{-2\alpha})
\] 

\paragraph{Moderately-fast decay $\bm{\frac{1}{2} < \omega \leq \alpha + \frac{1}{2}}:$}
We can asymptotically approximate the summation in Eq.~\eqref{eq:bmSA} as the corresponding Riemann integral:
\[
U^{(m)} \sim \frac{\theta_m^2}{m!}\int_{1}^d \dots \int_{1}^d  \frac{dx_1 \dots dx_m}{((x_1 \dots x_m)^{\omega} +  \nu^{-1}C_m^{-1} (x_1 \dots x_m)^{\omega-\alpha})^2} 
\]
and performing the change of variables:
\[
y_i = x_i \nu^{1/m\alpha} C_m^{1/m\alpha}
\]
so that now: 
\[
U^{(m)} =  \frac{\theta_m^2}{m! } C_m^{\frac{2\omega-1}{\alpha}} \nu^{\frac{2\omega-1}{\alpha}}\int_{y_{\min}}^{y_{\max}} \dots \int_{y_{\min}}^{y_{\max}} \frac{dy_1 \cdots dy_m}{((y_1 \dots y_m)^{\omega} + (y_1 \cdots y_m)^{\omega-\alpha})^2}   
\]
or
\begin{equation}\label{eq:bmIntegral2}
	\begin{aligned}
		U^{(m)} &= \frac{\theta_m^2}{m!} C_m^{\frac{2\omega-1}{\alpha}} \nu^{\frac{2\omega-1}{\alpha}}\int_{y_{\min}}^{y_{\max}} \dots \int_{y_{\min}}^{y_{\max}} \frac{(y_1 \dots y_m)^{2(\alpha-\omega)}}{((y_1 \dots y_m)^{\alpha} + 1)^2} dy_1 \cdots dy_m = \\
		&= \frac{\theta_m^2}{m!} C_m^{\frac{2\omega-1}{\alpha}} \nu^{\frac{2\omega-1}{\alpha}}\mathcal{I}^{(m)}(\alpha, \omega)
	\end{aligned}
\end{equation}
where the limits of the integral $\mathcal{I}^{(m)}(\alpha, \omega)$ are:
\begin{subequations}\label{eq:y_limits2}
	\begin{align}
		y_{\min} &= \nu^{\frac{1}{m\alpha}} C_m^{\frac{1}{m\alpha}} = C_m^{\frac{1}{m\alpha}} \xi^{\frac{1}{m\alpha}} f(d)^{\frac{1}{m\alpha}} d^{-\frac{\kappa}{m}} \label{eq:miny} \\
		y_{\max} &= d \nu^{\frac{1}{m\alpha}} C_m^{\frac{1}{m\alpha}} = C_m^{\frac{1}{m\alpha}} \xi^{\frac{1}{m\alpha}} f(d)^{\frac{1}{m\alpha}} d^{1-\frac{\kappa}{m}} \label{eq:maxy} \, .
	\end{align}
\end{subequations}
These limits are asymptotically represented in Fig.~\ref{fig:MinMaxSA}. The lower limit $y_{\min} = \widetilde{\Theta}(d^{-\frac{\kappa}{m}})$ always collapse to zero. The upper limit $y_{\max} = \widetilde{\Theta}(d^{1-\frac{\kappa}{m}})$ converges to $0$ when $m < \kappa$ and diverge to $\infty$ for $m > \kappa$. Note that when $m = \kappa$, the scaling $f(d)$ makes $y_{\max}$ diverge to $\infty$ instead of converging to an order-one constant.
\begin{figure}[htbp]
	\centering
	\begin{tikzpicture}[
		scale=1.2,
		every node/.style={font=\small},
		dot/.style={circle, fill=black, inner sep=1.5pt}
		]
		\node[left, font=\bfseries] at (0, 1.5) {$y_{\max}$};
		\draw[thick, ->] (0, 1.5) -- (8, 1.5) node[right] {$m$};
		\draw[ultra thick, blue!80] (0, 1.5) -- (3, 1.5) node[midway, above=1pt] {$\to 0$};
		\draw[ultra thick, red!80] (3, 1.5) -- (7.8, 1.5) node[midway, above=1pt] {$\to +\infty$};
		\node[dot, label=below:{$\kappa$}, red!80] at (3, 1.5) {};
		\node[dot, label=above:{$\to\infty$}, red!80] at (3, 1.5) {};
		
		\node[left, font=\bfseries] at (0, 0) {$y_{\min}$};
		\draw[thick, ->] (0, 0) -- (8, 0) node[right] {$m$};
		\draw[ultra thick, blue!80] (0, 0) -- (7.8, 0) node[midway, above=1pt] {$\to 0$};
	\end{tikzpicture}
	\caption{\textit{Asymptotic behavior of the integration boundaries $y_{\min}$ and $y_{\max}$ derived in Eqs.~\eqref{eq:y_limits2}. At variance with Fig.~\ref{fig:MinMaxWA}, when $m = \kappa$ the respective boundary does not converge to a constant $\Theta(1)$ due to the diverging factor $f(d)$.}}
	\label{fig:MinMaxSA}
\end{figure}
As usual, we will solve Eq.~\eqref{eq:bmIntegral2} according to the different values of $m$: 
\paragraph{Case a: $\bm{m < \kappa}$} Here both $y_{\min}, y_{\max} \to 0$. The integral in Eq.~\eqref{eq:bmIntegral2} can therefore be asymptotically computed by:
\begin{equation}\label{eq:appuy3}
	\begin{aligned}
		\mathcal{I}^{(m)}(\alpha, \omega) = \int_{y_{\min}}^{y_{\max}} \dots \int_{y_{\min}}^{y_{\max}} \frac{(y_1 \dots y_m)^{2(\alpha-\omega)}}{((y_1 \dots y_m)^{\alpha} + 1)^2} dy_1 \dots dy_m &\sim 
		\int_{y_{\min}}^{y_{\max}} \dots \int_{y_{\min}}^{y_{\max}}(y_1 \dots y_m)^{2(\alpha-\omega)} 
	\end{aligned}
\end{equation}
When $\omega = \alpha + \frac{1}{2}$, then the integral in Eq.~\eqref{eq:appuy3} becomes logarithmic:
\begin{equation*}
	\begin{aligned}
		\int_{y_{\min}}^{y_{\max}} \dots \int_{y_{\min}}^{y_{\max}}(y_1 \dots y_m)^{-1} \> dy_1 \dots dy_m = \log^m\left(\frac{y_{\max}}{y_{\min}}\right) = \log(d)^m
	\end{aligned}
\end{equation*}
and, finally, $U^{(m)} \asymp \frac{\theta_m^2}{m!} \nu^2 \log(d)^m \asymp \frac{\theta_m^2}{m!} d^{-2\kappa\alpha} f(d)^{2} \log(d)^m = \tilde{\Theta}(d^{-2\kappa\alpha})\to 0$.

If $\frac{1}{2} < \omega < \alpha + \frac{1}{2}$, the integral $\mathcal{I}^{(m)}(\alpha, \omega)$ becomes:
\[
\int_{y_{\min}}^{y_{\max}} \dots \int_{y_{\min}}^{y_{\max}}(y_1 \dots y_m)^{2(\alpha-\omega)} dy_1 \dots dy_m \asymp (y_{\max}^{1+2(\alpha-\omega)} - y_{\min}^{1+2(\alpha-\omega)})^m
\]
since $1+2(\alpha-\omega) > 0$, the leading contribution is the one associated to $y_{\max}$, hence:
\[
U^{(m)} \asymp \frac{\theta_m^2}{m!} y_{\max}^{m(1+2(\alpha-\omega))} \nu^{\frac{2\omega-1}{\alpha}} =  \frac{\theta_m^2}{m!} d^{m+2m(\alpha-\omega)} \nu^{\frac{1+2(\alpha - \omega)}{\alpha}} \nu^{\frac{2\omega-1}{\alpha}} =\frac{\theta_m^2}{m!} d^{m+2m(\alpha-\omega)} \nu^{2}
\]
As $\frac{1}{2} < \omega < \frac{1}{2}+\alpha$, the target coefficient $\theta_m$ is of order 1 and:
\[
U^{(m)} \asymp \frac{\theta_m^2}{m!} d^{m+2m(\alpha-\omega)} \nu^{2} \asymp  d^{m+2m(\alpha-\omega)} d^{-2\kappa\alpha} f(d)^{2} = d^{m(1-2\omega)}d^{2\alpha(m-\kappa)} f(d)^{2} \to 0 \, ,
\]
since $m < \kappa$ and $1-2\omega < 0$. Therefore, for $m < \kappa$:
\begin{equation}
	U^{(m)} =
		\widetilde\Theta(d^{m(1-2\omega)+2\alpha(m-\kappa)}) \to 0
\end{equation}
and $\sum_{m<\kappa} U^{(m)}$ is negligible

\paragraph{Case b: $\bm{m > \kappa}:$} Using the technique in Lemma~\ref{lemma:IrwinHall}, we can rewrite the integral $\mathcal{I}^{(m)}(\alpha, \omega)$ in a one-dimensional form. Recalling that $m\log(y_{\min}) = -\kappa\log(d) + \Delta$ and $m\log(y_{\max}) = (m-\kappa)\log(d) + \Delta$, where $\Delta = \Theta(\alpha^{-1} \log(f(d)) = o(\log d)$, we obtain:
\begin{equation}\label{eq:appuy2}
	\mathcal{I}^{(m)}(\alpha, \omega) = \log(d)^{m-1} \int_{-\kappa\log(d) + \Delta}^{(m-\kappa)\log(d) + \Delta} dt \frac{e^{t(1+2\alpha-2\omega)}}{(1+e^{\alpha t})^2} f_{IH}\left(\kappa + \frac{t-\Delta}{\log(d)}; m\right) \, .
\end{equation}
We now analyze the behavior of this expression as $d \to \infty$. This is perfectly equivalent to the integral we analyzed in the proof of Lemma~\ref{thm:IntegralDivergence} since the integrand weight function $\eta(t) = \frac{e^{t(1+2\alpha-2\omega)}}{(1+e^{\alpha t})^2}$ is absolutely integrable on the real line for $\frac{1}{2} < \omega < \frac{1}{2}+\alpha$. By dominated convergence, we can therefore evaluate the Irwin-Hall density strictly at the critical threshold $\kappa$ and factor it out of the integral:
\begin{equation}
	\mathcal{I}^{(m)}(\alpha, \omega) \simeq \log(d)^{m-1} f_{IH}(\kappa;m) \int_{-\infty}^\infty dt \frac{e^{t(1+2\alpha-2\omega)}}{(1+e^{\alpha t})^2} \, .
\end{equation}
Finally, substituting this back into the expression for $U^{(m)}$, its value at leading order becomes:
\[
U^{(m)} \simeq \frac{\theta_m^2}{m!} C_m^{\frac{2\omega-1}{\alpha}} \nu^{\frac{2\omega-1}{\alpha}} \log(d)^{m-1} f_{IH}(\kappa;m) \int_{-\infty}^\infty dt \frac{e^{t(1+2\alpha-2\omega)}}{(1+e^{\alpha t})^2} \, .
\]
Since the remaining one-dimensional integral over $t$ converges to an order-one constant, and recalling the scalings $\nu = \tilde{\Theta}(d^{-\kappa\alpha})$ and $\theta_m = \Theta(1)$, we readily obtain:
\[
U^{(m)} = \tilde{\Theta}(d^{-\kappa(2\omega - 1)}) \, .
\]
Consequently, summing over the super-critical shells yields:
\[
\sum_{m>\kappa}^{\infty} U^{(m)} = \tilde{\Theta}(d^{-\kappa(2\omega - 1)}) \, .
\]
\paragraph{Case c: $\bm{m = \kappa}:$}
We use again Lemma~\ref{lemma:IrwinHall}. Recalling the boundary scalings, we have $\kappa\log(y_{\max}) = (m-\kappa)\log(d) + \Delta = \Delta$ and $\kappa\log(y_{\min}) = -\kappa\log(d) + \Delta$, where $\Delta = \Theta(\alpha^{-1} \log(f(d))) = o(\log(d)) \to \infty$. The integral becomes:
\begin{equation}
	\mathcal{I}^{(\kappa)}(\alpha, \omega) = \log(d)^{\kappa-1} \int_{-\kappa\log(d) + \Delta}^{\Delta} dt \frac{e^{t(1+2\alpha-2\omega)}}{(1+e^{\alpha t})^2} f_{IH}\left(\kappa + \frac{t-\Delta}{\log(d)}; \kappa\right) \, .
\end{equation}

This integral is formally equivalent to the one analyzed in the proof of Lemma~\ref{thm:IntegralDivergence} in the case $m = \kappa$, the only difference being in the absolutely integrable integrand weight function. Leveraging the DCT and expanding the Irwin-Hall function near its rightmost boundary, we obtain:
\begin{equation}
	\mathcal{I}^{(\kappa)}(\alpha, \omega) \simeq \frac{\Delta^{\kappa-1}}{(\kappa-1)!} \int_{-\infty}^{\infty} dt \frac{e^{t(1+2\alpha-2\omega)}}{(1+e^{\alpha t})^2} \asymp \Delta^{\kappa-1} \asymp \big(\log \hat{f}(d)\big)^{\kappa-1} \, .
\end{equation}
Finally, substituting this sub-logarithmic growth back into the macroscopic unlearned energy evaluated in Eq.~\eqref{eq:bmIntegral2}, and recalling that $\nu = \widetilde{\Theta}(d^{-\kappa\alpha})$, we obtain:
\begin{equation}
	U^{(\kappa)} \asymp \nu^{\frac{2\omega-1}{\alpha}} \mathcal{I}^{(\kappa)}(\alpha, \omega) \asymp d^{-\kappa(2\omega-1)} \big(\log \hat{f}(d)\big)^{\kappa-1} = \widetilde{\Theta}\Big(d^{-\kappa(2\omega-1)}\Big) \, .
\end{equation}
Hence the resonant shell $m = \kappa$ decays at the exact same polynomial rate as the super-critical shells $m > \kappa$, being immune to the $\log(d)^{\kappa-1}$ explosion.

Combining all the individual regimes together, we finally arrive at the complete asymptotic picture showing that the unlearned energy, and consequently the macroscopic bias, vanishes in the high-dimensional limit. In particular, the overall convergence rate is dictated by the contributions of the deep shells ($m > \kappa$), yielding:
\begin{equation}
	\mathsf{B} = \widetilde{\Theta}\left(d^{-\kappa(2\omega-1)}\right) = \widetilde{\Theta}\left(n^{-(2\omega-1)}\right) \, .
\end{equation}

\paragraph{Slow decay $\bm{\omega < \frac{1}{2}}:$}
Similarly to what we did in the moderately-fast regime, we will split our analysis into three distinct parts. The quantity we want to compute is always:
\[
U^{(m)} =  \frac{\theta_m^2}{m! } C_m^{\frac{2\omega-1}{\alpha}} \nu^{\frac{2\omega-1}{\alpha}}\int_{y_{\min}}^{y_{\max}} \dots \int_{y_{\min}}^{y_{\max}} \frac{dy_1 \cdots dy_m}{((y_1 \dots y_m)^{\omega} + (y_1 \cdots y_m)^{\omega-\alpha})^2}   =  \frac{\theta_m^2}{m! } C_m^{\frac{2\omega-1}{\alpha}} \nu^{\frac{2\omega-1}{\alpha}} \mathcal{I}^{(m)}(\alpha, \omega)
\]
\paragraph{Case a: $\bm{m < \kappa}$:} 
Following the same procedure as the one in the moderately-fast regime, we approximate the summation as a Riemann integral. Since both limits vanish ($y_{\min}, y_{\max} \to 0$), the integral scales as:
\begin{equation}
	\begin{aligned}
		\mathcal{I}^{(m)}(\alpha, \omega) &\simeq \int_{y_{\min}}^{y_{\max}}\dots\int_{y_{\min}}^{y_{\max}} (y_1 \cdots y_m)^{2(\alpha-\omega)} dy_1 \cdots dy_m \\
		&\asymp \big(y_{\max}^{1+2\alpha-2\omega} - y_{\min}^{1+2\alpha-2\omega}\big)^m \asymp y_{\max}^{m(1+2\alpha-2\omega)} \, .
	\end{aligned}
\end{equation}
Consequently, we obtain the scaling for the unlearned energy:
\begin{equation}
	U^{(m)} \asymp \frac{\theta_m^2}{m!} y_{\max}^{m(1+2\alpha-2\omega)} \nu^{\frac{2\omega-1}{\alpha}} \asymp \frac{\theta_m^2}{m!} d^{m+2m(\alpha-\omega)} \nu^{2} \, .
\end{equation}
The fundamental difference here is that in the slow decay regime ($\omega < 1/2$), the target coefficients $\theta_m$ must scale with $d$ to maintain a finite total energy for the target function $f_\star$. Specifically, we established that their squared norm scales as $\theta_m^2 = \Theta(d^{-m(1-2\omega)})$. Therefore, replacing this into our expression alongside $\nu^2 = \widetilde{\Theta}(d^{-2\kappa\alpha})$, the unlearned energy for the $m <\kappa$ shells elegantly reduces to:
\begin{equation}
	U^{(m)} \asymp d^{-m(1-2\omega)} d^{m+2m(\alpha-\omega)} \nu^{2} = \widetilde{\Theta}\big(d^{2\alpha(m-\kappa)}\big) \to 0 \, .
\end{equation}
Since $m < \kappa$, the exponent is strictly negative, meaning that the kernel perfectly resolves the sub-critical shells.

\paragraph{Case b: $\bm{m > \kappa}$}
We will change the variables in Eq.~\eqref{eq:appuy2} with $t = m\log(y_{\max}) - S = mL - S$:
\begin{equation}
	\begin{aligned}
		\mathcal{I}^{(m)}(\alpha, \omega) &= \log(d)^{m-1} \int_0^{m\log(d)} dt \, \frac{e^{(mL - t)(1+2\alpha-2\omega)}}{(1+e^{\alpha (mL - t)})^2} f_{IH}\left(m - \frac{t}{\log(d)};m\right) \\
		&= \log(d)^{m-1} e^{mL(1+2\alpha-2\omega)}\int_0^{m\log(d)} dt \, \frac{e^{- t(1+2\alpha-2\omega)}}{(1+e^{m\alpha L}e^{-\alpha t})^2} f_{IH}\left(m - \frac{t}{\log(d)};m\right) \\
		&= \log(d)^{m-1} y_{\max}^{m(1+2\alpha-2\omega)} \int_0^{m\log(d)} dt \, \frac{e^{- t(1+2\alpha-2\omega)}}{(1+y_{\max}^{m\alpha}e^{-\alpha t})^2} f_{IH}\left(m - \frac{t}{\log(d)};m\right) \\
		&= y_{\max}^{m(1-2\omega)} \int_0^\infty dt \, \boldsymbol{1}_{[0,m\log(d)]}(t) \log(d)^{m-1} \frac{e^{- t(1+2\alpha-2\omega)}}{(y_{\max}^{-m\alpha} + e^{-\alpha t})^2} f_{IH}\left(m - \frac{t}{\log(d)};m\right) \, .
	\end{aligned}
\end{equation}
Now we can use the Dominated Convergence Theorem (DCT) on the sequence of functions:
\[
g_d(t) = \boldsymbol{1}_{[0,m\log(d)]}(t) \log(d)^{m-1} \frac{e^{- t(1+2\alpha-2\omega)}}{(y_{\max}^{-m\alpha} + e^{-\alpha t})^2} f_{IH}\left(m - \frac{t}{\log(d)};m\right) \, .
\]
Similarly to what we have already shown (expanding the Irwin-Hall function near its boundary in the limit $d\to\infty$), this pointwise converges to:
\[
\lim_{d \to \infty} g_d(t) = \frac{1}{(m-1)!}e^{-t(1-2\omega)}t^{m-1} \, .
\]
To rigorously bound the sequence, we note that $(y_{\max}^{-m\alpha} + e^{-\alpha t})^2 > e^{-2\alpha t}$. Combined with the global bound $f_{IH}(m-x; m) \leq x^{m-1}$, we obtain:
\[
|g_d(t)| \leq t^{m-1} \frac{e^{-t(1+2\alpha-2\omega)}}{e^{-2\alpha t}} = t^{m-1} e^{-t(1-2\omega)} \equiv G(t) \, ,
\]
which is absolutely integrable on $\mathbb{R}^+$ since $\omega < \frac{1}{2}$. By the DCT, we get at leading order:
\begin{equation}\label{eq:gancio18}
	\begin{aligned}
		\mathcal{I}^{(m)}(\alpha, \omega) &\simeq \frac{1}{(m-1)!} y_{\max}^{m(1-2\omega)} \int_0^{\infty} dt \> t^{m-1} e^{-t(1-2\omega)} \\
		&= y_{\max}^{m(1-2\omega)} (1-2\omega)^{-m} \, .
	\end{aligned}
\end{equation}
Finally, substituting this back, we obtain:
\[
U^{(m)} \sim \frac{\theta_m^2}{m!} C_m^{\frac{2\omega-1}{\alpha}} \nu^{\frac{2\omega-1}{\alpha}}y_{\max}^{m(1-2\omega)} (1-2\omega)^{-m} = \frac{\theta_m^2}{m!} d^{m(1-2\omega)} (1-2\omega)^{-m} \, ,
\]
and since $\omega < \frac{1}{2}$, this last expression is precisely the leading term of the energy for the $m$-th shell, $E(m)$:
\[
U^{(m)} \to E(m) \, .
\]
Hence, we find that when $m > \kappa$, the unlearned energy saturates to the full spectral energy: $U^{(m)} \to E(m)$. 

\paragraph{Case c: $\bm{m = \kappa}$}
Due to the sub-polynomial divergence of $f(d)$ as $d\to\infty$ in Eqs.~\eqref{eq:y_limits2}, the asymptotic evaluation of the resonant shell $m=\kappa$ identically follows the deep-shell regime $(m>\kappa)$, yielding $U^{(\kappa)} = E(\kappa)$. This property is entirely driven by the presence of $f(d)$; in its absence, the $\kappa$-th shell would exhibit a different behavior. From a learning perspective, the condition $U^{(\kappa)}= E(\kappa)$ demonstrates that the resonant shell remains completely unlearned, retaining its full initial target energy.

Synthesizing the results across all polynomial shells, we obtain the definitive asymptotic limit for the total unlearned energy in the slow decay regime ($\omega < 1/2$).
\begin{equation}
	U = \sum_{m = \kappa}^{\infty} E(m) \, .
\end{equation}

\paragraph{Summary for unregularized regime}
Putting everything together, we have
\begin{itemize}
    \item When $\omega > \alpha + \frac{1}{2}$:
    \[
    U = \sum_{m< \kappa} U^{(m)} + U^{(\kappa)} + \sum_{m>\kappa}U^{(m)} = \tilde\Theta(d^{-2\kappa\alpha})
    \]
    \item  When $\frac{1}{2} < \omega \leq \alpha + \frac{1}{2}$:
    \[
    U = \sum_{m< \kappa} U^{(m)} + U^{(\kappa)} + \sum_{m>\kappa}U^{(m)} = \tilde\Theta(d^{-\kappa(2\omega-1)})
    \]
    \item When $\omega < \frac{1}{2}$:
    \[
    U = \sum_{m< \kappa} U^{(m)} + U^{(\kappa)} + \sum_{m>\kappa}U^{(m)} = \sum_{m\geq\kappa}E(m)
    \]
\end{itemize}
and the full bias is given by $\mathsf{B} = \frac{1}{1-\tau}U = \alpha U$
\subsubsection{Regularized regime}
We now turn to the computation of the unlearned energy of the $m$-th shell in the dense setting:
\begin{equation}\label{eq:gancio12}
	U^{(m)} = \frac{\theta_m^2}{m!} \sum_{i_1, \dots, i_m = 1}^d \frac{(i_1 \cdots i_m)^{-2\omega}}{\big(1 + C_m^{-1} \nu^{-1} (i_1 \cdots i_m)^{-\alpha}\big)^2} \, ,
\end{equation}
where we introduce the macroscopic regularization $\lambda = \Theta_d(1)$. As established in Eq.~\eqref{eq:ScalingNuRidge}, this assumption imposes a strict scaling on the resolvent parameter $\nu$:
\begin{equation}
	\nu = \xi d^{-\kappa} = \frac{\lambda}{\psi}d^{-\kappa} \, .
\end{equation}
Aside from this modified scaling, the functional structure of $U^{(m)}$ remains unchanged, allowing us to adapt the analytical computations performed for the ridgeless setting ($\lambda = 0$). To systematically evaluate the asymptotic behavior of these sums, we transition to the multi-dimensional integral representation of Eq.~\eqref{eq:bmIntegral2}:
\begin{equation}
	U^{(m)} \simeq \frac{\theta_m^2}{m!} C_m^{\frac{2\omega-1}{\alpha}} \nu^{\frac{2\omega-1}{\alpha}} \mathcal{I}^{(m)}(\alpha, \omega) = \frac{\theta_m^2}{m!} C_m^{\frac{2\omega-1}{\alpha}} \xi^{\frac{2\omega-1}{\alpha}} d^{-\kappa\left(\frac{2\omega-1}{\alpha}\right)} \mathcal{I}^{(m)}(\alpha, \omega) \, ,
\end{equation}
where the upper and lower integration boundaries, previously defined in Eqs.~\eqref{eq:y_limits2}, are now updated to reflect the macroscopic ridge scaling of $\nu$:
\begin{subequations}\label{eq:y_limitsRidge}
	\begin{align}
		y_{\min} &= (\nu C_m)^{\frac{1}{m\alpha}} = C_m^{\frac{1}{m\alpha}} \xi^{\frac{1}{m\alpha}} d^{-\frac{\kappa}{m\alpha}} \, , \label{eq:miny3} \\
		y_{\max} &= d y_{\min} = C_m^{\frac{1}{m\alpha}} \xi^{\frac{1}{m\alpha}} d^{1-\frac{\kappa}{m\alpha}} \, . \label{eq:maxy3}
	\end{align}
\end{subequations}
By introducing the effective capacity threshold $\kappa_{\text{eff}} = \frac{\kappa}{\alpha} < \kappa$, these boundaries can be elegantly rewritten as:
\begin{subequations}\label{eq:y_limitsRidge2}
	\begin{align}
		y_{\min} &=  \Theta\Big(d^{-\frac{\kappa_{\text{eff}}}{m}}\Big) \, , \label{eq:miny4} \\
		y_{\max} &= \Theta\Big(d^{1-\frac{\kappa_{\text{eff}}}{m}}\Big) \, . \label{eq:maxy4}
	\end{align}
\end{subequations}
When $m < \kappa_{\text{eff}}$, the exponent $1 - \kappa_{\text{eff}}/m$ becomes strictly negative. Consequently, both the upper and lower integration limits asymptotically converge to zero. We can thus safely leverage the analytical machinery developed for the ridgeless setting: the sub-critical regime $m < \kappa_{\text{eff}}$ under a macroscopic ridge penalty is formally equivalent to the regime $m < \kappa$ in the unregularized case. In both scenarios, the integral vanishes, confirming that the sub-critical shells ($m < \kappa_{\text{eff}}$) are perfectly learned by the kernel, yielding $U^{(m)} \to 0$.

A parallel reasoning applies to the super-critical shells ($m > \kappa_{\text{eff}}$), which directly map to the unregularized $m > \kappa$ scenario. While the ridgeless setting required the modulating sub-polynomial factor $f(d)$ to regularize the summations, here the strong polynomial decay inherently dominates the asymptotic behavior, rendering the two problems formally equivalent. By directly evaluating the integral, we thus obtain the definitive asymptotic behavior for the unlearned energy of the deep shells ($m > \kappa_{\text{eff}}$):
\begin{equation}
	U^{(m)} \asymp
	\begin{cases}
		E(m) & \text{for } \omega < \frac{1}{2} \, , \\[8pt]
		\log(d)^{m-1} \nu^{\frac{2\omega-1}{\alpha}} = \widetilde{\Theta}\Big(n^{-\frac{2\omega-1}{\alpha}}\Big) & \text{for } \frac{1}{2} < \omega \leq \alpha + \frac{1}{2} \, , \\[8pt]
		\nu^2 = \Theta(n^{-2}) & \text{for } \omega > \alpha + \frac{1}{2} \, .
	\end{cases}
\end{equation}
The only main departure from the ridgeless setting comes out when we analyze the resonant shell $m = \kappa_{eff}$ (equivalent to the ridgeless $m = \kappa$). Assuming there exists an integer such that $m = \kappa_{\text{eff}}$, the integration boundaries defined in Eq.~\eqref{eq:y_limitsRidge2} behave differently. In the unregularized case, the presence of the modulating factor $f(d)$ forced the upper limit $y_{\max}$ to slowly diverge to infinity for the resonant shell. Here, however, the strict absence of $f(d)$ implies that the term $1 - \kappa_{\text{eff}}/m$ is exactly zero and, as a direct consequence, the upper bound $y_{\max}$ does not diverge, but instead rigorously settles on a finite, macroscopic $\Theta_d(1)$ constant as $d \to \infty$.

For fast-decaying targets ($\omega > 1/2$), the mathematical treatment maps almost identically to the ridgeless scenario, with the crucial distinction that the multi-dimensional integral now evaluates to a pure, order-one constant, completely devoid of any sub-logarithmic modulations. The resonant shell thus seamlessly joins the power-law scaling of the super-critical modes, yielding an unlearned energy that vanishes as the super-critical shells $m > \kappa_{eff}$:
\begin{equation}
	U^{(\kappa_{\text{eff}})} = \Theta\left(d^{-\kappa \min\left(\frac{2\omega-1}{\alpha}, 2\right)}\right) \, .
\end{equation}

It remains to evaluate the critical shell $m = \kappa_{\text{eff}}$ in the slow-decay regime ($\omega < 1/2$), under the assumption that the data anisotropy $\alpha$ yields an integer effective threshold, $\kappa_{\text{eff}} \in \mathbb{N}$. Unlike the unregularized limit, the absence of the diverging sub-polynomial factor $f(d)$ stabilizes the upper integration boundary $y_{\max}$ to an $\Theta_d(1)$ constant. 

The analytical procedure closely mirrors the techniques developed in the previous sections. We aim to compute the core integral $\mathcal{I}^{(\kappa_{\text{eff}})}(\alpha, \omega)$. Applying Lemma~\ref{lemma:IrwinHall} and evaluating the integration limits, we obtain:
\begin{equation}
\mathcal{I}^{(\kappa_{\text{eff}})}(\alpha, \omega) = \log(d)^{\kappa_{\text{eff}}-1} \int_{-\kappa_{\text{eff}}\log(d) + \Delta}^{\Delta} dt \frac{e^{t(1+2\alpha-2\omega)}}{(1+e^{\alpha t})^2} f_{IH}\left(\kappa_{\text{eff}} + \frac{t-\Delta}{\log(d)}; \kappa_{\text{eff}}\right) \, ,
\end{equation}
where the constant integration bound is defined as $\Delta = \frac{1}{\alpha} \log(C_{\kappa_{\text{eff}}} \xi)$ or, equivalently, $e^{\Delta / \kappa_{\text{eff}}} = C_{\kappa_{\text{eff}}}^{\frac{1}{\kappa_{\text{eff}}\alpha}} \xi^{\frac{1}{\kappa_{\text{eff}}\alpha}}$. To rigorously evaluate the high-dimensional limit ($d \to \infty$), we apply the Dominated Convergence Theorem (DCT). We define the sequence of integrands:
\[
g_d(t) = \boldsymbol{1}_{[-\kappa_{\text{eff}}\log(d) + \Delta, \Delta]}(t) \log(d)^{\kappa_{\text{eff}}-1} \frac{e^{t(1+2\alpha-2\omega)}}{(1+e^{\alpha t})^2} f_{IH}\left(\kappa_{\text{eff}} + \frac{t-\Delta}{\log(d)}; \kappa_{\text{eff}}\right) \, .
\]
By asymptotically expanding the Irwin-Hall distribution near its rightmost boundary, the polynomial $\log(d)$ divergence is perfectly canceled, yielding the point-wise limit:
$$
\lim_{d\to\infty} g_d(t) = \frac{e^{t(1+2\alpha-2\omega)}}{(1+e^{\alpha t})^2} \frac{(\Delta - t)^{\kappa_{\text{eff}}-1}}{(\kappa_{\text{eff}}-1)!} \, ,
$$
which does not depend on $d$ (recall that $\Delta$ is now a constant). Leveraging the usual property of the Irwin-Hall distribution, this sequence is globally uniformly bounded by the dominating function:
$$
G(t) = \frac{e^{t(1+2\alpha-2\omega)}}{(1+e^{\alpha t})^2} (\Delta - t)^{\kappa_{\text{eff}}-1}\, ,
$$
which is strictly integrable over $(-\infty, \Delta]$. Thus, passing the limit inside the integral yields:
\[
\mathcal{I}^{(\kappa_{\text{eff}})}(\alpha, \omega) =  \frac{1}{(\kappa_{\text{eff}}-1)!} \int_{-\infty}^{\Delta} dt \frac{e^{t(1+2\alpha-2\omega)}}{(1+e^{\alpha t})^2} (\Delta - t)^{\kappa_{\text{eff}}-1} \, .
\]
To evaluate this, we introduce the change of variables $z = \Delta - t$, obtaining:
\[
\mathcal{I}^{(\kappa_{\text{eff}})}(\alpha, \omega) =  \frac{e^{\Delta(1+2\alpha-2\omega)}}{(\kappa_{\text{eff}}-1)!} \int_{0}^{\infty} dz \frac{e^{-z(1+2\alpha-2\omega)}}{(1+e^{\alpha\Delta} e^{-\alpha z})^2} z^{\kappa_{\text{eff}}-1} \, .
\]
Substituting $e^{\alpha\Delta} = C_{\kappa_{\text{eff}}} \xi$, the integral assumes its final explicit form:
\[
\mathcal{I}^{(\kappa_{\text{eff}})}(\alpha, \omega) = \frac{1}{(\kappa_{\text{eff}}-1)!} (C_{\kappa_{\text{eff}}} \xi)^{\frac{1+2\alpha-2\omega}{\alpha}}  \int_{0}^{\infty} dz \frac{e^{-z(1+2\alpha-2\omega)}}{(1+C_{\kappa_{\text{eff}}} \xi e^{-\alpha z})^2} z^{\kappa_{\text{eff}}-1} \, .
\]
We can now reinsert this result into the general expression for $U^{(\kappa_{\text{eff}})}$. Renaming the dummy integration variable to $t$, we obtain:
\[
U^{(\kappa_{\text{eff}})} = \frac{\theta_{\kappa_{\text{eff}}}^2}{\kappa_{\text{eff}}!} d^{-\kappa_{\text{eff}}(2\omega-1)} \frac{(C_{\kappa_{\text{eff}}}\xi)^{2}}{(\kappa_{\text{eff}}-1)!} \int_0^{\infty} dt \> \frac{e^{-t (1+2\alpha-2\omega)}}{(1+ C_{\kappa_{\text{eff}}} \xi e^{-\alpha t} )^2} t^{\kappa_{\text{eff}}-1} \, .
\]
We can further restructure the integral by factoring out $e^{-2\alpha t}$ from the denominator, which simplifies the exponential terms and explicitly recovers the unlearned energy definition $E(\kappa_{\text{eff}})$:
\[
U^{(\kappa_{\text{eff}})} = E(\kappa_{\text{eff}}) \frac{(1-2\omega)^{\kappa_{\text{eff}}}}{(\kappa_{\text{eff}}-1)!} \int_0^{\infty} dt \> e^{-t} t^{\kappa_{\text{eff}}-1} \frac{e^{2t\omega}}{(1+ C_{\kappa_{\text{eff}}}^{-1} \xi^{-1} e^{\alpha t} )^2} \, .
\]
Notice that the functional structure of this integral perfectly mirrors the one derived for the residual fraction $\eta$ in the weakly anisotropic regime (Eq.~\eqref{eq:eta}). By substituting the explicit value of $C_{\kappa_{\text{eff}}}^{-1}$, and recalling that $\xi^{-1} = \frac{\psi}{\lambda}$ under macroscopic regularization, we obtain the final exact form for the critical shell's unlearned energy:
\[
U^{(\kappa_{\text{eff}})} = E(\kappa_{\text{eff}}) \frac{(1-2\omega)^{\kappa_{\text{eff}}}}{(\kappa_{\text{eff}}-1)!} \int_0^{\infty} dt \> e^{-t} t^{\kappa_{\text{eff}}-1} \frac{e^{2t\omega}}{\left(1+ h_{\kappa_{\text{eff}}} \kappa_{\text{eff}}! \zeta(\alpha)^{-\kappa_{\text{eff}}} \frac{\psi}{\lambda} e^{\alpha t} \right)^2} = \eta' E(\kappa_{\text{eff}}) \, ,
\]
\qed
\newpage

%% file: Appendix/Single_Index_Models.tex
\section{Single-Index Models}
In this section we wish to prove the formulas in the main body regarding the single-index models case. We will consider a target function whose form is:
$$
f_\star(x) = g (v^\top x)
$$
where $v \in \mathbb{R}^d$ is the main defining property of the single-index function and we assume the link function $g \in L_2(\mathcal{N}(0,1))$, meaning that:
\[
\|g\|^2 = \int_\mathbb{R} \frac{1}{\sqrt{2\pi}}e^{-\frac{1}{2}t^2} g(t)^2 dt = 1 \> .
\]
Since $x \sim \mathcal{N}(0, \Sigma)$ where $\Sigma = \text{diag}(\sigma_1, \dots, \sigma_d)$ is the usual diagonal covariance matrix power-law distributed, then it will be convenient to redefine:
$$
z = \Sigma^{-1/2} x
$$
so that $ z \sim \mathcal{N}(0, I_d)$. To ensure that $f_\star \in L_2(p_X) = L_2(\mathcal{N}(0, \Sigma))$ and $\|f_\star\|_{L_2} = 1$ as usual, one can prove we need to assume:
\[
\mathbb{E}[(v^\top x)^2] = 1   \iff \sum_k v_{k}^2 \sigma_k = 1
\]
together with $\|g\|^2 = 1$. Hence, we can decompose $f_\star$ it in the Hermite multi-variate orthonormal basis $H_\beta$ obtaining the same expression of Eq.~\eqref{eq:appendixBeta}
\begin{equation}\label{eq:stand}
    f_\star(x) = \sum_{\beta\in\mathbb{Z}^d_{\ge 0}} C_{\bm\beta} \> H_{\beta}(z) = \sum_{m=0}^{\infty} \sum_{1 \leq i_1 \leq \dots \leq i_m \leq d} \theta_{i_1, \dots, i_m} H_{i_1, \dots, i_m}(z)
\end{equation}
Let us now consider for a moment the link function $g(t)$. Since we assumed $g \in L_2(\mathcal{N}(0,1))$, we can decompose $g$ in the 1-dimensional orthonormal Hermite polynomials basis obtaining:
$$g(t) = \sum_{m=0}^{\infty} g_m h_m(t)$$
where $h_m(t) = \frac{1}{\sqrt{m!}}\He_m(t)$. At this point, the target function becomes:
$$
f_\star(x) = \sum_{m=0}^{\infty} g_m h_m(v^\top x)
$$
or, using $ z = \Sigma^{-1/2} x$:
$$
f_\star(x) = \sum_{m=0}^{\infty} g_m h_m(w^\top z)
$$
having defined $w = \Sigma^{1/2}v$ as the transformation of the model weights due to the data anisotropy. To proceed, we will use a well-known addition identity of the Hermite polynomials  \citep[Sec.~18.18(ii)]{DLMF_Chap18}. Since by design $\|w\|^2 = 1$, we have: 
$$
h_m(w^\top z) = \sum_{|\beta| = m} \sqrt{m \choose \beta} w_{1}^{\beta_1} \dots w_{d}^{\beta_d} H_{\beta}(z)
$$
where, for convenience, we will define ${m \choose \beta} = {m \choose \beta_1 \dots \beta_d} = \frac{m!}{\beta_1! \dots \beta_d!}$. Remembering now that $w = \Sigma^{1/2}v$ and thus $w_{j} = \sigma_j^{1/2} v_{j}$, we will have:
$$
h_m(w^\top z) = \sum_{|\beta| = m} \sqrt{m \choose \beta} (\sigma_1^{\frac{1}{2}\beta_1} \dots \sigma_d^{\frac{1}{2}\beta_d}) (v_{1}^{\beta_1} \dots v_{d}^{\beta_d}) H_{\beta}(z)
$$
It is now convenient to move to the reparametrization $\beta \to (i_1, \dots, i_m), \beta_j = \sum_k \delta_{i_k, j}$, since we are only considering values of $\beta$ whose total modulus is fixed. We can therefore write:
$$
h_m(w^\top z) = \sum_{1 \leq i_1 \leq \dots \leq i_m \leq d} \sqrt{m \choose \beta} (\sigma_{i_1}^{1/2} \dots \sigma_{i_m}^{1/2}) (v_{i_1} \dots v_{i_m}) H_{i_1, \dots, i_m}(z)
$$
where $\beta$ must be identified once the parameters $m, i_1, \dots, i_m$ are known. At this point, since, by definition, $\sigma_j = r_\alpha^{-1} j^{-\alpha}$, then we will have:
$$
h_m(w^\top z) =  \sum_{1 \leq i_1 \leq \dots \leq i_m \leq d} \sqrt{m \choose\beta} \>  r_\alpha^{-m/2} (i_1 \dots i_m)^{-\alpha/2} (v_{i_1} \dots v_{i_m})  H_{i_1, \dots, i_m}(z)
$$
Putting everything together, we obtain:
$$
f_\star(x) = \sum_{m = 0}^{\infty} \sum_{1 \leq i_1 \leq \dots \leq i_m \leq d} g_m \sqrt{m \choose \beta} \>  r_\alpha^{-m/2} (i_1 \dots i_m)^{-\alpha/2} (v_{i_1} \dots v_{i_m})  H_{i_1, \dots, i_m}(z)
$$
Comparing with Eq.~\eqref{eq:stand}, it is easy to obtain the parameters we are interested in:
$$
\theta_{i_1, \dots, i_m} = g_m \sqrt{m \choose \beta} \> r_\alpha^{-m/2} (i_1 \dots i_m)^{-\alpha/2} (v_{i_1} \dots v_{i_m})
$$
In general, if the function is dense enough, then we can neglect the contribution of the diagonal terms approximating asymptotically:
$$
\theta_{i_1, \dots, i_m} = g_m \sqrt{m!} \> r_\alpha^{-m/2} (i_1 \dots i_m)^{-\alpha/2} (v_{i_1} \dots v_{i_m})
$$
\qed

%% file: Appendix/HermiteAnsatzNumerical.tex
\section{Hermite Ansatz Conjecture}\label{appendix:HermiteAnsatz}
In this section, we provide numerical evidence supporting the Hermite Ansatz introduced in Conjecture~\ref{conjecture:hermite_ansatz}. To this end, we proceed in two distinct ways.

\paragraph{Cumulative Target Function Comparison} First, we numerically compare the cumulative target energies $A(u)$ and $\widehat{A}(u)$, defined as:
\[
A(u) = \sum_{\beta : \lambda_\beta \leq u} \theta_\beta^2,
\]
\[
\widehat{A}(u) = \sum_{\beta : \lambda_\beta \leq u} c_\beta^2,
\]
where $\theta_\beta$ denotes the projection coefficient of the target function $f_\star$ onto the kernel eigenbasis, and $c_\beta$ denotes its coefficient when decomposed along the Hermite orthonormal basis. We focus on Single-Index Models as target functions, as this class allows us to leverage the exact Hermite decomposition given in Eq.~\eqref{eq:conversionSparse} (which holds even for finite $d$). In particular, to numerically compute $\widehat{A}(u)$:
\begin{itemize}
    \item We define a Single-Index Model target function using a random direction $v$ and a fixed Hermite expansion for the link function $g$.
    \item Using Eq.~\eqref{eq:conversionSparse}, we compute the coefficients $c_\beta$ indexed by $\beta$. We then sort these target weights according to the kernel eigenvalues.
    \item To obtain the kernel eigenvalues, we rely on the asymptotic spectrum expression in Eq.~\eqref{eq:SpectrumSpec}. While Eq.~\eqref{eq:conversionSparse} is valid for finite $d$, Eq.~\eqref{eq:SpectrumSpec} technically applies in the large-$d$ limit; nonetheless, we demonstrate that it provides a remarkably good approximation to the finite-$d$ spectrum.
\end{itemize}
To build $A(u)$, we rely on the \textit{empirical diagonalization} procedure. This Monte Carlo-inspired technique allows us to obtain an estimate of both the kernel spectrum and the target coefficients $\theta_\beta$ on the kernel eigenbasis. The comparison is illustrated in Fig.~\ref{fig:ComparisonCumulative} for different values of $\alpha$, providing strong numerical evidence for the validity of our conjecture in both the weakly and strongly anisotropic regime.
\begin{figure}[htbp]
    \centering
    \includegraphics[width=0.9\linewidth]{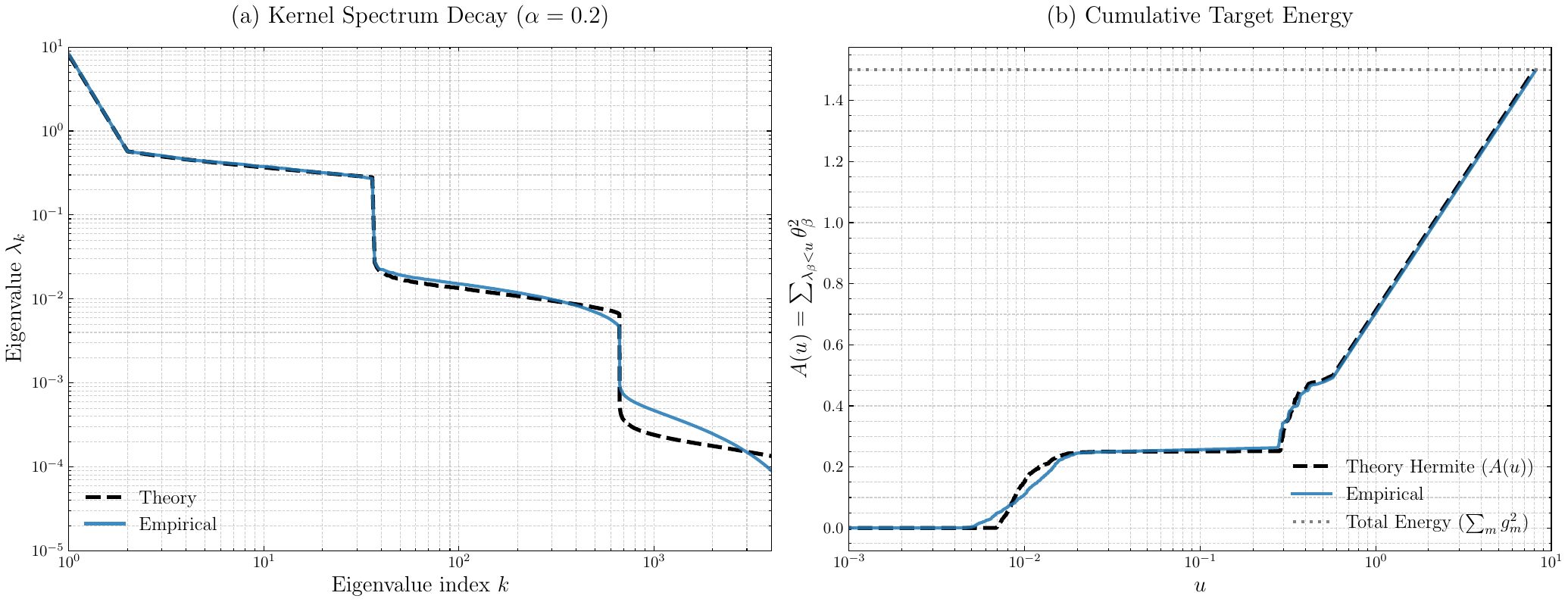}
    \includegraphics[width=0.9\linewidth]{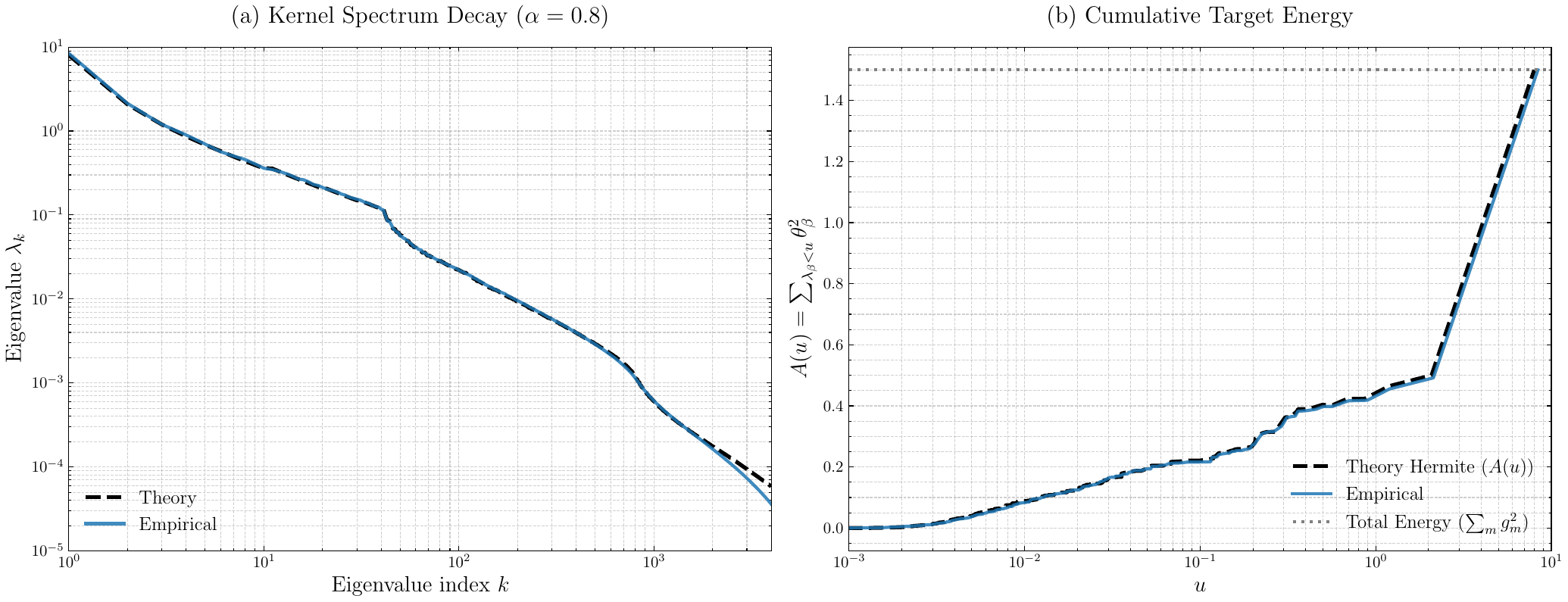}
    \includegraphics[width=0.9\linewidth]{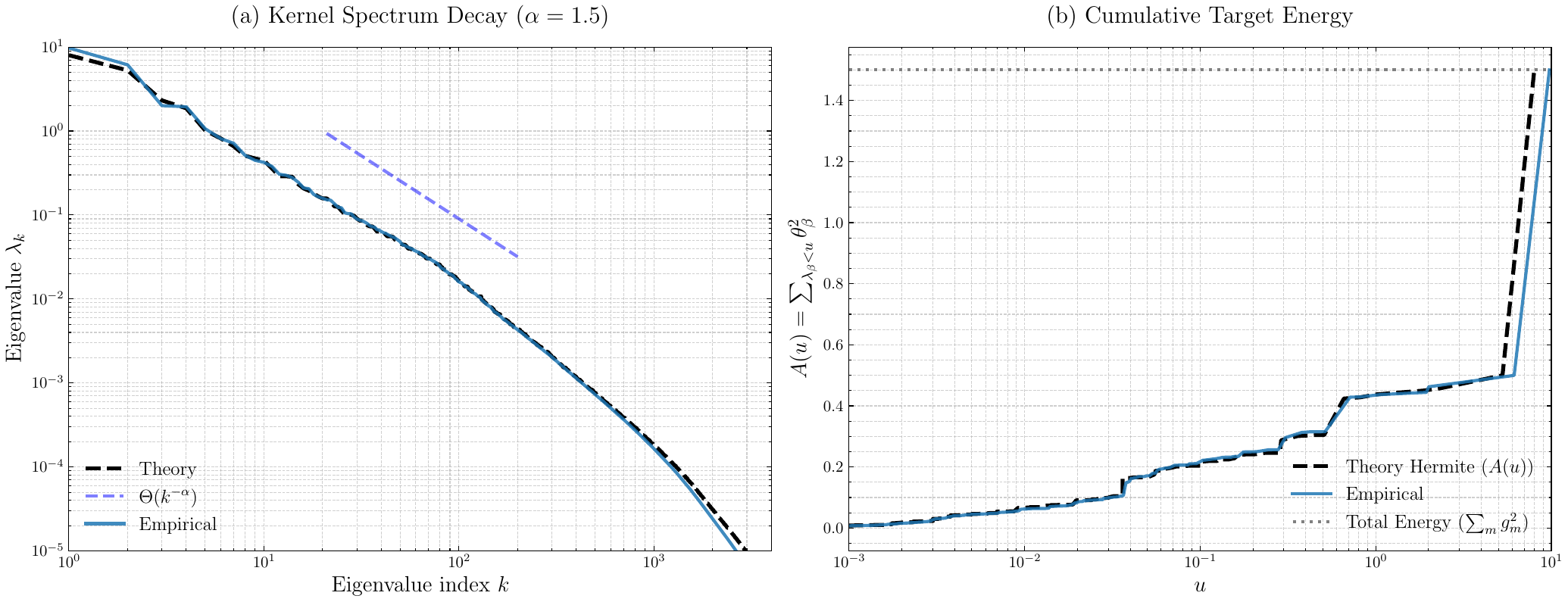}
    \caption{\textit{Kernel spectrum (left panel) and cumulative target function comparison (right panel) for different values of $\alpha$. In the left panel, we observe that the asymptotic theoretical spectrum obtained from Eq.~\eqref{eq:SpectrumSpec} closely matches the spectrum empirically extracted via empirical diagonalization. In the right panel, we see that the cumulative curves $A(u)$ computed on the Hermite basis and on the kernel eigenbasis are in close agreement. Simulation parameter: $d = 35$, polynomial kernel $k(x, x') = (2+\langle x, x'\rangle)^3$, random index $v$. The empirical diagonalization was performed using $M = 10000$ Monte-Carlo samples}}
    \label{fig:ComparisonCumulative}
\end{figure}

\paragraph{Learning SIMs}
As a second test, we can leverage the Hermite Ansatz to evaluate the theoretical predictions from Sec.~\ref{ssec:LearningSIM}, which are valid if we assume that the kernel eigenbasis coincides with the Hermite orthonormal basis in which the single-index target function is naturally decomposed. To do so, we empirically consider a KRR learning task trained over an anisotropic dataset and extract the empirical bias (setting $\sigma_\varepsilon = 0$ to eliminate the variance component). 

We first consider the case of a sparse single-index model with $j = 1$. The empirical versus asymptotic results are illustrated in Fig.~\ref{fig:SIMSparse} for both the weakly anisotropic regime ($\alpha = 0.35 < 1$) and the strongly anisotropic regime ($\alpha = 1.5 > 1$). In the former case, as $d \to \infty$, the bias converges to a non-zero value given by Eq.~\eqref{eq:BiasSparse_HeadWA}. In the $\alpha = 1.5$ case, we expect the bias to converge to zero according to the scaling laws theoretically obtained in Eqs.~\eqref{eq:BiasSparse_HeadSA} and \eqref{eq:BiasSparse_HeadSA1}, depending on the strength of the regularization.
\begin{figure}[htbp]
    \centering
    \includegraphics[width=\linewidth]{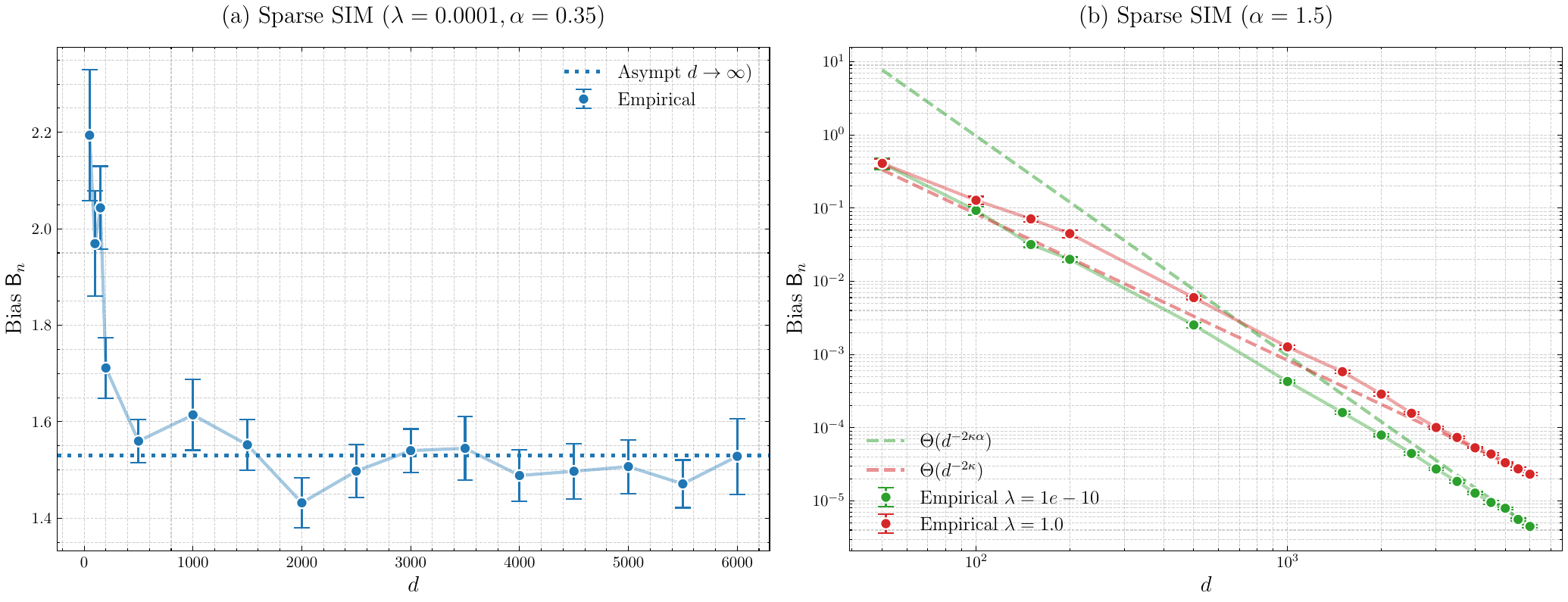}
    \caption{\textit{\textbf{Left:} Asymptotic versus empirical bias for a sparse SIM in the weakly anisotropic regime. \textbf{Right:} Asymptotic versus empirical bias for a sparse SIM in the strongly anisotropic regime. Simulation parameters: exponential kernel truncated at the fourth order, $\kappa = 1$, $\psi = 0.8$. Each empirical KRR run is repeated $20$ times and averaged, with errors representing the standard deviations.}}
    \label{fig:SIMSparse}
\end{figure}

Lastly, we consider the Single-Index power-law decaying case. In Fig.~\ref{fig:SIMPLWA}, we examine the weakly anisotropic regime ($\alpha < 1$). Since the effective parameter is defined as $\omega = \frac{1}{2}(\alpha + \gamma)$, depending on $\gamma$ we can fall into either the $\omega < \frac{1}{2}$ or $\omega > \frac{1}{2}$ regime. When $\gamma = 0.2$ (left plot), we have $\omega < \frac{1}{2}$, and the kernel learns up to the $\kappa$-th feature of the target function. Asymptotically, the empirical curves converge toward the theoretical and asymptotic predictions of Eq.~\eqref{eq:BiasWA>}. When $\gamma = 1$ or $\gamma = 2$ (right plot), we have $\omega > \frac{1}{2}$. From Eq.~\eqref{eq:BiasWA<}, we expect the kernel to learn up to the $\frac{\kappa}{1-\alpha} = \frac{1}{1-0.8} = 5$-th order features of the target function. In our simulation, the link function was set to $g(t) = \sum_{m=0}^3\frac{1}{m!} g_m He_m(t)$ (meaning $g_5 = 0$), so we expect the kernel to learn the entire function, and the bias to decay to zero. Indeed, we observe that for both values of $\gamma$, the bias drops to zero (note that we cannot observe the power-law decay here, as it corresponds to a sub-leading correction, whereas we are only tracking the leading term).
\begin{figure}[htbp]
    \centering
    \includegraphics[width=\linewidth]{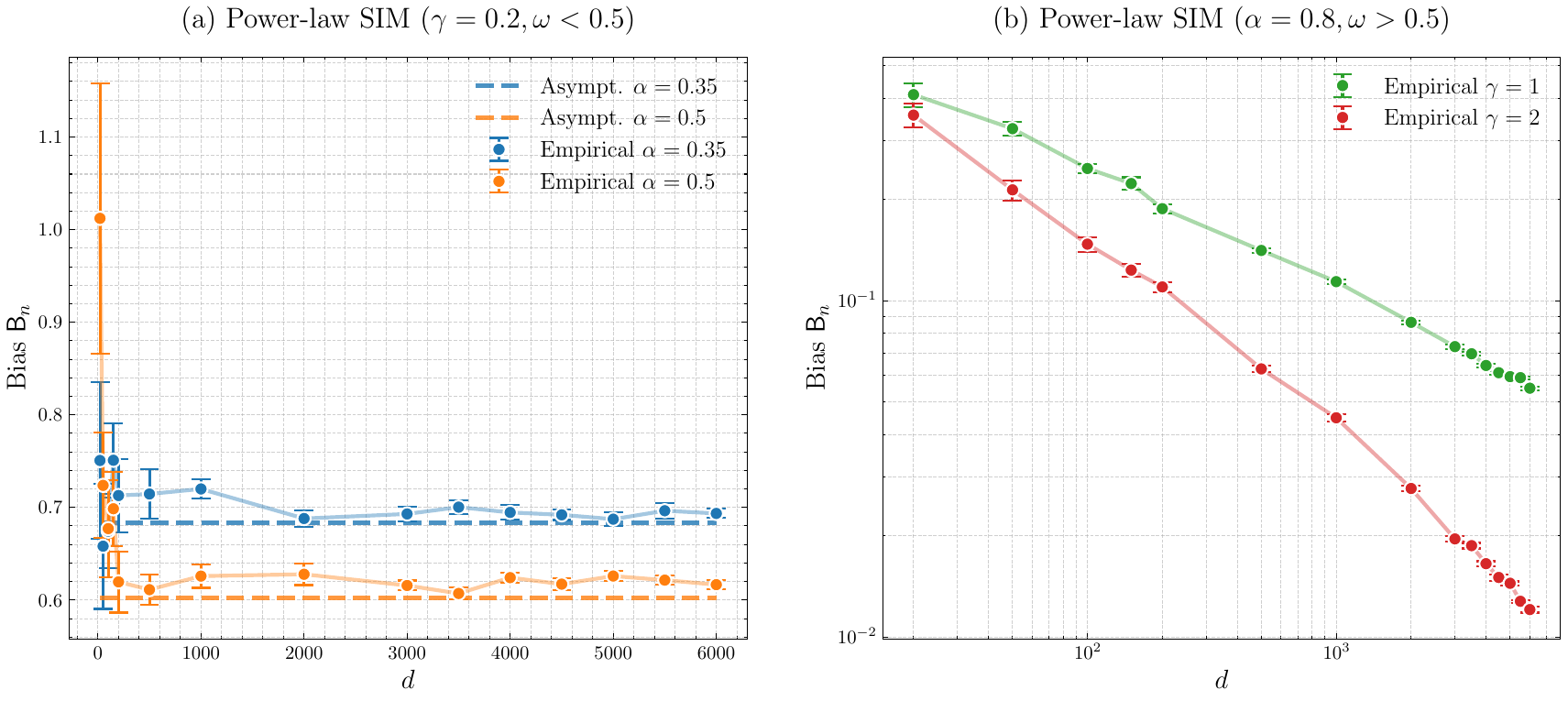}
    \caption{\textit{\textbf{Left:} Asymptotic versus empirical bias for a power-law SIM in the weakly anisotropic regime with an effective $\omega < 0.5$. \textbf{Right:} Asymptotic versus empirical bias for a power-law SIM in the weakly anisotropic regime for an effective $\omega > 0.5$. The link function $g$ is chosen such that its expansion is truncated at degree $3$ ($g(t) = \sum_{m=0}^3 g_m \text{He}_m(t)$), allowing the kernel to fully learn it in the $\omega > \frac{1}{2}$ regime. All other parameters match those of Fig.~\ref{fig:SIMSparse}.}}
    \label{fig:SIMPLWA}
\end{figure}

Finally, we analyze the strongly anisotropic regime (Fig.~\ref{fig:SIMPLSA}). Here, provided that $\gamma > 0$, we always operate in the fast-decay $\omega > \frac{1}{2}$ regime, meaning we should observe a scaling behavior of the bias, converging to zero. We consider two distinct scenarios: one where $\omega > \alpha + \frac{1}{2}$ and another where $\omega < \alpha + \frac{1}{2}$, allowing us to verify the different scaling behaviors predicted by Theorem~\ref{thm:bias:strong}, for $\lambda \to 0$ and $\lambda = 1$. Once again, we obtain visually compelling evidence supporting our conjecture.
\begin{figure}[htbp]
    \centering
    \includegraphics[width=\linewidth]{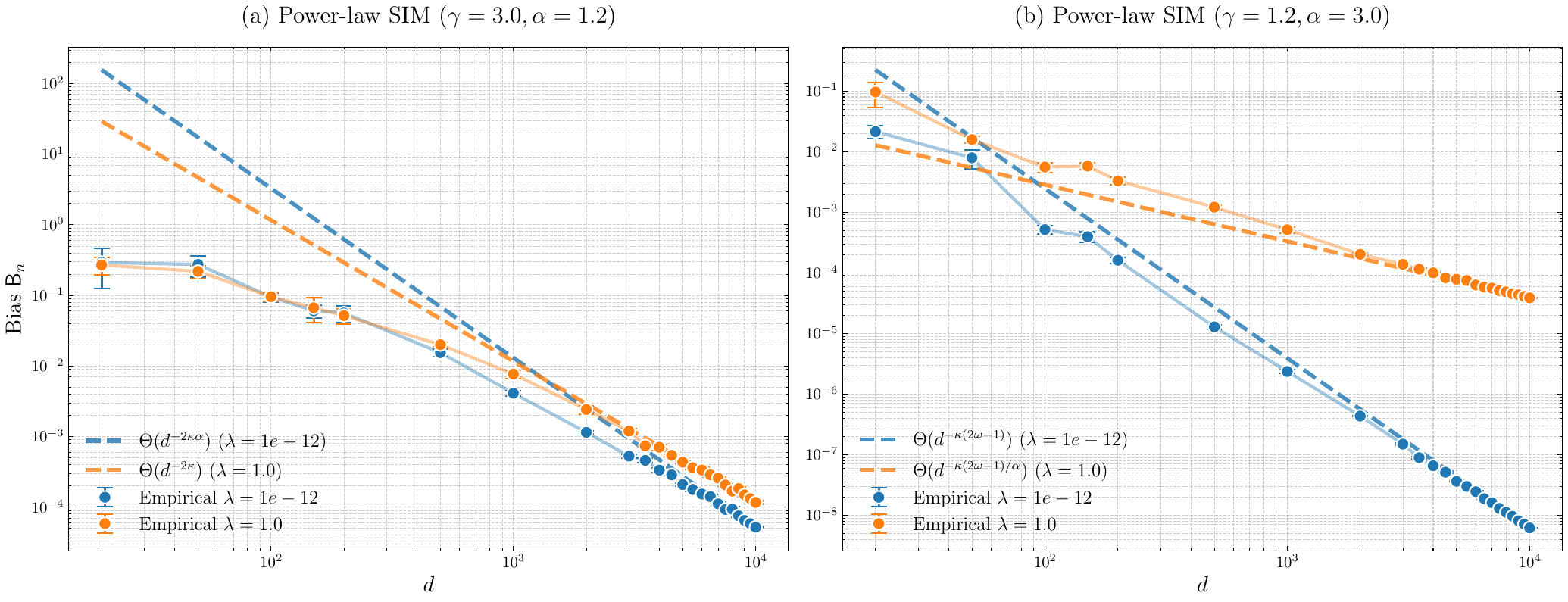}
    \caption{\textit{\textbf{Left:} Asymptotic versus empirical bias for a power-law SIM in the strongly anisotropic regime when $\omega > \alpha + \frac{1}{2}$. \textbf{Right:} Asymptotic versus empirical bias for a power-law SIM in the strongly anisotropic regime when $\omega < \alpha + \frac{1}{2}$. All other parameters match those of Fig.~\ref{fig:SIMSparse}.}}
    \label{fig:SIMPLSA}
\end{figure}

%% file: Appendix/Technical_Lemmas.tex
\newpage 
\section{Technical and Auxilliary Lemmas}

\begin{lemma} Let $\ell \in \NN$. Then there exists a constant $C_{\ell}$, independent of $d$, such that
\[
\left \|\| x \|^{2} \right \|_{L^{\ell}} = \mathbb{E}_{x \sim \mathcal{N}(0,\Sigma)} \left [ \left ( \| x \|^{2\ell} \right )^{\frac{1}{\ell}}\right ] \leq C_{\ell}. 
\]
\label{lemma:moments_norm}
\end{lemma}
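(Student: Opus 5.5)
The statement to prove is Lemma~\ref{lemma:moments_norm}: for $x\sim\mathcal{N}(0,\Sigma)$ with $\Tr\Sigma=1$, every fixed moment of $\|x\|^2$ is bounded independently of $d$. The plan is to reduce to whitened coordinates and use only the normalization of the trace. Write $x=\Sigma^{1/2}z$ with $z\sim\mathcal{N}(0,I_d)$, so that
\[
\|x\|^2=\sum_{j=1}^d \sigma_j z_j^2 .
\]
Since $\sigma_j\geq 0$ and $\sum_j\sigma_j=\Tr\Sigma=1$, this is a convex combination of the random variables $z_j^2$.

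The key step is Minkowski's inequality in $L^{\ell}$, i.e.\ convexity of the norm. For $\ell\geq 1$,
\[
\big\|\,\|x\|^2\big\|_{L^\ell}\;\leq\;\sum_{j=1}^d \sigma_j\,\|z_j^2\|_{L^\ell}\;=\;\|z_1^2\|_{L^\ell}\sum_{j}\sigma_j\;=\;\big(\mathbb{E}[z_1^{2\ell}]\big)^{1/\ell}.
\]
The constant is explicit: $\mathbb{E}[z_1^{2\ell}]=(2\ell-1)!!$, so we may take $C_\ell=((2\ell-1)!!)^{1/\ell}\leq 2\ell$. This depends only on $\ell$, not on $d$ or on the spectrum of $\Sigma$, and it therefore covers both the weak and the strong anisotropy regimes.

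If the displayed quantity is read literally as $\mathbb{E}[(\|x\|^{2\ell})^{1/\ell}]=\mathbb{E}\|x\|^2$, it equals $\Tr\Sigma=1$ and the claim is immediate. The intended meaning, as used in Lemma~\ref{lemma:variance_concentration}, is the $L^\ell$ norm, and that is what the argument above controls.

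For the application in Lemma~\ref{lemma:variance_concentration} we also need mixed moments $\mathbb{E}[\|x\|^{2(\ell-1)}F_m]$. These follow from the bound above together with Cauchy--Schwarz or H\"older, since $\|x\|^{2(\ell-1)}$ then has bounded $L^2$ norm.

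There is essentially no obstacle in this argument. The only point requiring care is that the bound must not depend on $d$, and this holds because only $\Tr\Sigma=1$ enters, never $d$ itself.
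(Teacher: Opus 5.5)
Your proof is correct and is the same as the paper's: whiten to $\|x\|^2=\sum_j\sigma_j z_j^2$, apply Minkowski's (triangle) inequality in $L^\ell$, and use $\Tr\Sigma=1$ to obtain a constant that does not depend on $d$. You are also right that the displayed formula should be read as $\mathbb{E}[\|x\|^{2\ell}]^{1/\ell}$, which is the quantity the paper's proof actually bounds, and your explicit constant $((2\ell-1)!!)^{1/\ell}$ is valid.
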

\begin{proof}
    Indeed, we have: 
    \begin{align}
        \mathbb{E}_{x \sim \mathcal{N}(0,\Sigma)} \left [ \| x \|^{2\ell} \right ]^{\frac{1}{\ell}}  & = \mathbb{E}_{z \sim \mathcal{N}(0,I_{d})} \left [ \left ( \sum_{j=1}^{d} \sigma_j z_j^{2}\right )^{\ell}\right ]^{\frac{1}{\ell}} \\
        & \leq \sum_{j=1}^{d} \sigma_j \mathbb{E}_{z \sim \mathcal{N}(0,1)} \left [ z_j^{2\ell }\right ]^{\frac{1}{\ell}} \leq C_{\ell} \Tr(\Sigma) = C_\ell,
    \end{align}
    where in the second line we use the triangular inequality for the $L^\ell$- norm. 
\end{proof}

\begin{lemma}
    Let $m \in \NN$, and denote: 
    \[
    F_m(x) = K_{>m}(x, x), \quad \text{ and } \Tr(K_{>m}) = \sum_{j \geq m+1} \lambda_j.
    \]
    Then 
    \begin{equation}
        \mathbb{E} \left [ F_m(x)^{2}\right ]^{\frac{1}{2}} \leq C_D \Tr(K_{>m}), \quad \mathbb{E} \left [ (F_m - \Tr(K_{>m}))^{2}\right ]^{\frac{1}{2}} \leq (C_D+ 1) \Tr \left ( K_{>m}\right ). 
    \end{equation}
    \label{lemma:L2_bounds_diagonal}
\end{lemma}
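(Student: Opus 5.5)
The plan is to realize $K_{>m}(x,x)$ as a squared norm of projected polynomial features and to control its second moment by hypercontractivity of Gaussian polynomials of bounded degree. As in the proof of \Cref{prop:spec:highd}, write $K(x,x')=\langle \Phi(x),\Phi(x')\rangle$ with $\Phi$ the vector of weighted monomial features. Let $P_{>m}$ denote the projection onto the high-degree eigenspace of $T_k$. Then $F_m(x)=\sum_{j\ge m+1}\lambda_j e_j(x)^2$ is a nonnegative polynomial in $x$ of degree at most $2D$. Orthonormality of the $e_j$ in $L^2(p_X)$ gives the first identity
\[
\mathbb{E}[F_m(x)]=\sum_{j\ge m+1}\lambda_j\,\mathbb{E}[e_j(x)^2]=\Tr(K_{>m}).
\]

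For the second moment, I would use Gaussian hypercontractivity. For any polynomial $g$ of degree at most $2D$ in a Gaussian vector, one has $\|g\|_{L^2}\le C_D\|g\|_{L^1}$; this follows from the reverse form of hypercontractivity, i.e.\ the Nelson/Bonami inequality combined with Hölder interpolation between $L^1$, $L^2$ and $L^4$. The constant $C_D$ depends only on the degree, not on $d$ or $\Sigma$, because one can whiten $x=\Sigma^{1/2}z$ and $F_m$ remains a polynomial of degree at most $2D$ in $z$. Since $F_m\ge 0$, we have $\|F_m\|_{L^1}=\mathbb{E}[F_m]=\Tr(K_{>m})$, which yields
\[
\mathbb{E}[F_m^2]^{1/2}\le C_D\Tr(K_{>m}).
\]

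The second bound then follows from Minkowski's inequality:
\[
\mathbb{E}[(F_m-\Tr(K_{>m}))^2]^{1/2}\le \|F_m\|_{L^2}+\Tr(K_{>m})\le (C_D+1)\Tr(K_{>m}).
\]

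The main point requiring care is justifying that $F_m$ is a polynomial of bounded degree, uniformly in $m$ and $d$. This holds because the kernel has finite rank $p$ with eigenfunctions lying in the span of monomials of degree at most $D$, so each $e_j$ is a polynomial of degree at most $D$. It also requires citing the $L^2$–$L^1$ equivalence for bounded-degree Gaussian chaos with a dimension-free constant. To obtain that equivalence, I would argue as follows: hypercontractivity gives $\|g\|_4\le 3^{D}\|g\|_2$, and log-convexity of $L^p$ norms, $\|g\|_2\le\|g\|_1^{1/3}\|g\|_4^{2/3}$, then gives $\|g\|_2\le 3^{2D}\|g\|_1$.
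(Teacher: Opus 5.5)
Your proof is correct, but it reaches the second-moment bound by a different route than the paper.

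\emph{The paper's route.} It expands $\mathbb{E}[F_m^2]=\sum_{j,\ell>m}\lambda_j\lambda_\ell\,\mathbb{E}[e_j^2e_\ell^2]$ and applies Cauchy--Schwarz to each cross term. It then uses hypercontractivity of each individual eigenfunction, $\mathbb{E}[e_j^4]^{1/2}\le C_D\,\mathbb{E}[e_j^2]=C_D$, since each $e_j$ has degree at most $D$. This is equivalent to the Minkowski bound $\|\sum_j\lambda_j e_j^2\|_{L^2}\le\sum_j\lambda_j\|e_j\|_{L^4}^2$.

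\emph{Your route.} You never split $F_m$ into eigenfunctions. You treat it as a single nonnegative polynomial of degree $2D$ and use nonnegativity to identify $\|F_m\|_{L^1}$ with its mean $\Tr(K_{>m})$. You then apply the reverse $L^2$--$L^1$ comparison, obtained from $L^4$--$L^2$ hypercontractivity plus log-convexity of $L^p$ norms. Your constants check out: $\|g\|_4\le 3^{D}\|g\|_2$ at degree $2D$, hence $\|g\|_2\le 3^{2D}\|g\|_1$.

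\emph{Comparison.} The paper's argument only needs hypercontractivity on the span of the eigenfunctions themselves. That is exactly the form in which the property enters the abstract framework of \citet{misiakiewicz2024non}, and it yields the somewhat better constant $3^{D}$. Your argument avoids the double sum and is shorter. However, it requires hypercontractivity for the products $e_j^2$, i.e.\ for degree-$2D$ polynomials. Here you get this for free because Gaussian polynomials of any bounded degree are hypercontractive with dimension-free constants, but it would not follow from an assumption stated only on the eigenspaces.

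The second inequality then follows by Minkowski, identically in both proofs.
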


\begin{proof}
    For the first inequality, we compute directly: 
    \begin{align}
        \mathbb{E} \left [ F_m(x)^{2}\right ] & = \mathbb{E} \left [ \left (\sum_{j > m} \lambda_j \Phi_j(x)^{2} \right )^{2}\right ] \\ 
        & = \sum_{j, \ell \geq m} \lambda_j \lambda_\ell \mathbb{E} \left [\Phi_j(x)^{2} \Phi_\ell(x)^{2}\right ] \\
        & \leq \sum_{j, \ell \geq m} \lambda_j \lambda_\ell \mathbb{E} \left [\Phi_j(x)^{4} \right ]^{\frac{1}{2}} \mathbb{E}\left [\Phi_\ell(x)^{4} \right ]^{\frac{1}{2}},
    \end{align}
    by Cauchy-Schwarz. Since the eigenfunctions are polynomials, hypercontractivity gives: 
    \begin{equation}
         \mathbb{E} \left [ F_m(x)^{2}\right ] \leq C_{D} (\sum_{j, \ell \geq m} \lambda_j )^{2} = C_{D} \Tr(K_{>m})^2. 
    \end{equation}
    Taking the square-root, we conclude the first inequality. For the second inequality, we have: 
    \begin{equation}
        \mathbb{E} \left [ (F_m - \Tr(K_{>m}))^{2}\right ]^{\frac{1}{2}} \leq \mathbb{E} \left [ F_m^{2}\right ]^{\frac{1}{2}} + \Tr(K_{>m}),
    \end{equation}
    and applying the first inequality we conclude: 
    \begin{equation}
        \mathbb{E} \left [ (F_m - \Tr(K_{>m}))^{2}\right ]^{\frac{1}{2}}  \leq (C_D +1 )\Tr(K_{>m}). 
    \end{equation}
\end{proof}

\begin{lemma}[Gaussian Poincaré Inequality, (\cite{boucheron_concentration2013}, Theorem 3.20]
Let $x \sim \mathcal{N}(0,\Sigma)$, and let $f:\RR^{d} \to \RR$ be a differentiable function. Then: 
\[
\mathrm{Var}(f(x)) \leq \|\Sigma\|_\mathrm{op}  \mathbb{E} \| \nabla f(x)\|_2^{2} . 
\]
\label{eq:Gaussian_Poincare}  
\end{lemma}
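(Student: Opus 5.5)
We reduce to the standard Gaussian case and then prove the standard Gaussian Poincaré inequality by expanding in the Hermite basis $H_\beta$ introduced in the Notation paragraph.

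\textbf{Step 1: reduction to $\mathcal{N}(0,I_d)$.} Write $x=\Sigma^{1/2}z$ with $z\sim\mathcal{N}(0,I_d)$, and set $g(z):=f(\Sigma^{1/2}z)$. By the chain rule, $\nabla g(z)=\Sigma^{1/2}\nabla f(\Sigma^{1/2}z)$. Hence
\[
\|\nabla g(z)\|_2^2=\langle \nabla f(x),\Sigma\,\nabla f(x)\rangle\le\|\Sigma\|_{\mathrm{op}}\|\nabla f(x)\|_2^2 .
\]
Since $\mathrm{Var}(f(x))=\mathrm{Var}(g(z))$, it therefore suffices to show $\mathrm{Var}(g(z))\le\mathbb{E}\|\nabla g(z)\|_2^2$ for $z$ standard Gaussian. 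We may assume $\mathbb{E}\|\nabla f\|^2<\infty$, as otherwise there is nothing to prove.

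\textbf{Step 2: Hermite expansion.} Expand $g=\sum_\beta c_\beta H_\beta$ in $L^2(\mathcal{N}(0,I_d))$. Orthonormality gives $\mathrm{Var}(g)=\sum_{\beta\neq 0}c_\beta^2$.

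The key identity is $\mathrm{He}_k'=k\,\mathrm{He}_{k-1}$. With the normalization $H_\beta=\prod_j \mathrm{He}_{\beta_j}/\sqrt{\beta_j!}$, this gives
\[
\partial_j H_\beta=\sqrt{\beta_j}\,H_{\beta-e_j}.
\]
Therefore $\partial_j g=\sum_\beta c_\beta\sqrt{\beta_j}\,H_{\beta-e_j}$. Each $H_{\beta-e_j}$ appears for exactly one $\beta$, so
\[
\mathbb{E}(\partial_j g)^2=\sum_\beta \beta_j c_\beta^2 .
\]
Summing over $j$ yields $\mathbb{E}\|\nabla g\|^2=\sum_\beta|\beta|\,c_\beta^2$. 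Since $|\beta|\ge1$ whenever $\beta\neq0$, we conclude $\mathbb{E}\|\nabla g\|^2\ge\sum_{\beta\ne0}c_\beta^2=\mathrm{Var}(g)$.

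\textbf{Main obstacle: regularity.} The termwise differentiation of the expansion must be justified for a general differentiable $g$.

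\emph{Polynomial case.} In the paper's use (Lemma~\ref{lemma:variance_concentration}, where $F_m$ is a polynomial of degree at most $2D$), the expansion is a finite sum. The computation above is then exact and no approximation is needed.

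\emph{General case.} We identify the Hermite coefficients of $\partial_j g$ via Gaussian integration by parts:
\[
\mathbb{E}\big[\partial_j g\,H_\gamma\big]=\mathbb{E}\big[g\,(z_jH_\gamma-\partial_jH_\gamma)\big]=\sqrt{\gamma_j+1}\;c_{\gamma+e_j}.
\]
The boundary terms vanish first for smooth compactly supported $g$, and the identity extends to the Gaussian Sobolev space $W^{1,2}$ by density. Parseval then gives the same identity $\mathbb{E}\|\nabla g\|^2=\sum_\beta|\beta|\,c_\beta^2$ without differentiating the series termwise.

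As an alternative, one could tensorize the one-dimensional Poincaré inequality via Efron–Stein, which is the route taken in the cited reference. The Hermite route is shorter here because the basis is already set up in the paper.
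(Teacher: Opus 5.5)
Your argument is correct, but it differs from the route behind this lemma. The paper gives no proof of its own; it only cites Theorem~3.20 of Boucheron--Lugosi--Massart. That theorem is stated for a standard Gaussian vector. It is proved there by using the Efron--Stein inequality to tensorize down to dimension one, and then obtaining the one-dimensional case from Efron--Stein for a normalized Rademacher sum together with the central limit theorem, after a reduction to smooth compactly supported $f$.

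Your Step~1 supplies the change of variables $x=\Sigma^{1/2}z$, which the paper leaves implicit when it states the lemma with the factor $\|\Sigma\|_{\mathrm{op}}$. Your Step~2 replaces the Efron--Stein/CLT argument by the exact spectral identity $\mathbb{E}\|\nabla g\|_2^2=\sum_\beta|\beta|\,c_\beta^2$ in the Hermite basis, i.e. the Ornstein--Uhlenbeck spectral gap. The identities you rely on, $\partial_jH_\beta=\sqrt{\beta_j}\,H_{\beta-e_j}$ and $z_jH_\gamma-\partial_jH_\gamma=\sqrt{\gamma_j+1}\,H_{\gamma+e_j}$, are right.

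What each route buys:
\begin{itemize}
\item Your route gives an identity rather than just an inequality, so equality holds iff $g$ is affine. It reuses the basis the paper already works with. For the only place the lemma is used, the polynomial $F_m$ in Lemma~\ref{lemma:variance_concentration}, it is a finite computation with no approximation at all.
\item The cited route needs only elementary probability and $C^1$ regularity, with no Sobolev-density step, and it extends to other product measures.
\end{itemize}

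One small point for the general case. Expanding $g$ in $L^2(\mathcal{N}(0,I_d))$ presupposes $\mathrm{Var}(g)<\infty$, which is part of the conclusion when only $\mathbb{E}\|\nabla f\|_2^2<\infty$ is assumed. To close this, apply your argument to $\phi_R\circ g$, with $\phi_R$ smooth, bounded, $|\phi_R'|\le 1$ and $\phi_R(t)\to t$. Then pass to the limit using Fatou on $\tfrac12\mathbb{E}\big(g(z)-g(z')\big)^2$ for an independent copy $z'$.
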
